\documentclass[a4paper, 10pt]{article}
\usepackage[a4paper, top=1.3in, bottom=1.30in, left=0.90in, right=0.90in]{geometry}
\usepackage{amssymb}
\usepackage{amsmath}
\usepackage{tgtermes}
\usepackage{setspace}
\usepackage{amsthm} 
\theoremstyle{plain}
\newtheorem{theorem}{Theorem}
\newtheorem{corollary}{Corollary}
\newtheorem{definition}{Definition}
\newtheorem{assumption}{Assumption}
\newtheorem{proposition}{Proposition}
\newtheorem{lemma}{Lemma}
\newtheorem{remark}{Remark}
\usepackage{algorithm}
\usepackage{algorithmic}

\usepackage{cancel} 
\usepackage{graphicx}
\usepackage{listings}
\usepackage{mathrsfs}
\usepackage{titlesec}
\usepackage{authblk}
\usepackage{natbib}
\usepackage{xcolor}
\usepackage{hyperref}

\newcommand{\zedt}{\bar{\zeta}_{t}^{\lambda,n}}
\newcommand{\zeds}{\bar{\zeta}_{s}^{\lambda,n}}

\makeatletter
\renewcommand\subsubsection{\@startsection{subsubsection}{3}{\z@}%
  {-3.25ex\@plus -1ex \@minus -.2ex}%
  {1.5ex \@plus .2ex}%
  {\normalfont\small\itshape\mdseries}}
\makeatother
\titleformat{\section}[block]
  {\normalfont\normalsize\bfseries}{\thesection.\ \ }{0em}{}
\titleformat{\subsection}[block]
  {\normalfont\normalsize\itshape}{\thesubsection.\ \ }{0em}{}
\titleformat{\subsubsubsection}[block]
  {\normalfont\small\itshape\mdseries}{\thesubsubsubsection.}{0em}{}
\title{\Large The Tamed Subgradient Unadjusted Langevin Algorithm beyond Convexity}
\author[1,2,3]{Iosif Lytras}
\author[1]{Nikolaos Makras}
\author[1,2,3]{Sotirios Sabanis}
\affil[1]{School of Mathematics, University of Edinburgh, UK}
\affil[2]{National Technical University of Athens, Greece}
\affil[3]{Athena/Archimedes Research Centre, Greece}
\date{\small \today}

\newcommand{\dpar}{\bar{d}}                      
\newcommand{\dm}{d_{\mathrm{m}}}                 
\newcommand{\dhead}{d_{\mathrm{h}}}              
\newcommand{\dff}{d_{\mathrm{f}}}                
\newcommand{\Nrm}{\mathrm{N}}                    
\newcommand{\smx}{\operatorname{sm}}             
\newcommand{\CE}{\operatorname{CE}}              
\newcommand{\Tcl}{\mathcal{T}}                   
\newcommand{\Idm}{\mathrm{I}_{\dpar}}            
\newcommand{\Fn}[1]{\lVert #1\rVert_{\mathrm{F}}}
\newcommand{\opn}[1]{\lVert #1\rVert}
\newcommand{\Var}{\operatorname{Var}}

\begin{document}
\maketitle
\begin{abstract}
We study the problem of sampling from target distributions whose potentials are simultaneously non-smooth, subject to superlinear gradient growth, and non-convex. We introduce the Subgradient Tamed Unadjusted Langevin Algorithm (SG-TULA), a discretisation of the Langevin diffusion that operates directly on subgradients, without relying on computationally demanding smoothing procedures. To handle the superlinear regime, taming techniques are employed to produce a stable, explicit scheme. We derive non-asymptotic convergence bounds in Wasserstein-2 distance, with all constants tracked explicitly in terms of dimension and inverse temperature, improving upon the currently known rates for subgradient-based Langevin algorithms. We further provide excess risk estimates for the associated optimisation problem. We verify the assumptions, with explicit constants, for the regularized pretraining potential of a LLM in the GPT-2 lineage and the boosted coordinate-wise variant of SG-TULA pretrains the former competitively against  finetuned AdamW and Muon, for
which no comparable non-asymptotic guarantees are presently available.
\end{abstract}
\section{Introduction, Purpose and Scope}
The hypothesis under which optimization algorithms are usually analyzed, namely a
globally Lipschitz gradient together with convexity or strong convexity, are
rarely met by the problems they are applied to. Consider the case of a transformer which is the
composition of layers rendering the gradient polynomial in the weights, so that
it is not Lipschitz, is rendered discontinuous by ReLU, and the loss is not
convex. Each failure has its own practical remedy, clipping for the first and
subgradients for the second, and each has been analyzed in isolation. What is
missing is a non-asymptotic guarantee when these three occur together. This is the challenge being addressed in this paper.

For a potential $u:\mathbb{R}^{d}\to\mathbb{R}$, under mild assumptions, we
consider the problem
\begin{equation}\label{eq:opt}
  \theta^{\star}\in\operatorname*{arg\,min}_{\theta\in\mathbb{R}^{d}}u(\theta),
\end{equation}
which we approach by sampling from the Gibbs measure attached to $u$,
$\pi_{\beta}(\mathrm{d}x)\propto e^{-\beta u(x)}\,\mathrm{d}x$ on
$\mathbb{R}^{d}$. The connection between the two is the concentration of
$\pi_{\beta}$ around the global minimizers of $u$ for large $\beta$, a fact
going back to \citet{Hwang1980}. In that sense, an algorithm which samples
accurately from $\pi_{\beta}$ also solves the associated
expected excess risk problem for $u$. We therefore bound the excess risk
$\mathbb{E}[u(\theta_{n}^{\lambda})]-u(\theta^{\star})$ of the iterates
$(\theta_{n}^{\lambda})_{n\in\mathbb{N}}$ by the sampling error together with a
temperature gap, the latter quantifying the bias attributable to $\beta$.
 
The algorithm producing these iterates, namely the subgradient tamed unadjusted
Langevin algorithm (SG-TULA), is an explicit Euler--Maruyama discretisation of
the associated Langevin diffusion which operates on a measurable selection
$h\in\partial u$ and stabilizes the drift by taming,
\[
  \theta_{n+1}^{\lambda}
  =\theta_n^{\lambda}-\lambda h_{\lambda}(\theta_n^{\lambda})
   +\sqrt{2\lambda\beta^{-1}}\,\xi_{n+1},
  \qquad
  h_{\lambda}(x)=\frac{h(x)-\mu x}{1+\lambda^{1/2}\lvert h(x)-\mu x\rvert}+\mu x ,
\]
with $(\xi_n)_{n\ge1}$ i.i.d.\ standard Gaussian, alongside which a coordinate-wise variant is
analysed. Taming acts on the subgradient after it has been computed, so the cost of evaluating 
$h$ is unaffected and the overhead is significantly smaller than one matrix multiplication. Written this way the scheme is visibly an alternative method to gradient clipping, the latter being introduced for recurrent networks in \cite{Pascanu_RNN_2013} and is now the standard
response to exploding gradients in deep learning. However clipping
requires a threshold to be fixed in advance and, once active, displaces the drift by
an amount that does not diminish as the stepsize is refined. The tamed drift
requires no threshold, rescales continuously with the subgradient magnitude,
and satisfies $h_\lambda\to h$ as $\lambda\to0$, so that the scheme remains a
discretisation of the intended dynamics.
 
Accordingly, we assume that $u$ is semi-convex \hyperlink{A1}{A1}, strongly
convex outside a compact set \hyperlink{A2}{A2}, and every subgradient
satisfies $\lvert h(x)\rvert\le m+K\lvert x\rvert^p$ for some $p\ge1$
\hyperlink{A3}{A3}. Semi-convexity is a weak departure from convexity, in that subdifferential calculus is available and permits gradient discontinuities on a set of
measure zero, which is the irregularity encountered in the analysis of neural networks. Additionally, it provides the one-sided Lipschitz bound on which the discretisation estimates rest. Polynomial growth is what taming controls by avoiding exploding gradients. Strong convexity at infinity yields the key property of dissipativity, which
together with semi-convexity gives the contraction of the diffusion. It also
highlights what the method can address, since a dissipative drift forces
$\pi_\beta$ to have at most Gaussian tails. Potentials falling outside this regime must be
regularized by higher order terms, namely weight decay in the context of ML training.
 
Under these assumptions alone, and with no further regularity imposed on $u$,
our main sampling result takes the following form:
 
\medskip
\noindent\textbf{Theorem~\ref{theorem_main} (informal).}
\emph{For every stepsize $\lambda<1/(4\mu)$ and every $n\in\mathbb{N}$,}
\[
  W_2\bigl(\pi_{\beta},\mathcal{L}(\theta_n^{\lambda})\bigr)
  \;\le\;
  \underbrace{2\,C_{E}\,\lambda^{1/4}}_{\textnormal{discretisation}}
  \;+\;
  \underbrace{e\,C_{D}\,e^{-n\lambda/T_0}}_{\textnormal{contraction}} .
\]
\medskip
 
\noindent All three constants are given in closed form. The dimension enters only polynomially through the ratio $d/\beta$, with $C_{E}=\mathcal{O}\bigl((d/\beta)^{p/2}\bigr)$ and
$C_{D}=\mathcal{O}\bigl((d/\beta)^{1/2}\bigr)$, while the restart horizon
$T_0$, and thus the rate of the contraction term, is independent of the dimension. The dependence on the inverse temperature is exponential in
general, since it is inherited from
the log-Sobolev constant of $\pi_\beta$. This is to a large extent an artifact of
the proof strategy, which routes the optimization guarantee through a sampling
one. In particular, the exponential factor is the cost of mixing between the modes of
$\pi_\beta$.
In the optimization regime $\pi_\beta$ has concentrated on the global
minimizers since $\beta$ is large. There the governing
constant is a local one and no barrier needs to be crossed, as discussed in
Remark~\ref{expo_lsi}. Formally, we record in
Remark~\ref{remark_beta_poly} the additional Polyak--{\L}ojasiewicz type condition, under which this dependence becomes polynomial. Given a tolerance $\epsilon>0$, Corollary~\ref{corollary1}
provides the complexity of the algorithm, the stepsize being of order $\lambda=\mathcal{O}(\epsilon^4)$
and $n=\mathcal{O}(\epsilon^{-4}\log(1/\epsilon))$ for iterations respectively.
 
The sampling estimate transfers to the associated optimization problem. Writing
$\theta^*$ for a minimiser of $u$, one obtains the following.

\medskip
\noindent\textbf{Theorem~\ref{theorem2} (informal).}
\emph{Under the latter conditions and for every $\beta\geq4/\mu$,}
\[
  \mathbb{E}[u(\theta_n^{\lambda})]-u(\theta^*)
  \;\le\;
  \underbrace{C_{\mathcal{T}_1}W_2\bigl(\pi_\beta,\mathcal{L}(\theta_n^{\lambda})\bigr)}_{\textnormal{sampling}}
  \;+\;
  \underbrace{C_{\mathcal{T}_2}}_{\textnormal{temperature gap}} .
\]
\noindent Both constants are explicit, with
$C_{\mathcal{T}_1}=\mathcal{O}\bigl((d/\beta)^{p/2}\bigr)$ and
$C_{\mathcal{T}_2}=\mathcal{O}\bigl((d/\beta)\log(d\beta)\bigr)$.
This is a non-asymptotic optimization guarantee under
hypotheses that the potentials of interest actually satisfy.
\medskip
 
The complexity $\mathcal{O}(\epsilon^{-4})$ is inherited entirely from
Lemma~\ref{lemma_err_W2}. Since the contraction term carries no power of the
stepsize, the order of the strong approximation estimate on the finite restart
horizon is the order of the algorithm, and what can be expected of that
estimate is a question for the numerical analysis of SDEs with irregular
coefficients. Two lines of work there are of particular relevance. The first, developed in
\cite{DareiotisGerencser2020, DareiotisGerencserLe2023} and the references
therein, obtains order $\mathcal{O}(\lambda^{1/2-\epsilon})$ for arbitrary
$\epsilon>0$ for the Euler--Maruyama scheme without imposing any regularity on
the drift, which is almost optimal in view of the corresponding lower error
bounds. However, these results are obtained for globally bounded drift coefficients.
Growth is addressed by a second line of work.
\cite{MullerGronbachYaroslavtseva2020} establish order
$\mathcal{O}(\lambda^{1/2})$ for piecewise Lipschitz drift in dimension $d=1$, and this was 
extended in \cite{MullerGronbachSabanisYaroslavtseva2026} to coefficients of
superlinear growth, again in dimension $d=1$ and for finitely many points of
discontinuity; see also \cite{MullerGronbachRauhoggerYaroslavtseva2026} for the
multidimensional case under piecewise Lipschitz drift. By contrast, our
setting is multidimensional, superlinear and non-convex, and the set of
discontinuities is permitted to be infinite, subject only to being of
Lebesgue measure zero. Against this, the benchmark under regularity is $\mathcal{O}(\lambda^{1/2})$
for Lipschitz drift and $\mathcal{O}(\lambda)$ once the Hessian is Lipschitz as
well, the latter being an order of Milstein type that the constant diffusion
coefficient disguises as Euler's.
 
Within the sampling literature the same comparison holds. For $p=1$ our
assumptions reduce to those of \cite{SGULA}, where SG-ULA is analyzed under
semi-convexity, strong convexity at infinity and subgradients of at most
linear growth. Our result recovers that setting and improves the order in
$W_2$ from $\mathcal{O}(\lambda^{1/8})$ to $\mathcal{O}(\lambda^{1/4})$.
Table~\ref{tab:comparison} places the result among the alternatives. To the
best of our knowledge no existing result, other than ours, attains
$\mathcal{O}(\lambda^{1/4})$ in $W_2$ under assumptions of non-smoothness, superlinear growth
and non-convexity simultaneously and those achieving comparable or better rates do
so under smoothness or under strong convexity. 
\begin{table}[htbp]
\centering
\caption{Hypotheses and $W_2$ rates for related Langevin schemes.}
\label{tab:comparison}
\small
\begin{tabular}{lcccc}
\hline
\textbf{Work} & \textbf{Smooth} & \textbf{Growth} & \textbf{Convexity} & \textbf{$W_2$ rate}\\
\hline
\cite{DurmusMoulines}                       & yes            & linear      & strong       & $\lambda^{1/2}$\\
\cite{TULA}                                 & yes            & superlinear & strong       & $\lambda^{1/2}$\\
\cite{Lovas_TUSLA_2023}                     & yes            & superlinear & non-convex   & $\lambda^{1/4}$\\
\cite{DurmusMoulinesPereyra}                & composite      & linear      & at infinity  & $\lambda^{1/2}$ (TV)\\
\cite{Lehec_LMC_2022}                       & no (subgrad.)  & linear      & convex       & $\lambda^{1/4}$\\
\cite{Fruehwirth_Ergodicity_2024}           & no (subgrad.)  & linear      & strong       & $\lambda^{1/2}$\\
\cite{SGULA}                                & no (subgrad.)  & linear      & at infinity  & $\lambda^{1/8}$\\
\hline
\textbf{SG-TULA (this work)}                & no (subgrad.)  & superlinear & at infinity  & $\boldsymbol{\lambda^{1/4}}$\\
\hline
\end{tabular}
\end{table}
 
The algorithm is moreover competitive in practice. In
Section~\ref{sec:nanochat} we pretrain \emph{nanochat} \cite{Karpathy_nanochat_2025} with the boosted variant
of the scheme, in which the taming map is composed with the multiplicative
factor (boosting) of \cite{Lim_Sabanis_2024}. That perturbation is multiplicative and uniformly
bounded above and below, so that the growth, closeness and dissipativity estimates
of Section~\ref{sec_taming} go through with altered constants and the analysis
applies unchanged in form. At depth $12$ it attains the best validation
bits-per-byte and CORE score of the three optimizers tested, ahead of finetuned
AdamW and Muon. Moreover, at depth $24$ it remains within evaluation noise of both, and
does so without the benefit of a scaling law.

In Section~\ref{sec:nanochat} we carry out that
verification for \emph{nanochat}, a compact transformer language model in the
GPT-2 lineage, and obtain the curvature bound
\[
  \nabla^2_{\theta}u(\theta)\ \succeq\ -\,c^{\star}\bigl(1+\lvert\theta\rvert\bigr)^{12\mathsf{L}+2}\,\Idm ,
\]
with $c^\star$ explicit in the depth $\mathsf{L}$, the width and the vocabulary
size, from which \hyperlink{A1}{A1}--\hyperlink{A3}{A3} follow for the
regularized potential. The exponent $12\mathsf{L}+2$ is itself the statement
that superlinear growth in deep architectures is a consequence of depth and
not an artifact of the analysis. A second verification, for the sparse phase
retrieval problem of Section~\ref{sec:pr}, exhibits all three failures at once, i.e.
the quartic energy is non-convex with $p=3$, and the $\ell_1$ term places the
gradient discontinuities on the coordinate hyperplanes.

\section{Literature Review}

Although this is a broad and active research area, including Metropolis-adjusted variants, underdamped Langevin dynamics, L\'evy driven dynamics, interacting particles methods, and algorithms designed to handle constraints, we focus on approaches that extend the Unadjusted Langevin Algorithm (ULA) beyond the standard assumptions of linear growth, strong convexity, and smoothness. 

\subsection{Langevin-based algorithms}

A systematic study of the ergodicity of the Unadjusted Langevin Algorithm
(ULA) was undertaken in \cite{Roberts_Tweedie_1996}, which also formally introduced gradient clipping under the MALTA (Metropolis-Adjusted Langevin Truncated Algorithm) framework, however, the latter algorithm was not further investigated. Since, theoretical analyses have primarily focused on smooth, convex potentials $u$. Under these assumptions, i.e. global Lipschitz continuity of the gradient and strong convexity, ULA is known to enjoy geometric ergodicity and admits non-asymptotic convergence guarantees in both Wasserstein and total variation distances. Such results were first established by \cite{Dalalyan} and \cite{DurmusMoulinesNonasymptotic}, with refined bounds later provided in \cite{DurmusMoulines}. In the latter work, the authors obtain a global error bound in Wasserstein distance, of order $\mathcal{O}(\lambda^{1/2})$ under the standard smooth and strongly convex setting, which can be improved to $\mathcal{O}(\lambda)$ when additional smoothness of the Hessian is assumed. While this improvement formally corresponds to employing the Milstein discretization scheme, the connection may not be immediately evident to the reader since the diffusion coefficient is constant in this case. More broadly, many results that invoke higher order smoothness assumptions, often phrased in terms of Hessian regularity or beyond, implicitly inherit improved convergence rates that, from the SDE numerical analysis perspective, stem from analogous higher order discretization effects.\\

\noindent Considerable effort has since gone into weakening the convexity assumption.
A prominent line of work trades convexity-type conditions for a functional inequality
on the target measure, while retaining global smoothness of the corresponding gradient. Beginning
with \citet{vemp2019}, a substantial literature provides estimates in KL and R\'enyi
divergences under log-Sobolev and Poincar\'e inequalities,
e.g.\ \cite{erd2021, erd2022, ChewiPoincareLogSob}. In \cite{MousaviHosseini_2023} the authors work under weak Poincar\'e inequalities to cover heavy-tailed, non-convex targets, including Cauchy-type distributions for which a Poincar\'e inequality fails. Under $s$-H\"older regularity of the gradient, $s\in(0,1]$, they obtain
non-asymptotic R\'enyi divergence guarantees for ULA, providing both iteration
and time-horizon complexities. Their results imply a discretization rate of
$\mathcal{O}(\lambda^{s/2})$ in TV distance.\\

\subsection{Algorithms for superlinear drift}

It is well established that ULA exhibits numerical instability when the target distribution has a superlinearly growing potential, with moments of the discrete approximations diverging in finite time, \cite{Hutzenthaler_divergence_2010}. In the deterministic gradient case, SGLD reduces to ULA, and as such can be seen as a standard Euler-Maruyama discretisation of the Langevin diffusion, which inherits the same limitations due to the growth of the drift coefficient. This instability manifests in practical application whenever gradients accumulate too rapidly, as is well documented in the context of recurrent neural \cite{Pascanu_RNN_2013}, motivating the development of stabilization methods.\\

\noindent One idea for handling superlinear coefficients, is the taming technique, originally developed for stabilizing explicit Euler schemes of SDEs with exploding coefficients in \cite{HutzenthalerJentzenKloeden} and subsequently in \cite{Sabanis}. The Tamed Unadjusted Langevin Algorithm (TULA), introduced in \cite{TULA}, applies this idea to ULA. There, $u$ is assumed to be continuously differentiable and strongly convex but allowed $\nabla u$ to grow polynomially. Under these assumptions, they proved convergence of TULA in Wasserstein-2 distance to the target distribution with order $\mathcal{O}(\lambda^{1/2})$. Building on TULA, a line of research has adapted taming to various stochastic and non-convex gradient settings.\\ 

\noindent In the context of optimization, the stochastic gradient setting is understood as mini-batching, whereby only noisy estimates of the full gradient are accessible at each iteration, we refer to \cite{Dalalyan_Karagulyan_2019} for an early treatment, and to \cite{Barkhagen_SGLD_2021,Chau_SGLD_2021}  for an extension to the non-i.i.d. case. Within this regime, \cite{Lovas_TUSLA_2023} introduced TUSLA for ANN, extending taming to non-convex potentials for which the original density fails to satisfy a dissipativity condition, a higher-order regularization term is added to restore this property, yielding a convergence rate of order $\mathcal{O}(\lambda^{1/4})$ in Wasserstein-2 distance. This direction was further developed in \cite{Lim_Sabanis_2024}, where taming is applied coordinate-wise and augmented with a boosting factor, which adaptively scales the gradient upward to counteract the vanishing gradient problem. The resulting algorithm, TheoPouLA, retains a convergence rate of order $\mathcal{O}(\lambda^{1/4})$ and continues to require higher-order regularization to ensure dissipativity, yet demonstrates competitive empirical performance against established adaptive gradient-based optimization algorithms in deep learning tasks. Under a different taming function, \cite{Neufeld_Langevin_2025} obtain an $O(\lambda^{1/2})$ rate in Wasserstein-2 distance under a convexity-at-infinity type condition. However, their constants scale exponentially with the dimension and their analysis requires stronger smoothness assumptions on the Hessian of the potential. \\ 

\noindent In recent work, \cite{YandWang2025} analyze the Unadjusted Langevin Algorithm (ULA) and a projected variant (PLMC) under the log-Sobolev assumption, extending beyond the convex regime. They assume global Lipschitz continuity of $\nabla u$ for ULA, and for PLMC they relax this to a polynomial local Lipschitz condition. In principle PLMC can target potentials with super linear growth, however their analysis imposes a growth bound on $\nabla(\Delta u)$ which significantly limits the class of superlinear potentials that can be targeted, see Remark~A2 in \cite{Li_MeanSquare_2022}. Their error bounds in $W_2$ achieve order $\mathcal{O}(\lambda)$, matching the best known rate of the Milstein scheme in the fully smooth and strongly convex case.\\

\noindent Further results address the superlinear and non-convex regime in the
deterministic gradient case. Under a functional inequality for the target
distribution together with a locally Lipschitz gradient, which permits superlinear
growth, estimates in KL divergence were obtained in
\cite{Lytras_Sabanis_2025, Lytras_Mertikopoulos_2025} and later sharpened in
\cite{Lytras_kTULA_2025, Lytras_Ntousis_2026} by exploiting additional local
regularity of the Hessian. This last requirement was in turn removed through tamed
randomized midpoint schemes, which attain $\tilde{\mathcal{O}}(\epsilon^{-1})$
complexity in Wasserstein and total variation distances
\cite{Langevin_Randomization_2025, Lytras_Ntousis_2026}.
\subsection{Algorithms for non-smooth potentials}
It is worth distinguishing here between two forms of irregularity.
In \cite{Sabanis_Zhang_CVaR} the stochastic gradient is discontinuous in
$\theta$, as occurs in quantile and CVaR estimation, where indicators of
the form $\mathbf{1}_{\{x<\theta\}}$ appear. The analysis proceeds under a
Lipschitz in average condition (Assumption~3 therein), together with a decomposition of the
stochastic gradient into a Lipschitz part and a part bounded in $\theta$.
Averaging against the law of the data then restores regularity, and the drift
$h=\nabla U$ of the associated Langevin SDE is globally Lipschitz. The irregularity
treated in the present work is of a different nature. The potential $u$ is
itself non-smooth, no averaging is available to recover regularity, and the
drift is a measurable selection $h\in\partial u$ which is genuinely
discontinuous.\\ We identify two broad strategies that extend Langevin algorithms to the latter non-smooth setting.\\ 

\noindent \textbf{(A)} \textbf{Smoothing}\\
\noindent \textbf{Proximal Maps}\\
One approach to handling non-smoothness is to modify the target potential to recover regularity. A standard technique is Moreau–Yosida regularization, which replaces the potential $u$ with its envelope $u_\gamma$, a smoothed approximation with a Lipschitz continuous gradient defined via the proximal operator. This leads to the Moreau–Yosida Unadjusted Langevin Algorithm (MYULA) for which non-asymptotic guarantees were first studied in \cite{BrosseDurmusMoulinesPereyra} for potentials convex on compact bodies, proving convergence in total variation (TV) distance and establishing an $\mathcal{O}(\lambda^{1/2})$ rate in Wasserstein-2 distance, provided the smooth component is strongly convex. In \cite{DurmusMoulinesPereyra}, the authors provided non-asymptotic TV guarantees for potentials that decompose into smooth and non-smooth convex components, achieving a rate of $\mathcal{O}(\lambda^{1/2})$ up to logarithmic factors.\\

\noindent Despite these theoretical strengths, evaluating the Moreau–Yosida envelope remains computationally demanding, as it requires solving a proximal subproblem at each iteration, unless there exists a closed form solution which is uncommon. Furthermore, these methods generally necessitate a composite structure, where the potential is a sum of smooth and non-smooth functions, and assume that the associated proximal maps can be computed efficiently. This requirement limits their applicability to potentials that lack such a decomposition. To mitigate the cost of proximal steps, \cite{EhrhardtKugerSchonlieb} proposed using inexact proximal mappings, though their guarantees are restricted to convex targets.\\

\noindent More recently, \cite{Habring_Diffusion_2025} introduced Diffusion at Absolute Zero (DAZ), which extends MYULA by incorporating an annealing schedule for the regularization parameter. By assuming the potential is semi-convex and convex outside a ball, DAZ relaxes the global convexity requirement of MYULA and achieves an $\mathcal{O}(\lambda^{1/2})$ rate in TV distance. Under similar composite and convexity assumptions, \cite{Renaud_Stability_2025} established an $\mathcal{O}(\lambda^{1/(2p)})$ rate in Wasserstein-$p$ distances for the Proximal Stochastic Gradient Langevin Algorithm (PSGLA). Very recently, \cite{Luu_DCLA_2026} replaced the weak convexity requirement of \cite{Renaud_Stability_2025} to a difference-of-convex structure on the non-smooth term, extending the non-asymptotic guarantees in Wasserstein-$p$ distances with the same convergence rate.\\

\noindent\textbf{Gaussian Smoothing}

\noindent \cite{GaussSmoothing} introduce the Perturbed Langevin Monte Carlo (P-LMC) algorithm, which addresses the non-smoothness of $u$ by injecting noise into the argument of the gradient. They consider a composite potential of the form $\bar{u}=u+g$, where $u$ is convex with an a-H\"older continuous gradient for $a\in[0,1]$, and $g$ is smooth and strongly convex. In Wasserstein-2 distance they recover the optimal convergence rate when $a=1$, and obtain a rate of order $\mathcal{O}(\lambda^{1/4})$ in the non-smooth case $a=0$ (Remark 3.5 therein). This approach was later generalized to $p-$Gaussian smoothing in \cite{GaussWeakly}, addressing $a-$mixture weakly smooth potentials under the log-Sobolev inequality. In this setting, the method achieves a convergence rate in Wasserstein-2 distance of order $\mathcal{O}(\lambda^{a/2})$ for $a\in(0,1]$.\\ In a different approach, \cite{Laumont_2022} assume a composite structure similar to the proximal ULA literature and smooth the irregular term via Gaussian convolution. The gradient of the smoothed term is approximated through Tweedie's formula, relating it to a trained denoiser, which leads to the Plug-and-Play ULA (PnP-ULA). The drift is thus determined by a learned network rather than by the potential itself, and their assumptions are correspondingly stated in terms of the denoiser. They establish a bias of order $\mathcal{O}(\lambda^{1/2})$ in $V$-total variation between the stationary distribution of the scheme and that of the smoothed target, rendering their result not directly comparable to the rates in the present discussion. Moreover, in the absence of any dissipativity yielding condition, their constants depend on the radius of the convex projection set required to recover contraction.\\ The anchored Langevin algorithm of \cite{Gurbuzbalaban_Anchored_2025}
addresses a different gap, primarily targeting heavy-tailed distributions.
The authors introduce a reference potential and define a modified SDE whose
drift and diffusion are constructed so as to preserve the original invariant
measure. For non-smooth targets they consider a composite structure, a smooth
and strongly convex component together with an irregular term that is weakly
convex and Lipschitz, and hence admits bounded gradients almost everywhere,
the reference potential being obtained by replacing the latter with a
Gaussian-smoothed approximation. They establish non-asymptotic convergence
guarantees in Wasserstein-2 distance. Their final estimate (Theorem~30
therein), however, carries a third term arising from the Monte Carlo
approximation of the smoothed reference potential, which is increasing in
both the stepsize and the number of iterations and is thus not uniform in
time. The allowable stepsize is moreover restricted to $\mathcal{O}(d^{-2})$.\\

\noindent \textbf{(B)} \textbf{Direct subgradient methods}:
\citet{DurmusSub} introduced a subgradient variant of the stochastic unadjusted Langevin algorithm (SSGLD) for sampling from non-smooth, log-concave target distributions. Under the assumption that the potential is Lipschitz and convex, they derive non-asymptotic bounds in KL divergence at rate $\mathcal{O}(\lambda)$. However, their convergence guarantees apply only to the weighted average of the iterates' distributions, leaving last-iterate convergence unaddressed. In a different direction, \cite{Lehec_LMC_2022} studies ULA under a subgradient formulation for globally Lipschitz convex potentials and establishes an $\mathcal{O}(\lambda^{1/4})$ rate in Wasserstein-2 distance under a log-Sobolev inequality, with the same rate extending to unconstrained settings when the subgradient has at most linear growth. \cite{Fruehwirth_Ergodicity_2024} consider an explicit subgradient Langevin discretization under strong convexity and linear growth of the subgradient. By separately establishing geometric convergence of the iterates to the scheme's stationary (discrete) measure and bounding the stationary bias $W_2(\pi_\lambda, \pi)$ via an energy-entropy functional argument, they recover a rate of order $\mathcal{O}(\lambda^{1/2})$ in $W_2$, matching the smooth case rate.

\subsection{Contraction rates for Langevin dynamics}
Exponential contraction estimates for the Langevin diffusion beyond convexity have also been explored. In the classical case the contraction $W_p(\mu P_t,\nu P_t)\leq
e^{-\lambda t}W_p(\mu,\nu)$, $p\geq1$, holds with prefactor one precisely when
the Bakry--\'Emery curvature is bounded below by $\lambda>0$, which on
$\mathbb{R}^d$ amounts to $\nabla^2u\succeq\lambda I$. In the seminal work of\cite{EberleReflective} the reflection
coupling introduced there yielded contraction in $W_1$, assuming only \textit{semi-convexity} together with
\textit{strong convexity at infinity}. Here semi-convexity is precisely the requirement that
the curvature be bounded below, possibly by a negative constant.
\cite{W2rate} extended this to $\mathcal{L}^p$-Wasserstein distances under the
same hypotheses. However their estimate , for $p=2$, is of the form
\begin{align*}
W_2\left(\mu P_t,\nu P_t\right)\leq Ce^{-\lambda t/2}
\left(W_1(\mu,\nu)+W_2^2(\mu,\nu)\right)^{1/2},
\end{align*}
where the right-hand side involves $W_1$ under a square root. It is additional term, which halves the rate attainable by algorithms proceeding through such an estimate. Contraction in
the form employed here, namely $W_2(\mu P_t,\nu P_t)\leq
ce^{-\lambda t}W_2(\mu,\nu)$ with a finite prefactor $c>1$, was obtained in
\cite{Wang2020} under a log-Sobolev inequality and negatively bounded curvature. The result is established for diffusion semigroups on general Riemannian manifolds and for
$C^1$ drift vector fields. A $W_2$-contraction was subsequently derived in
\cite{MonmarcheHighTemp} for elliptic diffusions whose drift contracts
distances outside a compact set, though only for a sufficiently large diffusion
coefficient, a regime opposite to the one of interest for optimisation.\\

\noindent Although recent advances have addressed non-convex settings under superlinear growth, these results are largely confined to highly smooth regimes. Our work aims to bridge this gap by allowing discontinuities in the gradient of $u$ while retaining both superlinearity and non-convexity.
\section{Notation}
\indent We introduce some basic notation. For $u,v\in\mathbb{R}^d$, define the scalar product $\langle u,v\rangle=\sum_{i=1}^d u_i v_i$ and the Euclidian norm $|u|={\langle u,u\rangle}^{1/2}$. For all continuously differentiable functions $f:\mathbb{R}^d\to\mathbb{R}$, $\nabla f$ denotes the gradient. For symmetric matrices $A$ and $B$ we write $A\succeq B$ when $A-B$ is positive semi-definite, and $\mathrm{I}_d$ denotes the identity on $\mathbb{R}^d$. For semi-convex $u$ we write $\nabla^2u$ for the Alexandrov Hessian, which exists Lebesgue-almost everywhere, and any matrix inequality involving it is understood to hold almost everywhere. Since the singular part of the distributional Hessian of a convex function is non-negative, such a bound holds distributionally as well, and the two notions agree with the classical one when $u\in C^2$. The integer part of a real number $x$ is denoted by $\lfloor x \rfloor$. For an $\mathbb{R}^d$-valued random variable $Z$, its law on $\mathcal{B}(\mathbb{R}^d)$, i.e. the Borel sigma-algebra of $\mathbb{R}^d$, is denoted by $\mathcal{L}(Z)$. We denote by $\mathcal{P}(\mathbb{R}^d)$ the set of probability measures on $\mathcal{B}(\mathbb{R}^d)$ and for any $p\in\mathbb{N}$, $\mathcal{P}_p(\mathbb{R}^d)=\{\pi\in\mathcal{P}(\mathbb{R}^d):\int_{\mathbb{R}^d}|x|^p d\pi(x)<\infty\}$ denotes the set of all probability measures over $\mathcal{B}(\mathbb{R}^d)$ with finite $p$-th moment. For any two Borel probability measures $\mu$ and $\nu$, we define the Wasserstein distance of order $p\geq 1$ as
$$W_p(\mu,\nu)=\left(\inf_{\zeta\in\Pi(\mu,\nu)}\int_{\mathbb{R}^d\times\mathbb{R}^d}|x-y|^p d\zeta(x,y)\right)^{1/p},$$
where $\Pi(\mu,\nu)$ is the set of all transference plans of $\mu$ and $\nu$. Moreover, for any $\mu,\ \nu\in\mathcal{P}_p(\mathbb{R}^d) $, there exists a transference plan $\zeta^*\in\Pi(\mu,\nu)$ such that for any coupling $(X,Y)$ distributed according to $\zeta^*$, $W_p(\mu,\nu)=\mathbb{E}^{1/p}\left[\left|X-Y\right|^p\right]$. For $\mu$ and $\nu$ in $\mathcal{P}(\mathbb{R}^d)$ with $\mu=f\nu$, we write $H(\mu\,|\,\nu)=\int f\log f\,d\nu$ for the relative entropy of $\mu$ with respect to $\nu$, and $H(\mu\,|\,\nu)=+\infty$ whenever $\mu$ fails to be absolutely continuous with respect to $\nu$.

\section{The Algorithm }
Consider the overdamped Langevin SDE $(Z_t)_{t\in\mathbb{R}_{+}}$ given by
\begin{align}
    dZ_t=-h(Z_t)dt+\sqrt{2\beta^{-1}}dB_t,\ t\geq 0,\label{SDE_1}
\end{align}
with $Z_0=\theta_0\in\mathbb{R}^d$, where $h\in \partial u$, $\beta$ is the inverse temperature parameter and $(B_t)_{t\geq0}$ is a standard $d$-dimensional Brownian motion. To address the superlinear behavior of the subgradient $h(x)=\nabla u(x)$, we introduce a parameterized family of drift functions $(h_{\lambda})_{\lambda\geq 0}$, where $h_{\lambda}:\mathbb{R}^d\to\mathbb{R}^d$ depends on the stepsize $\lambda$. These drift functions, which we will refer to as taming functions, serve as close approximations of $\nabla u$. We then define, the tamed subgradient unadjusted Langevin algorithm (SG-TULA), $(\theta_n^{\lambda})_{n\geq0}$, by
\begin{flalign}
\mbox{(\textbf{SG-TULA}):} \qquad \qquad \quad  \qquad \theta_{n+1}^{\lambda}=\theta_n^{\lambda}-\lambda h_{\lambda}(\theta_n^{\lambda})+\sqrt{2\lambda\beta^{-1}}{\xi}_{n+1},\ \theta_0^{\lambda}=\theta_0,\ n\in\mathbb{N}\label{SGTULA}, &&
\end{flalign}
where $\lambda>0$ is the stepsize and $(\xi_n)_{n\geq1}$ is a sequence of i.i.d. standard Gaussian random variables on $\mathbb{R}^d$. We propose two distinct explicit choices for the family $(h_{\lambda})_{\lambda\geq 0}$, which are inspired by previous studies on tamed Langevin based algorithms \cite{KineticLangevin}, \cite{TamingIPLA}. Define for all $x\in\mathbb{R}^d$ and $\lambda>0$:\begin{align}
    h_{\lambda,u}(x):=\dfrac{h(x)-\mu x}{1+\lambda^{1/2}|h(x)-\mu x|}+\mu x \text{ and } h_{\lambda,c}:=\left(\dfrac{h^{(i)}(x)-\mu x^{(i)}}{1+\lambda^{1/2}|h^{(i)}(x)-\mu x^{(i)}|}+\mu x^{(i)}\right)_{i\in\{1,\ldots,d\}}. \label{tame_unif}
\end{align} The Euler scheme \ref{SGTULA} with $h_{\lambda}=h_{\lambda,u}$ is referred to as the uniform-wise subgradient tamed unadjusted Langevin algorithm (SG-TULAu), and the choice $h_{\lambda}=h_{\lambda,c}$, is referred to as the coordinate-wise subgradient tamed unadjusted Langevin algorithm (SG-TULAc). The corresponding iterates of the algorithms will be denoted by $\theta_n^{\lambda,u}$ and $\theta_n^{\lambda,c}$ respectively. For brevity, we shall henceforth suppress this distinction and conduct the analysis under the uniform-wise taming scheme. In the main Theorems, where the difference between the two variants is relevant, it will be explicit.\\
\section{Assumptions \& Consequences}
\hypertarget{A1}{}\begin{assumption}
The potential $u$ is $L$-semi-convex. There exists $L\ge0$ such that
$u+\tfrac{L}{2}|\cdot|^2$ is convex. Equivalently, every subgradient, $h\in\partial u$, satisfies the
one-sided Lipschitz bound
\begin{align}
    \langle h(x)-h(y),\,x-y\rangle \ge -L|x-y|^2,\qquad \forall x,y\in\mathbb{R}^d. \label{eqA1}
\end{align}
\end{assumption}

\begin{remark}
Under \hyperlink{A1}{A1}, $u$ is locally Lipschitz, hence by Rademacher's
theorem $\nabla u$ exists Lebesgue-almost everywhere, and by Alexandrov's theorem $u$ is
in fact twice differentiable a.e. Consequently the subdifferential
$\partial u(x)$ is nonempty and compact
for every $x\in\mathbb{R}^d$ and reduces to $\{\nabla u(x)\}$ almost everywhere.\\ In this sense semi-convexity is a natural condition past smooth settings. It admits gradient discontinuities on a set of measure zero,
precisely the non-smoothness met in sparsity penalties such as the $\ell_1$,
total-variation and SCAD priors and in activation functions such as ReLU.
\end{remark}

\hypertarget{A2}{}\begin{assumption}
The potential $u$ is strongly convex at infinity (outside a compact set). There exist
$\mu>0$ and $R\ge0$ such that, for every $h\in\partial u$ and all $x,y\in\mathbb{R}^d$
with $|x-y|\ge R$,
\begin{align}
    \langle h(x)-h(y),\,x-y\rangle \ge \mu|x-y|^2. \label{eqA2}
\end{align}
\end{assumption}

\hypertarget{A3}{}\begin{assumption}
Each subgradient of $u$ grows at most polynomially. There exist $m,K>0$ and $p\ge1$ such
that, for every $h\in\partial u$,
\begin{align}
    |h(x)|\le m+K|x|^p,\qquad\forall x\in\mathbb{R}^d. \label{eqA3}
\end{align}
\end{assumption}
\begin{remark} \label{remark_diss}
Let Assumptions \hyperlink{A2}{A2} and \ \hyperlink{A3}{A3} hold, then $h$ is globally dissipative. That is, there exist $\mu,b>0$, such that\begin{align}
\langle x,h(x)\rangle\geq \dfrac{\mu}{2}|x|^2-b,\ \forall x\in\mathbb{R}^d,\label{eqR1}
\end{align}
where $b=\max\left\{m^2/(2\mu),mR+(\mu/2)R^2+KR^{p+1}\right\}$.\\ Under the addition of \hyperlink{A1}{A1}, $b$ is further refined to $b=\max\left\{m^2/(2\mu),mR+(L+\mu/2)R^2\right\}$.
\begin{proof}
    The proof is postponed to Appendix \hyperlink{proof_remark_diss}{G}.
\end{proof}
\end{remark}
\hypertarget{A4}{}\begin{assumption} The initial condition of the algorithm is such that
\begin{align}
    \mathbb{E}|\theta_0|^{2p}<\infty\label{eqA4}.
\end{align}
\end{assumption}
\hypertarget{A5}{}\begin{assumption}
The potential is coordinate-wise dissipative: there exist $\mu,b>0$ such that, for
every $i\in\{1,\dots,d\}$,
\begin{align}
    x^{(i)}h^{(i)}(x)\geq \tfrac{\mu}{2}|x^{(i)}|^2-b,\qquad \forall x\in\mathbb{R}^d. \label{eqA5}
\end{align}
\end{assumption}
\begin{lemma}\label{lem:pr-chord}
Let $u:\mathbb{R}^d\to\mathbb{R}$ be $L$-semi-convex \emph{(\hyperlink{A1}{A1})}, and suppose there exist
$\mu_0,\rho>0$ such that the Alexandrov Hessian satisfies $\nabla^2u(z)\succeq\mu_0 I$ a.e.\ for any $z$ with $|z|\ge\rho$. Then, for any selection $h\in\partial u$ and all $x,y$, we have
\[
  \langle h(x)-h(y),\,x-y\rangle\ \ge\ \tfrac{\mu_0}{2}\,|x-y|^2
  \qquad\text{whenever}\quad |x-y|\ \ge\ \tfrac{4\rho(\mu_0+L)}{\mu_0}.
\]
\end{lemma}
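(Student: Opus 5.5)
The plan is to reduce the statement to a one-dimensional estimate along the segment from $y$ to $x$. There, the curvature is at least $\mu_0$ except on the short chord where the segment meets the ball $\{|z|\le\rho\}$, and is at least $-L$ on that chord.

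\textbf{Reduction to one dimension.} Fix $x,y$ and set $v=x-y$, $D=|v|$, and $\varphi(t)=u(y+tv)$ for $t\in\mathbb{R}$. By \hyperlink{A1}{A1}, $t\mapsto\varphi(t)+\tfrac{L}{2}D^2t^2$ is convex. Hence $\varphi$ has one-sided derivatives everywhere, and $\varphi''$ is a signed Radon measure on $\mathbb{R}$ with $\varphi''\ge -LD^2\,dt$.

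\textbf{Step 1: bracketing the subgradients.} I would show $\langle h(y),v\rangle\le\varphi'_+(0)$ and $\langle h(x),v\rangle\ge\varphi'_-(1)$. This uses the semi-convex subgradient inequality
\[
u(z)\ge u(w)+\langle h(w),z-w\rangle-\tfrac{L}{2}|z-w|^2,
\]
which holds because $\partial u=\partial(u+\tfrac L2|\cdot|^2)-L\,\mathrm{id}$ and the latter is a convex subdifferential. Applying it with $w=y$, $z=y+tv$ and letting $t\downarrow0$ gives the first bound; applying it with $w=x$, $z=x-tv$ gives the second. Consequently
\[
\langle h(x)-h(y),v\rangle\ \ge\ \varphi'_-(1)-\varphi'_+(0)\ =\ \varphi''\bigl((0,1)\bigr).
\]

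\textbf{Step 2: transferring the Hessian bound to the segment (main obstacle).} The hypothesis $\nabla^2u\succeq\mu_0 I$ holds only almost everywhere in $\mathbb{R}^d$, and a segment is Lebesgue-null, so the bound cannot be restricted to the line directly. I would mollify: $u_\varepsilon=u*\eta_\varepsilon$ is smooth and satisfies $\nabla^2u_\varepsilon\succeq -L I$ everywhere. As the Notation section notes, the distributional Hessian of $u$ equals its Alexandrov Hessian plus a singular part that is nonnegative by semi-convexity. Therefore the distributional Hessian is $\succeq\mu_0 I$ on the open set $\{|z|>\rho\}$, and so $\nabla^2u_\varepsilon(z)\succeq\mu_0 I$ whenever $|z|>\rho+\varepsilon$.

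With $\varphi_\varepsilon(t)=u_\varepsilon(y+tv)$, this gives $\varphi_\varepsilon''(t)\ge\kappa_\varepsilon(t)D^2$, where $\kappa_\varepsilon=\mu_0$ off the set $I_\varepsilon=\{t:|y+tv|\le\rho+\varepsilon\}$ and $\kappa_\varepsilon=-L$ on it. Since $\varphi_\varepsilon\to\varphi$ locally uniformly, $\varphi_\varepsilon''\to\varphi''$ in distributions. Testing against nonnegative $C_c$ functions and letting $\varepsilon\to0$ yields $\varphi''\ge\kappa\,D^2\,dt$ on $(0,1)$, with $\kappa$ defined using $I=\{t:|y+tv|\le\rho\}$. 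To apply this on the open interval $(0,1)$, I would test against functions increasing to $\mathbf 1_{(0,1)}$; a limiting argument on slightly enlarged radii suffices, because the chord length depends continuously on the radius.

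\textbf{Step 3: chord length and conclusion.} The set $I$ is an interval whose image is a chord of a ball of radius $\rho$, so its Euclidean length is at most $2\rho$ and $|I\cap(0,1)|\le s:=2\rho/D$. Hence
\[
\varphi''\bigl((0,1)\bigr)\ \ge\ D^2\bigl[\mu_0(1-s)-Ls\bigr]\ =\ D^2\bigl[\mu_0-s(\mu_0+L)\bigr].
\]
This is at least $\tfrac{\mu_0}{2}D^2$ precisely when $s(\mu_0+L)\le\mu_0/2$, i.e.\ when $D\ge 4\rho(\mu_0+L)/\mu_0$, which is the claimed threshold. Combining with Step 1 gives $\langle h(x)-h(y),x-y\rangle\ge\tfrac{\mu_0}{2}|x-y|^2$.

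I expect Step 2 to be the only delicate point: passing from the a.e.\ Hessian bound in $\mathbb{R}^d$ to a measure inequality on a null segment requires the nonnegativity of the singular part and a careful weak limit. Steps 1 and 3 are routine.
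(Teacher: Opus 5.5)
Your proof is correct, and its core is the paper's. You mollify and use that the singular part of the distributional Hessian of $u+\tfrac L2|\cdot|^2$ is nonnegative, which gives $\nabla^2u_\varepsilon\succeq\mu_0 I$ on $\{|z|>\rho+\varepsilon\}$. You then bound the bad chord of the segment by $2(\rho+\varepsilon)/|v|$ and let $\varepsilon\to0$. The paper works with the convex $g=u+\tfrac L2|\cdot|^2$, so its integrand is $\ge0$ on the chord rather than $\ge -L|v|^2$. This is the same bookkeeping and gives the same intermediate bound $\langle h(x)-h(y),v\rangle\ge\mu_0|v|^2-2\rho(\mu_0+L)|v|$.

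Where you differ is how the limit is taken and how the subgradients enter. The paper integrates $v^\top\nabla^2g^\epsilon v$ along the segment and then uses $\nabla g^\epsilon(x)\to\nabla g(x)$ pointwise, which relies on continuity of $\nabla g$ on the set $\Phi$ of differentiability points. As written, it therefore proves the inequality only for $x,y\in\Phi$; a selection $h(x)\in\partial u(x)$ at a kink is left to an unstated approximation argument. You instead pass to the limit in the one-dimensional measure $\varphi''$ and attach the subgradients through the bracketing $\langle h(y),v\rangle\le\varphi'_+(0)$ and $\langle h(x),v\rangle\ge\varphi'_-(1)$ of Step 1. That covers every selection and every pair $x,y$ directly, as the statement requires.

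One simplification: once you have $\varphi''-\kappa D^2\,dt\ge0$ as a Radon measure on $\mathbb{R}$ (from nonnegative $C_c^\infty$ test functions and dominated convergence of $\mathbf 1_{I_\varepsilon}\to\mathbf 1_I$), you can evaluate it directly on the Borel set $(0,1)$. The extra limiting argument on enlarged radii is then unnecessary.
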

\begin{proof}
The proof is postponed to Appendix~\hyperlink{proof_lem_pr_chord}{G}.
\end{proof}
\section{Main Results}
\begin{theorem}
\label{theorem_main}Let Assumptions \hyperlink{A1}{A1}-\hyperlink{A4}{A4} hold and $\lambda_0\in(0,1/(4\mu))$. Then, for every $\lambda\in(0,\lambda_0)$ and $n\in\mathbb{N}_0$, the subgradient tamed unadjusted Langevin algorithm (SG-TULAu) given in \eqref{SGTULA} satisfies
        \[\begin{aligned}W_2(\pi_{\beta},\mathcal{L}(\theta^{\lambda,u}_n))\leq  2C_{E}\lambda^{1/4}+eC_{D}\, e^{-n\lambda/T_0},
        \end{aligned}\]
        where $C_{E}$ is given in Lemma~\ref{lemma_err_W2}, $C_{D}$ in
        Lemma~\ref{lemma_crude} and $T_0$ in \eqref{def_horizon}. If additionally Assumption
        \hyperlink{A5}{A5} holds, the same bound holds for
        $W_2(\pi_{\beta},\mathcal{L}(\theta^{\lambda,c}_n))$, with the constants
        $C_{B_1},C_{B_2},C_{B_4}$ replaced by $C_{B_1,c},C_{B_2,c},C_{B_4,c}$ of
        Table~\hyperlink{table:constants}{2}.
\end{theorem}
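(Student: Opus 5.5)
The plan is a restart (or "sewing") argument on a fixed horizon $T_0$, combining three ingredients that the statement already references: a strong-approximation estimate on $[0,T_0]$ (Lemma~\ref{lemma_err_W2}, constant $C_E$), a crude uniform-in-time moment bound giving the distance at time zero (Lemma~\ref{lemma_crude}, constant $C_D$), and a $W_2$-contraction of the Langevin semigroup $P_t$ of \eqref{SDE_1} of the form $W_2(\mu P_t,\nu P_t)\le c\,e^{-\kappa t}W_2(\mu,\nu)$, with $c>1$. This is the type of contraction obtained in \cite{Wang2020} under a log-Sobolev inequality and curvature bounded below. Here the curvature lower bound is $-L$ by \hyperlink{A1}{A1}, and the log-Sobolev inequality for $\pi_\beta$ follows from \hyperlink{A1}{A1}--\hyperlink{A2}{A2}, via dissipativity (Remark~\ref{remark_diss}) plus bounded perturbation. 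The horizon $T_0$ in \eqref{def_horizon} is chosen so that $c\,e^{-\kappa T_0}\le e^{-1}$, i.e.\ the semigroup is a strict contraction with factor $e^{-1}$ over one block.

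Concretely, set $N=\lfloor T_0/\lambda\rfloor$ and split the index $n$ into blocks of length $N$. For each block start $kN$, let $\bar Z^{(k)}$ be the Langevin diffusion started at time $kN\lambda$ from $\theta^\lambda_{kN}$, driven by the same Brownian increments as the algorithm. Lemma~\ref{lemma_err_W2} then gives, uniformly in $k$ and for $t\le T_0$,
\[
W_2\bigl(\mathcal{L}(\theta^\lambda_{kN+j}),\mathcal{L}(\bar Z^{(k)}_{j\lambda})\bigr)\le C_E\lambda^{1/4}.
\]
Uniformity in $k$ requires the moments of $\theta_{kN}^\lambda$ to be bounded uniformly in $k$, which is why the tamed moment bounds (of order $(d/\beta)^{p/2}$ after using \hyperlink{A4}{A4}) enter $C_E$. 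Writing $e_k=W_2(\pi_\beta,\mathcal{L}(\theta^\lambda_{kN}))$ and using the invariance $\pi_\beta P_t=\pi_\beta$, the triangle inequality and the contraction yield
\[
e_{k+1}\le W_2\bigl(\pi_\beta P_{N\lambda},\mathcal{L}(\theta_{kN})P_{N\lambda}\bigr)+C_E\lambda^{1/4}\le e^{-1}e_k+C_E\lambda^{1/4}.
\]
Iterating gives $e_k\le e^{-k}e_0+C_E\lambda^{1/4}/(1-e^{-1})$. For a general $n=kN+j$, one more application of contraction and the local error gives the same type of bound. The factor $1/(1-e^{-1})<2$ produces $2C_E\lambda^{1/4}$. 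Finally, $e^{-k}\le e\cdot e^{-n\lambda/T_0}$ because $k\ge n/N-1\ge n\lambda/T_0-1$, and the prefactor $c$ of the partial block is absorbed into the definitions of $C_D$ and $T_0$. The bound $e_0\le C_D$ comes from Lemma~\ref{lemma_crude}, which controls $\mathbb{E}|\theta_0|^2$ and the second moment of $\pi_\beta$ (order $(d/\beta)^{1/2}$ by dissipativity).

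The main obstacle is the local error of Lemma~\ref{lemma_err_W2} with a discontinuous drift. The one-sided bound \hyperlink{A1}{A1} handles $\langle h(Z_s)-h(\bar\theta_s),Z_s-\bar\theta_s\rangle\ge -L|Z_s-\bar\theta_s|^2$, so we never need Lipschitz continuity of $h$. This leaves the interpolation term $\langle h(\theta_{\lfloor s\rfloor})-h(\bar\theta_s),\cdot\rangle$ and the taming term $h-h_\lambda$. The taming error is $O(\lambda^{1/2})$ times polynomial moments by \eqref{tame_unif}. For the interpolation term I would again use \hyperlink{A1}{A1}, this time applied between $\theta_{\lfloor s\rfloor}$ and $\bar\theta_s$, together with the growth bound \hyperlink{A3}{A3}, to bound it by $|h|\cdot|\bar\theta_s-\theta_{\lfloor s\rfloor}|=O(\lambda^{1/2})$ in $L^1$. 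Grönwall then gives a squared error of order $\lambda^{1/2}$, hence $W_2=O(\lambda^{1/4})$; this is exactly where the rate $1/4$ is lost. For SG-TULAc the same chain applies, with \hyperlink{A5}{A5} supplying coordinatewise dissipativity for the moment bounds, which only changes $C_{B_1},C_{B_2},C_{B_4}$ to their $c$-counterparts.
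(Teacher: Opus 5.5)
Your overall architecture matches the paper's: a restart over a window of length at most $T_0$, the local error $C_E\lambda^{1/4}$ from Lemma~\ref{lemma_err_W2}, contraction towards $\pi_\beta$ with factor $e^{-1}$, and a geometric recursion. Your window $N=\lfloor T_0/\lambda\rfloor$ is also admissible. Indeed $N\lambda> T_0-\lambda_0=(\ln C_w+1)/C_r$ gives $C_we^{-C_rN\lambda}\le e^{-1}$, and $N\lambda\le T_0$ is all that Lemma~\ref{lemma_err_W2} uses.

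\textbf{The gap.} It lies in the last step, the case $n=kN+j$ with $0<j<N$. Restarting at $kN$ and running $j$ steps gives
\[
W_2\bigl(\pi_\beta,\mathcal{L}(\theta^\lambda_n)\bigr)\le C_we^{-C_rj\lambda}e_k+C_E\lambda^{1/4}\le C_we^{-k}C_D+\Bigl(\frac{C_w}{1-e^{-1}}+1\Bigr)C_E\lambda^{1/4}.
\]
Since $j\lambda$ can be arbitrarily small and $C_w>1$, the prefactor $C_w$ multiplies the accumulated discretisation error as well as the initial distance. It cannot be ``absorbed into $C_D$ and $T_0$''. Both are fixed, by Lemma~\ref{lemma_crude} and by \eqref{def_horizon}, and no redefinition of them restores the coefficient $2$ in front of $C_E\lambda^{1/4}$. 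As written, you obtain a bound of the right shape but not the stated constants.

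\textbf{The repair.} This is exactly the paper's argument: put the incomplete block at the beginning instead of the end. The restarted processes of Definition~\ref{def1}, and hence Lemma~\ref{lemma_err_W2}, are available for every $n\ge n_0$, not only for multiples of the window. Set $a_n:=W_2(\pi_\beta,\mathcal{L}(\theta_n^\lambda))$. Restarting the diffusion at $\theta^\lambda_{n-N}$ and applying the contraction with initial law $\mathcal{L}(\theta^\lambda_{n-N})$ gives the sliding recursion
\[
a_n\le C_E\lambda^{1/4}+e^{-1}a_{n-N}\qquad\text{for all } n\ge N.
\]
Iterating $k=\lfloor n/N\rfloor$ times ends at some index $i\in\{0,\dots,N-1\}$. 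Lemma~\ref{lemma_crude} bounds $a_i\le C_D$ for every such $i$; this is why that lemma is stated uniformly in $k$ rather than only for $k=0$. You then get $a_n\le 2C_E\lambda^{1/4}+e^{-k}C_D$ with $k\ge n/N-1\ge n\lambda/T_0-1$, which is the theorem.

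\textbf{A smaller point on the local-error sketch.} This does not affect the assembly, since you cite Lemma~\ref{lemma_err_W2} anyway. Applying \hyperlink{A1}{A1} to the pair $(Z_s,\bar\theta_s)$ leaves the term $\langle h(\bar\theta_s)-h(\theta_{\lfloor s\rfloor}),Z_s-\bar\theta_s\rangle$, in which neither factor is small when $h$ is discontinuous. The paper instead applies \hyperlink{A1}{A1} at the pair where the drifts are actually evaluated, $(\vartheta^{\lambda,n}_{\lfloor s\rfloor},\bar\zeta^{\lambda,n}_s)$, and moves the mismatch into the position factor:
\[
\bigl\langle\vartheta^{\lambda,n}_s-\vartheta^{\lambda,n}_{\lfloor s\rfloor},\,h(\vartheta^{\lambda,n}_{\lfloor s\rfloor})-h(\bar\zeta^{\lambda,n}_s)\bigr\rangle .
\]
By Cauchy--Schwarz, the one-step bound $C_{B_4}\lambda$ and the polynomial moments, this contributes $O(\lambda^{1/2})$ over the window. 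Your target $|h|\,|\bar\theta_s-\theta_{\lfloor s\rfloor}|$ has this form, but it does not follow from the decomposition you wrote.
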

\begin{proof}
    The proof is postponed to Appendix \hyperlink{proof_theorem_main}{G}.
\end{proof}
\begin{corollary}
\label{corollary1}
Let $\epsilon>0$ and let the assumptions of Theorem~\ref{theorem_main} hold. Then, for
$\lambda\leq\min\left\{\lambda_{0},\,\epsilon^4/(4C_{E})^4\right\}$ and
$N\geq \dfrac{T_0}{\lambda}\log\left(\dfrac{2eC_{D}}{\epsilon}\right)$ iterations, one achieves
        \[W_2(\pi_\beta,\mathcal{L}({\theta}^{\lambda,u}_{N}))\leq \epsilon.\]
Overall one chooses $\lambda=\mathcal{O}(\epsilon^4)$ and
$N=\mathcal{O}(\epsilon^{-4}\log(1/\epsilon))$. In terms of the dimension-to-temperature ratio,  Table~\hyperlink{table:constants}{2} yields
$\lambda=\mathcal{O}\left((d/\beta)^{-2p}\right)$ and
$N=\mathcal{O}\left((d/\beta)^{2p}\right)$.
\end{corollary}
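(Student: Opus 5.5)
The corollary follows directly from Theorem~\ref{theorem_main}. We make each of the two terms on the right-hand side at most $\epsilon/2$ and add them. No new probabilistic estimate is needed; everything reduces to choosing $\lambda$ and $N$ and then reading off the orders of $C_E$, $C_D$ and $T_0$ from Table~\hyperlink{table:constants}{2}.

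\emph{Discretisation term.} Take $\lambda\le\min\{\lambda_0,\epsilon^4/(4C_E)^4\}$. Then $\lambda^{1/4}\le \epsilon/(4C_E)$, so $2C_E\lambda^{1/4}\le\epsilon/2$. The theorem requires $\lambda\in(0,\lambda_0)$ strictly. At the endpoint $\lambda=\lambda_0$ we can simply apply the theorem with a slightly larger $\lambda_0'\in(\lambda_0,1/(4\mu))$, or read the condition as strict.

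\emph{Contraction term.} We want $eC_De^{-N\lambda/T_0}\le\epsilon/2$, which is equivalent to $N\lambda/T_0\ge\log(2eC_D/\epsilon)$. This is exactly the stated lower bound on $N$. If $2eC_D/\epsilon\le1$ the condition is vacuous and the term is already small. Summing the two contributions gives $W_2(\pi_\beta,\mathcal{L}(\theta_N^{\lambda,u}))\le\epsilon$.

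\emph{Orders.} Choose $\lambda$ equal to the minimum above. Since $\lambda_0$ and $C_E$ do not depend on $\epsilon$, this gives $\lambda=\mathcal{O}(\epsilon^4)$, and then $N=\lceil T_0\lambda^{-1}\log(2eC_D/\epsilon)\rceil=\mathcal{O}(\epsilon^{-4}\log(1/\epsilon))$.

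For the dependence on dimension and temperature, substitute $C_E=\mathcal{O}((d/\beta)^{p/2})$ from the table. This gives $\lambda\sim\epsilon^4(d/\beta)^{-2p}$. The horizon $T_0$ is independent of $d$, and $C_D=\mathcal{O}((d/\beta)^{1/2})$ enters only through a logarithm. Hence $N=\mathcal{O}((d/\beta)^{2p})$ at fixed $\epsilon$, up to that logarithmic factor.

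The only step requiring care is this bookkeeping: checking in Table~\hyperlink{table:constants}{2} that $T_0$ and $\lambda_0$ carry no hidden polynomial dependence on $d/\beta$, and that the exponential dependence on $\beta$ through the log-Sobolev constant is absorbed into the $\mathcal{O}$ notation as stated.
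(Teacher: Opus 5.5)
Your argument is correct and is exactly the intended one. The paper gives no separate proof; the corollary follows from Theorem~\ref{theorem_main} by forcing each of $2C_{E}\lambda^{1/4}$ and $eC_{D}e^{-N\lambda/T_0}$ below $\epsilon/2$ and reading the orders off Table~\hyperlink{table:constants}{2}.

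One small correction at the endpoint $\lambda=\lambda_0$. Enlarging $\lambda_0$ to some $\lambda_0'$ is not harmless, since $T_0=(\ln C_w+1)/C_r+\lambda_0$, and hence $C_{E}$, grow with it. It is also unnecessary: the proof of the theorem only uses $\lambda\le\lambda_0<1/(4\mu)$, through $\lambda n_0\le T_0$ and the moment bounds, so the closed endpoint is already covered.
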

\noindent The dependence of $\lambda$ and $N$ on the dimension is a feature of the
assumption regime rather than of the scheme. The exponent $2p$ is inherited from
the order $\lambda^{1/4}$ of the strong error, which is what the present
hypotheses on $u$ permit. Under enough regularity on the Hessian, the strong
error would be of order $\lambda$ and the same argument would give exponent $p/2$,
so that for $p=1$ one matches the optimal dimension dependence $(d/\beta)^{1/2}$
of a linearly growing gradient, as one has for a Gaussian target.
\begin{theorem}
\label{theorem2}
Let Assumptions \hyperlink{A1}{A1}-\hyperlink{A4}{A4} hold and $\lambda_0\in(0,1/(4\mu))$. Then, for any $\lambda\in(0,\lambda_0)$, $n\in\mathbb{N}$ and $\beta\geq \max\{4/\mu,\,1/J\}$, the following bound holds
\begin{align*}
    \mathbb{E}[u(\theta_n^{\lambda})]-u(\theta^*)&\leq  C_{\mathcal{T}_1}W_2\left(\pi_\beta,\mathcal{L}(\theta_n^{\lambda})\right) +\dfrac{d}{2\beta}\log\left(\dfrac{2(b+d/\beta)\beta^2J^2}{\mu }\right)+\dfrac{1}{2\beta}\log\left(\pi d\right)+\dfrac{13}{6\beta},
  \end{align*}
  where $ J=m+K2^{2p-2}(1+(2b/\mu)^{p/2})+K2^{p-1}/(p+1) $.
\end{theorem}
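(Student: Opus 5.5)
The plan is the classical split $\mathbb{E}[u(\theta_n^\lambda)]-u(\theta^*)=\bigl(\mathbb{E}[u(\theta_n^\lambda)]-\mathbb{E}[u(Z_\infty)]\bigr)+\bigl(\mathbb{E}[u(Z_\infty)]-u(\theta^*)\bigr)$ with $Z_\infty\sim\pi_\beta$. The first bracket is the sampling term and the second is the temperature gap.

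\textbf{Sampling term.} Take $(\theta_n^\lambda,Z_\infty)$ to be an optimal $W_2$ coupling. Since $u$ is locally Lipschitz with $|h(x)|\le m+K|x|^p$ by \hyperlink{A3}{A3}, integrating $h$ along the segment $[y,x]$ gives
\[
|u(x)-u(y)|\le \int_0^1\bigl(m+K|tx+(1-t)y|^p\bigr)dt\,|x-y|\le \Bigl(m+K2^{p-1}\tfrac{|x|^p+|y|^p}{p+1}\Bigr)|x-y|.
\]
The factor $1/(p+1)$ comes from integrating $t^p$. Cauchy--Schwarz then yields
\[
\bigl|\mathbb{E}u(\theta_n^\lambda)-\mathbb{E}u(Z_\infty)\bigr|\le C\bigl(1+\mathbb{E}^{1/2}|\theta_n^\lambda|^{2p}+\mathbb{E}^{1/2}_{\pi_\beta}|x|^{2p}\bigr)W_2(\pi_\beta,\mathcal{L}(\theta_n^\lambda)).
\]
The $2p$-moment of the iterates is bounded uniformly in $n$ by the moment lemmas for the tamed scheme (the $C_{B_i}$ constants, which use \hyperlink{A4}{A4}). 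The moment of $\pi_\beta$ is controlled by dissipativity \eqref{eqR1}, giving $\mathbb{E}_{\pi_\beta}|x|^2\le 2(b+d/\beta)/\mu$ together with the corresponding higher moments. This produces $C_{\mathcal{T}_1}=\mathcal{O}((d/\beta)^{p/2})$.

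\textbf{Temperature gap.} Set $\bar u=u-u(\theta^*)\ge0$ and $Z=\int e^{-\beta\bar u}$. The Gibbs variational identity gives $\mathbb{E}_{\pi_\beta}\bar u=\beta^{-1}\bigl(-\int\pi_\beta\log\pi_\beta-\log Z\bigr)$. I would bound the two pieces separately.

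For the entropy, the Gaussian with the same second moment maximises differential entropy, so
\[
-\int\pi_\beta\log\pi_\beta\le\tfrac d2\log\bigl(2\pi e\,\mathbb{E}|x|^2/d\bigr)\le \tfrac d2\log\bigl(4\pi e(b+d/\beta)/(\mu d)\bigr).
\]

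For the normaliser, I need a lower bound on $Z$. First locate $\theta^*$: dissipativity \eqref{eqR1} forces $|\theta^*|^2\le 2b/\mu$, because $h(\theta^*)$ can be taken to be $0$. Near $\theta^*$, integrating \hyperlink{A3}{A3} along segments gives $\bar u(x)\le J|x-\theta^*|$ for $|x-\theta^*|\le1$. This is where $J=m+K2^{2p-2}(1+(2b/\mu)^{p/2})+K2^{p-1}/(p+1)$ comes from, via $|tx+(1-t)\theta^*|^p\le2^{p-1}(|\theta^*|^p+t^p|x-\theta^*|^p)$. Hence
\[
Z\ge\int_{|x-\theta^*|\le1}e^{-\beta J|x-\theta^*|}dx .
\]
Because $\beta J\ge1$, this is at least a constant times $(\beta J)^{-d}\,\omega_{d-1}\Gamma(d)$, up to the truncation correction. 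Stirling's formula for $\Gamma(d)$ and $\Gamma(d/2)$ then produces the $\tfrac d2\log(\beta^2J^2\cdots)$ term, the $\tfrac1{2\beta}\log(\pi d)$ term, and the $13/(6\beta)$ remainder, which comes from the Stirling error $1/(12d)$ plus the constants.

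Adding the entropy and normaliser bounds and using $\beta\ge4/\mu$ to absorb the $2\pi e/d$ factors should give $\frac d{2\beta}\log\bigl(2(b+d/\beta)\beta^2J^2/\mu\bigr)+\frac1{2\beta}\log(\pi d)+\frac{13}{6\beta}$.

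\textbf{Main obstacle.} I expect the lower bound on $Z$ to be the hardest step. Truncating the radial integral $\int_0^1 r^{d-1}e^{-\beta Jr}dr$ has to be compared with $\Gamma(d)(\beta J)^{-d}$ sharply enough to produce exactly the stated constants. The cleanest route is to use $\beta J\ge1$ and restrict to a range of $r$ where the integrand dominates, though the exact matching of numerical constants may need some adjustment.
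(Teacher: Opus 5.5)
Your decomposition, the sampling term, the maximum-entropy bound on the entropy, the localisation $|\theta^*|\le\sqrt{2b/\mu}$ via $0\in\partial u(\theta^*)$, and the local estimate on $\bar u$ all match the paper. The paper handles the sampling term by integrating $h$ along segments and applying Cauchy--Schwarz under an optimal coupling, with the $2p$-moments controlled by $C_{B_2}$ and by the Lyapunov bound $\int V_{2p}\,d\pi_\beta\le A(2p)^p$. It imports the entropy/normaliser identity from Raginsky et al. Your bound $\bar u(x)\le J|x-\theta^*|$ on $\{|x-\theta^*|\le 1\}$ is valid. It is even slightly sharper than the paper's $J(|x-\theta^*|^{p+1}+|x-\theta^*|)$, because you parametrise the segment from $\theta^*$.

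The step that fails is the lower bound on $Z$. You claim that $\int_{|x-\theta^*|\le1}e^{-\beta J|x-\theta^*|}\,dx$ is at least a constant times $(\beta J)^{-d}S_d\,\Gamma(d)$, with $S_d=2\pi^{d/2}/\Gamma(d/2)$, ``up to the truncation correction''. This is false under the sole hypothesis $\beta J\ge1$. The integral equals $S_d(\beta J)^{-d}\gamma(d,\beta J)$, and $\gamma(d,\beta J)\le(\beta J)^d/d$. Hence $\gamma(d,\beta J)/\Gamma(d)\le(\beta J)^d/d!$, which is super-exponentially small in $d$ unless $\beta J\gtrsim d$. The truncation correction is therefore not a constant: it wipes out the $\Gamma(d)$ entirely. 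Truncation cannot be avoided either, since the $|x-\theta^*|^{p+1}$ growth rules out a global linear upper bound on $\bar u$. As a result, the bookkeeping you outline does not produce the stated constants. That bookkeeping was Stirling for $\Gamma(d)$, the error $1/(12d)$, and $\beta\ge4/\mu$ ``absorbing $2\pi e/d$''.

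The fix is the one you hint at in your last paragraph, and it is what the paper does. Restrict to $|x-\theta^*|\le(\beta J)^{-1}$; this is the only place $\beta J\ge1$ is used, to keep this ball inside the unit ball. There the exponent is bounded by a constant, so
\[
Z\ \ge\ e^{-c}\,\frac{S_d}{d}\,(\beta J)^{-d}, \qquad \frac{S_d}{d}=\frac{\pi^{d/2}}{\Gamma(d/2+1)},
\]
with $c=1$ for your local bound and $c=2$ for the paper's. The only Gamma function left is $\Gamma(d/2+1)$. The upper Stirling bound $\log\Gamma(d/2+1)\le\tfrac12\log(\pi d)+\tfrac d2\log\tfrac{d}{2e}+\tfrac1{6d}$ then cancels the $\pi$, $e$ and $d$ in the entropy term $\frac{d}{2\beta}\log\frac{4\pi e(b+d/\beta)}{\mu d}$ exactly. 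What remains is $\frac{d}{2\beta}\log\frac{2(b+d/\beta)\beta^2J^2}{\mu}+\frac1{2\beta}\log(\pi d)+\frac1\beta\bigl(c+\frac1{6d}\bigr)$. This gives $\frac{13}{6\beta}$ with the paper's $c=2$, and the better $\frac{7}{6\beta}$ with yours. The hypothesis $\beta\ge4/\mu$ plays no role in this step.
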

\begin{proof}
    Follows directly by invoking Lemmas \ref{lemmaER1} and \ref{lemmaER2}.
\end{proof}
\section{Proof strategy}
Our proof strategy can be summarized in the following steps:
\begin{enumerate}
\item We begin our work by ensuring the well-posedness of the Langevin SDE and the uniqueness and existence of the invariant measure under our assumptions.
\item An important element of our proof strategy is to provide contraction rates for the SDE towards the invariant measure. The first step of our work, is to prove a Log-Sobolev inequality for the invariant measure under our assumptions, which relies on classical arguments with some delicate steps to cater for the presence of discontinuities.
A combination of the LSI and the semi-convexity assumption yields an exponential contraction from the SDE towards the invariant measure (see Proposition \ref{lemma_pi_conv}), i.e \[W_2\left(\nu P_t,\pi_\beta\right)\leq C_we^{-C_rt}W_2\left(\nu,\pi_\beta\right),\]
\item For the discretization analysis we provide uniform in the number of iterations moment bounds for the process generated by the algorithm.
The taming design of the scheme and the dissipativity condition are key in this proof. In addition we prove uniform moment bounds for an auxiliary process which is a Langevin SDE started at the law of the algorithm.
The details can be found in Section \ref{se-mom}.
\item By using the growth conditions, the semi-convexity of the potential and the uniform moment bounds we are able to obtain finite time error estimates between the algorithm and the SDE started at a previous point of the algorithm for any $n_0<n$, denoting $\hat{P}$ the ones step-Markov kernel of the algorithm (see Lemma \ref{lemma_err_W2}), i.e.
\[W_2(\hat{P}_{n-n_0}\hat{P}_{n_0},\hat{P}_{n-n_0} P_{\lambda n_0})\leq C e^{c\lambda n_0} \lambda^\frac{1}{4} \]
\item Combining the discretization and contraction results \[\begin{aligned}
    W_2(\mathcal{L}(\theta_n^\lambda),\pi_\beta)&\leq W_2(\hat{P}_{n-n_0}\hat{P}_{n_0},\hat{P}_{n-n_0} P_{\lambda n_0}) + W_2(\hat{P}_{n-n_0} P_{\lambda n_0},\pi_\beta)
    \\&\leq C e^{c\lambda n_0} \lambda^\frac{1}{4} +  C_w e^{-\dot{c}\lambda n_0} W_2(\hat{P}_{n-n_0},\pi_\beta)
    \\&= C e^{c\lambda n_0} \lambda^\frac{1}{4} +  C_w e^{-\dot{c}\lambda n_0} W_2(\mathcal{L}(\theta^\lambda_{n-n_0}),\pi_\beta)
\end{aligned}\]
By picking a suitable $n_0$ such that the prefactor in the second term of the right-hand side becomes smaller than $e^{-1}$, one achieves 
\[W_2(\mathcal{L}(\theta_n^\lambda),\pi_\beta))\leq C^* \lambda^\frac{1}{4} + e^{-1} W_2(\mathcal{L}(\theta^\lambda_{n-n_0}),\pi_\beta) \]
which by applying a discrete Grownwall's inequality yields Theorem \ref{theorem_main}.
\item The proof of Theorem \ref{theorem2} is based on decomposing \[\mathbb{E}[u(\theta_n^{\lambda})]-u(\theta^*)=\underbrace{\mathbb{E}[u(\theta_n^{\lambda})]-\mathbb{E}[u(\theta_{\infty})]}_{T_1}+\underbrace{ \mathbb{E}[u(\theta_{\infty})]-u(\theta^*)}_{T_2}.  \] 
The term $T_1$ refers to the sampling problem and can be bounded by $W_2(\mathcal{L}(\theta_n^\lambda),\pi_\beta)$ in Lemma \ref{lemmaER1} while the second term refers to the concentration of the target measure around the minimum and can be controlled by a decaying function of $\beta$ in Lemma \ref{lemmaER2}.
\end{enumerate}
\section{Examples}
\subsection{Nanochat Pretraining}\label{sec:nc-exp}

We pretrain the nanochat language model of \citet{Karpathy_nanochat_2025} at two
depths, driving the matrix-valued parameters with the boosted coordinate-wise
map of \citet{Lim_Sabanis_2024} evaluated at the subgradient selection $h$. The
architecture, the potential $u$ and the associated bounds are those of
Appendix~\ref{sec:nanochat}. We refer to the resulting scheme as
SG-TheoPouLA, and it attains the lowest validation bits per byte of the three
optimisers at $\mathsf{L}=12$, where every configuration was tuned directly due to the lack of available scaling laws,
and remains competitive at $\mathsf{L}=24$.

\paragraph{The boosted tamed map.}
Writing $\lambda$ for the stepsize and $\varepsilon>0$ for the boosting
threshold, the map acts on $v\in\mathbb{R}^{\dpar}$ coordinate by coordinate as
\begin{align}\label{eq:nc-boostmap}
  \bigl(\mathrm{T}_{\lambda,\varepsilon}(v)\bigr)^{(i)}
  =\Bigl(1+\frac{\lambda^{1/2}}{\varepsilon+|v^{(i)}|}\Bigr)\,
   \frac{v^{(i)}}{1+\lambda^{1/2}\,|v^{(i)}|},
  \qquad i\in\{1,\dots,\dpar\},
\end{align}
where the right factor is the coordinate-wise tamed drift $h_{\lambda,c}$ of
\eqref{tame_unif} taken with $\mu=0$, whereas the left factor lifts those
coordinates that taming leaves small. The constant $\varepsilon$ acts as a
threshold rather than as a strength, since it fixes the magnitude below which a
coordinate is treated as small and receives the lift, and it caps that lift at
$1+\lambda^{1/2}/\varepsilon$. The device was introduced in
\citet{Lim_Sabanis_2024} to address vanishing gradients, which is exactly the
regime in which taming alone leaves a coordinate untouched and the iterate
stalls, and it is what makes the map competitive on deep architectures.

\begin{algorithm}[H]
\caption{SG-TheoPouLA }
\label{alg:nc-tula}
\begin{algorithmic}[1]
\REQUIRE stepsize schedule $(\lambda_n)$, threshold schedule $(\varepsilon_n)$,
  averaging coefficient $\beta_1$, weight decay $w$, initial $\theta_0$ and
  $m_0=0$
\FOR{$n=0,1,2,\dots$}
  \STATE draw a mini-batch and form the subgradient $H_{n+1}\in\partial
    u(\theta_n)$
  \STATE $\psi_n\leftarrow(1-\lambda_n w)\,\theta_n$
  \STATE $m_{n+1}\leftarrow\beta_1 m_n+(1-\beta_1)H_{n+1}$
  \STATE $\theta_{n+1}\leftarrow\psi_n-\dfrac{\lambda_n}{1-\beta_1^{\,n+1}}\,
    \mathrm{T}_{\lambda_n,\varepsilon_n}(m_{n+1})$
\ENDFOR
\end{algorithmic}
\end{algorithm}

\paragraph{From \eqref{SGTULA} to Algorithm~\ref{alg:nc-tula}.}
Three choices specialise the analysed scheme to pretraining at this scale. The
first is the boosted map itself, adopted because the taming and boosting
combination is known from earlier work to be the more competitive of the two on
deep architectures, and because the higher-order regularisation of
\eqref{eq:nc-ureg} acting on the original gradient field corresponds, under the
treatment of equation~(11) of \citet{Lim_Sabanis_2024}, to decoupled weight
decay applied on the tamed field. The implementation therefore carries that
regularisation in the form the tamed geometry prescribes, and the runs stay
within the regime the analysis covers. The second is the temperature, since the
inverse temperatures relevant to pretraining are of order $\beta\sim10^{12}$ and the
injected noise term of \eqref{SGTULA} would contribute fluctuations orders of
magnitude below those already supplied by mini-batch sampling. Drawing it would
consume computation to no measurable effect, so we work in the zero temperature
limit. The
third and most consequential is the exponential moving average, which we form
on the raw subgradients and to which the map is then applied. We find that
ordering substantially stronger at this scale than its converse, in which each
subgradient is tamed before entering the average.

Two further details complete the description of the implementation. The stepsize
entering \eqref{eq:nc-boostmap} is the scheduled learning rate and therefore follows a linear decay to a fixed fraction of its
peak, and the bias correction rescales the step rather than the buffer, so that
the map is evaluated at the uncorrected average. The $\varepsilon$-threshold carries a
schedule of its own, descending in $\log_{10}$ scale from its opening value to
the working value of Table~\ref{tab:nc-hyper}, which holds the multiplier
applied to near-zero coordinates in check over the warmup steps while the
average is still forming.

\paragraph{Protocol.}
Muon \citep{Jordan_Muon_2024} is defined only for matrix-valued parameters and therefore requires a
second optimiser for the embeddings, the unembedding and the scalars present in the architecture. We follow \citet{Karpathy_nanochat_2025} in taking
AdamW for those groups and hold its hyperparameters fixed throughout, so that
every difference recorded below is attributable to the matrix optimiser alone
and the three are compared on equal terms. Moreover, the pipeline equips Muon with three devices that are not an integral part of the
algorithm but plug-ins, namely a second-moment normalisation of the
step, a momentum schedule and a weight decay decaying to zero across training.We
disable all three so that the comparison isolates the matrix update rule rather
than the engineering accumulated around it. Both models take $V=65\,536$,
context length $T=2048$ and full causal attention at every layer, trained on the
ClimbMix-400B corpus of \citet{Diao_ClimbMix_2025}. The shallower model has
$\dm=768$ and $H=6$, whereas the deeper one has $\dm=1536$ and $H=12$. The
shallower runs cover $5000$ steps at a batch of $524\,288$ tokens, whereas the
deeper runs cover $7812$ steps at a batch of $1\,048\,576$ tokens, and in both
cases the learning rate follows a warmup over the tabulated number of steps and
a cosine decay thereafter. We report the validation bits per byte recorded by
the training loop at the final step, together with CORE, the unweighted mean of
the centred accuracies over the $22$ tasks of the DCLM suite of
\citet{Li_DCLM_2024}. Repeating a configuration unchanged returns validation
bits per byte within $0.0008$, so differences below that figure are read as
level throughout. All runs were carried out on a cluster of four NVIDIA A100
80GB devices.

\paragraph{Hyperparameter transfer.}
Depth $\mathsf{L}$ is the single scaling knob in the pipeline, setting the width as
$\dm=64\mathsf{L}$ rounded up to a multiple of the head dimension $128$, from
which the number of heads follows as $H=\dm/128$, and fixing the horizon as
$D=r\,N_{\mathrm{scaling}}$ for a tokens-per-parameter ratio $r$, where
$N_{\mathrm{scaling}}$ counts the transformer matrices together with the
unembedding. The batch size follows $B=B_{\mathrm{ref}}(D/D_{\mathrm{ref}})^{0.383}$ rounded
to the nearest power of two, and every rate but the hardcoded smear gate is then
corrected by $(B/B_{\mathrm{ref}})^{1/2}$. The table-shaped groups comprising the
unembedding and the two embedding tables carry the further width correction
$(\dm/768)^{-1/2}$, from which the matrix and scalar rates are exempt, whereas
the weight decay is set by
$w=w_{\mathrm{ref}}(B/B_{\mathrm{ref}})^{1/2}D_{\mathrm{ref}}/D$ so
as to hold $B/(\eta\lambda D)$ constant. Since doubling the depth doubles both
the batch and the width, the two corrections cancel on the table-shaped groups
and their rates hold at either scale, whereas the matrix and scalar rates retain
the batch correction alone. The rules are calibrated for AdamW and Muon, which therefore
move to the deeper model automatically and arrive already close to their best.
No comparable study exists for the boosted tamed map, so its configuration at
the greater depth was found by manual finetuning and its own scaling law remains to
be quantified.

\begin{figure}[H]
  \centering
  \includegraphics[width=0.68\linewidth,height=0.74\textheight,keepaspectratio]{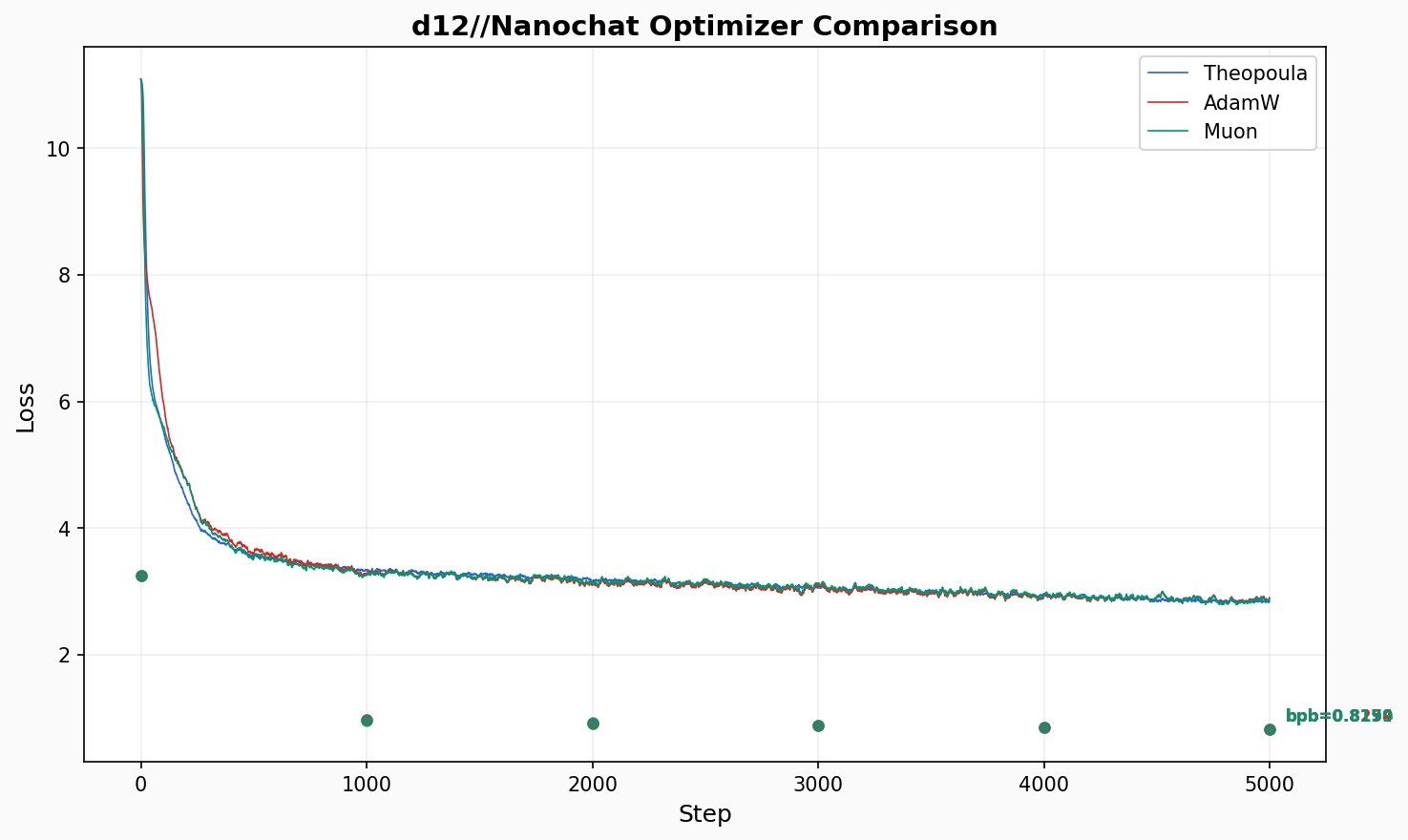}
  \caption{Pretraining at $\mathsf{L}=12$ under the tuned Muon, AdamW and
  SG-TheoPouLA configurations of Table~\ref{tab:nc-hyper}.}
  \label{fig:nc-d12}
\end{figure}

\begin{table}[H]
\centering
\caption{Pretraining at $\mathsf{L}=12$, where lower is better for validation
bits per byte and higher is better for CORE, with the best of each column in
bold. AdamW drives the non-matrix groups in every row.}
\label{tab:nc-d12}
\begin{tabular}{llrr}
\hline
Optimiser & Tuning & Val bpb $\downarrow$ & CORE $\uparrow$\\
\hline
Muon                  & Karpathy & 0.8253 & 0.1664\\
Muon                  & ours     & 0.8199 & 0.1706\\
AdamW                 & Karpathy & 0.8579 & 0.1143\\
AdamW                 & ours     & 0.8254 & 0.1523\\
SG-TheoPouLA \ (i)    & ours     & \textbf{0.8163} & 0.1638\\
SG-TheoPouLA \ (ii)   & ours     & 0.8199 & \textbf{0.1715}\\
\hline
\end{tabular}
\end{table}

Figure~\ref{fig:nc-d12} and Table~\ref{tab:nc-d12} record the shallower
comparison, in which SG-TheoPouLA reaches the lowest validation bits per byte
of the six configurations under its first setting and the highest CORE under
its second, the two differing in warmup, in weight decay and in the final
learning rate fraction. Re-tuning improved both baselines over their published
defaults, so the comparison is made against Muon and AdamW at their best rather
than at their defaults.

\begin{figure}[H]
  \centering
  \includegraphics[width=0.68\linewidth,height=0.74\textheight,keepaspectratio]{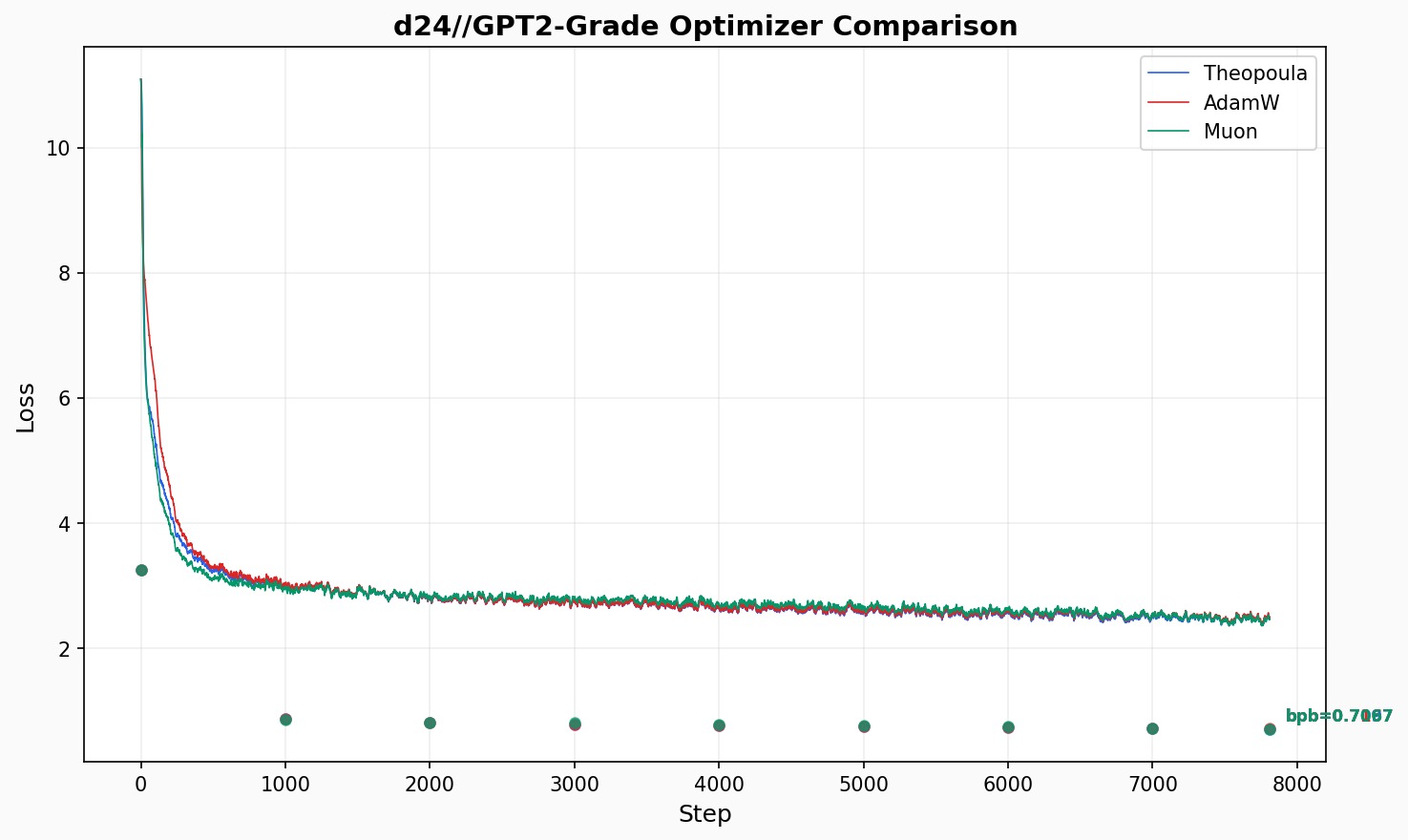}
  \caption{Pretraining at $\mathsf{L}=24$ under the tuned Muon, AdamW and
  SG-TheoPouLA configurations of Table~\ref{tab:nc-hyper}.}
  \label{fig:nc-d24}
\end{figure}

\begin{table}[H]
\centering
\caption{Pretraining at $\mathsf{L}=24$, where the first two rows are published
reference figures rather than runs of ours and the best among our runs is in
bold. AdamW drives the non-matrix groups in every row.}
\label{tab:nc-d24}
\begin{tabular}{lrr}
\hline
Model or optimiser & Val bpb $\downarrow$ & CORE $\uparrow$\\
\hline
OpenAI GPT-2 \citep{Radford_GPT2_2019}           & ---    & 0.2565\\
Karpathy baseline \citep{Karpathy_nanochat_2025} & 0.7186 & 0.2571\\
\hline
Muon           & \textbf{0.7067} & \textbf{0.2637}\\
AdamW          & 0.7167 & 0.2577\\
SG-TheoPouLA   & 0.7133 & 0.2574\\
\hline
\end{tabular}
\end{table}

At the greater depth, recorded in Figure~\ref{fig:nc-d24} and
Table~\ref{tab:nc-d24}, Muon leads and SG-TheoPouLA follows it closely on
validation bits per byte while remaining next to AdamW on CORE. All three
improve on the published baseline, and they do so from a configuration that was
tuned directly rather than transferred.

\begin{table}[H]
\centering
\small
\setlength{\tabcolsep}{4.5pt}
\caption{Final hyperparameters. The matrix rate and the weight decay are the
values supplied to the pipeline, which rescales them with depth, so the rate at
$\mathsf{L}=24$ is not directly comparable with the one above it. The learning rates for the embeddings, the unembedding
and the scalars being $0.3$, $0.008$ and $0.5$. Warmup is measured in steps and
the final learning rate is a fraction of the peak.}
\label{tab:nc-hyper}
\begin{tabular}{llrrrrll}
\hline
Optimiser & $\mathsf{L}$ & Matrix lr & Warmup & Weight decay & Final lr
  & EMA & Boosting\\
\hline
Muon (Karpathy)   & 12 & $0.02$             & 40  & 0.15 & 0.05 & $0.90$ & ---\\
Muon (ours)       & 12 & $0.008$            & 40  & 0.15 & 0.05 & $0.90$ & ---\\
AdamW (Karpathy)  & 12 & $3\times10^{-4}$   & 40  & 0.15 & 0.05 & $0.80$ & ---\\
AdamW (ours)      & 12 & $2\times10^{-3}$   & 40  & 0.15 & 0.05 & $0.80$ & ---\\
SG-TheoPouLA (i)  & 12 & $0.01$             & 120 & 0.10 & 0.05 & $0.82/0.95$
  & $3\times10^{-4}\to10^{-11}$\\
SG-TheoPouLA (ii) & 12 & $0.01$             & 80  & 0.15 & 0.10 & $0.82/0.95$
  & $3\times10^{-4}\to10^{-11}$\\
\hline
Muon              & 24 & $0.02$             & 40  & 0.15 & 0.05 & $0.90$ & ---\\
AdamW             & 24 & $1.8\times10^{-3}$ & 50  & 0.15 & 0.05 & $0.80$ & ---\\
SG-TheoPouLA      & 24 & $7\times10^{-3}$   & 120 & 0.18 & 0.10 & $0.82$
  & $3\times10^{-4}\to10^{-11}$\\
\hline
\end{tabular}
\end{table}
 \subsection{Generalized phase retrieval}\label{sec:pr}

Phase retrieval is the task of reconstructing a signal from the magnitudes of its linear
measurements, the phases being lost; it is the standard model for X-ray crystallography, coherent
diffraction imaging and astronomical imaging, where detectors record intensities but not phase
\cite{SunQuWright2018}. One fixes an unknown signal $z_\star\in\mathbb{C}^n$ and known sensing vectors
$a_1,\dots,a_N\in\mathbb{C}^n$, here drawn i.i.d.\ complex Gaussian,
$a_k=\tfrac{1}{\sqrt2}(X_k+iY_k)$ with $X_k,Y_k\sim\mathcal{N}(0,I_n)$ independent, and observes only the
magnitudes $y_k=|a_k^{*}z_\star|$, $k=1,\dots,N$. Since $z_\star$ and $z_\star e^{i\phi}$ produce the
same magnitudes, the signal can be recovered only up to a global phase $z_\star\mapsto z_\star e^{i\phi}$.
Writing $b_k:=y_k^2$, a standard way to fit the measurements is to minimise the quartic least-squares
energy
\begin{align*}
  f(z)=\frac{1}{2N}\sum_{k=1}^{N}\bigl(|a_k^{*}z|^2-b_k\bigr)^2,\qquad z\in\mathbb{C}^n ,
\end{align*}
whose associated target is $\pi_\beta\propto e^{-\beta f}$.

To read $f$ as a potential on $\mathbb{R}^d$ we use the canonical identification
$\mathbb{C}^n\cong\mathbb{R}^{d}$, $d=2n$, sending $z$ to $x=(\Re z;\Im z)$. This map is an isometry:
the real part of the Hermitian product becomes the Euclidean inner product and $|z|=|x|$, so lengths
are preserved. Each squared magnitude is a Hermitian quadratic form, $|a_k^{*}z|^2=z^{*}(a_ka_k^{*})z$,
and under the identification the Hermitian matrix $a_ka_k^{*}$ is represented by the real symmetric
positive semi-definite matrix
\begin{align*}
  A_k=\begin{pmatrix} p_kp_k^{\top}+q_kq_k^{\top} & p_kq_k^{\top}-q_kp_k^{\top}\\[2pt]
                      q_kp_k^{\top}-p_kq_k^{\top} & p_kp_k^{\top}+q_kq_k^{\top}\end{pmatrix},
  \qquad a_k=p_k+iq_k,\ \ p_k,q_k\in\mathbb{R}^n,
\end{align*}
the unique matrix with $x^{\top}A_kx=|a_k^{*}z|^2$ for every $x$; it has rank two and largest
eigenvalue $|a_k|^2$. Consequently $f$, written in the real variable $x$, is exactly the potential
\begin{align}\label{eq:pr-potential}
  u(x)=\frac{1}{2N}\sum_{k=1}^{N}\bigl(x^{\top}A_kx-b_k\bigr)^2,\qquad x\in\mathbb{R}^d,
\end{align}
that is, $u(x)=f(z)$ whenever $x$ is the real form of $z$. Verifying \hyperlink{A1}{A1}--\hyperlink{A3}{A3}
for $u$ is therefore legitimate and concerns the very function we sample. Being a polynomial of degree
four, $u$ is smooth, with gradient
\begin{align}\label{eq:pr-grad}
  h(x)=\nabla u(x)=\frac{2}{N}\sum_{k=1}^{N}\bigl(x^{\top}A_kx-b_k\bigr)A_kx,
\end{align}
and Hessian
\begin{align}\label{eq:pr-hess}
  \nabla^2u(x)=u_1(x)+u_2(x)-u_3,
\end{align}
where $u_1(x)=\tfrac{4}{N}\sum_{k}(A_kx)(A_kx)^{\top}$,
$u_2(x)=\tfrac{2}{N}\sum_{k}(x^{\top}A_kx)A_k$ and $u_3=\tfrac{2}{N}\sum_{k}b_kA_k$.

\medskip
\noindent Assumptions \hyperlink{A1}{A1} and \hyperlink{A3}{A3} hold deterministically, for every
realisation of the measurements.

\begin{remark}[\hyperlink{A1}{A1} holds]\label{prop:pr-semiconvex}
For every fixed $\{(a_k,y_k)\}_{k=1}^N$, the potential $u$ is $L$-semi-convex with $L=\lambda_{\max}(u_3)$.
\end{remark}
\begin{proof}
In view of \eqref{eq:pr-hess}, the matrices $u_1(x)$ and $u_2(x)$ are positive semi-definite. In particular, $u_1(x)$
is a sum of the rank-one terms $(A_kx)(A_kx)^{\top}$, while $u_2(x)$ combines the positive
semi-definite $A_k$ with the nonnegative weights $x^{\top}A_kx=|a_k^{*}z|^2\ge0$. Hence
$\nabla^2u(x)\succeq-u_3\succeq-\lambda_{\max}(u_3)I=-LI$ for all $x$. Since $u$ is smooth, integrating
this bound along the segment $\gamma(t)=y+t(x-y)$ gives
$$\langle h(x)-h(y),x-y\rangle=\int_0^1(x-y)^{\top}\nabla^2u(\gamma(t))(x-y)\,dt\ge-L|x-y|^2.$$
\end{proof}

\begin{remark}[\hyperlink{A3}{A3} holds]\label{prop:pr-growth}
For every fixed $\{(a_k,y_k)\}_{k=1}^N$, $h=\nabla u$ satisfies the polynomial growth \eqref{eqA3} with $p=3$,
$m=\tfrac{2}{N}\sum_{k=1}^N y_k^2|a_k|^2$ and $K=\tfrac{2}{N}\sum_{k=1}^N|a_k|^4+m$.
\end{remark}
\begin{proof}
By Cauchy--Schwarz, $|a_k^{*}z|\le|a_k||z|=|a_k||x|$, so that $x^{\top}A_kx=|a_k^{*}z|^2\le|a_k|^2|x|^2$
and $|A_kx|=|a_k^{*}z|\,|a_k|\le|a_k|^2|x|$. In view of \eqref{eq:pr-grad}, one calculates
\begin{align*}
  |h(x)|\le\frac{2}{N}\sum_{k=1}^N\bigl(x^{\top}A_kx+b_k\bigr)|A_kx|
  \le\Bigl(\frac{2}{N}\sum_{k=1}^N|a_k|^4\Bigr)|x|^3+\Bigl(\frac{2}{N}\sum_{k=1}^N y_k^2|a_k|^2\Bigr)|x|.
\end{align*}
Since $|x|\le1+|x|^3$, the linear term is bounded by $m(1+|x|^3)$, and one obtains $|h(x)|\le m+K|x|^3$.
\end{proof}
\noindent This example involves two independent
probabilistic mechanisms that must not be conflated.
\begin{itemize}
\item[(M)] \textbf{The measurement model.} The sensing vectors $a_1,\dots,a_N$ are drawn once,
i.i.d.\ complex Gaussian, and then fixed. This draw is part of the inverse problem, not
of our algorithm; the relevant probabilities are those of \cite{SunQuWright2018}.
\item[(A)] \textbf{The algorithm.} The injected noise and initialization of SG-TULA. \emph{All
convergence guarantees of this paper are taken in expectation over (A) alone.}
\end{itemize}
The statement ``holds with high probability'' in Remark~\ref{prop:pr-sci} below is a probability over
\textbf{(M) only}, a property of the measurement ensemble, certified in \cite{SunQuWright2018} and
established \emph{before} our algorithm runs. We fix one such instance, on it $u$ is a deterministic
function satisfying \hyperlink{A1}{A1}--\hyperlink{A3}{A3}, and every result of this paper then applies
to it deterministically. No probability over (M) ever enters a guarantee over (A).

\begin{remark}[\hyperlink{A2}{A2} holds for a generic measurement ensemble]\label{prop:pr-sci}
Let $a_1,\dots,a_N$ be i.i.d.\ $\mathcal{CN}(0,I_n)$, and write $L=\lambda_{\max}(u_3)$ as in
Remark~\textnormal{\ref{prop:pr-semiconvex}}. For $N$ large enough, with high probability over the draw
of $\{a_k\}$ the fixed potential $u$ satisfies \hyperlink{A2}{A2}: for every $\mu>0$, the bound
\eqref{eqA2} holds with radius
\[
  R=\frac{2\,(2\mu+L)^{3/2}}{\mu}.
\]
\end{remark}
\begin{proof}
By \textnormal{\cite[Lemma~6.4]{SunQuWright2018}} (with $\delta=\tfrac12$), for $N$ large enough the event
\[
  \mathcal{E}:\qquad \frac1N\sum_{k=1}^N|a_k^{*}z|^2|a_k^{*}w|^2\ \ge\ \frac12|z|^2|w|^2
  \qquad\text{for all }z,w\in\mathbb{C}^n
\]
holds with high probability over the draw of $\{a_k\}$. We fix such a draw. Realifying the
bound defining $\mathcal{E}$ (using $x^{\top}A_kx=|a_k^{*}z|^2$, $w^{\top}A_kw=|a_k^{*}\tilde w|^2$ and
$|z|=|x|$, $|\tilde w|=|w|$) gives, for every $w\in\mathbb{R}^d$,
\[
  w^{\top}u_2(x)\,w=\frac{2}{N}\sum_{k=1}^N(x^{\top}A_kx)(w^{\top}A_kw)\ \ge\ |x|^2|w|^2,
  \qquad\text{that is}\qquad u_2(x)\succeq|x|^2 I.
\]
Since $u_1(x)\succeq0$ and $u_3\preceq LI$, the decomposition \eqref{eq:pr-hess} yields
$\nabla^2u(x)\succeq u_2(x)-u_3\succeq(|x|^2-L)I$. Fix $\mu>0$ and set $\mu_0:=2\mu$ and
$\rho:=\sqrt{\mu_0+L}=\sqrt{2\mu+L}$; then $\nabla^2u(x)\succeq\mu_0 I$ whenever $|x|\ge\rho$. The potential
is $L$-semi-convex by Remark~\ref{prop:pr-semiconvex}, so Lemma~\ref{lem:pr-chord} applies with these
$\mu_0,\rho,L$ and gives \eqref{eqA2} with constant $\mu_0/2=\mu$ and radius
$4\rho(\mu_0+L)/\mu_0=2(2\mu+L)^{3/2}/\mu$.
\end{proof}
\noindent In practice the energy \eqref{eq:pr-potential} is rarely minimised on its own. It is common, when
the signal is sparse, to be augmented with a sparsity promoting $\ell_1$ penalty, giving the
standard model of sparse phase retrieval \cite{ShechtmanBeckEldar},
\[
  \tilde u(x)=\frac{1}{2N}\sum_{k=1}^{N}\bigl(x^{\top}A_kx-b_k\bigr)^2+\tau|x|_1,
  \qquad \tau>0.
\]
The term $\tau|x|_1$ keeps $\tilde u$ continuous but, being non-differentiable on the coordinate
hyperplanes $\{x^{(i)}=0\}$, introduces gradient discontinuities on this set of measure zero, where
the drift is given by a bounded subgradient selection. As $\tau|x|_1$ is convex with subgradient of
norm at most $\tau\sqrt{d}$, Assumptions \hyperlink{A1}{A1}--\hyperlink{A3}{A3} are preserved. The target is thus simultaneously
superlinear, non-smooth and non-convex.
\bibliographystyle{plainnat}
\section{References} \vspace{-2.0em} \renewcommand\refname{} 
\appendix
\titleformat{\section}[block]
  {\normalfont\normalsize\bfseries}{Appendix \thesection.\ \ }{0em}{}
\section{Well posedness}
\noindent Consider the infinitesimal generator $\mathcal{L}$ associated with \eqref{SDE_1} defined for all $\phi\in C^2(\mathbb{R}^d)$ and $x\in\mathbb{R}^d$ by $\mathcal{A}\phi(x)=-\langle h(x),\nabla\phi(x)\rangle +\beta^{-1}\Delta\phi(x)$. Next define the Lyapunov function $V_p(x)=(1+|x|^2)^{p/2}$ for all $x\in\mathbb{R}^d$. Note that $V_p$ is twice continuously differentiable, and under Assumptions \hyperlink{A2}{A2} and \hyperlink{A3}{A3} so that Remark \ref{remark_diss} holds, one gets the following growth condition
\begin{align}
    \mathcal{A}V_p(x)\leq C_2 V_p(x),\ \forall x\in\mathbb{R}^d,\label{lyapunov_growth}
\end{align}
where $C_2$ is given in Lemma \ref{lemmaB1}.  Moreover under Assumptions \hyperlink{A2}{A2} and \hyperlink{A3}{A3}, it satisfies the geometric drift condition \begin{align}
    \mathcal{A}V_p(x)\leq -C_3(p)V_p(x)+C_4(p),\ \forall x\in\mathbb{R}^d,\label{lyapunov_geometric}
\end{align} where $C_3(p)$ and $C_4(p)$ are given in Lemma \ref{lemmaB2}. It follows that
\begin{align}
    \lim_{|x|\to\infty}V_p(x)=+\infty,\ \lim_{|x|\to\infty}\mathcal{L}V_p(x)=-\infty\label{lyapunov_limits}.
\end{align}
\begin{proposition}\label{prop_SDE}
Let Assumptions \hyperlink{A1}{A1}-\hyperlink{A4}{A4} hold. The SDE \eqref{SDE_1} has a unique strong solution.
\end{proposition}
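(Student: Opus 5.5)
Throughout, $h$ is a fixed measurable selection of $\partial u$. The plan is to prove pathwise uniqueness, prove weak existence, and then combine them through the Yamada--Watanabe theorem to obtain a unique strong solution. The discontinuity of $h$ rules out the standard locally Lipschitz theory, and the superlinear growth rules out global theorems for bounded drifts. The monotonicity structure of \hyperlink{A1}{A1} is therefore the main tool, with the Lyapunov bounds \eqref{lyapunov_geometric} supplying non-explosion.

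\textbf{Pathwise uniqueness.} Let $Z,\tilde Z$ be two solutions on the same probability space, driven by the same Brownian motion and with the same initial condition. Since the noise is additive, it cancels in the difference, and $Z_t-\tilde Z_t=-\int_0^t (h(Z_s)-h(\tilde Z_s))\,ds$ is absolutely continuous. Hence
\[
\frac{d}{dt}|Z_t-\tilde Z_t|^2=-2\langle h(Z_t)-h(\tilde Z_t),Z_t-\tilde Z_t\rangle\le 2L|Z_t-\tilde Z_t|^2
\]
by \eqref{eqA1}. This holds pointwise for every selection $h$, so no almost-everywhere issues arise. Gronwall's inequality then gives $Z\equiv\tilde Z$.

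\textbf{Weak existence.} The approach is to approximate and pass to the limit. Consider the Moreau--Yosida regularisation of the convex function $u+\tfrac L2|\cdot|^2$. This gives drifts $h^\varepsilon=\nabla u_\varepsilon$ that are globally Lipschitz and still satisfy the one-sided bound \eqref{eqA1} uniformly in $\varepsilon$. The corresponding SDEs have unique strong solutions $Z^\varepsilon$, all driven by the same Brownian motion. The approximating drifts also inherit a dissipativity bound of the form \eqref{eqR1} uniformly in $\varepsilon$. Itô's formula applied to $V_{2p}$ then yields $\sup_\varepsilon\mathbb{E}\sup_{t\le T}|Z^\varepsilon_t|^{2p}<\infty$, using \hyperlink{A4}{A4} for the initial moment.

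Since all $Z^\varepsilon$ share the same noise, we can compare two of them. The one-sided bound gives
\[
\tfrac{d}{dt}|Z^\varepsilon-Z^{\varepsilon'}|^2\le 2L|Z^\varepsilon-Z^{\varepsilon'}|^2+\text{(error terms controlled by }\varepsilon+\varepsilon'\text{ times the moments)},
\]
which follows from the standard resolvent estimate $\langle \nabla u_\varepsilon(x)-\nabla u_{\varepsilon'}(y),x-y\rangle\ge -L|x-y|^2-C(\varepsilon+\varepsilon')(|\nabla u_\varepsilon(x)|^2+|\nabla u_{\varepsilon'}(y)|^2)$. Consequently $Z^\varepsilon$ is Cauchy in $L^2(\Omega;C[0,T])$, with limit $Z$.

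\textbf{Main obstacle: identifying the limit drift.} The hardest step is to show that $\int_0^t h^\varepsilon(Z^\varepsilon_s)\,ds\to\int_0^t h(Z_s)\,ds$ for the \emph{given} selection $h$. Maximal monotonicity only guarantees that the limit drift lies in $\partial u(Z_s)$ for almost every $s$; it does not single out the value $h(Z_s)$. I would handle this in two steps.

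First, since the noise is nondegenerate, the Krylov estimate (or the Girsanov theorem after localisation using the moment bounds) shows that $Z_s$ has a density for almost every $s$.

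Second, $\partial u(x)=\{\nabla u(x)\}$ off a Lebesgue-null set. Therefore $h(Z_s)=\nabla u(Z_s)$ almost surely, for almost every $s$, and any element of $\partial u(Z_s)$ coincides with it there.

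Uniform integrability of $h^\varepsilon(Z^\varepsilon)$ follows from the $2p$-moment bounds together with \eqref{eqA3}, since $|h^\varepsilon(x)|\le|h(x)|$ for the minimal-norm selection. This lets us pass to the limit and conclude that $Z$ solves \eqref{SDE_1}.

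\textbf{Conclusion.} Combining weak existence with pathwise uniqueness via Yamada--Watanabe gives a unique strong solution. Finally, \eqref{lyapunov_limits} rules out explosion and extends the solution to all of $[0,\infty)$.
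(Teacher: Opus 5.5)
Your proof is correct, but it takes a different route from the paper. For uniqueness the paper argues as you do: \hyperlink{A1}{A1} together with the additive noise gives pathwise uniqueness. For existence, however, the paper constructs nothing. It invokes a general existence--uniqueness theorem (Theorem~2.8 in Krylov) and checks its hypotheses through the growth bound of Remark~\ref{remark_diss} and the Lyapunov estimate \eqref{lyapunov_growth}. That route is short, but it leaves implicit how a discontinuous measurable selection $h$ fits the hypotheses of the cited theorem. You instead construct the solution: you take the Moreau--Yosida regularisation of the convex function $g=u+\frac L2|\cdot|^2$, use the resolvent inequality to get a Cauchy sequence in $L^2(\Omega;C[0,T])$, and identify the drift through absolute continuity of $\mathcal{L}(Z_s)$ combined with $\partial u=\{\nabla u\}$ off a null set. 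This is self-contained and confronts the discontinuity explicitly. It also gives, for free, that the solution does not depend on which selection $h\in\partial u$ is used, since $h(Z_s)=\nabla u(Z_s)$ for a.e.\ $(s,\omega)$. The cost is length, plus an appeal to Krylov's estimate or Girsanov.

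A few points should be tightened.
\begin{itemize}
\item Since $h^\varepsilon=\nabla g_\varepsilon-L\,\mathrm{id}$, the bound $|h^\varepsilon(x)|\le|h(x)|$ is not correct as stated. What holds is $|h^\varepsilon(x)|\le|h(x)+Lx|+L|x|\le m+K|x|^p+2L|x|$, which still suffices. The resolvent inequality should likewise be written for $\nabla g_\varepsilon$, not $\nabla u_\varepsilon$.
\item Before applying Krylov's estimate or Girsanov to $Z$, you need to know $Z$ is an It\^o process, since at that stage its drift has not been identified. Record that $Z_t=\theta_0+\int_0^t b_s\,ds+\sqrt{2\beta^{-1}}B_t$, where $b$ is a weak $L^2$-limit (along a subsequence) of $-h^{\varepsilon}(Z^{\varepsilon})$ and satisfies $|b_s|\le C(1+|Z_s|^p)$. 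Alternatively, apply Krylov's estimate to $Z^\varepsilon$ uniformly in $\varepsilon$, using that the drifts are bounded on balls uniformly, and pass to the limit.
\item $Z$ is an $L^2(\Omega;C[0,T])$-limit of strong solutions on the original space, adapted to the filtration of $(\theta_0,B)$. It is therefore already a strong solution, so Yamada--Watanabe is superfluous; pathwise uniqueness alone gives uniqueness.
\end{itemize}
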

\begin{proof}
  Uniqueness is guaranteed under the monotonicity condition \hyperlink{A1}{A1} and due to the diffusion coefficient being constant. Moreover, all conditions of Theorem 2.8 in \cite{Krylov} are satisfied under our assumptions; therefore, the SDE \eqref{SDE_1} admits a unique strong solution. In particular, since the drift coefficient is subject to the growth condition of Remark \ref{remark_diss} and the diffusion coefficient is constant, in view of \eqref{lyapunov_growth}, they trivially satisfy the conditions (i), (ii) and (iv). Condition (iii) is also satisfied trivially as in our case the domain is $D=\mathbb{R}^d$.
\end{proof}
\begin{proposition}\label{prop_measure}
    Let Assumptions \hyperlink{A1}{A1}-\hyperlink{A3}{A3} hold, the Langevin SDE \eqref{SDE_1} admits a unique invariant measure.
\end{proposition}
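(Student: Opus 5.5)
Existence will follow from the Lyapunov structure already recorded, and uniqueness from the fact that the diffusion is elliptic with constant noise, so that the semigroup is strong Feller and irreducible. The Gibbs measure $\pi_\beta$ is the natural candidate, and it is the one we want to identify.

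\textbf{Step 1: existence via Krylov--Bogoliubov.} By Proposition~\ref{prop_SDE} the SDE \eqref{SDE_1} has a unique strong solution, and its transition semigroup $(P_t)_{t\ge0}$ is Markov and Feller. The continuity in the initial point follows from \hyperlink{A1}{A1}. For two solutions driven by the same Brownian motion, Itô's formula gives
\[
\tfrac{d}{dt}|Z_t^x-Z_t^y|^2=-2\langle h(Z^x_t)-h(Z^y_t),Z^x_t-Z^y_t\rangle\le 2L|Z_t^x-Z_t^y|^2 ,
\]
so $|Z_t^x-Z_t^y|\le e^{Lt}|x-y|$. Applying Itô's formula to $V_2$ and using the geometric drift condition \eqref{lyapunov_geometric} yields
\[
\sup_{t\geq 0}\mathbb{E}V_2(Z_t^x)\le V_2(x)+C_4(2)/C_3(2).
\]
The time averages $\frac1T\int_0^T\delta_xP_t\,dt$ are therefore tight. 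Krylov--Bogoliubov then produces an invariant probability measure with finite second moment.

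\textbf{Step 2: identification with $\pi_\beta$.} We check directly that $\pi_\beta\propto e^{-\beta u}$ is invariant. First, it is normalisable, since dissipativity \eqref{eqR1} together with $u(x)-u(0)=\int_0^1\langle h(tx),x\rangle dt$ gives quadratic growth of $u$ at infinity. Second, since $u$ is locally Lipschitz and $h=\nabla u$ a.e., integration by parts gives
\[
\int\mathcal{A}\phi\,d\pi_\beta=\int\bigl(-\langle\nabla u,\nabla\phi\rangle+\beta^{-1}\Delta\phi\bigr)e^{-\beta u}/Z=0
\]
for all $\phi\in C_c^\infty$. Here the measurable selection of $h$ is irrelevant, because it differs from $\nabla u$ only on a Lebesgue-null set. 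To pass from this infinitesimal invariance to $\pi_\beta P_t=\pi_\beta$ with a merely measurable drift, I would approximate. Let $u_\varepsilon$ be the Moreau envelope of the semi-convex $u$, or a mollification $u*\rho_\varepsilon$, which preserves the semi-convexity constant $L$ and the dissipativity up to constants. For the smooth drifts, $\pi_{\beta,\varepsilon}$ is invariant by the classical argument. The solutions $Z^\varepsilon$ converge to $Z$ in $L^2$ on finite horizons by the one-sided Lipschitz stability estimate $\langle h_\varepsilon(x)-h(y),x-y\rangle\ge -L|x-y|^2-\text{error}$, combined with $L^1_{loc}$ convergence of $\nabla u_\varepsilon\to h$ and Krylov's estimate for the occupation measure of the nondegenerate process. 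Since also $\pi_{\beta,\varepsilon}\to\pi_\beta$, invariance passes to the limit.

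\textbf{Step 3: uniqueness.} Because the noise is additive and nondegenerate and the drift is locally bounded by \hyperlink{A3}{A3}, Girsanov's theorem, localised through the Lyapunov bound, shows that for every $t>0$ and $x$ the law of $Z_t^x$ is equivalent to Lebesgue measure. So $P_t$ is strong Feller and irreducible, and Doob's theorem gives at most one invariant measure. Alternatively one can use the reflection-coupling contraction of \cite{EberleReflective}, which relies only on \hyperlink{A1}{A1}--\hyperlink{A2}{A2}: two invariant measures would have to be at zero $W_1$ distance.

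The main obstacle is the limiting step in Step 2, i.e.\ making stationarity rigorous for a discontinuous drift. I would try first to avoid approximation altogether by invoking Echeverría's theorem: infinitesimal invariance on $C_c^\infty$ together with well-posedness of the martingale problem (which holds by Proposition~\ref{prop_SDE} and Krylov's existence for measurable locally bounded drift) implies invariance. I would fall back on the mollification argument only if that route fails.
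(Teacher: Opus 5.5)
Your proof is correct, but it takes a different route from the paper's. For existence, the paper cites Theorem~2.2 of \cite{bogachev2000uniqueness}, which needs only a locally integrable drift and the Lyapunov limits \eqref{lyapunov_limits}, hence only \hyperlink{A2}{A2}--\hyperlink{A3}{A3}. For uniqueness, it uses the quantitative $W_2$ contraction of Proposition~\ref{prop_contraction1}: an invariant $\nu\neq\pi_\beta$ would satisfy $W_2(\nu,\pi_\beta)=W_2(\nu P_t,\pi_\beta)\le C_we^{-C_rt}W_2(\nu,\pi_\beta)\to0$.

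That is economical inside the paper, because the contraction is needed anyway for Theorem~\ref{theorem_main}. It does, however, rest on the log-Sobolev inequality, the log-Harnack inequality and the mollification limit of Section~\ref{sec_contraction}, and on the identification in Proposition~\ref{prop_measure_id}. Your argument is qualitative but self-contained:
\begin{itemize}
\item the synchronous-coupling bound $|Z^x_t-Z^y_t|\le e^{Lt}|x-y|$ from \hyperlink{A1}{A1} gives the Feller property;
\item Krylov--Bogoliubov then yields a measure invariant for $(P_t)$ directly from the dynamics, at the price of using \hyperlink{A1}{A1} for existence, which the paper avoids;
\item the localised Girsanov argument gives uniqueness with no functional inequality at all.
\end{itemize}

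Some smaller points.
\begin{itemize}
\item Equivalence of $\mathcal{L}(Z^x_t)$ to Lebesgue measure does not by itself give the strong Feller property. You do not need it: mutual equivalence of the kernels $P_t(x,\cdot)$, $x\in\mathbb{R}^d$, already rules out two distinct invariant measures, and hence two mutually singular ergodic ones. That is the content of Doob's theorem.
\item Your alternatives are less direct than stated. The reflection-coupling result of \cite{EberleReflective} is formulated for locally Lipschitz drifts. Echeverr\'{\i}a's theorem is formulated for generators mapping into continuous functions, which $\mathcal{A}\phi$ is not when $h$ jumps. In both cases you would fall back on the mollification step, essentially the paper's Lemmas~\ref{lemma_stability} and~\ref{lemma_pi_conv}.
\item Step~2 is not needed for this statement at all; it is the content of Proposition~\ref{prop_measure_id}.
\end{itemize}
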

\begin{proof}
  The existence of an invariant measure is established under Assumptions \hyperlink{A2}{A2} and \hyperlink{A3}{A3}. In particular, the Langevin SDE \eqref{SDE_1} has a constant diffusion coefficient and Assumption \hyperlink{A3}{A3} ensures that the drift coefficient is locally integrable. Consequently, in view of \eqref{lyapunov_limits}, all the conditions of Theorem 2.2 in \cite{bogachev2000uniqueness} are satisfied, the existence of at least one invariant measure follows. Moreover, with the inclusion of Assumption \hyperlink{A1}{A1}, the contraction results in Section~\ref{sec_contraction} imply the uniqueness of the invariant measure. This is a direct consequence of Proposition \ref{prop_contraction1}, by setting the initial condition $Z_0$ in \eqref{SDE_1} to be such that $\mathcal{L}(Z_0)=\mathcal{L}(\pi_{\beta}).$
\end{proof}
\begin{proposition}\label{prop_measure_id}
     Let Assumptions \hyperlink{A1}{A1}-\hyperlink{A3}{A3} hold. The invariant measure $\pi_\beta$ of the SDE \eqref{SDE_1}, is characterized by the density $Z^{-1}\exp(-\beta u(x))$, with $Z$ being the normalization constant.
\end{proposition}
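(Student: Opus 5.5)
The plan is to show first that $\pi(\mathrm{d}x)=Z^{-1}e^{-\beta u(x)}\mathrm{d}x$ is a well-defined probability measure, then that it is infinitesimally invariant for the generator $\mathcal{A}$, and finally to upgrade this to invariance for the semigroup of \eqref{SDE_1} and invoke the uniqueness in Proposition~\ref{prop_measure}.

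\textbf{Step 1: normalisability.} By Remark~\ref{remark_diss}, $\langle x,h(x)\rangle\ge \tfrac{\mu}{2}|x|^2-b$. Since $u$ is locally Lipschitz, integrating along rays gives
\[
u(x)-u(0)=\int_0^1\langle h(tx),x\rangle\,\mathrm{d}t\ge \int_0^1\Bigl(\tfrac{\mu}{2}t|x|^2-\tfrac{b}{t}\Bigr)\mathrm{d}t .
\]
The $1/t$ singularity near $t=0$ must be avoided by integrating only over $t\in[\epsilon,1]$ and bounding $u(\epsilon x)$ by local boundedness. This yields a quadratic lower bound $u(x)\ge \tfrac{\mu}{4}|x|^2-C$ for $|x|\ge1$. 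Hence $Z<\infty$, and $\pi$ has moments of all orders. From \hyperlink{A3}{A3}, $|h|$ is then $\pi$-integrable to every power.

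\textbf{Step 2: infinitesimal invariance.} For $\phi\in C_c^\infty(\mathbb{R}^d)$, we need $\int \mathcal{A}\phi\,\mathrm{d}\pi=0$. Since $u$ is locally Lipschitz, $e^{-\beta u}\in W^{1,\infty}_{\mathrm{loc}}$ with weak gradient $-\beta\nabla u\,e^{-\beta u}$ a.e. Moreover $\nabla u=h$ a.e., because $\partial u(x)=\{\nabla u(x)\}$ almost everywhere, as noted in the remark after \hyperlink{A1}{A1}. Integration by parts, which is valid for Lipschitz functions against compactly supported smooth $\phi$, then gives
\[
\beta^{-1}\int\Delta\phi\, e^{-\beta u}=-\beta^{-1}\int\langle\nabla\phi,\nabla e^{-\beta u}\rangle=\int\langle\nabla\phi,h\rangle e^{-\beta u}.
\]
Therefore $\int\mathcal{A}\phi\,\mathrm{d}\pi=0$. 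The measure-zero set where $h$ is not the gradient does not matter, because $\pi$ is absolutely continuous.

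\textbf{Step 3: from infinitesimal to genuine invariance.} This is the main obstacle, since the drift is discontinuous and only locally bounded. I would first invoke Bogachev--R\"ockner--Shaposhnikov type results, as in \cite{bogachev2000uniqueness}. For a locally bounded drift with constant nondegenerate diffusion and a Lyapunov function satisfying \eqref{lyapunov_limits}, a probability solution of $\mathcal{A}^*\pi=0$ is unique and coincides with the invariant measure of the associated (conservative) semigroup. Conservativeness is exactly what \eqref{lyapunov_geometric} provides.

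If a direct citation is not clean, I would use an approximation argument instead. Take the Moreau envelopes $u_\gamma$ of $u+\tfrac{L}{2}|\cdot|^2$ (minus $\tfrac L2|\cdot|^2$), which gives $C^{1,1}$ potentials. For these, $e^{-\beta u_\gamma}$ is classically invariant for the Langevin SDE with drift $\nabla u_\gamma$. One then passes to the limit $\gamma\to0$. Along the way one uses monotonicity (\hyperlink{A1}{A1}) to get pathwise convergence of the solutions, $u_\gamma\to u$ pointwise together with uniform quadratic lower bounds for dominated convergence of the densities, and uniform Lyapunov bounds for tightness.

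Either route shows that $\pi$ is invariant for \eqref{SDE_1}. The uniqueness in Proposition~\ref{prop_measure} then identifies $\pi_\beta=\pi$.
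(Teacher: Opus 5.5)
Your argument is correct, but it takes a different route from the paper. The paper's proof is a single citation. It notes that $u\in\mathbb{H}^1_{\mathrm{loc}}$, and then invokes Theorem~3 of \cite{Fruehwirth_Ergodicity_2024}. In fact $u\in W^{1,\infty}_{\mathrm{loc}}$, being locally Lipschitz with a.e.\ gradient $h$ that is locally bounded by \hyperlink{A3}{A3}. Everything else, including the integrability of $e^{-\beta u}$, is left to the hypotheses of that external theorem.

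You instead open the black box:
\begin{itemize}
\item Step~1 supplies the normalisability that the paper's proof does not check.
\item Step~2 is precisely where the Sobolev regularity enters: integration by parts against $e^{-\beta u}$ with $\nabla u=h$ a.e. It also makes explicit that the choice of measurable selection $h$ is irrelevant.
\item Step~3 replaces the citation with uniqueness of probability solutions of the stationary Fokker--Planck--Kolmogorov equation. This holds for locally bounded drift under the Lyapunov condition \eqref{lyapunov_geometric}, or alternatively follows from a Moreau--Yosida approximation.
\end{itemize}
Your route shows exactly which ingredients matter: local Lipschitzness with $\nabla u=h$ a.e., a quadratic lower bound on $u$, and a Lyapunov function. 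The cost is relying on the general uniqueness theory for stationary Kolmogorov equations rather than one packaged statement.

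Two small points need tightening.

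First, in Step~1 a fixed $\epsilon$ does not make $u(\epsilon x)$ bounded uniformly in $x$. Take $\epsilon=1/|x|$, so that $\epsilon x$ lies on the unit sphere. This gives $u(x)\ge \tfrac{\mu}{4}|x|^2-b\log|x|-C$, which is still a quadratic lower bound.

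Second, in Step~3 the phrase ``coincides with the invariant measure of the associated semigroup'' hides an identification of the analytically constructed semigroup with the transition semigroup of \eqref{SDE_1}. You can avoid this by arguing in the other direction:
\begin{itemize}
\item For $\phi\in C_c^\infty$ the function $\mathcal{A}\phi$ is bounded.
\item It\^o's formula for the stationary solution started from the invariant measure $\pi_\beta$ of Proposition~\ref{prop_measure} therefore gives $t\int\mathcal{A}\phi\,d\pi_\beta=0$.
\item Hence $\pi_\beta$ and your Gibbs measure are both probability solutions of the stationary equation, and uniqueness of such solutions gives their equality directly.
\end{itemize}
This argument also makes the appeal to the uniqueness part of Proposition~\ref{prop_measure} unnecessary.
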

\begin{proof}
   Under Assumption \hyperlink{A3}{A3} one yields that $u\in \mathbb{H}_{\text{loc}}^1$ and the rest follow from Theorem 3 in \cite{Fruehwirth_Ergodicity_2024}.
\end{proof}
\section{Technical Prerequisites}
\subsection{Auxiliary Processes}
\noindent We next introduce the auxiliary processes which are used in our analysis. For each $\lambda>0$, the time-scaled process $(Z_t^{\lambda})_{t\in\mathbb{R}_{+}}$ is defined by $Z_t^{\lambda}:=Z_{\lambda t}$, $t\in\mathbb{R}_{+}$. We note that
\begin{align}
    dZ_t^{\lambda}=-\lambda h(Z_t^{\lambda})dt+\sqrt{2\lambda \beta^{-1}}d{\tilde{B}}_t^{\lambda},\ Z_0^{\lambda}=\theta_0, \label{SDE_scaled}
\end{align}
where the Brownian motion $(\tilde{B}_t^{\lambda})_{t\geq 0}$ is defined as $\tilde{B}_t^{\lambda}:=B_{\lambda t}/\sqrt{\lambda},\ t\geq0$. The natural filtration of $(\tilde{B}_t^{\lambda})_{t\geq 0}$ is denoted by $(\mathcal{F}_t^{\lambda})_{t\geq 0}$ with $\mathcal{F}_t^{\lambda}:=\mathcal{F}_{\lambda t}, t\in\mathbb{R}_{+}.$ Then, we define $(\bar{\theta}_t^{\lambda})_{t\in\mathbb{R}_{+}}$, the continuous-time interpolation of SG-TULA \eqref{SGTULA}, as
\begin{align}
   d\bar{\theta}_t^{\lambda}=-\lambda h_{\lambda}(\bar{\theta}_{\lfloor t \rfloor}^{\lambda})dt+\sqrt{2\lambda\beta^{-1}}d\tilde{B}_t^{\lambda},\ \bar{\theta}_0^{\lambda}=\theta_0.\label{SGTULA_cont}
\end{align}
The law of this process coincides with the law of the algorithm at grid points i.e. $\mathcal{L}(\bar{\theta}_n^{\lambda})=\mathcal{L}(\theta_{n}^{\lambda})$ for every $n\in\mathbb{N}$. 
\begin{definition}\label{def1} Fix $n_0\in\mathbb{N}$. For each $n\geq n_0$, let $(W^{\lambda,n}_t)_{t\in[0,n_0]}$ denote the Brownian increments $W^{\lambda,n}_t:=\tilde{B}_{t+(n-n_0)}^{\lambda}-\tilde{B}_{(n-n_0)}^{\lambda}$.
\begin{itemize}
    \item The restarted continuous process $(\bar{\zeta}_t^{\lambda,n})_{t\in[0,n_0]}$ is defined by $$d\bar{\zeta}_t^{\lambda,n}=-\lambda h(\bar{\zeta}_t^{\lambda,n})dt+\sqrt{2\lambda\beta^{-1}}dW^{\lambda,n}_t,\quad \bar{\zeta}_0^{\lambda,n}=\theta_{n-n_0}^{\lambda}$$
    \item The restarted interpolation process $({\vartheta}_t^{\lambda,n})_{t\in[0,n_0]}$ is defined by $$d{\vartheta}_t^{\lambda,n}=-\lambda h_{\lambda}({\vartheta}_{\lfloor t \rfloor}^{\lambda,n})dt+\sqrt{2\lambda\beta^{-1}}dW^{\lambda,n}_t,\ {\vartheta}_0^{\lambda,n}=\theta_{n-n_0}^{\lambda}$$
\end{itemize}
\end{definition}
\noindent By construction, both processes share the same initial condition and driving noise, and evolve on a common time interval $[0,n_0]$. The restarted interpolation process casts $n_0$ steps of SG-TULA, so that $\mathcal{L}(\vartheta_{n_0}^{\lambda,n})=\mathcal{L}(\bar{\theta}_{n}^{\lambda})$. Additionally, $\mathcal{L}(\bar{\zeta}_{n_0}^{\lambda,n})$ corresponds to the same law as the time-scaled Langevin SDE \eqref{SDE_scaled}, started from the initial condition $\theta_{n-n_0}^{\lambda}$ at time $n-n_0$, and evolves over a fixed time interval of length $n_0$. That is $\mathcal{L}(\bar{\zeta}_{n_0}^{\lambda,n})=\mathcal{L}(\bar{\theta}_{n-n_0}^{\lambda})P_{n_0\lambda}$.
\subsection{Taming Properties}\label{sec_taming}
\begin{remark} \label{remark_growth_unif} For all $\lambda>0$ and  $x\in\mathbb{R}^{d}$, it holds that \begin{align}|h_{\lambda,u}(x)|\leq \mu|x|+{\lambda}^{-1/2} \text{ and } |h_{\lambda,c}(x)|\leq \sqrt2\mu|x|+\sqrt{2d}{\lambda}^{-1/2}.\label{eqR2}\end{align}
\end{remark}
\begin{proof}
    The proof is postponed to Appendix \hyperlink{proof_remark_growth_unif}{G}.
\end{proof}
\begin{remark} \label{remark_close_unif} Let Assumption \hyperlink{A3}{A3} hold. Then for all $\lambda>0$ and  $x\in\mathbb{R}^{d}$, it holds that \begin{align}|h_{\lambda}(x)-h(x)|\leq C_1{\lambda}^{1/2}(1+|x|^{2p}),\label{eqR3}\end{align}
\noindent where either $h_{\lambda}=h_{\lambda,u}$ or $h_{\lambda}=h_{\lambda,c}$ and $C_1=2\mu^2+4\max\left\{m^2,K^2\right\}$.
\end{remark}
\begin{proof}
    The proof is postponed to Appendix \hyperlink{proof_remark_close_unif}{G}.
\end{proof}
\begin{remark} \label{remark_diss_unif} Let Assumptions \hyperlink{A2}{A2}-\hyperlink{A3}{A3} hold. Then for all $\lambda>0$ and  $x\in\mathbb{R}^{d}$, it holds that \begin{align}\langle x,h_{\lambda,u}(x)\rangle\geq \dfrac{\mu}{2}|x|^2-b,\label{eqR4}\end{align}
\noindent where $b$ is given in Remark \ref{remark_diss}. In addition, let Assumption \hyperlink{A5}{A5} hold. Then for all $\lambda>0$ and  $x\in\mathbb{R}^{d}$, it holds that 
\begin{align*}\langle x,h_{\lambda,c}(x)\rangle\geq \dfrac{\mu}{2}|x|^2-bd.\end{align*}
\end{remark}
\begin{proof}
    The proof is postponed to Appendix \hyperlink{proof_remark_diss_unif}{G}.
\end{proof}
\section{Moments \& One-step Error Estimates}\label{se-mom}
\begin{lemma}\hypertarget{lemma_MB}{}
    Let Assumptions \hyperlink{A2}{A2}--\hyperlink{A3}{A3} and \hyperlink{A4}{A4} hold and $\lambda_0\in(0,1/(4\mu))$. Then, for every $\lambda\in(0,\lambda_0)$ and $n\in\mathbb{N}$, one has
    \begin{align}
        \mathbb{E}|\theta_n^{\lambda}|^2\leq C_{B_1},\label{lemma_MB_eq1}
    \end{align}
    where $C_{B_1}=\mathbb{E}|\theta_0|^2+4(1/\mu)(1+b+d\beta^{-1})$.
\end{lemma}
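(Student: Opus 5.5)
The plan is to derive a one-step recursion for $\mathbb{E}|\theta_n^{\lambda}|^2$ that contracts at rate $1-\lambda\mu/2$, and then iterate it. The two ingredients are the dissipativity of the tamed drift (Remark~\ref{remark_diss_unif}) and the taming growth bound (Remark~\ref{remark_growth_unif}).

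\textbf{Step 1: expand one step.} By \eqref{SGTULA},
\[
|\theta_{n+1}^{\lambda}|^2=|\theta_n^{\lambda}-\lambda h_\lambda(\theta_n^{\lambda})|^2+2\sqrt{2\lambda\beta^{-1}}\langle \theta_n^{\lambda}-\lambda h_\lambda(\theta_n^{\lambda}),\xi_{n+1}\rangle+2\lambda\beta^{-1}|\xi_{n+1}|^2 .
\]
Condition on $\theta_n^{\lambda}$. Since $\xi_{n+1}$ is independent of $\theta_n^{\lambda}$ and centred, the cross term vanishes, and $\mathbb{E}|\xi_{n+1}|^2=d$. This gives
\[
\mathbb{E}\bigl[|\theta_{n+1}^{\lambda}|^2\,\big|\,\theta_n^{\lambda}\bigr]=|\theta_n^{\lambda}|^2-2\lambda\langle\theta_n^{\lambda},h_\lambda(\theta_n^{\lambda})\rangle+\lambda^2|h_\lambda(\theta_n^{\lambda})|^2+2\lambda d\beta^{-1}.
\]

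\textbf{Step 2: bound the drift terms.} For the inner product, \eqref{eqR4} gives
\[
-2\lambda\langle x,h_\lambda(x)\rangle\le-\lambda\mu|x|^2+2\lambda b .
\]
For the quadratic term, \eqref{eqR2} and $(a+c)^2\le 2a^2+2c^2$ give $|h_{\lambda}(x)|^2\le 2\mu^2|x|^2+2\lambda^{-1}$. Hence
\[
\lambda^2|h_\lambda(x)|^2\le 2\lambda^2\mu^2|x|^2+2\lambda .
\]
This is the step where taming is essential. Without it, the $\lambda^2|h|^2$ term would be of order $|x|^{2p}$ and could not be absorbed. With it, the term is at most quadratic in $|x|$ and carries a harmless additive $2\lambda$.

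\textbf{Step 3: absorb the quadratic term.} Since $\lambda<\lambda_0<1/(4\mu)$, we have $2\lambda^2\mu^2\le\lambda\mu/2$. So the coefficient of $|x|^2$ satisfies $1-\lambda\mu+2\lambda^2\mu^2\le 1-\lambda\mu/2\in(0,1)$. Taking full expectations yields
\[
\mathbb{E}|\theta_{n+1}^{\lambda}|^2\le(1-\lambda\mu/2)\,\mathbb{E}|\theta_n^{\lambda}|^2+2\lambda(1+b+d\beta^{-1}).
\]

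\textbf{Step 4: iterate.} Unrolling the recursion and summing the geometric series gives
\[
\mathbb{E}|\theta_n^{\lambda}|^2\le(1-\lambda\mu/2)^n\,\mathbb{E}|\theta_0|^2+2\lambda(1+b+d\beta^{-1})\cdot\frac{2}{\lambda\mu}\le \mathbb{E}|\theta_0|^2+\frac{4}{\mu}(1+b+d\beta^{-1}),
\]
which is $C_{B_1}$. Assumption~\hyperlink{A4}{A4} ensures $\mathbb{E}|\theta_0|^2<\infty$. An induction on $n$, using the linear growth of $h_\lambda$ from \eqref{eqR2}, ensures every $\mathbb{E}|\theta_n^{\lambda}|^2$ is finite, so the conditional-expectation manipulations are legitimate.

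The only delicate point is the constraint $\lambda\mu<1/4$ that makes the absorption in Step 3 work, and it is exactly the stepsize restriction in the statement. Everything else is routine.
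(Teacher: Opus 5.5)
Your proposal is correct and follows the paper's proof essentially step for step. Both arguments use the same one-step expansion, with the cross term removed by independence of $\xi_{n+1}$. Both then apply the tamed dissipativity \eqref{eqR4} and the growth bound \eqref{eqR2} to obtain the coefficient $1-\lambda\mu+2\lambda^2\mu^2\le 1-\lambda\mu/2$ under $\lambda<1/(4\mu)$, and iterate the geometric recursion to reach $C_{B_1}$.
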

\begin{proof}
    The proof is postponed to Appendix \hyperlink{proof_lemma_MB}{G}.
\end{proof}
\noindent Observe that the step-size restriction depends only on $\mu$ and not on the growth of the drift. This is a benefit of taming, by \eqref{eqR2} the tamed drift grows linearly with constant $\mu$, so $\mu$ takes the place of the global Lipschitz constant that the original superlinear drift does not possess.
\begin{lemma}\hypertarget{lemma_MB_higher}{}
    Let Assumptions \hyperlink{A2}{A2}-\hyperlink{A3}{A3} and \hyperlink{A4}{A4} hold and $\lambda_0\in(0,1/(4\mu))$. Then, for every $\lambda\in(0,\lambda_0)$, $p\in\mathbb{N}\setminus\{1\}$ and $n\in\mathbb{N}$, one has
    \begin{align}
        \mathbb{E}|\theta_n^{\lambda}|^{2p}\leq C_{B_2},\label{lemma_MB_higher_eq1}
    \end{align}
    where $C_{B_2}=\mathbb{E}|\theta_0|^{2p}+M(p)(1/\mu)^p(1+b+d\beta^{-1})^p$ with
  $M(p)=p(p+1)2^{6p-2}\left((2p-1)!!\right)^{1/2}$.
\end{lemma}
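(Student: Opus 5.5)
We prove a one-step recursion for $\mathbb{E}|\theta_{n+1}^{\lambda}|^{2p}$ in terms of $\mathbb{E}|\theta_{n}^{\lambda}|^{2p}$ and iterate it, mirroring the second-moment bound of Lemma~\hyperlink{lemma_MB}{3} but with higher powers. Write $\Delta_n:=\theta_n^{\lambda}-\lambda h_{\lambda}(\theta_n^{\lambda})$ and $\eta_{n+1}:=\sqrt{2\lambda\beta^{-1}}\xi_{n+1}$, so that $\theta_{n+1}^{\lambda}=\Delta_n+\eta_{n+1}$ with $\eta_{n+1}$ independent of $\Delta_n$.

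\textbf{Step 1 (drift part).} Expanding the square and using the tamed dissipativity \eqref{eqR4} together with the growth bound \eqref{eqR2} gives
\[
|\Delta_n|^2\le |\theta_n|^2-\lambda\mu|\theta_n|^2+2\lambda b+\lambda^2(\mu|\theta_n|+\lambda^{-1/2})^2 .
\]
Since $\lambda<1/(4\mu)$, we have $\lambda^2\mu^2|\theta_n|^2\le \lambda\mu|\theta_n|^2/4$, and the remaining cross terms are absorbed via $2\lambda^{3/2}\mu|\theta_n|\le \lambda\mu|\theta_n|^2/4+4\lambda^2\mu$. This yields
\[
|\Delta_n|^2\le (1-\lambda\mu/2)|\theta_n|^2+c\lambda(1+b),
\]
where $c$ is a numerical constant. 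It is essential here that the coefficient of $\lambda^2$ involves only $\mu$ and not the superlinear growth of $h$; this is exactly where taming is used.

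\textbf{Step 2 (raising to the power $p$ and adding noise).} We write
\[
|\theta_{n+1}|^2=|\Delta_n|^2+2\langle\Delta_n,\eta_{n+1}\rangle+|\eta_{n+1}|^2
\]
and expand $(|\theta_{n+1}|^2)^p$ by the multinomial formula, conditioning on $\theta_n$. Every term that is linear in $\langle\Delta_n,\eta\rangle$ with no other odd noise factor has zero mean. The remaining terms contain $\mathbb{E}|\eta|^{2k}\le (2\lambda/\beta)^k d^k(2k-1)!!$, and every such term carries at least one factor of $\lambda$. Each mixed term is bounded by $|\Delta_n|^{2(p-k)}(\lambda d/\beta)^{k}$ and then, via Young's inequality, by $\varepsilon\lambda|\Delta_n|^{2p}+C_\varepsilon\lambda(d/\beta)^p$. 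Choosing $\varepsilon\sim\mu/p$ keeps the total contraction. Combining this with Step 1 raised to the $p$-th power, which uses $(a+\lambda B)^p\le(1+\lambda\mu/4)^{p-1}a^p+\ldots$ in the same manner, we aim to obtain
\[
\mathbb{E}|\theta_{n+1}|^{2p}\le (1-\lambda\mu/4)\,\mathbb{E}|\theta_n|^{2p}+\lambda\, M'(p)\,\mu^{1-p}(1+b+d\beta^{-1})^p .
\]

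\textbf{Step 3 (iteration).} Summing the geometric series gives
\[
\mathbb{E}|\theta_n|^{2p}\le \mathbb{E}|\theta_0|^{2p}+4M'(p)\mu^{-p}(1+b+d/\beta)^p ,
\]
which is finite by \hyperlink{A4}{A4}. This is the form of $C_{B_2}$, with $M(p)$ collecting the binomial coefficients, the Gaussian moments $((2p-1)!!)^{1/2}$ (which enter after a Cauchy--Schwarz bound), and the Young constants.

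The main obstacle is Step 2: keeping the bookkeeping of the binomial and Young constants tight enough that the contraction $1-\lambda\mu/4$ survives uniformly, and arranging that the constant comes out in the clean factorised form $\mu^{-p}(1+b+d\beta^{-1})^p$ rather than in a messier polynomial. We would first try a cruder route via $(x+y)^p\le(1+\varepsilon)^{p-1}x^p+(1+\varepsilon^{-1})^{p-1}y^p$ applied to $|\Delta_n|^2$ and the noise contribution. If that route loses the factor of $\lambda$ on the noise term, we fall back to the full multinomial expansion described above.
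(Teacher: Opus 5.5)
This is essentially the paper's proof. It uses the same drift bound $|\Delta_n|^2\le(1-\lambda\mu/2)|\theta_n^\lambda|^2+2\lambda(1+b)$ and the same expansion of $\bigl(|\Delta_n|^2+2\langle\Delta_n,\eta_{n+1}\rangle+|\eta_{n+1}|^2\bigr)^p$, where the linear cross term has zero conditional mean and every remaining noise term carries a factor $\lambda$. It also uses the same $(1+\epsilon)^{p-1}$ splitting with $\epsilon=\lambda\mu/4$ and the same geometric iteration; the only real difference is that you absorb the lower powers $|\Delta_n|^{2(p-k)}$ by Young's inequality, whereas the paper splits according to whether $|\theta_n^\lambda|^2$ exceeds a threshold of order $d/\beta$.

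One claim should be dropped: it is not true that only $((2p-1)!!)^{1/2}$ enters. The pure-noise term $\mathbb{E}|\eta_{n+1}|^{2p}$ brings the full $(2p-1)!!$, and this is unavoidable: for $d=1$ and $h(x)=\mu x$ the iterates are Gaussian with variance increasing to $2/(\beta\mu(2-\lambda\mu))\ge 1/(\beta\mu)$, and letting $\beta\to0$ forces $M(p)\ge(2p-1)!!$. So no bookkeeping reproduces the displayed $M(p)$ for all $p$, and the paper's own computation does not either; it ends with $2^{4p+3}p(p-1)\bigl(p(p+1)2^{p+1}\bigr)^{p/2}$ instead.
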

\begin{proof}
    The proof is postponed to Appendix \hyperlink{proof_lemma_MB_higher}{G}.
\end{proof}
\noindent In particular, the dimension enters only through the ratio $d/\beta$.
\begin{lemma}\hypertarget{lemma_MB_aux}{}
    Let Assumptions \hyperlink{A2}{A2}-\hyperlink{A3}{A3} and \hyperlink{A4}{A4} hold and $\lambda_0\in(0,1/(4\mu))$. Then, for every $\lambda\in(0,\lambda_0)$, $p\in\mathbb{N}\setminus\{1\}$, $n_0\in\mathbb{N}$ and $n\geq n_0$ one has \begin{align}
        \sup_{t\in[0,n_0]}\mathbb{E}[V_{2p}(\zedt)]\leq C_{B_3},\label{lemma_MB_aux_eq1}
    \end{align}
    where $C_{B_3}=2^{p/2-1}\left(1+\mathbb{E}|\theta_0|^p+C_{p,\mu,b,\beta,d}\right)$, with $C_{p,\mu,b,\beta,d}=\mathcal{O}((d/\beta)^{p/2})$. Note that $C_{B_3}$ does not depend on $n_0$.
\end{lemma}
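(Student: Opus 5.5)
We will apply Itô's formula to the Lyapunov function $V_{2p}$ along the restarted continuous process $\zedt$ of Definition~\ref{def1}. This is the time-scaled Langevin SDE \eqref{SDE_scaled}, started at $\theta^\lambda_{n-n_0}$ and driven by the Brownian increments $W^{\lambda,n}_t$. Its generator is $\lambda\mathcal{A}$, with $\mathcal{A}$ the generator of \eqref{SDE_1}.

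The key input is the geometric drift condition \eqref{lyapunov_geometric} with exponent $2p$:
\[
\mathcal{A}V_{2p}(x)\le -C_3(2p)V_{2p}(x)+C_4(2p).
\]
It follows from the dissipativity \eqref{eqR1} of Remark~\ref{remark_diss}, which holds for any selection $h\in\partial u$, so the discontinuities of $h$ cause no difficulty. We will use $V_{2p}$ being $C^2$ and the a.e. identity $\nabla V_{2p}(x)=2p\,x\,V_{2p-2}(x)$.

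To handle the stochastic integral we localize with the stopping times $\tau_R=\inf\{t:|\zedt|\ge R\}$. Itô's formula then gives
\[
\frac{d}{dt}\mathbb{E}V_{2p}(\bar\zeta^{\lambda,n}_{t\wedge\tau_R})\le \lambda\,\mathbb{E}\big[\big(-C_3V_{2p}+C_4\big)\mathbf 1_{t<\tau_R}\big].
\]
Well-posedness (Proposition~\ref{prop_SDE}) together with the growth bound \eqref{lyapunov_growth} ensures that $\tau_R\to\infty$. We then let $R\to\infty$ using Fatou's lemma and monotone convergence. Gronwall's inequality for the resulting linear differential inequality yields
\[
\mathbb{E}V_{2p}(\zedt)\le e^{-\lambda C_3 t}\,\mathbb{E}V_{2p}(\theta^\lambda_{n-n_0})+\frac{C_4}{C_3}\big(1-e^{-\lambda C_3 t}\big)\le \mathbb{E}V_{2p}(\theta^\lambda_{n-n_0})+\frac{C_4}{C_3}.
\]
This bound is uniform in $t\in[0,n_0]$, and hence independent of $n_0$.

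It remains to bound the initial term uniformly in $n-n_0$. For this we will use $V_{2p}(x)\le 2^{p-1}(1+|x|^{2p})$ together with the uniform moment bounds for the algorithm. These are Lemma~\hyperlink{lemma_MB}{\ref*{lemma_MB}} and Lemma~\hyperlink{lemma_MB_higher}{\ref*{lemma_MB_higher}}, both requiring $\lambda<\lambda_0<1/(4\mu)$, and they give
\[
\mathbb{E}|\theta^\lambda_{n-n_0}|^{2p}\le C_{B_2}=\mathbb{E}|\theta_0|^{2p}+M(p)\mu^{-p}(1+b+d/\beta)^p.
\]
Collecting the constants gives $C_{B_3}$ in the stated shape. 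The power of $\mathbb{E}|\theta_0|$ and the exact factor $2^{p/2-1}$ depend on the normalisation chosen for $V$, and we would track these in the appendix. The $d/\beta$ dependence enters through $C_4(2p)/C_3(2p)$ and the $C_{B_2}$ term, and is of polynomial order in $d/\beta$ of the claimed type.

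The main obstacle is the explicit form of $C_3(2p)$ and $C_4(2p)$, on which the $(d/\beta)$-order of $C_{p,\mu,b,\beta,d}$ depends. Computing $\mathcal{A}V_{2p}$ gives
\[
-2p V_{2p-2}\langle x,h(x)\rangle+\beta^{-1}\big(2pdV_{2p-2}+2p(2p-2)|x|^2V_{2p-4}\big).
\]
We will absorb the $\beta^{-1}$ terms and $b$ via \eqref{eqR1} and Young's inequality on the region $|x|^2\gtrsim (b+d/\beta)/\mu$. This yields $C_3\sim p\mu$ and $C_4\sim p\mu\,(c\,(b+pd/\beta)/\mu)^{p}$, so the ratio $C_4/C_3$ is polynomial in $d/\beta$. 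This is the part of the argument we would carry out most carefully.
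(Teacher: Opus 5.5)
Your route is the paper's: It\^o's formula for the Lyapunov function along $\bar\zeta^{\lambda,n}$, the drift condition of Lemma~\ref{lemmaB2}, and Lemma~\hyperlink{lemma_MB_higher}{3} for the law of $\theta^{\lambda}_{n-n_0}$. Lemma~\ref{lemmaB2} already gives $C_4(2p)/C_3(2p)\le A(2p)^{p}$, so nothing needs rederiving.

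The Gr\"onwall step, however, does not close as you describe it. For the stopped process the right-hand side contains $\mathbb{E}[V_{2p}(\bar\zeta^{\lambda,n}_t)\mathbf{1}_{t<\tau_R}]$ rather than $\mathbb{E}V_{2p}(\bar\zeta^{\lambda,n}_{t\wedge\tau_R})$. After Fatou and monotone convergence you only obtain $y(t)\le y(0)+\lambda\int_0^t\bigl(C_4-C_3y(s)\bigr)\,ds$ for $y(t)=\mathbb{E}V_{2p}(\bar\zeta^{\lambda,n}_t)$. Since the linear term has the favourable sign, this inequality from time $0$ alone does not give $y(t)\le e^{-\lambda C_3t}y(0)+\tfrac{C_4}{C_3}(1-e^{-\lambda C_3t})$. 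A nonnegative function can drop to zero early and rise later while still satisfying it.

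The fix is to put the integrating factor inside It\^o's formula, as the paper does. For $e^{\lambda C_3t}V_{2p}(x)$ stopped at $\tau_R$ one gets $\mathbb{E}\bigl[e^{\lambda C_3(t\wedge\tau_R)}V_{2p}(\bar\zeta^{\lambda,n}_{t\wedge\tau_R})\bigr]\le\mathbb{E}V_{2p}(\theta^{\lambda}_{n-n_0})+\tfrac{C_4}{C_3}\bigl(e^{\lambda C_3t}-1\bigr)$, and Fatou gives the bound directly. Your localisation then also replaces the paper's preliminary finite-horizon moment bound, which it uses only to make the stochastic integral a true martingale.

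Second, the constant. With $V_{2p}$ your argument yields $2^{p-1}(1+C_{B_2})+A(2p)^{p}$. This involves $\mathbb{E}|\theta_0|^{2p}$ and is $\mathcal{O}((d/\beta)^{p})$, not of the claimed order $(d/\beta)^{p/2}$. It is in fact row~10 of Table~\hyperlink{table:constants}{2}, and it is what Lemma~\ref{lemma_err_W2} consumes, since the term $k_2$ there needs $\mathbb{E}|\bar\zeta^{\lambda,n}_s|^{2p}$.

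The displayed form $2^{p/2-1}(1+\mathbb{E}|\theta_0|^{p}+\mathcal{O}((d/\beta)^{p/2}))$ cannot hold for $V_{2p}$. At $t=0$ and $n=n_0$ the left-hand side equals $\mathbb{E}(1+|\theta_0|^2)^{p}$, which no function of $\mathbb{E}|\theta_0|^{p}$ controls. That form comes from the paper running the argument with $V_p$ instead of $V_{2p}$, then invoking Minkowski's inequality and Lemma~\hyperlink{lemma_MB_higher}{3} at order $p/2$. So state your constant explicitly rather than attributing the discrepancy to the normalisation of $V$.
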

\begin{proof}
    The proof is postponed to Appendix \hyperlink{proof_lemma_MB_aux}{G}.
\end{proof}
\begin{lemma}\hypertarget{lemma_MB_step}{}
    Let Assumptions \hyperlink{A2}{A2}-\hyperlink{A3}{A3} and \hyperlink{A4}{A4} hold and $\lambda_0\in(0,1/(4\mu))$. Then, for every $\lambda\in(0,\lambda_0)$ and $t\geq0$ one has \begin{align}
        \mathbb{E}|\bar{\theta}_{\lfloor t\rfloor}^{\lambda}-\bar{\theta}_{ t}^{\lambda}|^2\leq C_{B_4}\lambda ,\label{lemma_MB_step_eq1}
    \end{align}
    where $C_{B_4}=\mu C_{B_1}+4+4d\beta^{-1}$. The same bound holds for the restarted
    interpolation $(\vartheta_t^{\lambda,n})_{t\in[0,n_0]}$ of Definition~\ref{def1}.
\end{lemma}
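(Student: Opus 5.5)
Since $t\in[\lfloor t\rfloor,\lfloor t\rfloor+1)$, the plan is to read the one-step increment off the interpolation \eqref{SGTULA_cont} directly:
\[
\bar{\theta}_t^{\lambda}-\bar{\theta}_{\lfloor t\rfloor}^{\lambda}=-\lambda\,(t-\lfloor t\rfloor)\,h_{\lambda}(\bar{\theta}_{\lfloor t\rfloor}^{\lambda})+\sqrt{2\lambda\beta^{-1}}\bigl(\tilde{B}_t^{\lambda}-\tilde{B}_{\lfloor t\rfloor}^{\lambda}\bigr).
\]
The drift is frozen at the grid point, so there is no stochastic integral of a state-dependent quantity to control. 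Moreover, $\bar{\theta}_{\lfloor t\rfloor}^{\lambda}$ is $\mathcal{F}^{\lambda}_{\lfloor t\rfloor}$-measurable, while the Brownian increment is independent of $\mathcal{F}^{\lambda}_{\lfloor t\rfloor}$ and has mean zero. Hence, after squaring and taking expectations, the cross term vanishes:
\[
\mathbb{E}|\bar{\theta}_t^{\lambda}-\bar{\theta}_{\lfloor t\rfloor}^{\lambda}|^2=\lambda^2 (t-\lfloor t\rfloor)^2\,\mathbb{E}|h_{\lambda}(\bar{\theta}_{\lfloor t\rfloor}^{\lambda})|^2+2\lambda\beta^{-1}(t-\lfloor t\rfloor)\,d .
\]
The noise term is therefore at most $2d\beta^{-1}\lambda$.

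For the drift term I would use the taming growth bound of Remark~\ref{remark_growth_unif}, $|h_{\lambda,u}(x)|\le\mu|x|+\lambda^{-1/2}$, together with $(a+b)^2\le 2a^2+2b^2$. This gives
\[
\lambda^2\,\mathbb{E}|h_{\lambda}(\bar{\theta}_{\lfloor t\rfloor}^{\lambda})|^2\le 2\lambda^2\mu^2\,\mathbb{E}|\theta_{\lfloor t\rfloor}^{\lambda}|^2+2\lambda .
\]
Here $\mathcal{L}(\bar\theta^\lambda_{\lfloor t\rfloor})=\mathcal{L}(\theta^\lambda_{\lfloor t\rfloor})$ since $\lfloor t\rfloor$ is a grid point. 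Lemma~\ref{lemma_MB} bounds the second moment uniformly by $C_{B_1}$, and the step-size restriction $\lambda<1/(4\mu)$ gives $\lambda^2\mu^2\le \lambda\mu/4$. The drift contribution is thus at most $\tfrac12\mu C_{B_1}\lambda+2\lambda$. Combined with the noise term, this yields $\mathbb{E}|\bar{\theta}_{\lfloor t\rfloor}^{\lambda}-\bar{\theta}_t^{\lambda}|^2\le(\mu C_{B_1}+4+4d\beta^{-1})\lambda=C_{B_4}\lambda$. The stated constant is looser than this calculation requires, for instance by doubling both terms via $(a+b)^2\le 2a^2+2b^2$ instead of using the orthogonality.

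For the restarted interpolation $\vartheta^{\lambda,n}$ of Definition~\ref{def1}, I would rerun the same argument with $W^{\lambda,n}$ in place of $\tilde B^\lambda$. The only new ingredient is the second-moment bound at grid points. By construction $\vartheta^{\lambda,n}_k$ has the law of $\theta^\lambda_{n-n_0+k}$, so Lemma~\ref{lemma_MB} again supplies $C_{B_1}$.

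No step here is a real obstacle. The one point requiring care is that the second-moment bound must be uniform in $n$, which is exactly what Lemma~\ref{lemma_MB} provides. It is also essential to use the tamed linear growth \eqref{eqR2} rather than \hyperlink{A3}{A3}: with \hyperlink{A3}{A3} the bound would involve $2p$-th moments and $\lambda^2$ in place of the $\lambda^{-1/2}$ cancellation.
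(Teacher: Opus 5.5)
Your proposal is correct and follows essentially the same route as the paper. You freeze the drift at the grid point, use the tamed growth bound $|h_{\lambda}(x)|^2\le 2\mu^2|x|^2+2\lambda^{-1}$, apply the uniform second-moment bound $C_{B_1}$, absorb $\lambda^2\mu^2$ via $4\lambda\mu\le 1$, and transfer the result to $\vartheta^{\lambda,n}$ through the identification of laws at grid points. The only difference is that the paper separates drift and noise with $(a+b)^2\le 2a^2+2b^2$ rather than killing the cross term. That is why its $C_{B_4}$ is exactly twice your $\tfrac12\mu C_{B_1}+2+2d\beta^{-1}$, and it also means the paper never uses that the Brownian increment on $[\lfloor t\rfloor,t]$ is independent of $\bar{\theta}^{\lambda}_{\lfloor t\rfloor}$.
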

\begin{proof}
    The proof is postponed to Appendix \hyperlink{proof_lemma_MB_step}{G}.
\end{proof}
\section{Contraction Estimates}\label{sec_contraction}
\subsection{Log-Sobolev Inequality}
\hypertarget{prop_lsi}{}\begin{proposition}\label{prop_lsi}
Let Assumptions \hyperlink{A1}{A1} and \hyperlink{A2}{A2} hold. Then the Langevin SDE
\eqref{SDE_1} admits a unique invariant measure $\pi_\beta\propto e^{-\beta u}$ which
satisfies a logarithmic Sobolev inequality with some constant $\rho(\beta)>0$, for every
$\phi\in C^1_b(\mathbb{R}^d)$ with $\int_{\mathbb{R}^d}\phi^2\,d\pi_\beta=1$,
\begin{align}
    \int_{\mathbb{R}^d}\phi^2\log\phi^2\,d\pi_\beta
    \;\leq\; \rho(\beta)\int_{\mathbb{R}^d}|\nabla\phi|^2\,d\pi_\beta .
    \label{eq_lsi}
\end{align}
\end{proposition}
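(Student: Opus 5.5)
The plan is the classical route: a log-Sobolev inequality for a uniformly convex reference measure, followed by a bounded perturbation (Holley--Stroock). The aim is to write $\beta u=\beta U+\beta W$ with $U$ strongly convex on all of $\mathbb{R}^d$ and $W$ bounded. Then Bakry--\'Emery gives an LSI for $e^{-\beta U}$, and the perturbation lemma transfers it to $\pi_\beta$, with constant multiplied by $e^{\beta\,\mathrm{osc}(W)}$. Existence, uniqueness and the density $e^{-\beta u}/Z$ of $\pi_\beta$ are taken from Propositions \ref{prop_measure} and \ref{prop_measure_id}; in particular $e^{-\beta u}$ is integrable, since Remark \ref{remark_diss} gives $u(x)\ge u(0)+\tfrac{\mu}{4}|x|^2-b'$ after integrating the subgradient along rays.

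\textbf{Step 1: curvature outside a ball.} The first task is to turn \hyperlink{A2}{A2}, which is a condition on pairs $x,y$ with $|x-y|\ge R$, into a pointwise lower curvature bound far from the origin, at least in an averaged sense. Combining \hyperlink{A1}{A1} and \hyperlink{A2}{A2} gives the profile $\kappa(r)=-L$ for $r<R$ and $\kappa(r)=\mu$ for $r\ge R$, with $\langle h(x)-h(y),x-y\rangle\ge \kappa(|x-y|)|x-y|^2$.

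\textbf{Step 2: the decomposition.} Since a pointwise Hessian bound at infinity does not follow directly from \hyperlink{A2}{A2}, I avoid needing it. Set $U=u+\varphi$, where $\varphi$ is a smooth radial function that equals $\tfrac{L+\mu}{2}|x|^2$ inside a ball $B_{R'}$ and is bounded (by $C(L+\mu)R'^2$) and has controlled Hessian outside. Choosing such a $\varphi$ so that $U$ is globally convex with $\nabla^2 U\succeq c\,I$ is exactly the delicate step. The alternative, which I would try first, is to bypass pointwise convexity and use the convolution/mollification route. Mollify $u_\varepsilon=u*\eta_\varepsilon$; it is smooth and inherits \hyperlink{A1}{A1}, \hyperlink{A2}{A2} with the same constants, because convolution preserves inequality \eqref{eqA1} and the averaged monotonicity inequality \eqref{eqA2}. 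For smooth potentials with this $\kappa$-profile, the known result (e.g. the Lipschitz-perturbation decomposition of strongly-convex-at-infinity potentials, or Wang's LSI under curvature bounded below plus a Lyapunov/dissipativity condition from Remark \ref{remark_diss}) yields an LSI whose constant depends only on $L,\mu,R,\beta,d$ and not on $\varepsilon$. Concretely, one uses the criterion that $\nabla^2 u_\varepsilon\succeq -L$ together with a Gaussian-type concentration $\int e^{\delta|x|^2}d\pi_{\beta,\varepsilon}<\infty$. This gives a Talagrand/HWI-based LSI, $\rho\le C(L,\mu,b,\beta)$, uniformly in $\varepsilon$.

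\textbf{Step 3: passing to the limit.} For fixed $\phi\in C^1_b$, I then let $\varepsilon\to0$ in the normalized LSI for $\pi_{\beta,\varepsilon}$. Since $u$ is locally Lipschitz, $u_\varepsilon\to u$ locally uniformly. The uniform quadratic lower bound on $u_\varepsilon$ from dissipativity provides a dominating function $e^{-\beta(\mu|x|^2/4-b')}$ against which the bounded integrands $\phi^2\log\phi^2$ and $|\nabla\phi|^2$ can be integrated, so dominated convergence applies. The normalization $\int\phi^2d\pi_\beta=1$ is handled by applying the LSI in its entropy form, $\mathrm{Ent}(\phi^2)\le\rho\int|\nabla\phi|^2$, which passes to the limit cleanly.

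\textbf{Main obstacle.} I expect the hardest part to be the uniformity of the LSI constant in $\varepsilon$, i.e. verifying that the HWI/Wang-type argument only uses $\nabla^2u_\varepsilon\succeq-L$ and the dissipativity constants $\mu,b$, all of which are stable under mollification. This uniformity is what caters for the discontinuities of $h$.
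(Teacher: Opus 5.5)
\paragraph{Comparison with the paper's route.}
Your route differs from the paper's, which never mollifies. The paper applies Corollary~2.1 of Cattiaux--Guillin--Wu (dissipativity used as a Lyapunov condition, plus $\mathrm{Hess}(\beta u)\succeq-\beta L$) directly to the non-smooth $\beta u$. It observes that the only $C^2$ step in that proof is the HWI inequality, and replaces it by Villani's Theorem~9.17, which needs only a lower bound on the distributional Hessian.

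Your alternative (mollify, apply a smooth criterion with an $\varepsilon$-uniform constant, let $\varepsilon\to0$) is legitimate in structure:
\begin{itemize}
\item A1 and A2 pass to $u_\varepsilon$ with the same constants, because $(x-w)-(y-w)=x-y$.
\item Your Step~3 is sound. For $\phi\in C^1_b$ the integrands are bounded, and the densities converge pointwise under a uniform Gaussian bound.
\item Your caution about A2 is justified. A2 constrains only pairs at distance at least $R$ and gives no pointwise curvature off a ball. In $d=1$, $u(x)=\mu x^2-S\cos x$ with $S>2\mu$ satisfies A1--A2 with $L=S-2\mu$ and $R=2S/\mu$, yet $u''<0$ at every odd multiple of $\pi$.
\end{itemize}

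\paragraph{The step that fails as written.}
The problem is the criterion you adopt ``concretely''. The curvature bound $\nabla^2(\beta u_\varepsilon)\succeq-\beta L$ together with $\int e^{\delta|x|^2}\,d\pi_{\beta,\varepsilon}<\infty$ is Wang's criterion, and it requires $\delta>\beta L/2$. A1--A2 only guarantee quadratic growth of $u$ with a coefficient of order $\mu$, which is unrelated to $L$. In the example above only $\delta<\beta\mu$ is available, so the criterion fails once $S\ge4\mu$, even though the LSI itself holds there.

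A plain Talagrand-plus-HWI argument has the same threshold. The bound $H\le W_2\sqrt{I}+\tfrac{\beta L}{2}W_2^2$ combined with $W_2^2\le C_TH$ closes only when $\beta LC_T<2$.

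\paragraph{How to fix it.}
The threshold disappears in the Lyapunov form of the argument, which is the result the paper cites.
\begin{itemize}
\item Take $W=e^{a|x|^2}$ with $0<a<\beta\mu/4$. Dissipativity gives $\Delta W-\langle\nabla(\beta u_\varepsilon),\nabla W\rangle\le(-c|x|^2+b')W$.
\item Integration by parts then yields $\int|x|^2f^2\,d\pi_{\beta,\varepsilon}\le C\int|\nabla f|^2\,d\pi_{\beta,\varepsilon}+D\int f^2\,d\pi_{\beta,\varepsilon}$. This is a defective bound $W_2^2\le C_1I+C_2$.
\item HWI turns this into a defective LSI. The Poincaré inequality furnished by the same Lyapunov function, together with a local Poincaré inequality on a ball, then tightens it to a full LSI.
\end{itemize}

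With this tool your ``main obstacle'' becomes bookkeeping. The constants depend only on:
\begin{itemize}
\item $\beta L$ and $\mu$;
\item the dissipativity constant (under A1--A2 alone, use $|h_\varepsilon(0)|\le\sup_{\overline{\mathcal{B}}(0,1)}|h|$ instead of $m$, since A3 is not assumed);
\item second moments;
\item a local Poincaré constant on a fixed ball, controlled by the oscillation of $\beta u$ on a slightly larger ball.
\end{itemize}
All of these are uniform in $\varepsilon\in(0,1]$. Your route thus trades the paper's non-smooth HWI for this uniformity check plus the limiting argument.
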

\begin{proof}
Under \hyperlink{A1}{A1} the potential $\beta u$ is
semi-convex with $\mathrm{Hess}(\beta u)\succeq -\beta L\,\mathrm{I_d}$ distributionally,
and \hyperlink{A2}{A2} provides a dissipativity at infinity condition. Thus the LSI follows from
\cite[Corollary~2.1]{CattiauxGuillinWu2010}, with the HWI step of the underlying
Theorem~1.9 justified for our non-smooth $\beta u$ by \cite[Theorem~9.17]{Villani2003}
as explained in Remark~\ref{LSI_remark}.
\end{proof}
\begin{remark}\label{LSI_remark}
Proposition~\hyperlink{prop_lsi}{\ref{prop_lsi}} follows from the curvature of the
potential alone. The principle goes back to \cite{Wang2001}, where, in the smooth
case, a negative uniform lower bound on the curvature (semi-convexity,
\hyperlink{A1}{A1}) together with finite exponential moments of $\pi_\beta$
suffices for a LSI.\\
However we invoke the result of \cite[Corollary~2.1]{CattiauxGuillinWu2010}
instead, which derives the LSI from a Lyapunov (dissipativity) condition together
with the curvature lower bound $\mathrm{Hess}(V)\ge K\,\mathrm{I_d}$ (here
$V=\beta u$, $K=-\beta L$, i.e.\ \hyperlink{A1}{A1}) and reduces to Theorem~1.9
therein, combining a defective transport inequality with the HWI inequality.
Every step of that argument requires no smoothness of $u$, with a single
exception being the HWI inequality, which is quoted from \cite{OttoVillani2000} in the
form proved there for $V\in C^2$. This is the only place $C^2$-smoothness enters.
Replacing that reference by \cite[Theorem~9.17]{Villani2003} establishes the same
HWI inequality assuming only a negative lower bound for the distributional hessian. Their argument uses the fact that the Alexandrov Hessian is dominated by the
distributional one. The remaining hypotheses of \cite[Theorem~9.17]{Villani2003}
hold here, that is, $\int V e^{-V}<\infty$ and $\pi_\beta\in P_{ac,2}(\mathbb{R}^d)$
follow from the moment bound $\int_{\mathbb{R}^d}V_{2p}\,d\pi_\beta\le
C_4(2p)/C_3(2p)$ established in the proof of Lemma~\hyperlink{lemmaER1_proof}{10}.
\end{remark}

\noindent Notice that \hyperlink{A2}{A2} in turn implies dissipativity at infinity, and,
together with \hyperlink{A3}{A3}, the global dissipativity of
Remark~\ref{remark_diss}. In fact \hyperlink{A1}{A1} together with dissipativity
already suffices for the log-Sobolev inequality, so \hyperlink{A2}{A2} could be
dropped in favour of a dissipativity assumption. We nevertheless retain the
strong convexity at infinity \hyperlink{A2}{A2} for two reasons. In view of
Lemma~\ref{lem:pr-chord} it is straightforward to verify, even for certain neural
network architectures, and the dissipativity route yields a log-Sobolev constant
that is in general exponential in $d$, whereas the addition of \hyperlink{A2}{A2}-\hyperlink{A3}{A3} gives the
dimension-free $\rho(\beta)$ of Remark~\ref{remark_lsi_const}.

\begin{remark}\label{remark_lsi_const}
Let Assumptions \hyperlink{A1}{A1}, \hyperlink{A2}{A2} and \hyperlink{A3}{A3} hold. Then the
log-Sobolev inequality \eqref{eq_lsi} holds with
$$\rho(\beta)\leq \dfrac{2}{\beta\mu}\exp\!\Big(\beta\Big[\,4R\,(m+KR^{p})+\tfrac{\mu}{2}R^{2}\,\Big]\Big).$$
\end{remark}
\begin{proof}
The key idea rests on a perturbation argument that requires no
smoothness of the potential. A $\beta\mu$-strongly convex proxy $W_0$ is constructed, which
coincides with $\beta u$ outside $\mathcal{B}(0,R)$ (Lemma~\ref{lem:proxy}). The reference
measure $\pi_0\propto e^{-W_0}$ satisfies a log-Sobolev inequality with constant $2/(\beta\mu)$
by the Bobkov--Ledoux criterion \cite[Proposition~3.1]{BobkovLedoux}, which, in contrast to the
Bakry--\'Emery criterion, applies to non-smooth strongly convex potentials. Since the
perturbation $\psi=\beta u-W_0$ is supported in $\overline{\mathcal{B}}(0,R)$ with
$\operatorname{osc}(\psi)\leq\beta[4R(m+KR^p)+\tfrac{\mu}{2}R^2]$ (Lemma~\ref{lem:lsi}), the
Holley--Stroock perturbation principle (see Theorem~1, Chapter~5 in \cite{Bakry_ref}) yields the
stated constant.
\end{proof}
\noindent Remark~\ref{remark_lsi_const} is a consequence of standard results linking dissipativity, curvature bounds and functional inequalities. Strong convexity at infinity (\hyperlink{A2}{A2}) implies that the drift is dissipative at infinity, see Proof of Remark~\ref{remark_diss}. This yields a Lyapunov condition ensuring that the invariant measure has finite exponential moments, up to a constant in the exponent. Semi-convexity  (\hyperlink{A1}{A1}) provides a negative uniform lower bound on the curvature of the potential. Combined, they allow one to establish a LSI for $\pi$, see \cite{Wang2001} and references therein. The resulting LSI constant is dimension dependent as the proofs are constructive. However, under the growth condition  (\hyperlink{A3}{A3}), the LSI constant can be explicitly controlled using the Holley-Stroock perturbation theorem together with the Bakry-Emery criterion. Similar arguments in the smooth potential setting can be found in \cite{Flammarion_supp}.\\
\noindent The significance of Remark~\ref{remark_lsi_const} is primarily practical. Verifying a LSI directly for a given target measure is often infeasible beyond idealized settings, whereas strong convexity at infinity is typically much easier to establish and can often be checked by algebraic arguments. This includes many nontrivial examples of interest, such as mixture of Gaussians or double-well potentials, which are nonconvex but strongly convex outside a compact set.
\begin{remark}\label{remark_beta_poly}[Extra assumptions for polynomial dependence]
    Although in the general case one cannot avoid exponential dependence on $\beta$, an additional structure can yield polynomial dependence on $\beta$. Earlier works proved this result under smoothness assumptions and strict conditions on the local minima and saddle points in compact Riemannian manifolds (\cite{Li2023}) and Euclidean spaces 
(\cite{Lytras_Sabanis_2025},\cite{Kinoshita_Suzuki_2022}). This line of work was later improved under the assumption of the popular in optimization PL condition i.e for some $C_{PL}>0$
\begin{equation}\label{eq-PL}
    u(x)-u^*\leq C_{PL} |\nabla f(x)|^2\quad \forall x\in \mathbb{R}^d
\end{equation}
In \cite{chewiballistic} it was proved that for $u\in {C}^2$ such that $\Delta u\leq C(1+||\nabla u||^2)$ satisfying \eqref{eq-PL}, having a unique global minimizer there holds
\[\beta \rho(\beta)\rightarrow C_{PL} \quad \text{as} \quad \beta \rightarrow \infty.\]
This was further generalized for multiple global minimizers in \cite{LSI_lyap} where it was shown that under \eqref{eq-PL} 
and some dissipativity and growth conditions on the first and second derivatives, that the target measure satisfies a Poincar\'e inequality (PI) that depends on the local Poincar\'e constant in a ball around the minimizers set $B(W^*,r)$ for some $r>0$. Under the additional assumption of $2$-dissipativity and semi-convexity it was shown (see Theorem 3.1 in \cite{LSI_lyap}) that the target measure satisfies LSI with constant \[\rho(\beta)\leq \beta^2 C_{PI, B(W^*,r)}.\]
This shows that the global LSI constant is connected to the local behavior near the minimizers. More specifically, if the measure admits a Poincar\'e constant in $B(W^*,r)$ that is independent of $\beta$ (for example if $B(W^*,r)$ is convex and $u$ is convex on this set ), the resulting LSI constant is polynomial in $\beta$.
\end{remark}
\subsection{Exponential Contraction}
We follow through the proof of \cite[Theorem~2.1(3)]{Wang2020}, in which
\hyperlink{A1}{A1}-\hyperlink{A2}{A2} are assumed for a smooth drift $h\in C^2$. The
reason for reproducing the argument is twofold: to identify the explicit constants, and
to accommodate our setting, in which $h$ is a measurable selection of $\partial u$ and
therefore discontinuous. For the latter we use a convolution smoothing argument, which
allows us to invoke tools such as the log-Harnack inequality at the level of a smooth
potential.

\begin{lemma}\label{lemma_mollify}
Let $\varphi\in C_c^\infty(\mathbb{R}^d)$ satisfy
\begin{align}
\varphi\geq0,\qquad
\operatorname{supp}\varphi\subseteq\overline{\mathcal{B}}(0,1),\qquad
\int_{\mathbb{R}^d}\varphi(w)\,dw=1,\label{eq_kernel}
\end{align}
and for $\epsilon\in(0,1]$ define $\varphi_\epsilon(w):=\epsilon^{-d}\varphi(w/\epsilon)$,
which inherits the properties \eqref{eq_kernel} with
$\operatorname{supp}\varphi_\epsilon\subseteq\overline{\mathcal{B}}(0,\epsilon)$. Set
\begin{align}
h_\epsilon(x):=(h*\varphi_\epsilon)(x)=\int_{\mathbb{R}^d}h(x-w)\varphi_\epsilon(w)\,dw,
\qquad
u_\epsilon(x):=(u*\varphi_\epsilon)(x).\label{eq_mollified}
\end{align}
Let Assumptions \hyperlink{A1}{A1}-\hyperlink{A3}{A3} hold. Then:
\begin{enumerate}
\item[(i)] $u_\epsilon,h_\epsilon\in C^\infty(\mathbb{R}^d)$ and
$\nabla u_\epsilon=h_\epsilon$;
\item[(ii)] $h_\epsilon$ satisfies \hyperlink{A1}{A1} with the same constant $L$ and
\hyperlink{A2}{A2} with the same constants $\mu$ and $R$;
\item[(iii)] $h_\epsilon$ satisfies \hyperlink{A3}{A3} in the form
\begin{align*}
\left|h_\epsilon(x)\right|\leq m+K\left(|x|+\epsilon\right)^{p}
\leq m+K\left(|x|+1\right)^{p},
\qquad\forall x\in\mathbb{R}^d,\ \forall\epsilon\in(0,1];
\end{align*}
\item[(iv)] the measure $\pi_\beta^\epsilon\propto e^{-\beta u_\epsilon}$ satisfies
\eqref{eq_lsi} with constant
\begin{align}
\rho_\epsilon(\beta)=\dfrac{2}{\beta\mu}
\exp\left(\beta\left[4R\left(m+K(R+\epsilon)^{p}\right)+\dfrac{\mu}{2}R^{2}\right]\right);
\label{eq_rho_eps}
\end{align}
\item[(v)] as $\epsilon\to0$, $u_\epsilon\to u$ locally uniformly and
$h_\epsilon\to h$ Lebesgue-almost everywhere on $\mathbb{R}^d$.
\end{enumerate}
\end{lemma}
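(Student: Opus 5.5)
For (i), I will first use that $u$ is locally Lipschitz by the remark following \hyperlink{A1}{A1}, and of polynomial growth: integrating \hyperlink{A3}{A3} along rays gives $|u(x)|\le |u(0)|+m|x|+K|x|^{p+1}$. Hence $u*\varphi_\epsilon$ is well defined and $C^\infty$, and differentiation may be moved onto the kernel. Since $u$ is locally Lipschitz, its distributional gradient is the a.e. gradient, which equals $h$ a.e. because $\partial u(x)=\{\nabla u(x)\}$ a.e. Therefore $\nabla u_\epsilon=(\nabla u)*\varphi_\epsilon=h*\varphi_\epsilon=h_\epsilon$. The same growth bound together with dominated convergence gives $h_\epsilon\in C^\infty$.

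For (ii), the key observation is that translations preserve the chord conditions. Writing
\[
\langle h_\epsilon(x)-h_\epsilon(y),x-y\rangle=\int\langle h(x-w)-h(y-w),(x-w)-(y-w)\rangle\varphi_\epsilon(w)\,dw,
\]
I will apply \eqref{eqA1} pointwise in $w$ to the points $x-w,y-w$, whose difference is still $x-y$. This gives the bound $-L|x-y|^2$. Similarly, when $|x-y|\ge R$, \eqref{eqA2} gives $\mu|x-y|^2$, and integrating against the probability density $\varphi_\epsilon$ keeps the constants.

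For (iii), I will use $|h_\epsilon(x)|\le\int(m+K|x-w|^p)\varphi_\epsilon(w)\,dw$ together with $|x-w|\le|x|+\epsilon$ on the support. Monotonicity in $\epsilon\le1$ then gives the second inequality.

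For (iv), I will rerun the argument of Remark~\ref{remark_lsi_const} with $u_\epsilon$ in place of $u$. That argument used only \hyperlink{A2}{A2} with constants $(\mu,R)$ (to build the strongly convex proxy of Lemma~\ref{lem:proxy}) and the growth bound on $\overline{\mathcal B}(0,R)$ (to control the oscillation in Lemma~\ref{lem:lsi}). By (ii) the first input is unchanged. By (iii) the subgradient bound on the ball becomes $m+K(R+\epsilon)^p$, so the oscillation estimate becomes $\beta[4R(m+K(R+\epsilon)^p)+\tfrac{\mu}{2}R^2]$. Bobkov--Ledoux plus Holley--Stroock then yields exactly \eqref{eq_rho_eps}. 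The main obstacle I anticipate is this step: I must check that the proxy construction and the oscillation bound depend on $u$ only through these quantities, and not, for instance, through the value $u(0)$ in an $\epsilon$-sensitive way. I will argue that oscillation is translation invariant in value, so additive constants do not matter.

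For (v), local uniform convergence of $u_\epsilon$ follows from the uniform continuity of $u$ on compacts. For $h_\epsilon\to h$ a.e., I will note that $h\in L^1_{\mathrm{loc}}$ by \hyperlink{A3}{A3}. The Lebesgue differentiation theorem for approximate identities with bounded compactly supported kernels then gives $h_\epsilon(x)\to h(x)$ at every Lebesgue point, since $|h_\epsilon(x)-h(x)|\le\|\varphi\|_\infty|\mathcal B(0,1)|\,\frac{1}{|\mathcal B(x,\epsilon)|}\int_{\mathcal B(x,\epsilon)}|h(z)-h(x)|\,dz$. Lebesgue points have full measure, which concludes (v).
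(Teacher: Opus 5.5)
Your proposal is correct and follows the paper's proof essentially step for step: translation invariance of the chord conditions for (ii), the support bound $|x-w|\le|x|+\epsilon$ for (iii), rerunning the proxy/oscillation argument of Lemma~\ref{lem:lsi} with the ball bound $m+K(R+\epsilon)^{p}$ for (iv), and standard mollifier convergence at Lebesgue points for (v). The only difference is cosmetic: in (i) the polynomial growth bound on $u$ is unnecessary, since local integrability already suffices for convolution against a compactly supported kernel, and that is all the paper uses.
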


\begin{proof}
The proof is postponed to Appendix \hyperlink{proof_lemma_mollify}{G}
\end{proof}
\noindent Define the time-scaled process $Y_s:=Z_{\beta s}$ and $\hat{B}_s:=B_{\beta s}/\sqrt{\beta}$, then for
$\epsilon\in(0,1]$ let $Y^\epsilon$ be the corresponding process with the mollified drift
$h_\epsilon$ of Lemma \ref{lemma_mollify}. Then
\begin{align}
dY_s&=-\beta h(Y_s)\,ds+\sqrt{2}\,d\hat{B}_s,
&&Y_0=Z_0,\label{eq_rescaled_sde}\\
dY_s^\epsilon&=-\beta h_\epsilon(Y_s^\epsilon)\,ds+\sqrt{2}\,d\hat{B}_s,
&&Y_0^\epsilon=Z_0,\label{eq_rescaled_sde_eps}
\end{align}
with semigroups $(Q_s)_{s\geq0}$ and $(Q_s^\epsilon)_{s\geq0}$, the latter admitting the
unique invariant measure $\pi_\beta^\epsilon\propto e^{-\beta u_\epsilon}$.

\begin{lemma}\label{lemma_stability}
Let Assumptions \hyperlink{A1}{A1}-\hyperlink{A4}{A4} hold and let $Y,Y^\epsilon$ solve
\eqref{eq_rescaled_sde} and \eqref{eq_rescaled_sde_eps} respectively. Then
\begin{align}
\lim_{\epsilon\to0}\mathbb{E}\left|Y_s-Y_s^\epsilon\right|^{2}=0
\qquad\text{for every }s\geq0,
\end{align}
and consequently $W_2\left(\nu Q_s,\nu Q_s^\epsilon\right)\to0$.
\end{lemma}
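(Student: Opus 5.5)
The plan is to couple $Y$ and $Y^\epsilon$ synchronously, driving both with the same $\hat B$ from the same initial value $Z_0$. The difference $D_s:=Y_s-Y^\epsilon_s$ then has no martingale part, and Itô's formula, or rather the chain rule since $D$ is absolutely continuous, gives
\begin{align*}
\frac{d}{ds}|D_s|^2=-2\beta\langle h(Y_s)-h_\epsilon(Y^\epsilon_s),D_s\rangle
=-2\beta\langle h(Y_s)-h(Y^\epsilon_s),D_s\rangle-2\beta\langle h(Y^\epsilon_s)-h_\epsilon(Y^\epsilon_s),D_s\rangle .
\end{align*}
The first term is at most $2\beta L|D_s|^2$ by the one-sided Lipschitz bound \eqref{eqA1} of \hyperlink{A1}{A1}, which holds for every selection $h$. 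The second term I bound by Young's inequality: it is at most $\beta|D_s|^2+\beta|h(Y^\epsilon_s)-h_\epsilon(Y^\epsilon_s)|^2$. Taking expectations and applying Gronwall then gives
\begin{align*}
\mathbb{E}|D_s|^2\le \beta e^{\beta(2L+1)s}\int_0^s\mathbb{E}\bigl|h(Y^\epsilon_r)-h_\epsilon(Y^\epsilon_r)\bigr|^2dr .
\end{align*}
It therefore suffices to show that this integral vanishes as $\epsilon\to0$.

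The main obstacle is that $h$ is discontinuous. Convergence $h_\epsilon\to h$ holds only Lebesgue-a.e., by Lemma~\ref{lemma_mollify}(v), while it is evaluated along the process $Y^\epsilon$, whose law changes with $\epsilon$. I would handle this with two ingredients.

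\textbf{Uniform moments.} By Lemma~\ref{lemma_mollify}(ii)--(iii), $h_\epsilon$ satisfies \hyperlink{A2}{A2}--\hyperlink{A3}{A3} with constants uniform in $\epsilon\in(0,1]$. The Lyapunov argument behind \eqref{lyapunov_geometric}, together with \hyperlink{A4}{A4}, then yields $\sup_{\epsilon}\sup_{r\le s}\mathbb{E}|Y^\epsilon_r|^{2p+2}<\infty$ (assuming the initial moment is sufficient, or by truncating). Since $|h-h_\epsilon|^2\le C(1+|x|^{2p})$, the integrand is uniformly integrable, and it is enough to control $\mathbb{E}[|h-h_\epsilon|^2(Y^\epsilon_r)\mathbf 1_{|Y^\epsilon_r|\le M}]$ for fixed $M$.

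\textbf{Uniform density bounds.} The second ingredient is a Krylov-type estimate, uniform in $\epsilon$, for the nondegenerate diffusion $Y^\epsilon$. Its diffusion coefficient is the constant $\sqrt2$, and its drift is locally bounded uniformly in $\epsilon$. This gives
\begin{align*}
\mathbb{E}\int_0^{s\wedge\tau_M}f(Y^\epsilon_r)\,dr\le C(M,s,d)\,\|f\|_{L^{d+1}(\mathcal B(0,M))},
\end{align*}
where $\tau_M$ is the exit time from $\mathcal B(0,M)$. Alternatively one can use Girsanov to compare with Brownian motion, which has a Gaussian density. Applying the Krylov estimate with $f=|h-h_\epsilon|^2\mathbf 1_{\mathcal B(0,M)}$ reduces everything to showing $\|h-h_\epsilon\|_{L^{d+1}(\mathcal B(0,M))}\to0$. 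This holds by dominated convergence, using a.e.\ convergence together with the uniform local bound from (iii); indeed $h\in L^\infty_{\rm loc}$, so $h_\epsilon\to h$ in $L^q_{\rm loc}$ for every $q<\infty$. The contribution after $\tau_M$ is handled by the moment bound and Markov's inequality, via $\mathbb{P}(\sup_{r\le s}|Y^\epsilon_r|>M)\to0$ uniformly in $\epsilon$. Letting $\epsilon\to0$ and then $M\to\infty$ gives $\mathbb{E}|Y_s-Y^\epsilon_s|^2\to0$.

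Finally, $(Y_s,Y^\epsilon_s)$ is a coupling of $\nu Q_s$ and $\nu Q^\epsilon_s$ when $\mathcal L(Z_0)=\nu$. Hence $W_2(\nu Q_s,\nu Q_s^\epsilon)\le(\mathbb{E}|Y_s-Y_s^\epsilon|^2)^{1/2}\to0$. I expect the uniform-in-$\epsilon$ Krylov or Girsanov step to need the most care, specifically checking that the constants depend on the drift only through the local bound $m+K(M+1)^p$.
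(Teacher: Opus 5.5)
Your argument is correct, but it takes a harder route than the paper because of where you insert the intermediate term.

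\textbf{The paper's split.} The paper adds and subtracts $h_\epsilon(Y_r)$ rather than $h(Y^\epsilon_r)$. The first piece, $-2\beta\langle \Delta_r, h_\epsilon(Y_r)-h_\epsilon(Y^\epsilon_r)\rangle$, is bounded by $2\beta L|\Delta_r|^2$, because $h_\epsilon$ inherits \hyperlink{A1}{A1} with the same $L$ (Lemma~\ref{lemma_mollify}(ii), immediate from the convolution). The mismatch $|h-h_\epsilon|^2$ is then evaluated along the \emph{fixed} process $Y$, whose law does not depend on $\epsilon$. After the same Gr\"onwall step, the conclusion is plain dominated convergence:
\begin{itemize}
\item $h_\epsilon\to h$ off a Lebesgue-null set;
\item $\mathcal{L}(Y_r)\ll\mathrm{Leb}$ for $r>0$, needed only qualitatively and for one process;
\item $C(1+|Y_r|^{2p})$ is an $\epsilon$-independent integrable dominating function.
\end{itemize}

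\textbf{What your split costs.} The obstacle you identify, a.e.\ convergence tested against the $\epsilon$-dependent laws $\mathcal{L}(Y^\epsilon_r)$, is created by your choice of split. It forces you into uniform-in-$\epsilon$ Krylov or Girsanov bounds and a uniform tail bound on $\sup_{r\le s}|Y^\epsilon_r|$. It also requires moments above order $2p$, which \hyperlink{A4}{A4} does not supply. Your truncation remark does repair this last point. Conditionally on $Z_0$, the Lyapunov bound gives $\mathbb{E}[V_{2p}(Y^\epsilon_r)\mid Z_0]\le V_{2p}(Z_0)+C$ with $C$ independent of $\epsilon$, so the event $\{|Z_0|>M_0\}$ contributes uniformly little.

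\textbf{What your route buys.} It is more general. It uses \hyperlink{A1}{A1} only for $h$, together with uniform local bounds, uniform dissipativity and $L^{d+1}_{\mathrm{loc}}$ convergence. So it would also cover approximations $h_n\to h$ that are not convolutions and need not be one-sided Lipschitz. For mollifications, where \hyperlink{A1}{A1} passes to $h_\epsilon$ for free, the paper's split makes the quantitative density machinery unnecessary.
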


\begin{proof}
The proof is postponed to Appendix \hyperlink{proof_lemma_stability}{G}
\end{proof}

\begin{lemma}\label{lemma_pi_conv}
Under Assumptions \hyperlink{A1}{A1}-\hyperlink{A3}{A3} one has
$W_2(\pi_\beta^\epsilon,\pi_\beta)\to0$ as $\epsilon\to0$.
\end{lemma}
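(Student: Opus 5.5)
We will prove $W_2(\pi_\beta^\epsilon,\pi_\beta)\to0$ by combining convergence of densities with a uniform second-moment bound. Convergence in $W_2$ on $\mathbb{R}^d$ is equivalent to weak convergence together with convergence of second moments. It therefore suffices to show two things: that $\pi_\beta^\epsilon\to\pi_\beta$ in total variation, which is stronger than weak convergence, and that the family $\{\pi^\epsilon_\beta\}_{\epsilon\in(0,1]}$ has uniformly integrable second moments. The latter follows from a uniform bound on some moment of order strictly greater than two.

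\textbf{Step 1: pointwise convergence and a uniform Gaussian envelope.} By Lemma~\ref{lemma_mollify}(v), $u_\epsilon\to u$ locally uniformly, so $e^{-\beta u_\epsilon}\to e^{-\beta u}$ pointwise. For domination we need a lower bound on $u_\epsilon$ that is uniform in $\epsilon$. Lemma~\ref{lemma_mollify}(ii)--(iii) shows that $h_\epsilon$ satisfies \hyperlink{A2}{A2} and \hyperlink{A3}{A3} with $\epsilon$-independent constants; in the growth bound, $m$ is replaced by $m+K2^{p-1}$, say, and $K$ by $K2^{p-1}$. Remark~\ref{remark_diss} then gives $\langle x,h_\epsilon(x)\rangle\ge\frac{\mu}{2}|x|^2-b'$ with $b'$ independent of $\epsilon$. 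Since $\nabla u_\epsilon=h_\epsilon$, integrating along rays yields
\[
u_\epsilon(x)=u_\epsilon(0)+\int_0^1\langle h_\epsilon(tx),x\rangle\,dt \ge u_\epsilon(0)+\frac{\mu}{4}|x|^2-b'(1+\log^+|x|) .
\]
The singularity at $t=0$ is handled by splitting the integral at $t=1/|x|$ and using boundedness of $h_\epsilon$ on the unit ball, which is uniform by (iii). Moreover $u_\epsilon(0)\to u(0)$ and is uniformly bounded. Hence $e^{-\beta u_\epsilon(x)}\le C e^{-\beta\mu|x|^2/8}$ uniformly in $\epsilon\in(0,1]$.

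\textbf{Step 2: convergence of normalising constants and densities.} By dominated convergence, $Z_\epsilon=\int e^{-\beta u_\epsilon}\to Z=\int e^{-\beta u}>0$. Hence the normalised densities converge pointwise, and Scheffé's lemma gives $\|\pi_\beta^\epsilon-\pi_\beta\|_{TV}\to0$.

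\textbf{Step 3: moments.} Using the same envelope together with $Z_\epsilon\ge Z/2$ for small $\epsilon$, we get $\sup_\epsilon\int|x|^4\,d\pi_\beta^\epsilon<\infty$. This gives uniform integrability of $|x|^2$. Alternatively, one can apply dominated convergence directly to $\int|x|^2e^{-\beta u_\epsilon}$, which shows that the second moments converge. Weak convergence plus convergence of second moments then yields $W_2(\pi_\beta^\epsilon,\pi_\beta)\to0$, for instance via Villani's characterisation of $W_2$ convergence.

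\textbf{Main obstacle.} The only delicate point is Step 1: obtaining a lower bound on $u_\epsilon$ that is uniform in $\epsilon$. This requires checking that the dissipativity constants for $h_\epsilon$ in Remark~\ref{remark_diss} are $\epsilon$-independent. That holds because (ii) and (iii) of Lemma~\ref{lemma_mollify} use $\epsilon\le1$. An alternative route avoids the ray integration: write $u_\epsilon=u*\varphi_\epsilon$ and use the quadratic lower bound $u(y)\ge u(0)+\frac{\mu}{4}|y|^2-c(1+\log^+|y|)$ for the unmollified $u$, which is derived in the same way from Remark~\ref{remark_diss}. Averaging this over $\mathcal{B}(x,1)$ gives the same envelope directly.
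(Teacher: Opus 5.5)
Your proposal is correct and follows essentially the same route as the paper. Uniform-in-$\epsilon$ dissipativity of $h_\epsilon$ (from Lemma~\ref{lemma_mollify}(ii)--(iii)) gives an $\epsilon$-independent Gaussian envelope for $e^{-\beta u_\epsilon}$, dominated convergence then gives $Z_\epsilon\to Z$ and convergence of second moments, and Villani's characterisation of $W_2$-convergence concludes. Your explicit ray integration (split at $t=1/|x|$, with the harmless $\log^+|x|$ correction) and the Scheff\'e step make rigorous two details the paper leaves implicit: the bound on $e^{-\beta u_\epsilon}$ inside the ball $|x|<\widehat{R}$, and the passage from pointwise convergence of densities to weak convergence.
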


\begin{proof}
The proof is postponed to Appendix \hyperlink{proof_lemma_pi_conv}{G}
\end{proof}
\begin{proposition}\label{prop_contraction1}
Let Assumptions \hyperlink{A1}{A1}-\hyperlink{A4}{A4} hold and let $\rho(\beta)$ be as in
Remark \ref{remark_lsi_const}. Then $\pi_\beta$ is the unique invariant measure of
\eqref{SDE_1} and, for every $t\geq0$ and $\nu\in\mathcal{P}_2(\mathbb{R}^d)$,
\begin{align}
W_2\left(\nu P_t,\pi_\beta\right)\leq C_we^{-C_rt}W_2\left(\nu,\pi_\beta\right),
\qquad C_r=\dfrac{2}{\beta\rho(\beta)},
\label{eq_contraction_main}
\end{align}
where, with $t_\star=\dfrac{1}{2L}\log\left(1+\dfrac{\beta L\rho(\beta)}{2}\right)$,
\begin{align}
C_w=\max\left\{
e^{\left(L+\frac{2}{\beta\rho(\beta)}\right)t_\star},\;
e^{\frac{2t_\star}{\beta\rho(\beta)}}
\sqrt{\dfrac{\beta L\rho(\beta)}{2\left(1-e^{-2Lt_\star}\right)}}\,\right\}
\leq\sqrt{e\left(1+\dfrac{\beta L\rho(\beta)}{2}\right)}.
\label{eq_Cw_statement}
\end{align}
\end{proposition}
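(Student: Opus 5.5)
The plan is to reproduce the two-regime argument of \cite[Theorem~2.1(3)]{Wang2020} for the mollified dynamics $Y^\epsilon$ of \eqref{eq_rescaled_sde_eps}, and then pass to the limit $\epsilon\to0$ via Lemmas~\ref{lemma_stability} and \ref{lemma_pi_conv}.

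\textbf{Step 1: the smooth estimate.} Fix $\epsilon\in(0,1]$. By Lemma~\ref{lemma_mollify}, $u_\epsilon\in C^\infty$ and $\nabla^2 u_\epsilon\succeq -L\,\mathrm{I}_d$. Moreover $\pi^\epsilon_\beta$ satisfies \eqref{eq_lsi} with constant $\rho_\epsilon(\beta)$ from \eqref{eq_rho_eps}. I will work in the original time scale with semigroup $P^\epsilon_t$ and consider two time regimes.

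\emph{Short times.} For $t\le t_\star$ I use the synchronous coupling. Two solutions driven by the same Brownian motion satisfy
\[
\tfrac{d}{dt}|Z_t-Z'_t|^2=-2\langle h_\epsilon(Z_t)-h_\epsilon(Z_t'),Z_t-Z_t'\rangle\le 2L|Z_t-Z_t'|^2
\]
by \hyperlink{A1}{A1}. Hence $W_2(\nu P^\epsilon_t,\pi^\epsilon_\beta)\le e^{Lt}W_2(\nu,\pi^\epsilon_\beta)$. Since $e^{Lt}\le e^{(L+C_r)t_\star}e^{-C_rt}$ on $[0,t_\star]$, this produces the first term of $C_w$.

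\emph{Long times.} For $t\ge t_\star$ I chain three inequalities.
\begin{enumerate}
\item[(a)] The log-Harnack inequality for diffusions with curvature bounded below by $-L$ and diffusion coefficient $\sqrt{2\beta^{-1}}$. This is where the rescaling $Y$ of \eqref{eq_rescaled_sde} makes the constants transparent. It gives
\[
H(\delta_xP^\epsilon_{t_\star}\,|\,\delta_yP^\epsilon_{t_\star})\le \frac{\beta L|x-y|^2}{2(e^{2Lt_\star}-1)}.
\]
Joint convexity of relative entropy along an optimal coupling of $(\nu,\pi^\epsilon_\beta)$, together with $\pi^\epsilon_\beta P^\epsilon_{t_\star}=\pi^\epsilon_\beta$, then bounds $H(\nu P^\epsilon_{t_\star}|\pi^\epsilon_\beta)$ by a constant times $W_2^2(\nu,\pi^\epsilon_\beta)$, up to the factor $e^{-2Lt_\star}$ in the denominator.
\item[(b)] The LSI yields exponential entropy decay $H(\nu P^\epsilon_t|\pi^\epsilon_\beta)\le e^{-2C_r(t-t_\star)}H(\nu P^\epsilon_{t_\star}|\pi^\epsilon_\beta)$, with $C_r=2/(\beta\rho)$ after tracking the $\beta^{-1}$ in the Dirichlet form.
\item[(c)] Talagrand's inequality, which follows from the LSI by Otto--Villani, gives $W_2^2\le \tfrac{\rho}{2}\cdot H$ in the normalization of \eqref{eq_lsi}.
\end{enumerate}
Combining (a)--(c) gives the second entry of $C_w$. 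The choice $e^{2Lt_\star}=1+\beta L\rho/2$ balances the two regimes and yields the closed bound $\sqrt{e(1+\beta L\rho/2)}$: indeed $e^{2C_rt_\star}\le e$ because $\log(1+x)\le x$, and $1-e^{-2Lt_\star}=\tfrac{\beta L\rho/2}{1+\beta L\rho/2}$.

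\textbf{Step 2: removing the mollification.} I split
\[
W_2(\nu P_t,\pi_\beta)\le W_2(\nu P_t,\nu P^\epsilon_t)+C_w^\epsilon e^{-C^\epsilon_r t}W_2(\nu,\pi^\epsilon_\beta)+W_2(\pi^\epsilon_\beta,\pi_\beta).
\]
By Lemma~\ref{lemma_stability}, after undoing the time change, the first term vanishes as $\epsilon\to0$. The third term vanishes by Lemma~\ref{lemma_pi_conv}, and then $W_2(\nu,\pi^\epsilon_\beta)\to W_2(\nu,\pi_\beta)$. Since $\rho_\epsilon(\beta)\downarrow\rho(\beta)$ by \eqref{eq_rho_eps}, the constants converge as well, monotonically in the right direction.

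Lemma~\ref{lemma_stability} is stated under \hyperlink{A4}{A4}, so Step 2 first yields the bound for $\nu$ with finite $2p$-th moment. For general $\nu\in\mathcal{P}_2$ I would approximate $\nu$ in $W_2$ by truncations $\nu_k$. I would then use the short-time synchronous coupling for the original $h$, which is valid by \hyperlink{A1}{A1}, to obtain $W_2(\nu P_t,\nu_kP_t)\le e^{Lt}W_2(\nu,\nu_k)\to0$.

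\textbf{Uniqueness.} If $\nu$ is invariant, the bound gives $W_2(\nu,\pi_\beta)\le C_we^{-C_rt}W_2(\nu,\pi_\beta)\to0$, so $\nu=\pi_\beta$. Invariant measures lie in $\mathcal P_2$ by the Lyapunov bound \eqref{lyapunov_geometric}.

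The main obstacle I expect is step (a). The log-Harnack inequality must be applied to the smooth $u_\epsilon$ with the exact temperature scaling so that the stated constants emerge. One must also confirm that its constant depends only on the curvature bound $-L$, which is uniform in $\epsilon$ by Lemma~\ref{lemma_mollify}(ii). Otherwise the limit $\epsilon\to0$ would not preserve the explicit $C_w$.
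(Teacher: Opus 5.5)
Your proposal follows the paper's proof essentially step for step. You mollify, use the synchronous coupling on $[0,t_\star]$, and chain the log-Harnack entropy--cost bound, LSI entropy decay and Talagrand on $[t_\star,\infty)$. You then split at the optimal $t_\star$ and pass to $\epsilon\to0$ through Lemmas~\ref{lemma_stability} and \ref{lemma_pi_conv}. Deriving the entropy--cost bound by joint convexity of relative entropy, instead of the paper's duality computation, is an equivalent standard step.

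Two constants in your displays are wrong, and both must be fixed for (a)--(c) to return exactly the second entry of $C_w$:
\begin{itemize}
\item In the normalisation \eqref{eq_lsi}, Talagrand reads $W_2^2\le\rho\,H$, not $\tfrac{\rho}{2}H$. On $\mathcal{N}(0,\sigma^2\mathrm{I}_d)$ one has $\rho=2\sigma^2$, and $W_2^2\le2\sigma^2H$ is sharp there.
\item For curvature bounded below by $-L$, the entropy--cost constant is $\beta L/\bigl(2(1-e^{-2Lt})\bigr)$. The expression $\beta L/\bigl(2(e^{2Lt}-1)\bigr)$ is the one for curvature bounded below by $+L$. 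Your closing computation already uses $1-e^{-2Lt_\star}$, so only the display needs correcting.
\end{itemize}

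Keep your final truncation step. It extends the bound from initial laws with finite $2p$-th moment to all $\nu\in\mathcal{P}_2(\mathbb{R}^d)$ via $W_2(\nu P_t,\nu_kP_t)\le e^{Lt}W_2(\nu,\nu_k)$. The paper skips this: it invokes Lemma~\ref{lemma_stability}, which is stated under \hyperlink{A4}{A4}, for arbitrary $\nu$ without comment.
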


\begin{proof}
The proof is postponed to Appendix \hyperlink{proof_prop_contraction1}{G}
\end{proof}

\begin{remark}\label{remark_convex_case}
In the convex case, i.e. $L=0$, one repeats the calculation so that \eqref{eq_contraction_main} reads as
$$W_2\left(\nu P_t,\pi_\beta\right)\leq\sqrt{e}\,e^{-2t/(\beta\rho(\beta))}
W_2\left(\nu,\pi_\beta\right)$$
\end{remark}

\section{Discretization Error Estimates}
\noindent For the remainder of the analysis we fix the restart horizon of
Definition~\ref{def1} as
\begin{align}
n_0:=\left\lceil \dfrac{\ln C_w+1}{C_r\lambda}\right\rceil,\qquad\text{so that}\qquad
\lambda n_0\leq \dfrac{\ln C_w+1}{C_r}+\lambda_0=:T_0,\label{def_horizon}
\end{align}
where $C_w,C_r$ are the contraction constants of Proposition~\ref{prop_contraction1}.
\begin{lemma}\label{lemma_err_W2}
    Let Assumptions \hyperlink{A1}{A1}-\hyperlink{A4}{A4} hold and $\lambda_0\in(0,1/(4\mu))$. Then, for every $\lambda\in(0,\lambda_0)$, $t\in[0,n_0]$ and $n\geq n_0$, with $n_0$ as in
  \eqref{def_horizon}, one has \begin{align}
        W_2\left(\mathcal{L}(\vartheta_t^{\lambda,n}),\mathcal{L}(\bar{\zeta}_t^{\lambda,n})\right)\leq C_{E}\lambda^{1/4},\label{lemma_err_W2_eq1}
    \end{align}
    where $C^2_{E}=e^{5LT_0}\,T_0\left(2LC_{B_4}/\sqrt{\mu}+L^{-1}C_1(1+C_{B_2})+4\big(\max\{2m^2,K^2\}(1+C_{B_2}+C_{B_3})C_{B_4}\big)^{1/2}\right)$.
\end{lemma}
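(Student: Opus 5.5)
The plan is to bound the Wasserstein distance by an explicit synchronous coupling. By Definition~\ref{def1}, $\vartheta^{\lambda,n}$ and $\bar\zeta^{\lambda,n}$ start from the same point $\theta^\lambda_{n-n_0}$ and are driven by the same increments $W^{\lambda,n}$. The error $e_t:=\vartheta_t^{\lambda,n}-\bar\zeta_t^{\lambda,n}$ therefore has no martingale part:
\[
\frac{d}{dt}e_t=-\lambda\bigl(h_\lambda(\vartheta^{\lambda,n}_{\lfloor t\rfloor})-h(\bar\zeta^{\lambda,n}_t)\bigr),\qquad e_0=0 .
\]
Since $e$ is absolutely continuous pathwise, the chain rule applies with no regularity of $h$, and $d|e_t|^2=-2\lambda\langle e_t,h_\lambda(\vartheta_{\lfloor t\rfloor})-h(\bar\zeta_t)\rangle dt$, where we suppress the superscripts. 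It then suffices to show $\mathbb{E}|e_t|^2\le C_E^2\lambda^{1/2}$ for $t\in[0,n_0]$, because $W_2\le(\mathbb{E}|e_t|^2)^{1/2}$.

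Split the drift difference as $\bigl(h_\lambda-h\bigr)(\vartheta_{\lfloor t\rfloor})+\bigl(h(\vartheta_{\lfloor t\rfloor})-h(\bar\zeta_t)\bigr)$.

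\textbf{Taming term.} Young's inequality gives $2|e_t||h_\lambda-h|\le L|e_t|^2+L^{-1}|h_\lambda-h|^2$. Remark~\ref{remark_close_unif} together with the uniform algorithm moments (Lemmas~\ref{lemma_MB} and~\ref{lemma_MB_higher}, with $\vartheta_{\lfloor t\rfloor}$ distributed as an iterate) bounds the second piece by $L^{-1}C_1(1+C_{B_2})\lambda$, up to how powers of $C_1$ are absorbed.

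\textbf{Subgradient term: the main obstacle.} Here $h$ is discontinuous, so $|h(\vartheta_{\lfloor t\rfloor})-h(\vartheta_t)|$ cannot be controlled by $|\vartheta_{\lfloor t\rfloor}-\vartheta_t|$. Instead of comparing $h$ at nearby points, we move the mismatch into the first argument of the inner product:
\[
\langle e_t,h(\vartheta_{\lfloor t\rfloor})-h(\bar\zeta_t)\rangle
=\langle \vartheta_{\lfloor t\rfloor}-\bar\zeta_t,\,h(\vartheta_{\lfloor t\rfloor})-h(\bar\zeta_t)\rangle
+\langle \vartheta_t-\vartheta_{\lfloor t\rfloor},\,h(\vartheta_{\lfloor t\rfloor})-h(\bar\zeta_t)\rangle .
\]
By semi-convexity \eqref{eqA1}, the first piece is at least $-L|\vartheta_{\lfloor t\rfloor}-\bar\zeta_t|^2\ge-2L|e_t|^2-2L|\vartheta_t-\vartheta_{\lfloor t\rfloor}|^2$. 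For the last term we use Lemma~\ref{lemma_MB_step}, which gives $\mathbb{E}|\vartheta_t-\vartheta_{\lfloor t\rfloor}|^2\le C_{B_4}\lambda$; this produces the $LC_{B_4}$ contribution, and the factor $\lambda$ there is harmless. The second piece is handled by Cauchy--Schwarz in expectation:
\[
\mathbb{E}\bigl|\langle \vartheta_t-\vartheta_{\lfloor t\rfloor},\,h(\vartheta_{\lfloor t\rfloor})-h(\bar\zeta_t)\rangle\bigr|
\le (C_{B_4}\lambda)^{1/2}\bigl(2\,\mathbb{E}|h(\vartheta_{\lfloor t\rfloor})|^2+2\,\mathbb{E}|h(\bar\zeta_t)|^2\bigr)^{1/2}.
\]
By \eqref{eqA3}, $|h(x)|^2\le 2m^2+2K^2|x|^{2p}$. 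The iterate moment $C_{B_2}$ controls $\vartheta_{\lfloor t\rfloor}$, and Lemma~\ref{lemma_MB_aux} controls $\bar\zeta_t$ uniformly on $[0,n_0]$. This yields a bound of order $\bigl(\max\{2m^2,K^2\}(1+C_{B_2}+C_{B_3})C_{B_4}\bigr)^{1/2}\lambda^{1/2}$. This cross term is the bottleneck: it is only $O(\lambda^{1/2})$ in mean square, which is exactly what forces the order $\lambda^{1/4}$ in $W_2$.

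\textbf{Gronwall step.} Collecting the pieces, for $t\in[0,n_0]$,
\[
\mathbb{E}|e_t|^2\le \int_0^t \lambda\Bigl(5L\,\mathbb{E}|e_s|^2+c_1\lambda+c_2\lambda^{1/2}\Bigr)ds .
\]
Gronwall's inequality then gives $\mathbb{E}|e_t|^2\le e^{5L\lambda n_0}\,\lambda n_0\,(c_1\lambda+c_2\lambda^{1/2})$. Using $\lambda n_0\le T_0$ from \eqref{def_horizon} and $\lambda<1$ (so that $\lambda\le\lambda^{1/2}$), this is at most $C_E^2\lambda^{1/2}$ with $C_E$ as stated. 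Taking square roots completes the argument.
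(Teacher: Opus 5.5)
Your route is the paper's own: synchronous coupling with no martingale part, the semi-convexity piece at $(\vartheta_{\lfloor s\rfloor},\bar\zeta_s)$, the cross term $\langle \vartheta_s-\vartheta_{\lfloor s\rfloor},h(\vartheta_{\lfloor s\rfloor})-h(\bar\zeta_s)\rangle$ handled by Cauchy--Schwarz with $C_{B_4},C_{B_2},C_{B_3}$, and Young with weight $L$ on the taming term, giving the Gr\"onwall exponent $5L$. The first two pieces are correct and reproduce the paper's constants.

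The taming term, however, does not close as you claim. Remark~\ref{remark_close_unif} bounds $|h_\lambda(x)-h(x)|$, not its square. After Young you therefore need
\[
\mathbb{E}\bigl|h_\lambda(\vartheta_{\lfloor s\rfloor})-h(\vartheta_{\lfloor s\rfloor})\bigr|^2\le C_1^2\lambda\,\mathbb{E}\bigl(1+|\vartheta_{\lfloor s\rfloor}|^{2p}\bigr)^2\le 2C_1^2\lambda\bigl(1+\mathbb{E}|\vartheta_{\lfloor s\rfloor}|^{4p}\bigr),
\]
which is a $4p$-th moment of the iterates. $C_{B_2}$ controls only the $2p$-th moment, and \hyperlink{A4}{A4} gives only $\mathbb{E}|\theta_0|^{2p}<\infty$. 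For $n=n_0$ the window starts at $\theta_0$ itself. So with the available bounds this term has no rate in $\lambda$: the crude bound $|h_\lambda-h|\le|h-\mu x|$ yields only $O(1)$. ``Up to how powers of $C_1$ are absorbed'' therefore hides a change of moment order, not just of constants. Be aware that the paper's estimate of $k_3$ makes the same slip. It writes $\lambda L^{-1}|h_\lambda-h|^2\le\lambda^{3/2}L^{-1}C_1(1+|\vartheta_{\lfloor s\rfloor}|^{2p})$, i.e.\ it applies the bound for the unsquared difference, so following it will not repair the step.

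To fix it, use the exact form of the taming. With $a:=|h(x)-\mu x|$ one has $|h_{\lambda,u}(x)-h(x)|=\lambda^{1/2}a^2/(1+\lambda^{1/2}a)$, and $(1+y)^2\ge 4y$ gives $|h_{\lambda,u}-h|^2\le\tfrac14\lambda^{1/2}a^3$. The coordinatewise case is the same, using $\sum_i a_i^3\le(\sum_i a_i^2)^{3/2}$. Thus a $3p$-th moment of the iterates gives the needed $\lambda^{1/2}$ directly, matching the paper's $\lambda^{3/2}$ scaling; a $4p$-th moment would give your $\lambda$.

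Either way, you must:
\begin{itemize}
\item assume a higher initial moment than \hyperlink{A4}{A4};
\item apply the iterate moment bound at that order;
\item replace $L^{-1}C_1(1+C_{B_2})$ in $C_E^2$ by the corresponding constant, which is larger in $d/\beta$.
\end{itemize}
The lemma with $C_E$ exactly as stated is not reached by this argument.

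A smaller point: $\lambda<1$ is not among the hypotheses, only $\lambda<1/(4\mu)$. To turn $4LC_{B_4}\lambda^2 n_0$ into a $\lambda^{1/2}$ term, use $\lambda^{1/2}\le 1/(2\sqrt{\mu})$. That is exactly where the factor $2LC_{B_4}/\sqrt{\mu}$ in $C_E^2$ comes from.
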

\begin{proof}
    The proof is postponed to Appendix \hyperlink{proof_lemma_err_W2}{G}.
\end{proof}
\section{Excess Risk} 
\begin{lemma}
\label{lemmaER1}
Let Assumptions \hyperlink{A1}{A1}-\hyperlink{A4}{A4} hold and $\lambda_0\in(0,1/(4\mu))$. Then, for any $\lambda\in(0,\lambda_0)$ and $n\in\mathbb{N}$, the following bound for $\mathcal{T}_1=\mathbb{E}[u(\theta_n^{\lambda})]-\mathbb{E}[u(\theta_{\infty})]$, holds
\begin{align}
    \mathcal{T}_1\leq C_{\mathcal{T}_1}W_2\left(\pi_{\beta},\mathcal{L}(\theta_n^{\lambda})\right),
\end{align}
where $C_{\mathcal{T}_1}=m+\dfrac{2^{p-1}K}{p+1}\left(\sqrt{C_{B_2}}+A(2p)^{p/2}\right)$.
\end{lemma}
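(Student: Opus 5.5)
We take an optimal $W_2$ coupling $(X,Y)$ with $X\sim\mathcal{L}(\theta_n^\lambda)$ and $Y\sim\pi_\beta$, so that $\mathbb{E}|X-Y|^2=W_2^2(\pi_\beta,\mathcal{L}(\theta_n^\lambda))$, and write $\mathcal{T}_1=\mathbb{E}[u(X)-u(Y)]$. Under \hyperlink{A1}{A1} the function $u$ is locally Lipschitz, so $t\mapsto u(Y+t(X-Y))$ is absolutely continuous. Its derivative is $\langle h(Y+t(X-Y)),X-Y\rangle$ for a.e.\ $t$, with $h\in\partial u$ any measurable selection. (Because $\partial u=\{\nabla u\}$ a.e., one can argue first for the smooth $u_\epsilon$ of Lemma~\ref{lemma_mollify} and then let $\epsilon\to0$ using (iii),(v). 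Alternatively, use the semi-convex chord inequality with subgradients, whose $L|x-y|^2$ remainder is absent in the sharp form.) Hence
\[
u(X)-u(Y)=\int_0^1\langle h(Y+t(X-Y)),X-Y\rangle\,dt .
\]

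Next we apply \hyperlink{A3}{A3} and the convexity bound $|Y+t(X-Y)|^p\le 2^{p-1}\big((1-t)^p|Y|^p+t^p|X|^p\big)$. Integrating in $t$ gives
\[
|u(X)-u(Y)|\le \Big(m+\tfrac{2^{p-1}K}{p+1}\big(|X|^p+|Y|^p\big)\Big)|X-Y|,
\]
which explains the factor $2^{p-1}/(p+1)$ in $C_{\mathcal{T}_1}$. Cauchy--Schwarz then yields
\[
\mathcal{T}_1\le\Big(m+\tfrac{2^{p-1}K}{p+1}\big(\sqrt{\mathbb{E}|X|^{2p}}+\sqrt{\mathbb{E}|Y|^{2p}}\big)\Big)W_2(\pi_\beta,\mathcal{L}(\theta_n^\lambda)).
\]

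For the first moment term, Lemma~\hyperlink{lemma_MB_higher}{\ref*{lemma_MB_higher}} gives $\mathbb{E}|\theta_n^\lambda|^{2p}\le C_{B_2}$, uniformly in $n$.

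For $\pi_\beta$ we need a $2p$-th moment bound. We use the drift condition \eqref{lyapunov_geometric} with $V_{2p}$: by invariance, $\int\mathcal{A}V_{2p}\,d\pi_\beta=0$, hence $\int V_{2p}\,d\pi_\beta\le C_4(2p)/C_3(2p)$. Justifying this requires a localisation or stopping argument, or alternatively letting $t\to\infty$ in the Itô/Grönwall estimate $\mathbb{E}V_{2p}(Z_t)\le e^{-C_3t}\mathbb{E}V_{2p}(Z_0)+C_4/C_3$ started at a point and invoking Proposition~\ref{prop_contraction1} with Fatou. This bound controls $\sqrt{\mathbb{E}|Y|^{2p}}$ by a constant of the form $A(2p)^{p/2}$, with $A(2p)$ collecting $C_4(2p)/C_3(2p)$ from Lemma~\ref{lemmaB2}.

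The main obstacle is the rigorous chain rule for the non-smooth $u$ along segments. The segment may lie inside the measure-zero discontinuity set of $h$, so the a.e.\ derivative is not automatically given by the chosen selection. We would handle this by the mollification limit, where $\int_0^1\langle h_\epsilon(\cdot),X-Y\rangle\,dt$ is uniformly dominated by the polynomial bound of Lemma~\ref{lemma_mollify}(iii) and $u_\epsilon\to u$ locally uniformly. In this way only the growth bound, not the specific selection, enters the final estimate.
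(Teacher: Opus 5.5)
Your proposal is correct and follows the paper's proof essentially step for step: the fundamental theorem of calculus along the segment combined with Assumption A3 to produce the factor $2^{p-1}/(p+1)$, an optimal coupling with Cauchy--Schwarz and Minkowski, $C_{B_2}$ for the iterates, and, for $\pi_\beta$, the invariance bound $\int V_{2p}\,d\pi_\beta\le C_4(2p)/C_3(2p)\le A(2p)^{p}$. The only difference is that you justify two steps the paper merely asserts, namely the chain rule for non-smooth $u$ and the identity $\int\mathcal{A}V_{2p}\,d\pi_\beta=0$; the chain rule in fact holds for every selection without mollification, since $\langle h(Y+t(X-Y)),X-Y\rangle\in\partial g(t)$ for $g(t)=u(Y+t(X-Y))$, and the one-dimensional semi-convex $g$ is differentiable at all but countably many $t$.
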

\begin{proof}
    The proof is postponed to Appendix \hyperlink{lemmaER1_proof}{G}.
\end{proof}
\begin{lemma}
\label{lemmaER2}
Let Assumptions \hyperlink{A1}{A1}-\hyperlink{A4}{A4} hold and $\lambda_0\in(0,1/(4\mu))$.  Then, for any $\lambda\in(0,\lambda_0)$, $n\in\mathbb{N}$ and $\beta\geq \max\{4/\mu,\,1/J\}$, the following bound for $\mathcal{T}_2=\mathbb{E}[u(\theta_{\infty})]-u(\theta^*)$, holds
\begin{align}
    \mathcal{T}_2\leq \mathbb{E}[u(\theta_{\infty})]-u(\theta^*)\leq
  \dfrac{d}{2\beta}\log\left(\dfrac{2(b+d/\beta)\beta^2J^2}{\mu }\right)+\dfrac{1}{2\beta}\log\left(\pi d\right)+\dfrac{13}{6\beta}=:C_{\mathcal{T}_2},
  \end{align} where $ J=m+K2^{2p-2}(1+(2b/\mu)^{p/2})+K2^{p-1}/(p+1) $.
\end{lemma}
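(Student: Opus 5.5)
The plan is the classical Laplace-type argument of Raginsky--Rakhlin--Telgarsky, adapted to the non-smooth, superlinear setting. Write $\pi_\beta(dx)=Z^{-1}e^{-\beta u(x)}dx$, $Z=\int e^{-\beta u}$. By the Gibbs variational identity,
\[
\mathbb{E}[u(\theta_\infty)]=\int u\,d\pi_\beta=\frac{1}{\beta}\Big(h(\pi_\beta)-\log Z\Big),
\]
where $h(\pi_\beta)=-\int \log(d\pi_\beta/dx)\,d\pi_\beta$ is the differential entropy. Write $u^*:=u(\theta^*)$. I will bound the entropy term from above and $\log Z$ from below, and in both steps $u$ is shifted by $u^*$ so that the difference $\int u\,d\pi_\beta-u^*$ appears directly.

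\textbf{Entropy bound.} Among densities with a given second moment, the Gaussian maximizes entropy, so
\[
h(\pi_\beta)\le \frac d2\log\Big(\frac{2\pi e}{d}\int|x|^2d\pi_\beta\Big).
\]
The second moment of $\pi_\beta$ follows from dissipativity (Remark~\ref{remark_diss}) via the generator: applying $\mathcal{A}$ to $|x|^2$ and integrating against $\pi_\beta$ gives $\int\langle x,h\rangle d\pi_\beta=d/\beta$. Hence
\[
\int|x|^2d\pi_\beta\le \frac{2(b+d/\beta)}{\mu}.
\]
To justify this for discontinuous $h$, integrate by parts against $e^{-\beta u}$, which is legitimate since $u$ is locally Lipschitz and $\nabla u=h$ a.e.

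\textbf{Partition function bound.} This is the main step. I need $u(x)-u^*\le \tfrac{J}{2}|x-\theta^*|^2$-type control, but only near $\theta^*$ and using the growth assumption \hyperlink{A3}{A3} rather than smoothness. Since $u$ is locally Lipschitz with $0\in\partial u(\theta^*)$, I will write
\[
u(x)-u^*=\int_0^1\langle h(\theta^*+s(x-\theta^*)),x-\theta^*\rangle ds
\]
along a.e. segment. This gives $u(x)-u^*\le J'|x-\theta^*|$, with $J'$ bounded using $|\theta^*|\le\sqrt{2b/\mu}$. That bound on $|\theta^*|$ comes from dissipativity at a minimizer, using $0\in\partial u(\theta^*)$; it is also where the $(2b/\mu)^{p/2}$ term and the combinatorial factors $2^{2p-2}$, $2^{p-1}/(p+1)$ in $J$ arise, from integrating $(|\theta^*|+s|x-\theta^*|)^p$ over $s\in[0,1]$ on the ball $|x-\theta^*|\le 1$.

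Since only a Lipschitz, not quadratic, bound is available at a non-smooth minimizer, I restrict to the ball $B(\theta^*,r)$ with $r=1/(\beta J)\le1$, which is allowed because $\beta\ge 1/J$. There $\beta(u-u^*)\le 1$, so
\[
\log Z\ge -\beta u^*-1+\log|B(\theta^*,r)|.
\]
The volume term $\log|B_r|=\tfrac d2\log\pi-\log\Gamma(d/2+1)+d\log r$ is then estimated by Stirling with explicit remainder. This produces the $\tfrac{d}{2\beta}\log(\beta^2J^2)$, $\tfrac{1}{2\beta}\log(\pi d)$ and numerical $\tfrac{13}{6\beta}$ contributions.

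\textbf{Assembly.} Combining the entropy and partition-function bounds and dividing by $\beta$ gives
\[
\int u\,d\pi_\beta-u^*\le \frac d{2\beta}\log\Big(\frac{2(b+d/\beta)\beta^2J^2}{\mu}\Big)+\frac1{2\beta}\log(\pi d)+\frac{13}{6\beta}.
\]
The condition $\beta\ge4/\mu$ is used to absorb the $e/d$ factors into the constant. I expect the main obstacle to be the bookkeeping of the Stirling and ball-volume terms needed to land exactly on $13/6$. The conceptual issue, justifying the chain rule along segments for a non-smooth $u$, is handled by the a.e. differentiability together with Fubini over the segment family.
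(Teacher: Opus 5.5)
Your proposal is correct and follows essentially the same route as the paper. Both arguments use the Raginsky entropy/second-moment bound, then $|\theta^*|\le\sqrt{2b/\mu}$ from dissipativity with $0\in\partial u(\theta^*)$, then the subgradient line-integral estimate restricted to $B(\theta^*,1/(\beta J))$, and finally Stirling for $\log\Gamma(d/2+1)$. The differences are minor: bounding the Lipschitz constant directly on the unit ball gives you $e^{-1}$ where the paper has $e^{-2}$, so you get $\tfrac{1}{\beta}(1+\tfrac{1}{6d})\le\tfrac{7}{6\beta}$, a sharper constant than $\tfrac{13}{6\beta}$. Also, $\beta\ge4/\mu$ is not needed, since the $e/d$ factors cancel exactly against the $(d/2e)^{d/2}$ from Stirling; the paper's proof does not use it either.
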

\begin{proof}
    The proof is postponed to Appendix \hyperlink{lemmaER2_proof}{G}.
\end{proof}
\section{Postponed Proofs}
\noindent\hypertarget{proof_remark_diss}{}\textbf{Proof of Remark \ref{remark_diss}}
\begin{proof}
    Let $|x|\geq R$, then through \hyperlink{A2}{A2} one obtains
    \begin{align}
        \langle x,h(x) \rangle&=\langle x-0,h(x)-h(0)\rangle +\langle x,h(0)\rangle\geq \mu|x|^2-|x||h(0)|\geq \dfrac{\mu}{2}|x|^2-\dfrac{|h(0)|^2}{2\mu}.\label{eqR11}
    \end{align}
    Now let $|x|<R$, due to the polynomial growth in \hyperlink{A3}{A3} one writes
    \begin{align}
        \langle x,h(x)\rangle &\geq -|x||h(x)|\geq -m|x|-K|x|^{p+1}\geq -mR-KR^{p+1}+\dfrac{\mu}{2}R^2-\dfrac{\mu}{2}R^2\nonumber\\
        &\geq \dfrac{\mu}{2}|x|^2-\left(mR+(\mu/2)R^2+KR^{p+1}\right).\label{eqR12}
    \end{align}
    Combining \eqref{eqR11} and \eqref{eqR12} yield \eqref{eqR1}, where $b_1=\max\left\{|h(0)|^2/(2\mu),mR+(\mu/2)R^2+KR^{p+1}\right\}$.
    
    \noindent Furthermore, if one assumes \hyperlink{A1}{A1}, the dependency on $R$ becomes quadratic. For $|x|<R$, by \hyperlink{A1}{A1} for $y=0$
    \begin{align*}
        \langle x,h(x)\rangle &\geq \langle x-0,h(x)-h(0)\rangle +\langle x,h(0)\rangle\geq -L|x|^2 -|x||h(0)|\geq (\mu/2)|x|^2-\left((L+\mu/2)R^2+mR\right),
    \end{align*}
    and therefore $b_2=\max\left\{m^2/(2\mu),mR+(L+\mu/2)R^2\right\}$
\end{proof}
\noindent\hypertarget{proof_remark_growth_unif}{}\textbf{Proof of Remark \ref{remark_growth_unif}}
\begin{proof}
     One calculates directly from \eqref{tame_unif}
    \begin{align*}
        |h_{\lambda,u}(x)|\leq \mu|x|+\lambda^{-1/2}\dfrac{\lambda^{1/2}|h(x)-\mu x|}{1+\lambda^{1/2}|h(x)-\mu x|}\leq\mu|x|+\lambda^{-1/2}.
    \end{align*}
    Furthermore, 
    \begin{align*}
        |h_{\lambda,c}(x)|=\sqrt{\sum_{i=1}^d|h_{\lambda,c}^{(i)}(x)|^2}\leq\sqrt{\sum_{i=1}^d2\mu^2|x^{(i)}|^2+2\lambda^{-1}}\leq\sqrt{2}\mu|x|+\sqrt{2d}\lambda^{-1/2}.
    \end{align*}
\end{proof}
\noindent\hypertarget{proof_remark_close_unif}{}\textbf{Proof of Remark \ref{remark_close_unif}}
\begin{proof}
    In view of \hyperlink{A1}{A1}, one calculates
    \begin{align*}
        |h_{\lambda,u}(x)-h(x)|&\leq\left|\dfrac{h(x)-\mu x}{1+\lambda^{1/2}|h(x)-\mu x|}-\left(h(x)-\mu x\right)\right| \leq|h(x)-\mu x|\dfrac{\lambda^{1/2}|h(x)-\mu x|}{1+\lambda^{1/2}|h(x)-\mu x|}\\
        &\leq \lambda^{1/2}|h(x)-\mu x|^2\leq\lambda^{1/2}\left(2|h(x)|^2+2\mu^2|x|^2\right)\leq \lambda^{1/2}\left(4m^2+2\mu^2|x|^2+4K^2|x|^{2p}\right)\\
        &\leq \lambda^{1/2}C_1(1+|x|^{2p}),        
    \end{align*}
    where $C_1=2\mu^2+4\max\left\{m^2,K^2\right\}$, using $|x|^2\leq 1+|x|^{2p}$ for $p\geq1$. Furthermore,
    \begin{align*}
        |h_{\lambda,c}(x)-h(x)|&=\sqrt{\sum_{i=1}^d|h_{\lambda,c}^{(i)}(x)-h^{(i)}(x)|^2}\leq \lambda^{1/2}\sqrt{\sum_{i=1}^d|h^{(i)}(x)-\mu x^{(i)}|^4}\\
        &\leq \lambda^{1/2}\sum_{i=1}^d|h^{(i)}(x)-\mu x^{(i)}|^2=\lambda^{1/2}|h(x)-\mu x|^2\leq\lambda^{1/2}C_1(1+|x|^{2p}).
    \end{align*}
\end{proof}
\noindent\hypertarget{proof_remark_diss_unif}{}\textbf{Proof of Remark \ref{remark_diss_unif}}
\begin{proof}
    One writes
    \begin{align*}
        \langle h_{\lambda,u}(x),x\rangle=\left\langle \dfrac{h(x)-\mu x}{1+\lambda^{1/2}|h(x)-\mu x|},x\right\rangle+\mu |x|^2.
    \end{align*}
    If $\langle h(x)-\mu x,x\rangle \geq 0$, then one immediately gets $\langle h_{\lambda,u}(x),x\rangle \geq \mu|x|^2\geq (\mu/2)|x|^2-b$. Otherwise, we calculate
    \begin{align*}
        \left\langle \dfrac{h(x)-\mu x}{1+\lambda^{1/2}|h(x)-\mu x|},x\right\rangle&\geq-\left|\left\langle \dfrac{h(x)-\mu x}{1+\lambda^{1/2}|h(x)-\mu x|},x\right\rangle\right|\geq -\left|\left\langle h(x)-\mu x,x \right\rangle\right|\\
        &=\langle h(x)-\mu x,x\rangle =\langle h(x),x\rangle -\mu|x|^2\geq-(\mu/2)|x|^2-b,
    \end{align*}
    where the last inequality holds due to Remark \ref{remark_diss}. By combining the above bounds, one derives the result in \eqref{eqR4}. Furthermore, for the coordinate-wise case, we consider the following scenarios:
\begin{itemize}
    \item If $h^{(i)}(x)x^{(i)}-\mu |x^{(i)}|^2\geq0$ for every $i\in\{1,\ldots,d\}$, then we trivially obtain $h^{(i)}_{\lambda,c}(x)x^{(i)}\geq \mu |x^{(i)}|^2\Rightarrow \langle h_{\lambda,c}(x),x\rangle \geq \mu|x|^2\geq (\mu/2)|x|^2-b$.
    \item If $h^{(i)}(x)x^{(i)}-\mu |x^{(i)}|^2\leq 0$ for every $i\in\{1,\ldots,d\}$, then we have
    \begin{align*}
        \sum_{i=1}^d\dfrac{h^{(i)}(x)-\mu x^{(i)}}{1+\lambda^{1/2}|h^{(i)}(x)-\mu x^{(i)}|}x^{(i)}&\geq -\sum_{i=1}^d\left|\dfrac{h^{(i)}(x)-\mu x^{(i)}}{1+\lambda^{1/2}|h^{(i)}(x)-\mu x^{(i)}|}x^{(i)}\right|\\&\geq \sum_{i=1}^d(h^{(i)}(x)-\mu x^{(i)})x^{(i)}\geq \langle h(x),x \rangle -\mu |x|^2\\&\Rightarrow \langle h_{\lambda,c}(x),x \rangle\geq (\mu/2)|x|^2-b.
    \end{align*}
    \item If $h^{(i)}(x)x^{(i)}-\mu |x^{(i)}|^2\geq0$ for every $i\in J \subset\{1,\ldots,d\}$, then we have 
    \begin{align*}
        \langle h_{\lambda,c}(x),x \rangle&= {\langle h_{\lambda,c}(x),x \rangle}_{J}+{\langle h_{\lambda,c}(x),x \rangle}_{J^c}\geq \mu|x|^2_{J}+{\langle h(x),x \rangle}_{J^c}\\&\geq \mu|x|^2_{J}+\dfrac{\mu}{2}|x|^2_{J^c}-b\,|J^c|\geq \dfrac{\mu}{2}|x|^2-bd,
    \end{align*}
    where the second inequality applies \ref{eqA5} coordinate-wise on $J^c$.
\end{itemize}
\end{proof}
\noindent\hypertarget{proof_lemma_MB}{}\textbf{Proof of Lemma \hyperlink{lemma_MB}{2}}
\begin{proof}
    We first observe that $\mathbb{E}|\theta_n^{\lambda}|^2<\infty$ for every $n$, by induction from Assumption \hyperlink{A4}{A4} and the linear growth \eqref{eqR2}. Consider the SG-TULA iterates $(\theta_n^{\lambda})_{n\geq 0}$ from \eqref{SGTULA}
    \begin{align}
        |\theta_{n+1}^{\lambda}|^2=|\theta_{n}^{\lambda}-\lambda h_{\lambda}(\theta_n^{\lambda})|^2+2\lambda\beta^{-1}|\xi_{n+1}|^2+2\sqrt{2\lambda\beta^{-1}}\langle\theta_n^{\lambda}-\lambda h_{\lambda}(\theta_n^{\lambda}),\xi_{n+1}\rangle\nonumber    
    \end{align}
    Since $\theta_n^{\lambda}$ is independent of $\xi_{n+1}$, the last term vanishes under expectation. Thus by taking conditional expectations on both sides and using Remark \ref{remark_growth_unif} and \ref{remark_diss_unif}, we obtain
    \begin{align}
        \mathbb{E}[|\theta_{n+1}^{\lambda}|^2|\theta_n^{\lambda}]&=\mathbb{E}[|\theta_n^{\lambda}|^2|\theta_n^{\lambda}]-2\lambda\mathbb{E}[\langle \theta_n^{\lambda},h_{\lambda}(\theta_n^{\lambda})\rangle|\theta_n^{\lambda}]+\lambda^2\mathbb{E}[|h_{\lambda}(\theta_n^{\lambda})|^2|\theta_n^{\lambda}]+2\lambda d\beta^{-1}\nonumber\\ 
        &\leq |\theta_n^{\lambda}|^2-\lambda\mu|\theta_n^{\lambda}|^2+2\lambda^2\mu^2|\theta_n^{\lambda}|^2+2\lambda b+2\lambda+2\lambda d\beta^{-1}\nonumber\\
        &\leq (1-\lambda\mu+2\lambda^2\mu^2)|\theta_n^{\lambda}|^2+2\lambda(b+1+d\beta^{-1})\label{lem1eq1}
    \end{align}
    Recalling the restriction $\lambda<\lambda_0<1/(4\mu)$, iterating inequality \eqref{lem1eq1} yields
    \begin{align}
        \mathbb{E}|\theta_{n}^{\lambda}|^2&\leq (1-\lambda\mu/2)^n\mathbb{E}|\theta_0^{\lambda}|^2+\dfrac{1-(1-\lambda\mu/2)^n}{\lambda\mu/2}2\lambda(b+1+d\beta^{-1})\nonumber\\
        &\leq \mathbb{E}|\theta_0|^2+4(b+1+d\beta^{-1})/\mu.\nonumber
    \end{align}
\end{proof}
\noindent\hypertarget{proof_lemma_MB_higher}{}\textbf{Proof of Lemma \hyperlink{lemma_MB_higher}{3}}
\begin{proof}
Consider the SG-TULA iterates $(\theta_n^{\lambda})_{n\geq 0}$ from \eqref{SGTULA} and define the following auxiliary processes
\begin{align}
    \Delta_n^{\lambda}:=\theta_n^{\lambda}-\lambda h_{\lambda}(\theta_n^{\lambda}),\ G_n^{\lambda}:=\sqrt{2\lambda\beta^{-1}}\xi_{n+1}.\nonumber
\end{align}
Furthermore, define $A_n^{\lambda}:=|\Delta_n^{\lambda}|^2$ and $B_n^{\lambda}:=2\langle\Delta_n^{\lambda},G_n^{\lambda}\rangle+|G_n^{\lambda}|^2$. For the $2p$-th moment, we write
\begin{align*}
    |\theta_{n+1}^{\lambda}|^{2p}=(A_n^{\lambda}+B_n^{\lambda})^p, \nonumber
\end{align*}\vspace{-1cm}\begin{align}
\mathbb{E}[|\theta_{n+1}^{\lambda}|^{2p}\mid\theta_{n}^{\lambda}]\leq \mathbb{E}[(A_n^{\lambda})^p\mid\theta_{n}^{\lambda}]+\mathbb{E}[2p(A_n^{\lambda})^{p-1}B_n^{\lambda}\mid\theta_{n}^{\lambda}]+\mathbb{E}\left[\sum_{k=2}^p\binom{p}{k}|A_n^{\lambda}|^{p-k}|B_n^{\lambda}|^k\mid\theta_{n}^{\lambda}\right].\label{lem2eq1}
\end{align}
Since Lemma \hyperlink{lemma_MB}{2} holds, using bound \eqref{lem1eq1}, we estimate the first term of \eqref{lem2eq1}
\begin{align}
    \mathbb{E}[(A_n^{\lambda})^p\mid\theta_{n}^{\lambda}]&=(A_n^{\lambda})^p\leq\left((1-\lambda\mu/2)|\theta_n^{\lambda}|^2+2\lambda(b+1)\right)^p\nonumber\\
    &\leq\left(1+\lambda\mu/4\right)^{p-1}\left(1-\lambda\mu/2\right)^{p}|\theta_n^{\lambda}|^{2p}+\left(1+4\lambda^{-1}\mu^{-1}\right)^{p-1}(2\lambda(b+1))^p\nonumber\\
    &\leq (1-\lambda\mu/4)^{p-1}(1-\lambda\mu/2)|\theta_n^{\lambda}|^{2p}+\lambda(\lambda+4/\mu)^{p-1}(2(1+b))^p\nonumber\\
    &\leq r^{\lambda}_p|\theta_n^{\lambda}|^{2p}+w^{\lambda}_p\label{lem2eq2},
\end{align}
where $r^{\lambda}_p=(1-\lambda\mu/4)^{p-1}(1-\lambda\mu/2)$ and $w^{\lambda}_p=\lambda(\lambda+4/\mu)^{p-1}(2(1+b))^p$. Note that we applied the elementary inequality $(a+b)^p\leq(1+\epsilon)^{p-1}a^p+(1+1/\epsilon)^{p-1}b^p$ with $\epsilon=\lambda\mu/4$. We similarly estimate the second term of \eqref{lem2eq1} via bound \eqref{lem2eq2}
\begin{align}
    \mathbb{E}[2p(A_n^{\lambda})^{p-1}B_n^{\lambda}\mid\theta_{n}^{\lambda}]&\leq 2p(A_n^{\lambda})^{p-1}\mathbb{E}[B_n^{\lambda}\mid\theta_{n}^{\lambda}]\leq 4p\lambda d\beta^{-1}(A_n^{\lambda})^{p-1}\nonumber\\
    &\leq 2p(2\lambda d\beta^{-1})(r^{\lambda}_{p-1}|\theta_n^{\lambda}|^{2(p-1)}+w^{\lambda}_{p-1}).\label{lem2eq3}
\end{align}
The third term in inequality \eqref{lem2eq1} expands as follows
\begin{align}
    \sum_{k=2}^p\binom{p}{k}|A_n^{\lambda}|^{p-k}|B_n^{\lambda}|^k&=\sum_{m=0}^{p-2}\binom{p}{m+2}|A_n^\lambda|^{p-2-m}|B_n^{\lambda}|^{m+2}\nonumber\\
    &=\sum_{m=0}^{p-2}\dfrac{p(p-1)}{(m+2)(m+1)}\binom{p-2}{m}|A_n^\lambda|^{p-2-m}|B_n^{\lambda}|^{m}|B_n^{\lambda}|^{2}\nonumber\\
    &\leq p(p-1)|B_n^{\lambda}|^2\sum_{m=0}^{p-2}\binom{p-2}{m}|A_n^\lambda|^{p-2-m}|B_n^{\lambda}|^{m}\nonumber\\
    &= p(p-1)(|A_n^{\lambda}|+|B_n^{\lambda}|)^{p-2}|B_n^{\lambda}|^2\nonumber\\
    &\leq p(p-1)2^{p-3}|A_n^{\lambda}|^{p-2}|B_n^{\lambda}|^2+p(p-1)2^{p-3}|B_n^{\lambda}|^p:=D_n^{\lambda}+F_n^{\lambda}.\label{lem2eq4}
\end{align}
To bound $\mathbb{E}[D_n^{\lambda}\mid\theta_n^{\lambda}]$, we first estimate the factor $\mathbb{E}[|B_n^{\lambda}|^2\mid\theta_n^{\lambda}]$
\begin{align}
    \mathbb{E}[|B_n^{\lambda}|^2\mid\theta_n^{\lambda}]&\leq  \mathbb{E}\left[\left|2\langle\Delta_n^{\lambda},G_n^{\lambda}\rangle+|G_n^{\lambda}|^2\right|^2\mid\theta_n^{\lambda}\right]\leq \mathbb{E}\left[8|\Delta_n^{\lambda}|^2|G_n^{\lambda}|^2+2|G_n^{\lambda}|^4\mid\theta_n^{\lambda}\right]\nonumber\\
    &\leq  8|\Delta_n^{\lambda}|^2\mathbb{E}\left[|G_n^{\lambda}|^2\mid\theta_n^{\lambda}\right]+2\mathbb{E}\left[|G_n^{\lambda}|^4\mid\theta_n^{\lambda}\right]=8(2\lambda d\beta^{-1})A_n^{\lambda}+2(2\lambda d\beta^{-1})^2.\nonumber
\end{align}
In view of the above result and \eqref{lem2eq2}, we derive
\begin{align}
    \mathbb{E}[D_n^{\lambda}\mid\theta_n^{\lambda}]&=p(p-1)2^{p-3}|A_n^{\lambda}|^{p-2}\mathbb{E}[|B_n^{\lambda}|^2\mid\theta_n^{\lambda}]\nonumber\\
    &\leq p(p-1)2^{p-2}\left(4(2\lambda d\beta^{-1})|A_n^{\lambda}|^{p-1}+(2\lambda d\beta^{-1})^2|A_n^{\lambda}|^{p-2}\right)\nonumber\\
    &\leq p(p-1)2^{p}(2\lambda d\beta^{-1})(r_{p-1}^{\lambda}|\theta_n^{\lambda}|^{2(p-1)}
+w_{p-1}^{\lambda})\nonumber\\
&+p(p-1)2^{p-2}(2\lambda d\beta^{-1})^2(r_{p-2}^{\lambda}|\theta_n^{\lambda}|^{2(p-2)}
+w_{p-2}^{\lambda}).\label{lem2eq5}
\end{align}
We now turn to estimating the remainder
\begin{align}
    \mathbb{E}[F_n^{\lambda}\mid\theta_n^{\lambda}]&\leq p(p-1)2^{p-3}\mathbb{E}\left[\left|2\langle\Delta_n^{\lambda},G_n^{\lambda}\rangle+|G_n^{\lambda}|^2\right|^p\mid\theta_n^{\lambda}\right]\nonumber\\
    &\leq p(p-1)2^{3p-4}|\Delta_n^{\lambda}|^p\mathbb{E}[|G_n^{\lambda}|^p\mid\theta_n^{\lambda}]+p(p-1)2^{2p-4}\mathbb{E}[|G_n^{\lambda}|^{2p}\mid\theta_n^{\lambda}]\nonumber\\
    &\leq p(p-1)2^{3p-4}(2\lambda d\beta^{-1})^{p/2}(A_n^{\lambda})^{p/2}+p(p-1)2^{2p-4}(2\lambda d\beta^{-1})^p\nonumber\\
    &\leq p(p-1)2^{3p-4}(2\lambda d\beta^{-1})^{p/2}(r_{p/2}^{\lambda}|\theta_n^{\lambda}|^{p}
+w_{p/2}^{\lambda})+p(p-1)2^{2p-4}(2\lambda d\beta^{-1})^p.\label{lem2eq6}
\end{align}
Hence, combining the bounds \eqref{lem2eq2}-\eqref{lem2eq6} and substituting into \eqref{lem2eq1}, we obtain
\begin{align*}
    \mathbb{E}[|\theta_{n+1}^{\lambda}|^{2p}\mid\theta_{n}^{\lambda}]&\leq r^{\lambda}_p|\theta_n^{\lambda}|^{2p}+w^{\lambda}_p\nonumber\\
    &+ p(p+1)2^{p}(2\lambda d\beta^{-1})(r_{p-1}^{\lambda}|\theta_n^{\lambda}|^{2(p-1)}
    +w_{p-1}^{\lambda})\nonumber\\
    &+ p(p-1)2^{p-2}(2\lambda d\beta^{-1})^2(r_{p-2}^{\lambda}|\theta_n^{\lambda}|^{2(p-2)}
    +w_{p-2}^{\lambda})\nonumber\\
    &+ p(p-1)2^{3p-4}(2\lambda d\beta^{-1})^{p/2}(r_{p/2}^{\lambda}|\theta_n^{\lambda}|^{p}
    +w_{p/2}^{\lambda})+p(p-1)2^{2p-4}(2\lambda d\beta^{-1})^p.\nonumber
\end{align*}
We establish the desired bound on the event $\{|\theta_n^{\lambda}|^2\geq (4\mu)(2d\beta^{-1})p(p+1)2^{p+1}\}$. Conditioning on this event yields
\begin{align}
    \mathbb{E}[|\theta_{n+1}^{\lambda}|^{2p}\mid\theta_{n}^{\lambda}]&\leq r_p^{\lambda}|\theta_n^{\lambda}|^{2p}+(1/2)(\lambda\mu/4)r_{p-1}^{\lambda}|\theta_n^{\lambda}|^{2p}+(1/2)(\lambda\mu/4)^2r_{p-2}^{\lambda}|\theta_n^{\lambda}|^{2p}+(1/2)(\lambda\mu/4)^{p/2}r_{p/2}^{\lambda}|\theta_n^{\lambda}|^{2p}\nonumber\\
    &+w_p^{\lambda}+p(p-1)2^{3p-2}((2\lambda d\beta^{-1})w_{p-1}^{\lambda}+(2\lambda d\beta^{-1})^2w_{p-2}^{\lambda}+(2\lambda d\beta^{-1})^{p/2}w_{p/2}^{\lambda}+(2\lambda d\beta^{-1})^{p})\nonumber\\
    &\leq r^{\lambda}_{p/2}\left(\left(1-\lambda\mu/4\right)^{p/2}+(1/2)(\lambda\mu/4)(1-\lambda\mu/4)^{p/2-1}+(1/2)(\lambda\mu/4)^2(1-\lambda\mu/4)^{p/2-2}\right.\nonumber\\
    &\left.+(1/2)(\lambda\mu/4)^{p/2}\right)|\theta_n^{\lambda}|^{2p}+\lambda p(p-1)2^{4p-2}{17}^{p-1}(1/4\mu)^{p-1}(1+b+d\beta^{-1})^p\nonumber\\
    &\leq (1-\lambda\mu/2)|\theta_n^{\lambda}|^{2p}+\lambda p(p-1)2^{2p-1}{17}^{p-1}(1/\mu)^{p-1}(1+b+d\beta^{-1})^p,\label{lem2eq7}
\end{align}
where we have used the restriction $\lambda<1/(4\mu)$. Subsequently, for the complementary case we have
\begin{align}
    \mathbb{E}[|\theta_{n+1}^{\lambda}|^{2p}\mid\theta_{n}^{\lambda}]&\leq (1-\lambda\mu/2)|\theta_n^{\lambda}|^{2p}+\lambda (p(p+1)2^{p+1})^{p}2^{3p-3}(1/\mu)^{p-1}(d\beta^{-1})^{p}\nonumber\\
    &+\lambda p(p-1)2^{3p-4}(p(p+1)2^{p+1})^{p-1}(1/\mu)^{p-1}(d\beta^{-1})^p\nonumber\\
    &+\lambda p(p-1)2^{4p-2}(p(p+1)2^{p+1})^{p/2}(1/\mu)^{p-1}(d\beta^{-1})^{p}\nonumber\\
    &+\lambda p(p-1)2^{2p-1}{17}^{p-1}(1/\mu)^{p-1}(1+b+d\beta^{-1})^p.\label{lem2eq8}
\end{align}
We conclude by combining \eqref{lem2eq7}-\eqref{lem2eq8}, taking expectation on both sides, and iterating the resulting bound as in the proof of Lemma \hyperlink{proof_lemma_MB}{2}
\begin{align}
    \mathbb{E}|\theta_{n+1}^{\lambda}|^{2p}&\leq (1-\lambda\mu/2)\mathbb{E}|\theta_n^{\lambda}|^{2p}+\lambda 2^{4p+2}p(p-1)(p(p+1)2^{p+1})^{p/2}(1/\mu)^{p-1}(1+b+d\beta^{-1})^p\nonumber\\
    &\leq \mathbb{E}|\theta_0^{\lambda}|^{2p}+2^{4p+3}p(p-1)(p(p+1)2^{p+1})^{p/2}(1/\mu)^p(1+b+d\beta^{-1})^p.\nonumber
\end{align}
\end{proof}
\noindent\hypertarget{proof_lemma_MB_aux}{}\textbf{Proof of Lemma \hyperlink{lemma_MB_aux}{4}}
\begin{proof}
Consider the auxiliary process from Definition \ref{def1} and fix $p\in\mathbb{N}\setminus\{1\}$. Using standard arguments involving stopping times, Gr\"onwall's lemma, Fatou's lemma and the bound from Lemma \ref{lemmaB2}, we obtain the existence of a constant $c_{n_0}$, which depends on time, such that $\sup_{t\in[0,n_0]}\mathbb{E}[V_p(\zedt)]\leq c_{n_0}<\infty$. Incorporating the integrating factor $e^{\lambda C_3(p)t}$ and applying It\^o's formula to $(t,x)\mapsto e^{\lambda C_3(p)t}(1+|x|^2)^{p/2}$ for $t\in[0,n_0]$ yields
\begin{align}
    e^{\lambda C_3(p)n_0}V_p(\zedt)&=V_p(\bar{\zeta}_{0}^{\lambda,n})+\int_{0}^{n_0}e^{\lambda C_3(p)s}\Big(\lambda C_3(p)V_p(\zeds)+\lambda\beta^{-1}\Delta V_p(\zeds)-\lambda\langle h(\zeds),\nabla V_p(\zeds)\rangle\Big)ds\nonumber\\
    &\quad+\int_{0}^{n_0}\sqrt{2\lambda\beta^{-1}}\,e^{\lambda C_3(p)s}\langle\nabla V_p(\zeds),dW_s^{\lambda,n}\rangle. \label{lemma3eq1}
\end{align}
Since $|\nabla V_{p}(x)|^2\leq p^2|V_{p}(x)|^2=p^2 V_{2p}(x)$ and $e^{\lambda C_3(p)s}$ is bounded on $[0,n_0]$, the moment bound from Lemma \ref{lemmaB2} gives $\mathbb{E}\int_{0}^{n_0}|\nabla V_p(\zeds)|^2\,ds<\infty$, so the stochastic integral in \eqref{lemma3eq1} is a true martingale and vanishes under expectation. Taking expectations and using the drift condition $\lambda C_3(p)V_p+\lambda\mathcal{A}V_p\leq\lambda C_4(p)$ from Lemma \ref{lemmaB2}, one writes
\begin{align}
    e^{\lambda C_3(p)n_0}\mathbb{E}[V_p(\zedt)]&=\mathbb{E}[V_p(\bar{\zeta}_{0}^{\lambda,n})]+\lambda\int_{0}^{n_0}e^{\lambda C_3(p)s}\mathbb{E}\big[C_3(p)V_p(\zeds)+\mathcal{L}V_p(\zeds)\big]ds\nonumber\\
    &\leq \mathbb{E}[V_p(\theta_{n-n_0}^{\lambda})]+\lambda C_4(p)\int_{0}^{n_0}e^{\lambda C_3(p)s}\,ds\nonumber\\
    &=\mathbb{E}[V_p(\theta_{n-n_0}^{\lambda})]+\dfrac{C_4(p)}{C_3(p)}\big(e^{\lambda C_3(p)n_0}-1\big).\nonumber
\end{align}
Dividing through by $e^{\lambda C_3(p)n_0}$, one gets
\begin{align}
    \mathbb{E}[V_p(\zedt)]&\leq e^{-\lambda C_3(p)n_0}\mathbb{E}[V_p(\theta_{n-n_0}^{\lambda})]+\dfrac{C_4(p)}{C_3(p)}\big(1-e^{-\lambda C_3(p)n_0}\big)\nonumber\\
    &\leq \mathbb{E}[V_p(\theta_{n-n_0}^{\lambda})]+\dfrac{C_4(p)}{C_3(p)}.\nonumber
\end{align}
In view of Lemma \hyperlink{lemma_MB_higher}{3} and Minkowski's inequality, one gets
\begin{align*}
    \mathbb{E}[V_p(\zedt)]&\leq \left(1+\mathbb{E}[|\theta_{n-n_0}^{\lambda}|^p]^{2/p}\right)^{p/2}+\dfrac{C_4(p)}{C_3(p)}\leq 2^{p/2-1}+2^{p/2-1}C_{B_2}+\left(1+R_0\right)^{p/2}\\&\leq2^{p/2-1}+2^{p/2-1}\left(\mathbb{E}|\theta_0|^{p}+M(p/2)(1/\mu)^{p/2}(1+b+d\beta^{-1})^{p/2}\right)+\left(1+R_0\right)^{p/2}\\
    &\leq 2^{p/2-1}\left[
1+\mathbb{E}|\theta_0|^p
+
2^{p/2}
+
\left(4^{p/2}+M(p/2)\right)
(1/\mu)^{2p}
\left(
1+b+\beta^{-1}\bigl(1+b+\beta^{-1}(d+p-2)\bigr)^{p/2}
\right)
\right].
\end{align*}
\end{proof}
\noindent\hypertarget{proof_lemma_MB_step}{}\textbf{Proof of Lemma \hyperlink{lemma_MB_step}{5}}
\begin{proof}
One considers the difference between $(\bar{\theta}_{\lfloor t\rfloor}^{\lambda})_{t\geq0}$, and $(\bar{\theta}_{t}^{\lambda})_{t\geq0}$, to get the one-step error
\begin{align}
    |\bar{\theta}_{\lfloor t\rfloor}^{\lambda}-\bar{\theta}_{t}^{\lambda}|^2
\leq 2\left|\int_{\lfloor t\rfloor}^t \lambda h_{\lambda}(\bar{\theta}_{\lfloor s\rfloor}^{\lambda})\,ds\right|^2
+4\lambda\beta^{-1}|\tilde{B}_{t}^{\lambda}-\tilde{B}_{\lfloor t\rfloor}^{\lambda}|^2.\nonumber
\end{align}
Since $\lfloor s\rfloor=\lfloor t\rfloor$ for $s\in[\lfloor t\rfloor,t]$, the drift
integrand is constant on this interval, hence
\begin{align}
    |\bar{\theta}_{\lfloor t\rfloor}^{\lambda}-\bar{\theta}_{t}^{\lambda}|^2\leq 2\lambda^2(t-\lfloor t\rfloor)^2| h_{\lambda}(\bar{\theta}_{\lfloor t\rfloor}^{\lambda})|^2+4\lambda\beta^{-1}|\tilde{B}_{t}^{\lambda}-\tilde{B}_{\lfloor t\rfloor}^{\lambda}|^2.\nonumber
\end{align}
By \eqref{eqR2}, $|h_{\lambda}(x)|^2\leq 2\mu^2|x|^2+2\lambda^{-1}$. Taking expectations and
applying Lemma \hyperlink{lemma_MB}{2},
\begin{align}
    \mathbb{E}|\bar{\theta}_{\lfloor t\rfloor}^{\lambda}-\bar{\theta}_{t}^{\lambda}|^2\leq 4\lambda^2\mu^2C_{B_1}+4\lambda+4\lambda d\beta^{-1}\leq\lambda\left(\mu C_{B_1}+4+4d\beta^{-1}\right),
\end{align}
where the last inequality uses $4\lambda\mu\leq1$. The bound for
$(\vartheta_t^{\lambda,n})_{t\in[0,n_0]}$ is identical since
$\mathcal{L}(\vartheta_k^{\lambda,n})=\mathcal{L}(\theta_{n-n_0+k}^{\lambda})$ for
$k\in\{0,\ldots,n_0\}$.
\end{proof}
\noindent\hypertarget{proof_lemma_err_W2}{}\textbf{Proof of Lemma \ref{lemma_err_W2}}
\begin{proof}
    Consider the difference between the continuous-time interpolation \eqref{SGTULA_cont} and the auxiliary process from Definition \ref{def1}. Applying It\^o's formula to $x\to|x|^2$, then one writes for any $t\in[0,n_0]$
    \begin{align}
        |\vartheta_{t}^{\lambda,n}-\bar{\zeta}_{t}^{\lambda,n}|^2&=-2\lambda\int_{0}^{t}\langle \vartheta_{s}^{\lambda,n}-\bar{\zeta}_s^{\lambda,n},h_{\lambda}(\vartheta_{\lfloor s\rfloor}^{\lambda,n})-h(\bar{\zeta}_s^{\lambda,n})\rangle ds\nonumber\\
        &=-2\lambda\int_{0}^{t}\langle \vartheta_{\lfloor s\rfloor}^{\lambda,n}-\bar{\zeta}_s^{\lambda,n},h(\vartheta_{\lfloor s\rfloor}^{\lambda,n})-h(\bar{\zeta}_s^{\lambda,n})\rangle ds-2\lambda\int_{0}^{t}\langle \vartheta_s^{\lambda,n}-\vartheta_{\lfloor s\rfloor}^{\lambda,n},h(\vartheta_{\lfloor s\rfloor}^{\lambda,n})-h(\bar{\zeta}_s^{\lambda,n})\rangle ds\nonumber\\
        &-2\lambda\int_{0}^{t}\langle \vartheta_s^{\lambda,n}-\bar{\zeta}_s^{\lambda,n},h_{\lambda}(\vartheta_{\lfloor s\rfloor}^{\lambda,n})-h(\vartheta_{\lfloor s\rfloor}^{\lambda,n})\rangle ds:=k_1(t)+k_2(t)+k_3(t).\label{lemma4eq1}
    \end{align}
    The first term, $k_1(t)$, is controlled through the one-sided Lipschitz Assumption \hyperlink{A1}{A1}, in particular
    \begin{align}
        k_1(t)&=-2\lambda\int_{0}^{t}\langle \vartheta_{\lfloor s\rfloor}^{\lambda,n}-\bar{\zeta}_s^{\lambda,n},h(\vartheta_{\lfloor s\rfloor}^{\lambda,n})-h(\bar{\zeta}_s^{\lambda,n})\rangle ds\leq 2L\lambda\int_{0}^{t}|\vartheta_{\lfloor s\rfloor}^{\lambda,n}-\zeds|^2ds\nonumber\\
        &\leq 4L\lambda\int_{0}^{t}|\vartheta_s^{\lambda,n}-\zeds|^2ds+4L\lambda\int_{0}^{t}|\vartheta_{\lfloor s\rfloor}^{\lambda,n}-\vartheta_s^{\lambda,n}|^2ds.\label{k1}
    \end{align}
    Subsequently, to control the second term, $k_2(n_0)$, one uses the Cauchy-Schwarz inequality and the polynomial growth Assumption \hyperlink{A3}{A3}, to obtain
    \begin{align}
        k_2(t)&=-2\lambda\int_{0}^{t}\langle \vartheta_s^{\lambda,n}-\vartheta_{\lfloor s\rfloor}^{\lambda,n},h(\vartheta_{\lfloor s\rfloor}^{\lambda,n})-h(\bar{\zeta}_s^{\lambda,n})\rangle ds\leq 2\lambda\int_{0}^{t}|\vartheta_s^{\lambda,n}-\vartheta_{\lfloor s\rfloor}^{\lambda,n}||h(\vartheta_{\lfloor s\rfloor}^{\lambda,n})-h(\bar{\zeta}_s^{\lambda,n})|ds.\nonumber
    \end{align}
    Under expectation, one writes
    \begin{align}
        \mathbb{E}[k_2(t)]&\leq2\lambda\int_{0}^{t} (\mathbb{E}[|\vartheta_s^{\lambda,n}-\vartheta_{\lfloor s\rfloor}^{\lambda,n}|^2])^{1/2} (\mathbb{E}[|h(\vartheta_{\lfloor s\rfloor}^{\lambda,n})-h(\bar{\zeta}_s^{\lambda,n})|^2])^{1/2}ds\nonumber\\
        &\leq 2\sqrt{2}C_{B_4}^{1/2}\lambda^{3/2}\int_{0}^{t} (\mathbb{E}[|h(\vartheta_{\lfloor s\rfloor}^{\lambda,n})|^2+|h(\bar{\zeta}_s^{\lambda,n})|^2])^{1/2}ds\nonumber\\
        &\leq 4C_{B_4}^{1/2}\lambda^{3/2}\int_{0}^{n_0} (2m^2+K^2\mathbb{E}[|\vartheta_{\lfloor s\rfloor}^{\lambda,n}|^{2p}]+K^2\mathbb{E}[|\bar{\zeta}_s^{\lambda,n}|^{2p}])^{1/2}ds\nonumber\\
        &\leq 4(\max{\{2m^2,K^2\}}(1+C_{B_2}+C_{B_3})C_{B_4})^{1/2}{\lambda}^{3/2} n_0,\label{k2}
    \end{align}
    where in the last step, we used the moment estimates from Lemmas \hyperlink{lemma_MB_higher}{3}-\hyperlink{lemma_MB_aux}{4}, with $2m^2+K^2(C_{B_2}+C_{B_3})\leq \max\{2m^2,K^2\}(1+C_{B_2}+C_{B_3})$. The last term, $k_3(t)$, is controlled via the properties of the taming function, specifically Remark \ref{remark_close_unif}
    \begin{align}
        k_3(t)&=-2\lambda\int_{0}^{t}\langle \vartheta_s^{\lambda,n}-\bar{\zeta}_s^{\lambda,n},h_{\lambda}(\vartheta_{\lfloor s\rfloor}^{\lambda,n})-h(\vartheta_{\lfloor s\rfloor}^{\lambda,n})\rangle ds\nonumber\\
        &\leq 2\lambda \int_{0}^{n_0} (L/2)|\vartheta_s^{\lambda,n}-\bar{\zeta}_s^{\lambda,n}|^2+(2L)^{-1}|h_{\lambda}(\vartheta_{\lfloor s\rfloor}^{\lambda,n})-h(\vartheta_{\lfloor s\rfloor}^{\lambda,n})|^2  ds\nonumber\\
        &\leq L\lambda\int_{0}^{n_0}|\vartheta_s^{\lambda,n}-\bar{\zeta}_s^{\lambda,n}|^2ds+\lambda^{3/2} L^{-1}C_1\int_{0}^{n_0}(1+|\vartheta_{\lfloor s\rfloor}^{\lambda,n}|^{2p}) ds,\label{k3}
    \end{align}
    where the $\epsilon-$Young inequality was applied with $\epsilon=L$. Taking the expectation of
    \eqref{lemma4eq1} and substituting from \eqref{k1},\eqref{k2} and \eqref{k3}, we derive, for
    every $t\in[0,n_0]$,
    \begin{align}
        \mathbb{E}[|\vartheta_t^{\lambda,n}-\bar{\zeta}_t^{\lambda,n}|^2]&\leq 5L\lambda\int_{0}^{t}\mathbb{E}[|\vartheta_s^{\lambda,n}-\bar{\zeta}_s^{\lambda,n}|^2]ds+4LC_{B_4}\lambda^2n_0\nonumber\\
        &+4(\max{\{2m^2,K^2\}}(1+C_{B_2}+C_{B_3})C_{B_4})^{1/2}{\lambda}^{3/2} n_0+\lambda^{3/2}L^{-1}C_1(1+C_{B_2})n_0.
    \end{align}
    Recalling from \eqref{def_horizon} that $\lambda n_0\leq T_0$, and using
    $\lambda^{1/2}\leq\lambda_0^{1/2}\leq1/(2\sqrt{\mu})$ for the first constant term, one deduces
    that
    \begin{align}
        \mathbb{E}[|\vartheta_t^{\lambda,n}-\bar{\zeta}_t^{\lambda,n}|^2]&\leq 5L\lambda\int_{0}^{t}\mathbb{E}[|\vartheta_s^{\lambda,n}-\bar{\zeta}_s^{\lambda,n}|^2]ds\nonumber\\&+\lambda^{1/2}\,T_0\left(2LC_{B_4}/\sqrt{\mu}+L^{-1}C_1(1+C_{B_2})+4(\max{\{2m^2,K^2\}}(1+C_{B_2}+C_{B_3})C_{B_4})^{1/2}\right).\label{lemma4eq2}
    \end{align}
    Since the second moments of the processes $(\zedt)_{t\geq 0}$ and
    $(\vartheta_t^{\lambda,n})_{t\geq 0}$ can be controlled, the right-hand side of
    \eqref{lemma4eq2} is finite. Hence, applying Gr\"onwall's lemma on $[0,n_0]$ and using
    $5L\lambda n_0\leq 5LT_0$ leads to
    \begin{align*}
        \mathbb{E}[|\vartheta_t^{\lambda,n}-\bar{\zeta}_t^{\lambda,n}|^2]\leq \lambda^{1/2}\,e^{5LT_0}\,T_0\left(2LC_{B_4}/\sqrt{\mu}+L^{-1}C_1(1+C_{B_2})+4(\max{\{2m^2,K^2\}}(1+C_{B_2}+C_{B_3})C_{B_4})^{1/2}\right).
    \end{align*}
\end{proof}
\noindent\textbf{Proof of Theorem \ref{theorem_main}}\hypertarget{proof_theorem_main}{}
\begin{proof}
        Let $\mathcal{L}(\theta_0)=\nu$ and denote by $(P_t)_{t\geq 0}$ the Markov semigroup of the
        SDE \eqref{SDE_1} and by $\{\bar{P}_k\}_{k\in\mathbb{N}_0}$ the discrete-time semigroup of
        the algorithm $(\theta^{\lambda}_k)_{k\in\mathbb{N}_0}$. Let $n\geq n_0$ with $n_0$ as in
        \eqref{def_horizon}. We triangulate the global error
    \begin{align}
        W_2(\mathcal{L}(\theta_n^{\lambda}),\pi_{\beta})=W_2(\nu\bar{P}_n,\pi_{\beta})&\leq W_2(\nu\bar{P}_n,\nu\bar{P}_{n-n_0}P_{t_{n_0}})+W_2(\nu\bar{P}_{n-n_0}P_{t_{n_0}},\pi)\nonumber\\&\leq W_2(\nu\bar{P}_{n-n_0}\bar{P}_{n_0},\nu\bar{P}_{n-n_0}P_{n_0\lambda})+W_2(\nu\bar{P}_{n-n_0}P_{n_0\lambda},\pi_{\beta})\nonumber\\
        &= W_2(\mathcal{L}(\vartheta_{n_0}^{\lambda,n}),\mathcal{L}(\bar{\zeta}_{n_0}^{\lambda,n}))+W_2(\mathcal{L}(\bar{\zeta}_{n_0}^{\lambda,n}),\pi_{\beta}),\label{GE01}
    \end{align}
    where the identification of the two laws follows from Definition~\ref{def1} and writing $t_n=n\lambda$. The first term is
    the discretization error over the restart window; by Lemma~\ref{lemma_err_W2} (at $t=n_0$),
    \begin{align}
         W_2(\mathcal{L}(\vartheta_{n_0}^{\lambda,n}),\mathcal{L}(\bar{\zeta}_{n_0}^{\lambda,n}))\leq C_{E}\lambda^{1/4}.\label{MT_term2}
    \end{align}
    The second term is the contraction error. Applying Proposition~\ref{prop_contraction1} with
    initial laws $\mathcal{L}(\theta_{n-n_0}^{\lambda})$ and $\pi_{\beta}$, over the time horizon
    $n_0\lambda$, and using $C_we^{-C_rn_0\lambda}\leq e^{-1}$ by the choice \eqref{def_horizon},
    \begin{align}
        W_2(\mathcal{L}(\bar{\zeta}_{n_0}^{\lambda,n}),\pi_{\beta})\leq e^{-1}\,W_2(\mathcal{L}(\theta_{n-n_0}^{\lambda}),\pi_{\beta}),\qquad \forall n\geq n_0.\label{MT_term1}
    \end{align}
    Putting \eqref{MT_term2} and \eqref{MT_term1} together yields, for any $n\geq n_0$,
    \begin{align*}
        W_2(\mathcal{L}(\theta_n^{\lambda}),\pi_{\beta})\leq C_{E}\lambda^{1/4}+e^{-1}\,W_2(\mathcal{L}(\theta_{n-n_0}^{\lambda}),\pi_{\beta}).
    \end{align*}
    The recursion lemma above (with $\epsilon=C_{E}\lambda^{1/4}$, $q=e^{-1}$) then gives, for
    every $n\in\mathbb{N}_0$,
    \begin{align*}
        W_2(\mathcal{L}(\theta_n^{\lambda}),\pi_{\beta})\leq \dfrac{C_{E}\lambda^{1/4}}{1-e^{-1}}+e^{-(n/n_0-1)}\sup_{k\in\{0,\ldots,n_0-1\}}W_2(\mathcal{L}(\theta_k^{\lambda}),\pi_{\beta}).
    \end{align*}
    The supremum is bounded by $C_{D}$ due to Lemma~\ref{lemma_crude}. Moreover, by
    \eqref{def_horizon},
    \begin{align*}
        \dfrac{n}{n_0}\geq \dfrac{n}{\dfrac{\ln C_w+1}{C_r \lambda}+1}\geq \dfrac{C_r\, n\lambda}{\ln C_w +1 +C_r\lambda_0}=\dfrac{n\lambda}{T_0}.
    \end{align*}
    Since $1/(1-e^{-1})\leq 2$, one concludes that for any $n\in\mathbb{N}_0$,
    \begin{align*}
       W_2(\mathcal{L}(\theta_n^{\lambda}),\pi_{\beta})\leq 2C_{E}\lambda^{1/4}+eC_{D}\, e^{-n\lambda/T_0}.
    \end{align*}
    \end{proof}
\noindent\textbf{Proof of Lemma \ref{lemmaER1}}\hypertarget{lemmaER1_proof}{}
\begin{proof}
    We notice that the function $g(t)=u(tx+(1-t)y)$ is locally Lipschitz continuous (since $u$ is semi-convex) so it has a bounded variation in $[0,1].$ Then, one can enforce the fundamental theorem of calculus since $g'(t)=\langle h(tx+(1-t)y),x-y\rangle$ a.e. Let $k(p)=2^{p-1}K/(p+1)$, then one writes
        \begin{align}
            u(x)-u(y)&=\int_0^1\langle x-y,h((1-t)y+tx) \rangle dt\leq \int_0^1|x-y||h((1-t)y+tx)| dt\nonumber\\
            &\leq \int_0^1|x-y|(m+K|(1-t)y+tx|^p) dt\leq (m+k(p)|x|^p+k(p)|y|^p)|x-y|\label{eqT1_1},
        \end{align}
        where we have used Cauchy-Schwarz and the growth Assumption \hyperlink{A3}{A3}. Now let $P$ be the coupling of $\mu,\ \nu$ that achieves $W_2(\mu,\nu)$. That is, $P=\mathcal{L}((X,Y))$ with $\mu=\mathcal{L}(X),\ \nu=\mathcal{L}(Y)$ such that $W_2^2=\mathbb{E}^{P}|X-Y|^2$. Taking expectations in \eqref{eqT1_1}, yields
        \begin{align}
            \int_{\mathbb{R}^d}g d\mu-\int_{\mathbb{R}^d}g d\nu &= \mathbb{E}^{P}[g(X)-g(Y)]\nonumber \\
            &\leq \sqrt{\mathbb{E}^P[(m+k(p)|X|^p+k(p)|Y|^p)^2]}\sqrt{\mathbb{E}^P|X-Y|^2}\nonumber\\
            &\leq \left(m+k(p)\sqrt{\mathbb{E}^P|X|^{2p}}+k(p)\sqrt{\mathbb{E}^P|Y|^{2p}}\right)W_2(\mu,\nu)\label{eqT1_2}.
        \end{align}
        where the last step follows from Minkowski's inequality. One concludes by applying inequality \eqref{eqT1_2} for $X=\theta_n^{\lambda}$ and $Y=\theta_{\infty}$
        \begin{align}
            \mathbb{E}[u(\theta_n^{\lambda})]-\mathbb{E}[u(\theta_{\infty})]\leq \left(m+k(p)\sqrt{C_{B_2}}+k(p)\sqrt{C_{\sigma,2p}}\right)W_2(\mathcal{L}(\theta_n^{\lambda}),\pi_{\beta}),
        \end{align}
        where $C_{\sigma,2p}$ is the $2p$th-moment of $\pi_{\beta}$ and  $\mathbb{E}|\theta_n^{\lambda}|^{2p}\leq C_{B_2}$ by Lemma \hyperlink{lemma_MB_higher}{3}.. Since $\pi_{\beta}$ is the invariant measure of SDE \ref{SDE_1}, there holds $\int_{\mathbb{R}^d}\mathcal{A}V_{2p}(x)\pi_{\beta}(dx)=0$.  Due to Lemma \ref{lemmaB2}, one estimates the constant by
        \begin{align*}
            C_{\sigma,2p}\leq \int_{\mathbb{R}^d} V_{2p}(x)\pi_{\beta}(dx)\leq -{C_3(2p)}^{-1}\int_{\mathbb{R}^d}\mathcal{A}V_{2p}(x)\pi_{\beta}(dx)+C_4(2p)/C_3(2p)\leq C_4(2p)/C_3(2p).
        \end{align*}
        In view of Lemma~\ref{lemmaB2}, $C_{\sigma,2p}\leq C_4(2p)/C_3(2p)=A^p(2p)$, and the
   stated constant $C_{\mathcal{T}_1}$ follows.
\end{proof}
\noindent\textbf{Proof of Lemma \ref{lemmaER2}}\hypertarget{lemmaER2_proof}{}
\begin{proof}
    We follow a similar approach as in Section 3.5 of \cite{Raginsky}, making necessary adjustments due to the lack of a smoothness condition for the gradient $\nabla u(x):=h(x)$ and polynomial growth. According to \cite{Raginsky}, one obtains the following bound
        \begin{align}
            \mathbb{E}[u(\theta_{\infty})]\leq \dfrac{d}{2\beta}\log\left(\dfrac{4\pi e(b+d/\beta)}{\mu d}\right)-\dfrac{1}{\beta}\log Z,\label{T2_eq1}
        \end{align}
        where $Z:=\int_{\mathbb{R}^d}e^{-\beta u(x)}dx$ is the normalization constant. One writes
        \begin{align}
            \log Z=\log\int_{\mathbb{R}^d}e^{-\beta u(x)}dx=-\beta u(\theta^*)+\log\int_{\mathbb{R}^d}e^{\beta(u(\theta^*)- u(x))}dx.\label{T2_eq2}
        \end{align}
        Now we provide an upper bound for the second term of \eqref{T2_eq2}. For the remainder of this analysis, one chooses the version of the subgradient $h(x)$ such that $h(\theta^*)=0$. Evaluating Remark \ref{remark_diss} at $\theta^*$ then gives $\tfrac{\mu}{2}|\theta^*|^2\leq b$, hence $|\theta^*|\leq\sqrt{2b/\mu}=:R_2$. Consequently, one calculates that
        \begin{align*}
            -(u(\theta^*)-u(x))&\leq|u(\theta^*)-u(x)|\leq\int_0^1\left|\langle h(x+t(\theta^*-x)),\theta^*-x\rangle\right |dt\\
            &\leq\int_0^1|h(x+t(\theta^*-x))||\theta^*-x|dt\leq\int_0^1\left(m+K|x+t(\theta^*-x)|^p\right)|\theta^*-x|dt\\
            &\leq\int_0^1\left(m+2^{p-1}K|x|^p+2^{p-1}Kt^p|\theta^*-x|^p\right)|\theta^*-x|dt\\
            &\leq (K2^{2p-2}+K2^{p-1}/(p+1))|\theta^*-x|^{p+1}+(m+K2^{2p-2}R_2^p)|\theta^*-x|.
        \end{align*}
        Hence we obtain
        \begin{align}
            I&=\int_{\mathbb{R}^d}e^{\beta(u(\theta^*)-u(x))}dx\geq \int_{\mathbb{R}^d}e^{-\beta (K2^{2p-2}+K2^{p-1}/(p+1))|\theta^*-x|^{p+1}-\beta(m+K2^{2p-2}R_2^p)|\theta^*-x|}dx\nonumber\\
            &\geq \int_{\mathbb{R}^d} e^{-\beta J\left(|\theta^*-x|^{p+1}+|\theta^*-x|\right)}dx,
        \end{align}
        where $J=m+K2^{2p-2}(1+(2b/\mu)^{p/2})+K2^{p-1}/(p+1)$ and the last inequality uses that both coefficients $K2^{2p-2}+K2^{p-1}/(p+1)$ and $m+K2^{2p-2}R_2^p$ are bounded above by $J$ (recall $R_2^p=(2b/\mu)^{p/2}$).
        Passing to radial coordinates, with $S_d=2\pi^{d/2}\Gamma^{-1}(d/2)$ the surface area of the unit sphere in $\mathbb{R}^d$, one obtains
        \[I\geq S_d \int_0^{\infty} e^{-\beta J\left(r^{p+1}+r\right)} r^{d-1}\, d r.\]
        Assume $\beta J\geq 1$. Restricting the domain of integration to $r\in[0,(\beta J)^{-1}]$, where $r\leq 1$ and hence $r^{p+1}\leq r$, gives $\beta J(r^{p+1}+r)\leq 2\beta J r\leq 2$, so that
        \[I\geq S_d \int_0^{(\beta J)^{-1}} e^{-\beta J\left(r^{p+1}+r\right)} r^{d-1}\, d r\geq S_d\, e^{-2}\int_0^{(\beta J)^{-1}} r^{d-1}\, d r= \frac{S_d}{d}\, e^{-2}\, (\beta J)^{-d}.\]
        Combining this with \eqref{T2_eq2} yields
        \[\frac{1}{\beta}\log Z= -u(\theta^*)+\frac{1}{\beta}\log I\geq -u(\theta^*)+ \frac{1}{\beta} \log (S_d/d)-\frac{d}{\beta}\log(\beta J)-\frac{2}{\beta}.\]
        In view of \eqref{T2_eq1}, one concludes with
        \begin{align*}
            \mathbb{E}[u(\theta_{\infty})]-u(\theta^*)&\leq \dfrac{d}{2\beta}\log\left(\dfrac{4\pi e(b+d/\beta)\beta^2J^2}{\mu d}\right)-\dfrac{1}{\beta}\log(S_d/d)+\dfrac{2}{\beta}\\&\leq
            \dfrac{d}{2\beta}\log\left(\dfrac{4\pi e(b+d/\beta)\beta^2J^2}{\mu d}\right)-\dfrac{1}{\beta}\log\left(\dfrac{2\pi^{d/2}}{\Gamma(d/2)d}\right)+\dfrac{2}{\beta}\\&\leq
            \dfrac{d}{2\beta}\log\left(\dfrac{4\pi e(b+d/\beta)\beta^2J^2}{\mu d}\right)-\dfrac{d}{2\beta}\log\left(\pi\right)+\dfrac{1}{\beta}\log\left(\Gamma(d/2)(d/2)\right)+\dfrac{2}{\beta}\\&\leq
            \dfrac{d}{2\beta}\log\left(\dfrac{4 e(b+d/\beta)\beta^2J^2}{\mu d}\right)+\dfrac{1}{\beta}\log\left(\Gamma(d/2+1)\right)+\dfrac{2}{\beta}
            \\&\leq
            \dfrac{d}{2\beta}\log\left(\dfrac{2(b+d/\beta)\beta^2J^2}{\mu }\right)+\dfrac{1}{2\beta}\log\left(\pi d\right)+\dfrac{13}{6\beta}:=C_{\mathcal{T}_{2}}.
        \end{align*}
\end{proof}
\noindent\hypertarget{proof_lem_pr_chord}{}\textbf{Proof of Lemma \ref{lem:pr-chord}}
\begin{proof}
Write $g(x)=u(x)+(L/2)|x|^{2}$ which is convex by \hyperlink{A1}{A1}. Let $\mathcal{L}^m$ denote
$m$-dimensional Lebesgue measure, $\mathcal{B}(z,r):=\{w:|z-w|<r\}$, $\Omega_{\rho}:=\{z:|z|>\rho\}$. Moreover, consider the mollifier, $\eta\in C_c^\infty(\mathbb{R}^d)$ with $\eta\ge0$, $\operatorname{supp}\eta\subset\mathcal{B}(0,1)$,
$\int\eta\,d\mathcal{L}^d=1$, and set $\eta_\epsilon(z):=\epsilon^{-d}\eta(z/\epsilon)$, so that
$\eta_\epsilon\ge0$, $\int\eta_\epsilon\,d\mathcal{L}^d=1$, and $\operatorname{supp}\eta_\epsilon\subset\mathcal{B}(0,\epsilon)$. Henceforth we denote the convolution with $\eta_{\epsilon}$ by $g^\epsilon:=\eta_\epsilon*g$. Let $\Phi$ be the set of
differentiability points of $g$. We fix
\[
  v:=x-y,\qquad \gamma(t):=y+tv\ \ (t\in[0,1]),
\]
so that $\gamma(0)=y$ and $\gamma(1)=x$.\\

\noindent Since $g$ is convex, $\nabla g\in BV_{\mathrm{loc}}(\mathbb{R}^d;\mathbb{R}^d)$ and its distributional
Hessian is a symmetric matrix-valued Radon measure $[D^2g]$ whose directional component
$\mu^\xi:=\xi^\top[D^2g]\,\xi$ is, for each $\xi\in\mathbb{R}^d$, a \emph{nonnegative} Radon measure
\cite[Theorem.~6.8]{EvansGariepy}. By Aleksandrov's theorem, the $\mathcal{L}^d$-absolutely continuous
density of $[D^2g]$ is the pointwise Hessian \cite[Theorem.~6.9]{EvansGariepy}, so the density of $\mu^\xi$
is $\xi^\top\nabla^2g\,\xi=\xi^\top\nabla^2u\,\xi+L|\xi|^2\ge(\mu_0+L)|\xi|^2$ a.e.\ on $\Omega_{\rho}$. As
$\mu^\xi\ge0$, its singular part is nonnegative, hence $\mu^\xi$ dominates its absolutely continuous
part. Restricting to $\Omega_{\rho}$, yields,
\begin{equation}
  \mu^\xi\ \ge\ (\mu_0+L)\,|\xi|^2\,\mathcal{L}^d. \tag{1}
\end{equation}\\
The convolution $g^\epsilon\in C^\infty(\mathbb{R}^d)$ is convex, so $\nabla^2g^\epsilon\succeq0$ on
$\mathbb{R}^d$. Moreover $\nabla^2g^\epsilon=\eta_\epsilon*[D^2g]$. Hence for every $\xi\in\mathbb{R}^d$ and every $z$ such that $|z|>\rho+\epsilon$, we have $\mathcal{B}(z,\epsilon)\subset\Omega_{\rho}$. Then, estimate $(1)$ gives
\begin{align}
  \xi^\top\nabla^2g^\epsilon(z)\,\xi&=\int_{\mathbb{R}^d}\eta_\epsilon(z-w)\,d\mu^\xi(w)\ =\int_{\mathcal{B}(z,\epsilon)}\eta_\epsilon(z-w)\,d\mu^\xi(w)\nonumber\\ &\geq(\mu_0+L)|\xi|^2\int_{\mathcal{B}(z,\epsilon)}\eta_\epsilon(z-w)\,d\mathcal{L}^d(w)=\ (\mu_0+L)\,|\xi|^2. \tag{2}
\end{align}\\
As $g^\epsilon\in C^\infty$, the fundamental theorem of calculus applied to
$t\mapsto\langle\nabla g^\epsilon(\gamma(t)),v\rangle$, that is
\[
  \langle\nabla g^\epsilon(x)-\nabla g^\epsilon(y),\,v\rangle=\int_0^1 v^\top\nabla^2g^\epsilon(\gamma(t))\,v\,dt.
\]
Let $B_\epsilon:=\{t\in[0,1]:|\gamma(t)|\le\rho+\epsilon\}$. Notice that $B_\epsilon$ is a sublevel set of the
convex map $t\mapsto|\gamma(t)|$, and thus an interval. Writing $B_\epsilon=[t_1,t_2]$, the identity
$|\gamma(s)-\gamma(t)|=|s-t|\,|v|$ gives
\[
  \mathcal{L}^1(B_\epsilon)=t_2-t_1=\frac{|\gamma(t_2)-\gamma(t_1)|}{|v|}\le\frac{2(\rho+\epsilon)}{|v|}.
\]
Now for every
$t\in[0,1]\setminus B_\epsilon$ one has $|\gamma(t)|>\rho+\epsilon$, so $(2)$ applies pointwise and
$v^\top\nabla^2g^\epsilon(\gamma(t))\,v\ge(\mu_0+L)|v|^2$. On $B_\epsilon$ the integrand is positive (due to convexity of $g$), therefore one writes
\begin{equation}
  \langle\nabla g^\epsilon(x)-\nabla g^\epsilon(y),\,v\rangle
  \ \ge\ (\mu_0+L)\,|v|^2\,\mathcal{L}^1\big([0,1]\setminus B_\epsilon\big)
  \ \ge\ (\mu_0+L)\,|v|^2\Big(1-\tfrac{2(\rho+\epsilon)}{|v|}\Big). \tag{3}
\end{equation}\\
By \cite[Theorem.~25.5]{Rockafellar1970}, $\Phi$ is co-null and $\nabla g$ is continuous on $\Phi$. In particular, for $x\in \Phi$, $\nabla g(x_k)\to\nabla g(x)$ as $\Phi\ni x_k\to x$. Let $x\in \Phi$ be an arbitrary element. Splitting the integral between $\Phi$ and its co-null yields:
\[
  |\nabla g^\epsilon(x)-\nabla g(x)|
  \le\int_{\{x-z\in \Phi\}}\!\!\eta_\epsilon(z)\,|\nabla g(x-z)-\nabla g(x)|\,dz
  +\int_{\{x-z\notin \Phi\}}\!\!\eta_\epsilon(z)\,|\nabla g(x-z)-\nabla g(x)|\,dz .
\]
The second integral vanishes identically since $\{z:x-z\notin \Phi\}$ is $\mathcal{L}^d$-null and $\nabla g$ is locally
bounded, while the first term is bounded by $\sup_{w\in \Phi\cap\mathcal{B}(x,\epsilon)}|\nabla g(w)-\nabla g(x)|
\int\eta_\epsilon = \sup_{w\in \Phi\cap\mathcal{B}(x,\epsilon)}|\nabla g(w)-\nabla g(x)|\to0$ by continuity of
$\nabla g$ at $x$. Hence $\nabla g^\epsilon(x)\to\nabla g(x)$.
Passing to the limit in $(3)$ gives $\langle\nabla g(x)-\nabla g(y),v\rangle\ge(\mu_0+L)|v|^2(1-2\rho/|v|)$. Since $\nabla g(x)=h(x)+Lx$ on $\Phi$, the left side is $\langle h(x)-h(y),v\rangle+L|v|^2$, so
\begin{equation}
  \langle h(x)-h(y),\,v\rangle\ \ge\ \mu_0|v|^2-2\rho(\mu_0+L)\,|v|. \tag{4}
\end{equation}
If $|v|=|x-y|\ge 4\rho(\mu_0+L)/\mu_0$, then $2\rho(\mu_0+L)\le\tfrac{\mu_0}{2}|v|$, hence
$2\rho(\mu_0+L)|v|\le\tfrac{\mu_0}{2}|v|^2$, and $(4)$ gives
\[
  \langle h(x)-h(y),x-y\rangle\ \ge\ \mu_0|v|^2-\tfrac{\mu_0}{2}|v|^2\ =\ \tfrac{\mu_0}{2}\,|x-y|^2.
\]
\end{proof}
\noindent\hypertarget{proof_lemma_mollify}{}\textbf{Proof of Lemma \ref{lemma_mollify}}
\begin{proof}
Both convolutions in \eqref{eq_mollified} are well defined. In particular $u$ is locally Lipschitz by
\hyperlink{A1}{A1} and $h$ is locally bounded by \hyperlink{A3}{A3}, so both belong to
$L^1_{loc}(\mathbb{R}^d)$. Moreover $h_\epsilon$ is independent of the choice of
selection $h\in\partial u$, as any two selections agree Lebesgue-almost everywhere.
Finally $u\in W^{1,1}_{loc}(\mathbb{R}^d)$
identifies
\begin{align*}
\nabla u_\epsilon=\nabla\left(u*\varphi_\epsilon\right)
=\left(\nabla u\right)*\varphi_\epsilon=h*\varphi_\epsilon=h_\epsilon,
\end{align*}
which is (i).

For (ii), let $x,y\in\mathbb{R}^d$ with $|x-y|\geq R$. Trivially $|(x-w)-(y-w)|\geq R$, so since
$\varphi_\epsilon\geq0$ with $\int\varphi_\epsilon=1$, applying \eqref{eqA2} yields
\begin{align*}
\left\langle h_\epsilon(x)-h_\epsilon(y),x-y\right\rangle
=\int_{\mathbb{R}^d}\left\langle h(x-w)-h(y-w),x-y\right\rangle\varphi_\epsilon(w)\,dw
\geq\mu|x-y|^{2}.
\end{align*}
Replacing \eqref{eqA2} by \eqref{eqA1}, which carries no restriction on $|x-y|$, the same
computation gives \eqref{eqA1}.

For (iii), \hyperlink{A3}{A3} together with $|x-w|\leq|x|+\epsilon$ on
$\operatorname{supp}\varphi_\epsilon$ gives
\begin{align*}
\left|h_\epsilon(x)\right|
\leq\int_{\overline{\mathcal{B}}(0,\epsilon)}\left(m+K|x-w|^{p}\right)\varphi_\epsilon(w)\,dw
\leq m+K\left(|x|+\epsilon\right)^{p},
\end{align*}
the second bound following from $\epsilon\leq1$. Since $u_\epsilon$ therefore
satisfies \hyperlink{A1}{A1}-\hyperlink{A3}{A3} by (i)-(iii), part (iv) follows from the
proof of Lemma \ref{lem:lsi}, in which the growth constants $(m,K,p)$ enter only through
$\sup_{\overline{\mathcal{B}}(0,R)}|h|$, here at most $m+K(R+\epsilon)^{p}$.

Finally (v) is the
standard convergence of mollifications since $u$ is continuous and $h\in L^1_{loc}(\mathbb{R}^d)$ \cite[\S4.2.1, Theorem~4.1]{EvansGariepy}.
\end{proof}
\noindent\hypertarget{proof_lemma_stability}{}\textbf{Proof of Lemma \ref{lemma_stability}}
\begin{proof}
Writing the two equations in integral form,
\begin{align*}
Y_s=\theta_0-\beta\int_0^s h(Y_u)\,du+\sqrt{2}\hat{B}_s,\qquad
Y_s^\epsilon=\theta_0-\beta\int_0^s h_\epsilon(Y_u^\epsilon)\,du+\sqrt{2}\hat{B}_s,
\end{align*}
the noise cancels in the difference, so that with
$\Delta_s:=Y_s-Y_s^\epsilon$ we have
$\Delta_s=\int_0^s-\beta\left(h(Y_r)-h_\epsilon(Y_r^\epsilon)\right)\,dr$, a continuous process of finite variation. It\^o's formula
applied to $x\mapsto\left|x\right|^{2}$ thus has no second-order term and, since
$\Delta_0=0$, gives
\begin{align*}
\left|\Delta_s\right|^{2}=-2\beta\int_0^s\left\langle\Delta_r,h(Y_r)-h_\epsilon(Y_r^\epsilon)\right\rangle dr .
\end{align*}
Adding and subtracting $h_\epsilon(Y_r)$ splits the right-hand side as
\begin{align*}
\underbrace{-2\beta\int_0^s\left\langle\Delta_r,h_\epsilon(Y_r)-h_\epsilon(Y_r^\epsilon)\right\rangle dr}_{=:T_1}
\;+\;\underbrace{-2\beta\int_0^s\left\langle\Delta_r,h(Y_r)-h_\epsilon(Y_r)\right\rangle dr}_{=:T_2}.
\end{align*}
By \hyperlink{A1}{A1} for $h_\epsilon$, which holds with the same constant $L$ by
Lemma \ref{lemma_mollify}(ii), we get
\begin{align*}
T_1\leq2\beta L\int_0^s\left|\Delta_r\right|^{2}dr,
\end{align*}
while by the Cauchy-Schwarz and Young inequalities
\begin{align*}
T_2\leq2\beta\int_0^s\left|\Delta_r\right|\left|h(Y_r)-h_\epsilon(Y_r)\right|dr
\leq\beta\int_0^s\left|\Delta_r\right|^{2}dr
+\beta\int_0^s\left|h(Y_r)-h_\epsilon(Y_r)\right|^{2}dr .
\end{align*}
Combining the two bounds and taking expectations,
\begin{align*}
\mathbb{E}\left[\left|\Delta_s\right|^{2}\right]
\leq\beta\left(2L+1\right)\int_0^s\mathbb{E}\left[\left|\Delta_r\right|^{2}\right]dr
+\beta\int_0^s\mathbb{E}\left|h(Y_r)-h_\epsilon(Y_r)\right|^{2}dr .
\end{align*}
Both terms are finite, as Gr\"onwall's inequality requires: $|x|^{2}\leq V_{2p}(x)$ gives
$\left|\Delta_r\right|^{2}\leq2\left(V_{2p}(Y_r)+V_{2p}(Y_r^{\epsilon})\right)$, and
\hyperlink{A3}{A3} with Lemma \ref{lemma_mollify}(iii) gives
$\left|h(x)\right|^{2}+\left|h_{\epsilon}(x)\right|^{2}\leq C\left(1+|x|^{2p}\right)\leq2C\,V_{2p}(x)$
for every $\epsilon\in(0,1]$. Since $h_{\epsilon}$ inherits \hyperlink{A2}{A2} by
Lemma \ref{lemma_mollify}(ii), Lemma \ref{lemmaB2} at order $2p$ applies to $Y$ and
$Y^{\epsilon}$ with $\epsilon$-independent constants and yields, by \hyperlink{A4}{A4},
\begin{align}
\sup_{\epsilon\in(0,1]}\ \sup_{r\geq0}\
\mathbb{E}\left[V_{2p}(Y_r)+V_{2p}(Y_r^{\epsilon})\right]<\infty .
\label{eq_stab_moments}
\end{align}
Gr\"onwall's inequality therefore gives
\begin{align}
\mathbb{E}\left[\left|\Delta_s\right|^{2}\right]
\leq\beta e^{\beta(2L+1)s}\int_0^s\mathbb{E}\left|h(Y_r)-h_\epsilon(Y_r)\right|^{2}dr .
\label{eq_gronwall_stab}
\end{align}
It remains to show that the integral in \eqref{eq_gronwall_stab} vanishes as
$\epsilon\to0$. Set $f_\epsilon:=\left|h_\epsilon-h\right|^{2}$. By the
above,
\begin{align*}
\left|f_\epsilon(x)\right|\leq C\left(1+|x|^{2p}\right)=:g(x),
\qquad \mathbb{E}\left[g(Y_r)\right]<\infty,
\end{align*}
so $f_\epsilon(Y_r)$ is dominated by an integrable function, uniformly in
$\epsilon$. Moreover, by Lemma \ref{lemma_mollify}(v), $f_\epsilon\to0$ pointwise on $D$, whose
complement is Lebesgue-null. Since the drift only reweighs the law of the driving
Brownian path, one has $\mathcal{L}(Y_r)\ll\mathrm{Leb}$, so $Y_r$ almost surely
avoids this exceptional set and $h_\epsilon(Y_r)\to h(Y_r)$ $\mathbb{P}$-a.s.. Taking any sequence
$\epsilon_k\to0$ and writing $f_k:=f_{\epsilon_k}$, the dominated convergence
theorem yields
\begin{align*}
\lim_{k\to\infty}\mathbb{E}\left[\left|h_{\epsilon_k}(Y_r)-h(Y_r)\right|^{2}\right]=0,
\end{align*}
and since the limit is the same along every such sequence, the convergence holds as
$\epsilon\to0$. Hence, by \eqref{eq_gronwall_stab},
$\mathbb{E}\left|Y_s-Y_s^\epsilon\right|^{2}\to0$ for every $s\geq0$. As
$\left(Y_s,Y_s^\epsilon\right)$ is a coupling of $\nu Q_s$ and $\nu Q_s^\epsilon$,
\begin{align*}
W_2\left(\nu Q_s,\nu Q_s^\epsilon\right)^{2}
\leq\mathbb{E}\left|Y_s-Y_s^\epsilon\right|^{2}\longrightarrow0,\ \text{as } \epsilon\to 0.
\end{align*}
\end{proof}
\noindent\hypertarget{proof_lemma_pi_conv}{}\textbf{Proof of Lemma \ref{lemma_pi_conv}}
\begin{proof}
Since \hyperlink{A1}{A1}-\hyperlink{A3}{A3} hold for $h_\epsilon$ with constants
independent of $\epsilon$ by Lemma \ref{lemma_mollify}(ii)-(iii), one shows as in
Remark \ref{remark_diss} that dissipativity holds uniformly in $\epsilon$, whence there
are $c,\widehat{R}>0$, independent of $\epsilon$, with
$u_\epsilon(x)\geq\tfrac{\mu}{4}|x|^{2}-c$ for all $|x|\geq\widehat{R}$. Therefore
\begin{align}
e^{-\beta u_\epsilon(x)}\leq e^{\beta c}e^{-\beta\mu|x|^{2}/4}\in L^{1}(\mathbb{R}^d),
\qquad\forall\epsilon\in(0,1].\label{eq_gauss_dom}
\end{align}
Hence, by the dominated convergence theorem,
$Z_\epsilon=\int e^{-\beta u_\epsilon}\to\int e^{-\beta u}=Z$, and therefore
$\pi_\beta^\epsilon\to\pi_\beta$ pointwise for every $x\in\mathbb{R}^d$. Similarly one
uses dominated convergence, bounding via $|x|^{2}$ times the right-hand side of
\eqref{eq_gauss_dom}, for the second moment $\int_{\mathbb{R}^d}|x|^{2}e^{-\beta u_\epsilon(x)}dx$, to conclude that $\pi_\beta^\epsilon\to\pi_\beta$ in $\mathcal{P}_2(\mathbb{R}^d)$. Hence
\cite[Theorem~6.9]{Villani_new} gives
$W_2(\pi_\beta^\epsilon,\pi_\beta)\to0$ as $\epsilon\to0$.
\end{proof}
\noindent For $\mu\in\mathcal{P}(\mathbb{R}^d)$ with $\mu=f\pi_\beta^\epsilon$ we write
$H_\epsilon(\mu):=\int f\log f\,d\pi_\beta^\epsilon$, and $H_\epsilon(\mu):=+\infty$
otherwise.\\

\noindent\hypertarget{proof_prop_contraction1}{}\textbf{Proof of Proposition \ref{prop_contraction1}}
\begin{proof}
By \hyperlink{A1}{A1}-\hyperlink{A2}{A2} for $u_\epsilon\in C^2$, condition (1.8) of
\cite{Wang2020} holds with $I=0$, since we are in the flat Euclidean case. We argue at the
level of the regularized semigroup $Q_s^\epsilon$ and let $\epsilon\to0$ at the end.
Four key estimates are needed, the first three govern the behavior for large $s$, the fourth
for small $s$.
\emph{(A) Talagrand inequality.} By \cite[Theorem~1]{Shao2006} the logarithmic Sobolev
inequality \eqref{eq_lsi} with constant $\rho_\epsilon$ implies
\begin{align}
W_2^{2}\left(\nu,\pi_\beta^\epsilon\right)\leq\rho_\epsilon H_\epsilon(\nu),
\qquad\forall\nu\in\mathcal{P}_2(\mathbb{R}^d).\label{eq_A_talagrand}
\end{align}
\emph{(B) Exponential decay of entropy.} By \cite[Proposition~3.3(2)]{Wang2020},
\begin{align}
H_\epsilon\left(\nu Q_s^\epsilon\right)\leq e^{-\frac{4s}{\rho_\epsilon}}H_\epsilon(\nu),
\qquad s\geq0.\label{eq_B_decay}
\end{align}
\emph{(C) Log-Harnack inequality and entropy-cost estimate.} Let $\varphi$ be bounded with
$\inf\varphi>0$ and $\int\varphi\,d\pi_\beta^\epsilon=1$. By
\cite[Theorem~1.1]{Wang2010},
\begin{align}
Q_s^\epsilon\left(\log\varphi\right)(x)\leq\log Q_s^\epsilon\varphi(y)
+\dfrac{\beta L|x-y|^{2}}{2\left(1-e^{-2\beta Ls}\right)},
\qquad s>0,\ x,y\in\mathbb{R}^d.\label{eq_logharnack}
\end{align}
Let $X\sim\nu$ and $Y\sim\pi_\beta^\epsilon$ be
such that $\mathbb{E}|X-Y|^{2}=W_2^{2}(\nu,\pi_\beta^\epsilon)$. Evaluating
\eqref{eq_logharnack} at $(X,Y)$ and taking expectations,
\begin{align*}
\int\log\varphi\,d\left(\nu Q_s^\epsilon\right)
=\mathbb{E}\left[Q_s^\epsilon\left(\log\varphi\right)(X)\right]
\leq\mathbb{E}\left[\log Q_s^\epsilon\varphi(Y)\right]
+\dfrac{\beta L}{2\left(1-e^{-2\beta Ls}\right)}W_2^{2}\left(\nu,\pi_\beta^\epsilon\right),
\end{align*}
the first equality being the definition of $\nu Q_s^\epsilon$. The remaining expectation
vanishes, since by the concavity of the logarithm and the invariance of
$\pi_\beta^\epsilon$, we write
\begin{align*}
\mathbb{E}\left[\log Q_s^\epsilon\varphi(Y)\right]
=\int\log Q_s^\epsilon\varphi\,d\pi_\beta^\epsilon
\leq\log\int Q_s^\epsilon\varphi\,d\pi_\beta^\epsilon
=\log\int\varphi\,d\pi_\beta^\epsilon=0 .
\end{align*}
Let $f$ be the density of $\nu Q_s^\epsilon$ with
respect to $\pi_\beta^\epsilon$, which is admissible after truncation and satisfies
$\int f\,d\pi_\beta^\epsilon=1$. Taking $\varphi=f$ turns the left-hand side into
$\int f\log f\,d\pi_\beta^\epsilon=H_\epsilon\left(\nu Q_s^\epsilon\right)$, whence
\begin{align}
H_\epsilon\left(\nu Q_s^\epsilon\right)
\leq\dfrac{\beta L}{2\left(1-e^{-2\beta Ls}\right)}W_2^{2}\left(\nu,\pi_\beta^\epsilon\right),
\qquad s>0.\label{eq_C_transport}
\end{align}
\emph{(D) Short-time estimate.} Synchronous coupling together with Gr\"onwall's lemma, as
in the proof of Lemma \ref{lemma_stability}, gives
\begin{align}
W_2\left(\nu Q_s^\epsilon,\pi_\beta^\epsilon\right)
\leq e^{\beta Ls}W_2\left(\nu,\pi_\beta^\epsilon\right),\qquad s\geq0.\label{eq_D_short}
\end{align}
Now we assemble the above estimates. Fix a split, at time $s_0>0$. For $s\geq s_0$, chaining \eqref{eq_A_talagrand},
\eqref{eq_B_decay} and \eqref{eq_C_transport},
\begin{align*}
W_2^{2}\left(\nu Q_s^\epsilon,\pi_\beta^\epsilon\right)
&\leq\rho_\epsilon H_\epsilon\left(\nu Q_s^\epsilon\right)
\leq\rho_\epsilon e^{-\frac{4(s-s_0)}{\rho_\epsilon}}H_\epsilon\left(\nu Q_{s_0}^\epsilon\right)\\
&\leq\rho_\epsilon e^{-\frac{4(s-s_0)}{\rho_\epsilon}}
\dfrac{\beta L}{2\left(1-e^{-2\beta Ls_0}\right)}W_2^{2}\left(\nu,\pi_\beta^\epsilon\right),
\end{align*}
that is,
\begin{align*}
W_2\left(\nu Q_s^\epsilon,\pi_\beta^\epsilon\right)
\leq\left(e^{\frac{2s_0}{\rho_\epsilon}}
\sqrt{\dfrac{\rho_\epsilon\beta L}{2\left(1-e^{-2\beta Ls_0}\right)}}\right)
e^{-\frac{2s}{\rho_\epsilon}}W_2\left(\nu,\pi_\beta^\epsilon\right).
\end{align*}
For $s\leq s_0$, \eqref{eq_D_short} gives instead
\begin{align*}
W_2\left(\nu Q_s^\epsilon,\pi_\beta^\epsilon\right)
\leq e^{\left(\beta L+\frac{2}{\rho_\epsilon}\right)s}e^{-\frac{2s}{\rho_\epsilon}}
W_2\left(\nu,\pi_\beta^\epsilon\right)
\leq e^{\left(\beta L+\frac{2}{\rho_\epsilon}\right)s_0}e^{-\frac{2s}{\rho_\epsilon}}
W_2\left(\nu,\pi_\beta^\epsilon\right).
\end{align*}
Both regimes carry the same exponential rate, so for every $s\geq0$
\begin{align}
W_2\left(\nu Q_s^\epsilon,\pi_\beta^\epsilon\right)
\leq C_w^\epsilon e^{-\frac{2s}{\rho_\epsilon}}W_2\left(\nu,\pi_\beta^\epsilon\right),
\quad
C_w^\epsilon=\max\left\{e^{\left(\beta L+\frac{2}{\rho_\epsilon}\right)s_0},\;
e^{\frac{2s_0}{\rho_\epsilon}}
\sqrt{\dfrac{\rho_\epsilon\beta L}{2\left(1-e^{-2\beta Ls_0}\right)}}\,\right\}.
\label{eq_contraction_eps}
\end{align}
Writing
\begin{align*}
\psi(s):=e^{\frac{2s}{\rho_\epsilon}}
\sqrt{\dfrac{\rho_\epsilon\beta L}{2\left(1-e^{-2\beta Ls}\right)}},
\qquad
\psi'(s)=\psi(s)\left(\dfrac{2}{\rho_\epsilon}
-\dfrac{\beta Le^{-2\beta Ls}}{1-e^{-2\beta Ls}}\right),
\end{align*}
the derivative vanishes precisely when $e^{2\beta Ls}=1+\rho_\epsilon\beta L/2$, that is at
\begin{align}
s_\star=\dfrac{1}{2\beta L}\log\left(1+\dfrac{\rho_\epsilon\beta L}{2}\right).
\label{eq_tstar}
\end{align}
There $1-e^{-2\beta Ls_\star}=\rho_\epsilon\beta L/\left(2+\rho_\epsilon\beta L\right)$,
so the second expression in \eqref{eq_contraction_eps} equals
\begin{align*}
e^{\frac{2s_\star}{\rho_\epsilon}}
\sqrt{\dfrac{\rho_\epsilon\beta L}{2\left(1-e^{-2\beta Ls_\star}\right)}}
=\left(1+\dfrac{\rho_\epsilon\beta L}{2}\right)^{\frac{1}{2}}
\left(1+\dfrac{\rho_\epsilon\beta L}{2}\right)^{\frac{1}{\rho_\epsilon\beta L}},
\end{align*}
Notice that the first expression in \eqref{eq_tstar} agrees with the second for $s=s_{\star}$. Since
$\left(1+r\right)^{1/r}\leq e$ for every $r>0$, one finally obtains
\begin{align}
C_w^\epsilon\leq\sqrt{e\left(1+\dfrac{\rho_\epsilon\beta L}{2}\right)}.
\label{eq_Cw_final}
\end{align}
The laws converge in $W_2$, namely
$\nu Q_s^\epsilon\to\nu Q_s$ by Lemma \ref{lemma_stability} and
$\pi_\beta^\epsilon\to\pi_\beta$ by Lemma \ref{lemma_pi_conv}, so by
\cite[Corollary~6.11]{Villani_new} both sides of \eqref{eq_contraction_eps} pass to the
limit. Additionally all constants involved depend continuously on $\rho_\epsilon$. Since
$P_t=Q_{t/\beta}$, substituting $s=t/\beta$ turns the rate $2/\rho(\beta)$ into $C_r$ and $t_\star=\beta s_\star$, giving \eqref{eq_contraction_main} with $C_w$
as in \eqref{eq_Cw_statement}.\\

\noindent Uniqueness follows trivially by contradiction.
\end{proof}

\begin{remark}\label{remark_split}
The split \eqref{eq_tstar} is optimal. The corresponding argument of
\cite[Theorem~2.1(3)]{Wang2020} takes $t_0=1$, which suffices to produce a finite
prefactor.
\end{remark}
\section{Supporting Lemmas}
\begin{lemma}\label{lemmaB1}
    Let Assumptions \hyperlink{A2}{A2}-\hyperlink{A3}{A3} hold. For any $p\geq 1$, and $x\in\mathbb{R}^d$, we have\begin{align*}
        \mathcal{A}V_p(x)\leq C_2V_p(x),
    \end{align*}
    where $C_2=p(b+\beta^{-1}(d+(p-2)^+))$.
\end{lemma}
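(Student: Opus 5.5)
Direct computation of $\mathcal{A}V_p$ followed by a pointwise bound gives the result. Since $V_p(x)=(1+|x|^2)^{p/2}$, I will compute
\[
\nabla V_p(x)=p(1+|x|^2)^{p/2-1}x,\qquad
\Delta V_p(x)=p(1+|x|^2)^{p/2-1}\Bigl(d+(p-2)\tfrac{|x|^2}{1+|x|^2}\Bigr),
\]
so that
\[
\mathcal{A}V_p(x)=p(1+|x|^2)^{p/2-1}\Bigl(-\langle h(x),x\rangle+\beta^{-1}\bigl(d+(p-2)\tfrac{|x|^2}{1+|x|^2}\bigr)\Bigr).
\]

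First, invoke the global dissipativity of Remark~\ref{remark_diss}, which is available under \hyperlink{A2}{A2}--\hyperlink{A3}{A3}: $-\langle h(x),x\rangle\le b-\tfrac{\mu}{2}|x|^2\le b$. Then bound the Laplacian bracket. Since $\tfrac{|x|^2}{1+|x|^2}\in[0,1)$, we get $(p-2)\tfrac{|x|^2}{1+|x|^2}\le(p-2)^+$ in both cases: when $p\ge2$ the factor is at most $p-2$, and when $1\le p<2$ the term is nonpositive and can be dropped. Together these give
\[
\mathcal{A}V_p(x)\le p\bigl(b+\beta^{-1}(d+(p-2)^+)\bigr)(1+|x|^2)^{p/2-1}.
\]

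Finally, use $(1+|x|^2)^{p/2-1}\le(1+|x|^2)^{p/2}=V_p(x)$, which holds because $1+|x|^2\ge1$. The prefactor is nonnegative since $b>0$, so the inequality direction is preserved and we obtain $\mathcal{A}V_p\le C_2V_p$ with $C_2=p(b+\beta^{-1}(d+(p-2)^+))$.

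Two points need some care, though neither is a real obstacle. One is that $h$ is only a measurable selection. This is harmless: $V_p\in C^2$, and the generator is evaluated pointwise with whatever selection is fixed, while Remark~\ref{remark_diss} holds for every $x$. The other is the sign handling of the $(p-2)$ term when $p<2$, which is covered by the $(\cdot)^+$ as above.
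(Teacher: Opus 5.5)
Your proof is correct and follows the paper's argument: compute $\mathcal{A}V_p$ directly, apply the global dissipativity of Remark~\ref{remark_diss}, and absorb $(1+|x|^2)^{p/2-1}\le V_p(x)$. Your treatment of $1\le p<2$ through the $(p-2)^+$ bound also covers a case that the paper's proof, written only for $p\ge 2$, leaves implicit.
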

    \begin{proof}
        Let $p\geq 2$, by direct calculation
        \begin{align*}
            \mathcal{A}V_p(x)&=-p(1+|x|^2)^{p/2-1}\langle x,h(x)\rangle+\beta^{-1}p(p-2)|x|^2(1+|x|^2)^{p/2-2}+d\beta^{-1}p(1+|x|^2)^{p/2-1}\\
            &\leq -p(1+|x|^2)^{p/2-1}(\mu/2|x|^2-b))+\beta^{-1}p(d+(p-2))(1+|x|^2)^{p/2}\\
            &\leq bp(1+|x|^2)^{p/2-1}-(\mu/2)p(1+|x|^2)^{p/2-1}|x|^2+\beta^{-1}p(d+(p-2))V_p(x)\\
            &\leq bpV_p(x)+\beta^{-1}p(d+(p-2))V_p(x)\leq C_2V_p(x).
        \end{align*}
    \end{proof}
\begin{lemma}\label{lemmaB2}
    Let Assumptions \hyperlink{A2}{A2}-\hyperlink{A3}{A3} hold. For any $p\geq1$, and $x\in\mathbb{R}^d$, we have\begin{align*}
        \mathcal{A}V_p(x)\leq -C_3(p)V_p(x)+C_4(p),
    \end{align*}
    where $C_4(p)/C_3(p)\leq A^{\max\{1,p/2\}}$, with $A(p)=1+2(b+\beta^{-1}(d+p-2))/\mu$.
\end{lemma}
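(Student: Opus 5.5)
This is a direct computation of $\mathcal{A}V_p$, followed by absorbing the positive terms into a fraction of the negative dissipative term outside a ball. The only input needed is the global dissipativity \eqref{eqR1} of Remark~\ref{remark_diss}, $\langle x,h(x)\rangle\ge \tfrac{\mu}{2}|x|^2-b$, which holds under \hyperlink{A2}{A2}--\hyperlink{A3}{A3}.

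\textbf{Step 1: the generator.} I would compute, as in Lemma~\ref{lemmaB1},
\[
\nabla V_p=p(1+|x|^2)^{p/2-1}x,
\qquad
\Delta V_p=p(1+|x|^2)^{p/2-1}d+p(p-2)|x|^2(1+|x|^2)^{p/2-2}.
\]
Using $|x|^2\le 1+|x|^2$ in the second-order term, together with \eqref{eqR1}, this gives
\[
\mathcal{A}V_p(x)\le p(1+|x|^2)^{p/2-1}\Bigl(-\tfrac{\mu}{2}|x|^2+b+\beta^{-1}(d+(p-2)^+)\Bigr).
\]

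\textbf{Step 2: rewrite in terms of $V_p$.} Write $|x|^2=(1+|x|^2)-1$. With $c:=b+\beta^{-1}(d+(p-2)^+)+\mu/2$, the bracket becomes $-\tfrac{\mu}{2}(1+|x|^2)+c$. Hence
\[
\mathcal{A}V_p\le -\tfrac{p\mu}{2}V_p+pc\,(1+|x|^2)^{p/2-1}.
\]

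\textbf{Step 3: absorb the remainder.} I would take $C_3(p)=p\mu/4$ and split on the set $\{1+|x|^2\ge 4c/\mu\}$.
\begin{itemize}
\item On this set, $pc(1+|x|^2)^{p/2-1}\le \tfrac{p\mu}{4}V_p$, so $\mathcal{A}V_p\le -C_3V_p$.
\item On the complement, $1+|x|^2<4c/\mu=:A$. Note that $4c/\mu=2+4(b+\beta^{-1}(d+p-2))/\mu$ is of order $A(p)$. There the positive part is bounded by $pc\max\{1,A^{p/2-1}\}$, since for $p<2$ the power is at most $1$.
\end{itemize}
This gives $C_4(p)=pcA^{\max\{0,p/2-1\}}$ plus, if needed, $C_3A^{p/2}$ to cover the negative term being dropped. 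Then
\[
\frac{C_4}{C_3}\lesssim \frac{4c}{\mu}\,A^{\max\{0,p/2-1\}}\approx A^{\max\{1,p/2\}}.
\]

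\textbf{Main obstacle.} The only delicate part is the bookkeeping needed to land exactly on the stated form $C_4/C_3\le A(p)^{\max\{1,p/2\}}$ with $A(p)=1+2(b+\beta^{-1}(d+p-2))/\mu$. The threshold ratio $c/\mu$ must match $A(p)$ up to the constant factors, which forces the choice of the fraction of the dissipative term to retain. I would first try keeping $C_3=p\mu/4$ with threshold $1+|x|^2\ge A(p)$. If the factor $2$ versus $4$ does not match, I would instead keep $C_3=p\mu/2\cdot\theta$ and tune $\theta$. The case $p<2$ is handled by the same split, using $(1+|x|^2)^{p/2-1}\le1$.
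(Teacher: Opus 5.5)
Your Steps 1--2 coincide with the paper's computation. Since $c=\tfrac{\mu}{2}A(p)$, they give $\mathcal{A}V_p\le\tfrac{p\mu}{2}\bigl(A\,s^{p/2-1}-s^{p/2}\bigr)$ with $s=1+|x|^2$. The gap is in Step 3, and it is exactly the bookkeeping you flag; the remedy you propose does not close it. With $C_3=p\mu/4$ the threshold has to be $s\ge 2A$ (your first try, $s\ge A$, only gives $\mathcal{A}V_p\le0$ there). Bounding the positive part by its supremum on the ball then yields $C_4/C_3=(2A)^{\max\{1,p/2\}}$, which is off by a factor $2^{\max\{1,p/2\}}$. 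Tuning $C_3=\theta\tfrac{p\mu}{2}$ does not help when $p>2$. With threshold $s\ge A/(1-\theta)$ and the same crude bound on the ball one gets
\[
\frac{C_4}{C_3}=\frac{A^{p/2}}{\theta(1-\theta)^{p/2-1}},
\]
and $\theta(1-\theta)^{p/2-1}<1$ for every $\theta\in(0,1)$. Even the best choice $\theta=2/p$ leaves the factor $\tfrac p2\bigl(\tfrac{p}{p-2}\bigr)^{p/2-1}$. The loss comes from discarding the term $-(1-\theta)\tfrac{p\mu}{2}s^{p/2}$ on the ball.

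The missing step is to keep that term, that is, to bound $A s^{p/2-1}-(1-\theta)s^{p/2}$ by its global maximum in $s$ instead of splitting. This is the weighted AM--GM (Young) inequality the paper uses, with weights $2/p$ and $(p-2)/p$:
\[
A\,s^{p/2-1}\le\tfrac{2}{p}A^{p/2}+\tfrac{p-2}{p}\,s^{p/2}.
\]
It gives $\mathcal{A}V_p\le-\mu V_p+\mu A^{p/2}$, so $C_3(p)=\mu$ (your $\theta=2/p$), $C_4(p)=\mu A^{p/2}$, and the ratio is exactly $A^{p/2}$.

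For $p\in[1,2]$ no split is needed at all. The bound $s^{p/2-1}\le1$ gives directly $C_3=\tfrac{p\mu}{2}$, $C_4=\tfrac{p\mu}{2}A$ and ratio $A$, whereas your split with $C_3=p\mu/4$ loses a factor $2$ there too. The extra term $C_3A^{p/2}$ you would add is never needed, since dropping half of a negative term is free.

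As written, your argument proves the drift condition with the correct order $\mathcal{O}(A^{\max\{1,p/2\}})$. It does not prove the explicit bound $C_4/C_3\le A^{\max\{1,p/2\}}$ claimed in the lemma and used in Lemmas~\ref{lemmaER1} and~\ref{lemma_crude}.
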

    \begin{proof}
        Let $x\in\mathbb{R}^d$ and $p\geq 2$, one calculates 
        \begin{align}
            \mathcal{A}V_p(x)&=-p(1+|x|^2)^{p/2-1}\langle x,h(x)\rangle+\beta^{-1}p(p-2)|x|^2(1+|x|^2)^{p/2-2}+d\beta^{-1}p(1+|x|^2)^{p/2-1}\nonumber\\
            &\leq -(\mu/2)p|x|^2(1+|x|^2)^{p/2-1}+(b+d\beta^{-1})p(1+|x|^2)^{p/2-1}+\beta^{-1}p(p-2)(1+|x|^2)^{p/2-1}\nonumber\\
            &\leq -(\mu/2)p|x|^2(1+|x|^2)^{p/2-1}+(b+d\beta^{-1}+\beta^{-1}(p-2))p(1+|x|^2)^{p/2-1}\nonumber\\
            &\leq -(\mu/2)p(1+|x|^2)(1+|x|^2)^{p/2-1}+(\mu/2+b+\beta^{-1}(d+p-2))p(1+|x|^2)^{p/2-1}\nonumber\\
            &\leq (\mu/2)p\left(A(1+|x|^2)^{p/2-1}-(1+|x|^2)^{p/2}\right).\label{lyap_est1}
        \end{align}
        The weighted arithmetic-geometric mean inequality implies $a_1^{w_1}a_2^{w_2}\leq w_1a_1+w_2a_2$, for weights living on a simplex. Using it with exponents $w_1=(p-2)/p$ and $w_2=2/p$, yields
        \begin{align*}
            A(1+|x|^2)^{p/2-1}=A^{p/2\times2/p}(1+|x|^2)^{p/2\times (p-2)/p}\leq \dfrac{2}{p}A^{p/2}+\dfrac{p-2}{p}(1+|x|^2)^{p/2}
        \end{align*}
        In view of \eqref{lyap_est1} one gets
        \begin{align*}
            \mathcal{A}V_p(x)\leq -\mu V_p(x)+\mu A^{p/2}
        \end{align*}
        When $p\in[1,2)$, one immediately gets for any $x\in\mathbb{R}^d$
        \begin{align*}
            \mathcal{A}V_p(x)\leq-(p\mu/2)V_p(x)+(p\mu/2)(1+2(b+d\beta^{-1})/\mu).
        \end{align*}
        \end{proof}
\begin{lemma}
        Let $n_0\in\mathbb{N}$, $\epsilon>0$, and $q\in(0,1)$. If a non-negative sequence $\{a_n\}_{n\in\mathbb{N}}$ satisfies
        \begin{align*}
            a_n\leq \epsilon+qa_{n-n_0},\ \forall n\geq n_0,
        \end{align*}
        then 
        \begin{align*}
            a_n\leq \dfrac{\epsilon}{1-q}+\max_{i\in\{0,\ldots,n_0-1\}}\{a_i\} q^{n/n_0-1},\ \forall n\in\mathbb{N}_0.
        \end{align*}
    \end{lemma}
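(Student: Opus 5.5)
The plan is to unroll the recursion along the residue class of $n$ modulo $n_0$, and then compare the integer number of unrolling steps with the real exponent $n/n_0-1$. Fix $n\in\mathbb{N}_0$ and write $n=kn_0+r$ with $k=\lfloor n/n_0\rfloor\in\mathbb{N}_0$ and $r\in\{0,\ldots,n_0-1\}$. Set $M:=\max_{i\in\{0,\ldots,n_0-1\}}a_i$.

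First I will show by induction on $k$ that
\begin{align*}
a_{kn_0+r}\leq \epsilon\sum_{j=0}^{k-1}q^{j}+q^{k}a_r .
\end{align*}
The case $k=0$ is the trivial identity $a_r=a_r$. For the inductive step, the index $(k+1)n_0+r$ is at least $n_0$, so the hypothesis applies and gives $a_{(k+1)n_0+r}\leq \epsilon+q\,a_{kn_0+r}$. Substituting the inductive bound yields $\epsilon+q\epsilon\sum_{j=0}^{k-1}q^j+q^{k+1}a_r$, which is the claimed bound at $k+1$. This step uses the non-negativity of $q$.

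Next I bound the geometric sum by $\sum_{j\ge0}q^j=1/(1-q)$, using $q\in(0,1)$ and $\epsilon>0$. I also bound $a_r\le M$, which holds since $r\le n_0-1$ and uses $q^k\ge0$. This gives
\begin{align*}
a_n\leq \frac{\epsilon}{1-q}+q^{k}M .
\end{align*}

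The only point needing care is the exponent. Since $k=\lfloor n/n_0\rfloor\geq n/n_0-1$ and $0<q<1$, the map $x\mapsto q^x$ is decreasing, so $q^{k}\leq q^{n/n_0-1}$. Together with $M\ge0$, which holds because the sequence is non-negative, this gives the stated inequality for every $n\in\mathbb{N}_0$.

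For $n<n_0$ the bound is consistent: there $k=0$ and $q^{n/n_0-1}\ge1$, so the right-hand side is at least $M\ge a_n$. I do not expect any genuine obstacle. The only thing to watch is the direction of the inequality when replacing the integer $k$ by the real number $n/n_0-1$, which is resolved by the monotonicity of $q^x$ for $q<1$.
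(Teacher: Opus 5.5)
Your proposal is correct and follows the paper's proof: both write $n=kn_0+r$ with $k=\lfloor n/n_0\rfloor$, unroll the recursion $k$ times, bound the geometric sum by $1/(1-q)$ and $a_r$ by the maximum, and conclude from $k\geq n/n_0-1$ and monotonicity of $x\mapsto q^x$. Your explicit induction on $k$ and your remark that $M\geq 0$ is needed in the last step only make precise what the paper leaves implicit.
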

\begin{proof}
        Write $n=kn_0+i$ with $k=\lfloor n/n_0\rfloor$ and $i\in\{0,\ldots,n_0-1\}$. Iterating the
        hypothesis $k$ times gives
        $a_n\leq \epsilon(1+q+\cdots+q^{k-1})+q^k a_i\leq \epsilon/(1-q)+q^{k}\max_{i\in\{0,\ldots,n_0-1\}}a_i$,
        and $q^k\leq q^{n/n_0-1}$ since $k\geq n/n_0-1$.
    \end{proof}
\begin{lemma}\label{lemma_crude}
        Let Assumptions \hyperlink{A2}{A2}-\hyperlink{A4}{A4} hold and $\lambda_0\in(0,1/(4\mu))$.
        Then, for every $\lambda\in(0,\lambda_0)$ and $k\in\mathbb{N}_0$,
        \begin{align*}
            W_2\left(\mathcal{L}(\theta_k^{\lambda}),\pi_{\beta}\right)\leq \left(2C_{B_1}+2A(2)\right)^{1/2}.
        \end{align*}
    \end{lemma}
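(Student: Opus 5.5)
We bound $W_2(\mathcal{L}(\theta_k^{\lambda}),\pi_\beta)$ using the independent (product) coupling of $\mathcal{L}(\theta_k^{\lambda})$ and $\pi_\beta$. This requires no information about how the two laws relate, only their second moments, which are already controlled uniformly in $k$.

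Concretely, let $X\sim\mathcal{L}(\theta_k^{\lambda})$ and $Y\sim\pi_\beta$ be independent. Since $(X,Y)$ is an admissible transference plan, and using $|x-y|^2\le 2|x|^2+2|y|^2$, we get
\[
W_2^2\bigl(\mathcal{L}(\theta_k^{\lambda}),\pi_\beta\bigr)\le \mathbb{E}|X-Y|^2\le 2\,\mathbb{E}|\theta_k^{\lambda}|^2+2\int_{\mathbb{R}^d}|y|^2\,\pi_\beta(dy).
\]
For the first term, Lemma~\hyperlink{lemma_MB}{2} gives $\mathbb{E}|\theta_k^{\lambda}|^2\le C_{B_1}$ for every $k\in\mathbb{N}$ under the stepsize restriction $\lambda<\lambda_0<1/(4\mu)$. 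For $k=0$ the bound holds trivially, since $C_{B_1}\ge\mathbb{E}|\theta_0|^2$, which is finite by \hyperlink{A4}{A4}.

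For the second term we use the Lyapunov argument from the proof of Lemma~\ref{lemmaER1}, now at order $2$. By invariance, $\int\mathcal{A}V_2\,d\pi_\beta=0$. Combining this with Lemma~\ref{lemmaB2} for $p=2$ gives
\[
\int|y|^2\,d\pi_\beta\le\int V_2\,d\pi_\beta\le C_4(2)/C_3(2)\le A(2).
\]
Substituting both bounds and taking square roots yields the claimed estimate $(2C_{B_1}+2A(2))^{1/2}$.

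The only delicate step is justifying $\int\mathcal{A}V_2\,d\pi_\beta=0$, since $V_2$ is unbounded and $h$ is discontinuous. To handle this, we apply It\^o's formula to $V_2(Z_t)$ with $Z_0\sim\pi_\beta$. Stationarity gives $\mathbb{E}V_2(Z_t)=\mathbb{E}V_2(Z_0)$, provided $\int V_2\,d\pi_\beta<\infty$. That finiteness follows from Gaussian tails of $e^{-\beta u}$, which are a consequence of dissipativity (Remark~\ref{remark_diss}). An alternative that avoids this issue entirely is to take expectations of the drift inequality of Lemma~\ref{lemmaB2} along the stationary process and apply Gr\"onwall, which gives $\mathbb{E}V_2(Z_t)\le e^{-C_3t}\mathbb{E}V_2(Z_0)+C_4/C_3$. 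Stationarity plus letting $t\to\infty$ then yields the same bound.
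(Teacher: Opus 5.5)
Your proposal is correct and is essentially the paper's own argument: the independent coupling, the uniform bound $\mathbb{E}|\theta_k^{\lambda}|^2\le C_{B_1}$, and the invariance identity $\int\mathcal{A}V_2\,d\pi_\beta=0$ combined with Lemma~\ref{lemmaB2} at $p=2$ to get $\int V_2\,d\pi_\beta\le A(2)$. Your justification of that identity goes beyond what the paper writes (it only points to the proof of Lemma~\ref{lemmaER1}), but the Gr\"onwall alternative started from $Z_0\sim\pi_\beta$ does not avoid the issue entirely: letting $t\to\infty$ in $M\le e^{-C_3t}M+C_4/C_3$ still requires $M=\int V_2\,d\pi_\beta<\infty$, which your Gaussian-tail remark supplies.
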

    \begin{proof}
    Let $\theta_{\infty}\sim\pi_{\beta}$ be independent of $\theta_k^{\lambda}$. The independent
    coupling and Lemma \hyperlink{lemma_MB}{2} give
    \begin{align}
        W^2_2(\mathcal{L}(\theta_k^{\lambda}),\pi_{\beta})\leq\mathbb{E}\left[|\theta_k^{\lambda}-\theta_{\infty}|^2\right]\leq2\mathbb{E}\left[|\theta_k^{\lambda}|^2\right]+2\int_{\mathbb{R}^d}|x|^2\,\pi_{\beta}(dx)\leq 2C_{B_1}+2\int_{\mathbb{R}^d}V_2(x)\,\pi_{\beta}(dx).\nonumber
        \end{align}
        Since $\pi_{\beta}$ is the invariant measure of the SDE \eqref{SDE_1}, one has
        $\int_{\mathbb{R}^d}\mathcal{A}V_2\,d\pi_{\beta}=0$ (see the proof of
        Lemma~\ref{lemmaER1}), so Lemma~\ref{lemmaB2} at $p=2$ yields
        \begin{align*}
            \int_{\mathbb{R}^d}V_2\,d\pi_{\beta}\leq \dfrac{C_4(2)}{C_3(2)}=A(2),
        \end{align*}
        which concludes the proof.
    \end{proof}
\begin{lemma}\label{lem:proxy}
The strongly convex proxy.\\
Let $\pi_{\beta}\propto e^{-\beta u}$ and $W:=\beta u$. Suppose $u$ satisfies
\hyperlink{A1}{A1}, \hyperlink{A2}{A2} and \hyperlink{A3}{A3}. Then there exists
$W_{0}:\mathbb{R}^{d}\to\mathbb{R}$, $\beta\mu$-strongly convex on $\mathbb{R}^{d}$, such that
$W_{0}=W$ on $\mathcal{B}^{c}(0,R)$. Writing $\tilde{W}:=W-\tfrac{\beta\mu}{2}|\cdot|^{2}$
and $W_{0}=:V+\tfrac{\beta\mu}{2}|\cdot|^{2}$, the convex part satisfies
\[
\inf_{|y|=R}\tilde{W}(y)\;\le\;V(x)\;\le\;\sup_{|y|=R}\tilde{W}(y),
\qquad \forall x\in\mathcal{B}(0,R),
\]
and $\psi:=W-W_{0}$ is supported in $\overline{\mathcal{B}}(0,R)$.
\end{lemma}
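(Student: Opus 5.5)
The plan is to build $W_0$ by replacing $\tilde W=W-\tfrac{\beta\mu}{2}|\cdot|^2$ inside the ball with a convex extension of its exterior values. Write $\tilde h:=\beta h-\beta\mu\,\mathrm{id}$ for the corresponding subgradient selection of $\tilde W$. By \hyperlink{A2}{A2}, $\langle\tilde h(x)-\tilde h(y),x-y\rangle\ge0$ whenever $|x-y|\ge R$, and by \hyperlink{A1}{A1} the same quantity is $\ge-\beta(L+\mu)|x-y|^2$ for all $x,y$. Let $E:=\mathcal{B}^c(0,R)$ and set $f:=\tilde W$ on $E$ and $f:=+\infty$ on $\mathcal{B}(0,R)$. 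Define $V$ as the lower convex envelope of $f$, that is, the supremum of all affine functions lying below $\tilde W$ on $E$. Then $V$ is convex, so $W_0:=V+\tfrac{\beta\mu}{2}|\cdot|^2$ is $\beta\mu$-strongly convex. Moreover $\psi=W-W_0$ vanishes wherever $V=\tilde W$, so everything reduces to showing that $V=\tilde W$ on $E$.

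That identity holds once every $x\in E$ carries an affine minorant of $\tilde W|_E$ that is exact at $x$. The candidate is the tangent $\ell_x(z)=\tilde W(x)+\langle\tilde h(x),z-x\rangle$. For $z\in E$ one writes, exactly as in the proof of Lemma~\ref{lemmaER1},
\[
\tilde W(z)-\ell_x(z)=\int_0^1\frac1t\,\big\langle \tilde h(x+t(z-x))-\tilde h(x),\,t(z-x)\big\rangle\,dt .
\]
The integrand is nonnegative as soon as $t|z-x|\ge R$, by \hyperlink{A2}{A2}. The remaining short piece $t<R/|z-x|$ is only controlled from below by \hyperlink{A1}{A1}.

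I expect this short-range part to be the main obstacle, since monotonicity at scale $R$ does not by itself give local convexity of $\tilde W$ on $E$. I would first try to exploit that both endpoints lie outside the ball: the part of the segment near $x$ can be re-expanded around a point at distance $\ge R$, or reflected through the sphere, so that the pairs involved are $R$-separated. If this fails, the fallback is to enlarge the radius to some $R'$ comparable to $R$, on whose complement $\tilde W$ is convex by a mollification argument in the spirit of Lemma~\ref{lem:pr-chord}. The constants in Lemma~\ref{lem:lsi} would then be adjusted accordingly.

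The two-sided bound on $V$ inside the ball follows from convexity. For the upper bound, any $x\in\mathcal{B}(0,R)$ lies on a chord joining two points of the sphere, and convexity gives $V(x)\le\sup_{|y|=R}V(y)=\sup_{|y|=R}\tilde W(y)$. For the lower bound, the constant $\inf_{|y|=R}\tilde W$ should be an affine minorant of $\tilde W$ on $E$, because along rays $\tilde W$ is nondecreasing past the sphere, which follows from radial monotonicity of $\tilde h$. Since $V$ is the largest convex minorant, it follows that $V\ge\inf_{|y|=R}\tilde W$.
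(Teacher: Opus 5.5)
\textbf{The gap: the identity $V=\tilde W$ on $E$ is never proved, and it cannot be proved from A1--A3 as stated.} Your reduction is the same construction the paper uses: take $V$ to be the largest convex minorant of $\tilde W|_{E}$, and everything then rests on $V=\tilde W$ on $E=\mathcal{B}^c(0,R)$. Your proposal leaves exactly this step open. Neither of your repairs can close it, because \hyperlink{A2}{A2} as stated only constrains pairs with $|x-y|\ge R$. It carries no curvature information at any radius.

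A concrete counterexample is $u(x)=|x|^2+4\sin x_1$.
\begin{itemize}
\item Since $|\cos a-\cos b|\le 2$, one gets $\langle h(x)-h(y),x-y\rangle\ge 2|x-y|^2-8|x-y|$, so \hyperlink{A2}{A2} holds with $\mu=1$ and $R=8$.
\item \hyperlink{A1}{A1} holds with $L=2$, and \hyperlink{A3}{A3} holds with $m=4$, $K=2$, $p=1$.
\item Yet $\partial_{11}u=2-4\sin x_1$ equals $-2$ at points arbitrarily far from the origin.
\end{itemize}
Hence $\tilde W$, and even $W$, is not convex on the complement of any ball. This rules out your fallback: Lemma~\ref{lem:pr-chord} goes from a Hessian bound outside a ball to \hyperlink{A2}{A2}, and there is no converse. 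It equally rules out any re-expansion or reflection of short segments.

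In fact the statement itself fails for this $u$. Having $W_0=W$ on $\mathcal{B}^c(0,R)$ with $W_0$ being $\beta\mu$-strongly convex would force $\partial_{11}W\ge\beta\mu$ along segments lying in $\mathcal{B}^c(0,R)$.

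The paper's proof gets past this point only by asserting in its first line that $W$ is $\beta\mu$-strongly convex on $\mathcal{B}^c(0,R)$. That amounts to assuming $\nabla^2u\succeq\mu\,\mathrm{I}_d$ a.e.\ on $\{|x|\ge R\}$, which is the form in which the examples are actually checked before Lemma~\ref{lem:pr-chord} is invoked. So what is missing is that hypothesis, not a sharper estimate.

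Even granting that hypothesis, you would still need a further step. You must pass from convexity of $\tilde W$ near each point of the non-convex set $E$ to the tangent inequality $\tilde W\ge\ell_x$ on all of $E$, since your segment from $x$ to $z$ may cross the ball. This needs its own argument, and it is false for $d=1$, where $E$ consists of two half-lines. For example, take $\tilde W=(|x|-2R)^2$ on $|x|\ge R$: any convex extension would need $V(R)\le\tfrac14V(-2R)+\tfrac34V(2R)=0<R^2$.

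\textbf{The lower bound inside the ball is also incorrect as written.} \hyperlink{A2}{A2} at $y=0$ only yields $\langle\tilde h(x),x\rangle\ge\beta\langle h(0),x\rangle$, which can be negative, so $\tilde W$ need not increase along rays. For instance, with $u=\tfrac{\mu}{2}|x-a|^2$ and $a\neq0$, the function $\tilde W$ is affine and unbounded below on $E$. So the constant $\inf_{|y|=R}\tilde W$ is not a minorant there.

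Once $V=\tilde W$ on $E$ is available, argue with $m:=\inf_{|y|=R}\tilde W$ instead. The convex set $\{V<m\}$ misses the sphere, so if it meets $\mathcal{B}(0,R)$ it is contained in it. Then $\max\{V,m\}$ coincides with $\tilde W$ on $E$, is convex, and exceeds $V$ somewhere, which contradicts the maximality of $V$.

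Your chord argument for the upper bound is fine.
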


\begin{proof}
As $W=\beta u$ is $\beta\mu$-strongly convex on $\mathcal{B}^{c}(0,R)$, the function
$\tilde{W}=W-\tfrac{\beta\mu}{2}|\cdot|^{2}$ is convex on $\mathcal{B}^{c}(0,R)$. Let $V$ be the convex hull of
$\tilde{W}|_{\mathcal{B}^{c}(0,R)}$, the largest convex minorant on $\mathbb{R}^{d}$,
\[
V(x)=\inf\Big\{\textstyle\sum_{i}\lambda_{i}\tilde{W}(x_{i})\;:\;\lambda_{i}\ge0,\
\sum_{i}\lambda_{i}=1,\ \sum_{i}\lambda_{i}x_{i}=x,\ x_{i}\in\mathcal{B}^{c}(0,R)\Big\}
\]
(see Proposition~2.13 in \cite{Rockafellar_variational}), finite by the coercivity implied
by Remark~\ref{remark_diss}. Then $V$ is convex on $\mathbb{R}^{d}$ and $V=\tilde{W}$ on
$\mathcal{B}^{c}(0,R)$. Moreover, Lemma~B.2 in \cite{Flammarion_supp}, sharpens the bound trivial bounds from $|y|\ge R$ to the
sphere $|y|=R$. Define $W_{0}:=V+\tfrac{\beta\mu}{2}|\cdot|^{2}$. Since
$V=W_{0}-\tfrac{\beta\mu}{2}|\cdot|^{2}$ is convex by construction, $W_{0}$ is $\beta\mu$-strongly convex
on $\mathbb{R}^{d}$ and we have $W_{0}=\tilde{W}+\tfrac{\beta\mu}{2}|\cdot|^{2}=W$ on
$\mathcal{B}^{c}(0,R)$. Hence $\psi=W-W_{0}=\tilde{W}-V$ vanishes on $\mathcal{B}^{c}(0,R)$.
\end{proof}

\begin{lemma}\label{lem:lsi}
The oscillation bound.\\
Under the hypotheses of Lemma~\ref{lem:proxy}, the perturbation $\psi=W-W_{0}$ satisfies
\[
\operatorname{osc}(\psi):=\sup_{\mathbb{R}^{d}}\psi-\inf_{\mathbb{R}^{d}}\psi
\;\le\;\beta\Big[\,4R\,(m+KR^{p})+\tfrac{\mu}{2}R^{2}\,\Big].
\]
Consequently $\pi_{\beta}\propto e^{-\beta U}$ satisfies the log-Sobolev inequality \eqref{eq_lsi} with
\[
\rho(\beta)\;\le\;\frac{2}{\beta\mu}\,
\exp\!\Big(\beta\big[\,4R\,(m+KR^{p})+\tfrac{\mu}{2}R^{2}\,\big]\Big).
\]
\end{lemma}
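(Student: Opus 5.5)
Since $\psi=W-W_0=\tilde W-V$ vanishes on $\mathcal{B}^c(0,R)$ by Lemma~\ref{lem:proxy}, we have $\sup\psi\ge 0\ge\inf\psi$. Both extrema are then attained on $\overline{\mathcal{B}}(0,R)$, so it suffices to bound $\sup_{|x|\le R}|\psi(x)|$ on the ball, or more precisely $\sup\psi-\inf\psi$. On $\mathcal{B}(0,R)$ we write $\psi(x)=\tilde W(x)-V(x)$. The two-sided sandwich of Lemma~\ref{lem:proxy}, $\inf_{|y|=R}\tilde W(y)\le V(x)\le\sup_{|y|=R}\tilde W(y)$, gives
\[
\tilde W(x)-\sup_{|y|=R}\tilde W(y)\;\le\;\psi(x)\;\le\;\tilde W(x)-\inf_{|y|=R}\tilde W(y),\qquad |x|<R .
\]
Hence $\operatorname{osc}(\psi)\le 2\sup_{|x|\le R,\,|y|=R}|\tilde W(x)-\tilde W(y)|$, and the problem reduces to a Lipschitz-type bound for $\tilde W$ on the closed ball. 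The degenerate case $\psi\equiv 0$ is included trivially.

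For that bound we use the fundamental theorem of calculus along segments, exactly as in the proof of Lemma~\ref{lemmaER1}. The function $t\mapsto u(y+t(x-y))$ is locally Lipschitz by semi-convexity, so
\[
u(x)-u(y)=\int_0^1\langle h(y+t(x-y)),x-y\rangle\,dt .
\]
Since the segment stays in $\overline{\mathcal{B}}(0,R)$, \hyperlink{A3}{A3} gives $|u(x)-u(y)|\le (m+KR^p)|x-y|\le 2R(m+KR^p)$. The quadratic part contributes $\tfrac{\mu}{2}\big||x|^2-|y|^2\big|\le\tfrac{\mu}{2}R^2$. Multiplying by $\beta$ then yields $|\tilde W(x)-\tilde W(y)|\le\beta[2R(m+KR^p)+\tfrac{\mu}{2}R^2]$.

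The factor $2$ in front of this estimate would double the quadratic term, giving $\mu R^2$ where the statement has $\tfrac{\mu}{2}R^2$. To obtain the sharper constant, I would avoid using the sandwich symmetrically. For the lower side, note that $-\tfrac{\beta\mu}{2}|x|^2\ge-\tfrac{\beta\mu}{2}R^2$ enters only once. Then $\sup\psi-\inf\psi=\sup_x\sup_{x'}[\tilde W(x)-\tilde W(x')-(V(x)-V(x'))]$. Bounding $|W(x)-W(x')|$ and $|V(x)-V(x')|$ separately by the $u$-Lipschitz part, each at most $2\beta R(m+KR^p)$, gives the $4R(m+KR^p)$ term. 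The quadratic difference then appears only once, as $\tfrac{\beta\mu}{2}\big||x|^2-|x'|^2\big|\le\tfrac{\beta\mu}{2}R^2$. For $V$, the variation over the ball is controlled by the sandwich, which bounds it by the variation of $\tilde W$ on the sphere. On the sphere the quadratic term is constant, so only the $u$-part, at most $2\beta R(m+KR^p)$, survives. This bookkeeping, and making sure $V$'s oscillation is measured on the sphere where $|y|^2\equiv R^2$, is the step I expect to be delicate.

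Finally, for the LSI, $W_0$ is $\beta\mu$-strongly convex but possibly nonsmooth. The Bobkov--Ledoux criterion \cite[Proposition~3.1]{BobkovLedoux} nevertheless gives $\pi_0\propto e^{-W_0}$ an LSI with constant $2/(\beta\mu)$ in the normalization of \eqref{eq_lsi}. Since $\pi_\beta\propto e^{-\psi}\pi_0$ with $\psi$ bounded, the Holley--Stroock perturbation principle \cite{Bakry_ref} multiplies the constant by at most $e^{\operatorname{osc}(\psi)}$, which gives the stated $\rho(\beta)$.
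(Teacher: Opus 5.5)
Your proposal is correct and, once you adopt the refined bookkeeping of your third paragraph, it is essentially the paper's proof. The paper likewise uses the sandwich of Lemma~\ref{lem:proxy} to get $\operatorname{osc}(\psi)\le\operatorname{OSC}_R(\tilde W)+\operatorname{OSC}_{\partial\mathcal{B}}(\tilde W)$, notes that the quadratic term is constant on the sphere so $\tfrac{\beta\mu}{2}R^2$ enters only once, bounds $\operatorname{OSC}_R(W)\le 2\beta R(m+KR^p)$ by the fundamental theorem of calculus along segments, and then applies Bobkov--Ledoux and Holley--Stroock; your first, symmetric use of the sandwich only yields $\mu R^2$, as you noticed, so that variant should simply be dropped.
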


\begin{proof}
Fix $x,y\in\mathcal{B}(0,R)$ and
$\gamma_{t}=y+t(x-y)\in\mathcal{B}(0,R)$. By \hyperlink{A1}{A1} the function
$\Phi:=W+\tfrac{\beta L}{2}|\cdot|^{2}$ is convex, so the convex fundamental theorem of calculus so the convex fundamental theorem of calculus \cite[Theorem~D.2.3.4]{HUL}
gives $\Phi(x)-\Phi(y)=\int_{0}^{1}\langle g_{t},x-y\rangle\,dt$ with
$g_{t}\in\partial\Phi(\gamma_{t})=\beta\,\partial u(\gamma_{t})+\beta L\gamma_{t}$. Choosing
$g_{t}=\beta h_{t}+\beta L\gamma_{t}$, $h_{t}\in\partial u(\gamma_{t})$, and using
$\int_{0}^{1}\langle\beta L\gamma_{t},x-y\rangle\,dt=\tfrac{\beta L}{2}(|x|^{2}-|y|^{2})$,
the semi-convexity quadratic cancels, hence
\[
W(x)-W(y)=\beta\!\int_{0}^{1}\langle h_{t},x-y\rangle\,dt .
\]
Since $|h_{t}|\le m+K|\gamma_{t}|^{p}\le m+KR^{p}$ by \hyperlink{A3}{A3} and $|x-y|\le 2R$,
\[
\operatorname{OSC}_{R}(W):=\sup_{x,y\in\mathcal{B}(0,R)}|W(x)-W(y)|\;\le\;2R\beta\,(m+KR^{p}).
\]
As $\psi=\tilde{W}-V$ vanishes on
$\mathcal{B}^{c}(0,R)$, its oscillation is attained on $\mathcal{B}(0,R)$. The bound of
Lemma~\ref{lem:proxy} gives 
\[
\sup_{\mathcal{B}(0,R)}\psi\le\sup_{\mathcal{B}(0,R)}\tilde{W}-\inf_{|y|=R}\tilde{W},
\qquad
\inf_{\mathcal{B}(0,R)}\psi\ge\inf_{\mathcal{B}(0,R)}\tilde{W}-\sup_{|y|=R}\tilde{W},
\]
hence $\operatorname{osc}(\psi)\le\operatorname{OSC}_{R}(\tilde{W})
+\operatorname{OSC}_{\partial\mathcal{B}}(\tilde{W})$, with
$\operatorname{OSC}_{\partial\mathcal{B}}$ the oscillation over $\{|y|=R\}$. On that sphere
$\tilde{W}=W-\tfrac{\beta\mu}{2}R^{2}$ differs from $W$ by a constant, so
$\operatorname{OSC}_{\partial\mathcal{B}}(\tilde{W})=\operatorname{OSC}_{\partial\mathcal{B}}(W)
\le\operatorname{OSC}_{R}(W)$, while
$\operatorname{OSC}_{R}(\tilde{W})\le\operatorname{OSC}_{R}(W)+\tfrac{\beta\mu}{2}R^{2}$.
Therefore
\[
\operatorname{osc}(\psi)\;\le\;2\operatorname{OSC}_{R}(W)+\tfrac{\beta\mu}{2}R^{2}
\;\le\;\beta\Big[\,4R\,(m+KR^{p})+\tfrac{\mu}{2}R^{2}\,\Big].
\]
The reference $\pi_{0}\propto e^{-W_{0}}$ has $\beta\mu$-strongly
convex potential. For $t,s>0$, $t+s=1$, the identity
$t|x|^{2}+s|y|^{2}-|tx+sy|^{2}=ts|x-y|^{2}$ and convexity of
$V=W_{0}-\tfrac{\beta\mu}{2}|\cdot|^{2}$ give
\[
tW_{0}(x)+sW_{0}(y)-W_{0}(tx+sy)\;\ge\;\frac{\beta\mu\,ts}{2}\,|x-y|^{2},
\]
which is condition (3.1) of \cite{BobkovLedoux} with $c=\beta\mu$. By Proposition~3.1 therein,
 without requiring additional smoothness of the potential, guarantees that $\pi_{0}$ satisfies the log-Sobolev inequality with
$\rho_{0}\le\tfrac{2}{\beta\mu}$. Since $\tfrac{d\pi_{\beta}}{d\pi_{0}}\propto e^{-\psi}$ with
$\psi$ bounded, the Holley--Stroock perturbation principle (Theorem~1, Chapter~5 in
\cite{Bakry_ref}) yields
\[
\rho(\beta)\;\le\;\rho_{0}\,e^{\operatorname{osc}(\psi)}
\;\le\;\frac{2}{\beta\mu}\,
\exp\!\Big(\beta\big[\,4R\,(m+KR^{p})+\tfrac{\mu}{2}R^{2}\,\big]\Big),
\]
which is \eqref{eq_lsi} with the stated constant.
\end{proof}
\begin{remark}\label{expo_lsi}
The exponential dependence on $\beta$ in Lemma~[13] is optimal. For a target that
is nonconvex on a compact set, such as a mixture of Gaussians or a double-well
potential, the logarithmic Sobolev and Poincar\'e constants grow
like $e^{\beta\Delta}$, where $\Delta$ is the energy barrier separating the wells. This constant was shown to be sharp in \cite{MenzSchlichting2014}.
\end{remark}
\section{Nanochat Architecture \& Bounds}\label{sec:nanochat}

The potentials arising in the pretraining of language models are simultaneously
non-convex, non-smooth and of superlinear gradient growth, which is precisely
the regime this work addresses. It is nevertheless one thing to assert that a
class of potentials falls within the scope of
\hyperlink{A1}{A1}--\hyperlink{A3}{A3}, and quite another to verify it for a
fully specified architecture. The purpose of this section is to carry out that
verification for one such architecture, and to exhibit the resulting constants
explicitly in terms of its depth, width and vocabulary.
We work with the architecture of \emph{nanochat}, a compact version of ChatGPT2 created by Karpathy.

\medskip
\noindent The section is organized as follows. Sections~\ref{sub:nc-arch} and
\ref{sub:nc-prim} lay out the architecture and gather the exact bounds on the
scalar and vector nonlinearities the model uses, all of which are uniform in the
parameter. Section~\ref{sub:nc-forward} writes the forward map and the potential
$u$, and Section~\ref{sub:nc-hess} the curvature of $u$. Sections~\ref{sub:nc-calc}
and \ref{sub:nc-rec} introduce an exponent tracking notation for polynomially
bounded maps, which enables us in
Section~\ref{sub:nc-bound}, to obtain the curvature bound
\begin{align*}
  \nabla^2_{\theta}u(\theta)\ \succeq\ -c^{\star}\bigl(1+|\theta|\bigr)^{12\mathsf{L}+2}\,\Idm .
\end{align*}
Section~\ref{sub:nc-reg} then adds a higher-order regularization term and
recovers \hyperlink{A1}{A1}--\hyperlink{A3}{A3} for the regularized potential.

\medskip
\noindent Throughout this section the depth of the network is denoted
$\mathsf{L}$, as $L$ already denotes the semi-convexity constant of \hyperlink{A1}{A1}.

\subsection{Architecture and parameters}\label{sub:nc-arch}

The architecture introduces the following constants
\begin{align*}
\begin{array}{llll}
 \mathsf{L} & \text{number of blocks (depth)}, & \dm & \text{model width (token space)},\\
 H & \text{number of heads}, & \dhead=\dm/H & \text{head width},\\
 V & \text{vocabulary size}, & \dff=4\dm & \text{width of the perceptron},\\
 T & \text{context length}, & \varepsilon>0 & \text{normalisation floor},\\
 \tau>0 & \text{logit softcap}, & c_{qk}>0 & \text{query--key scale}.
\end{array}
\end{align*}
\medskip
\noindent The parameter vector is the concatenation
\begin{align}\label{eq:nc-theta}
  \theta=\Bigl(E,\;\bigl\{W_Q^{\ell},W_K^{\ell},W_V^{\ell},W_O^{\ell},
  W_1^{\ell},W_2^{\ell}\bigr\}_{\ell=1}^{\mathsf{L}},\;W_{\mathrm{lm}}\Bigr)
  \in\mathbb{R}^{\dpar},
\end{align}
with
\begin{align*}
  E\in\mathbb{R}^{V\times \dm},\quad
  W_Q^\ell,W_K^\ell,W_V^\ell,W_O^\ell\in\mathbb{R}^{\dm\times \dm},\quad
  W_1^\ell\in\mathbb{R}^{\dff\times \dm},\quad
  W_2^\ell\in\mathbb{R}^{\dm\times \dff},\quad
  W_{\mathrm{lm}}\in\mathbb{R}^{V\times \dm},
\end{align*}
so that the parameter space has dimension
\begin{align}\label{eq:nc-dim}
  \dpar=2V\dm+12\mathsf{L}\dm^{2},\qquad\text{and}\qquad
  |\theta|^{2}=\Fn{E}^{2}+\sum_{\ell=1}^{\mathsf{L}}\bigl(\Fn{W_Q^\ell}^{2}+\cdots
  +\Fn{W_2^\ell}^{2}\bigr)+\Fn{W_{\mathrm{lm}}}^{2}.
\end{align}
We write $W_{Q,i}^\ell\in\mathbb{R}^{\dhead\times\dm}$ for the $i$-th head slice
of $W_Q^\ell$, and likewise for $K$ and $V$. Notice that \emph{nanochat} doesn't use grouped query attention, in which
$W_{K}^\ell,W_{V}^\ell\in\mathbb{R}^{(H_{\mathrm{kv}}\dhead)\times\dm}$ with
$H_{\mathrm{kv}}\mid H$, but since the analysis below uses nothing about the
weight blocks beyond
\begin{align}\label{eq:nc-opF}
  \opn{W}\le\Fn{W}\le|\theta|,
\end{align}
the proof's argument go through for more general LLMs.

\subsection{Additional Notation}
For $j\in\{1,\dots,n\}$ we
write $e_j\in\mathbb{R}^{n}$ for the $j$-th standard basis vector and
$\mathbf{1}\in\mathbb{R}^{n}$ for the vector all of whose entries equal $1$. Therefore $E^{\!\top}e_{x_t}$ as in
\eqref{eq:nc-emb} is understood as the $x_t$-th row of $E$, and
$\langle e_{y_t},\xi\rangle=\xi_{y_t}$. Let
$F:\mathbb{R}^{\dpar}\to\mathbb{R}^{N}$ and let $v,w\in\mathbb{R}^{\dpar}$
directions. For $N=1$ we write $\nabla_\theta F(\theta)\in\mathbb{R}^{\dpar}$
for the gradient and
$\nabla^{2}_{\theta}F(\theta)\in\mathbb{R}^{\dpar\times\dpar}$ for the Hessian
matrix, and denote their action on those directions by
\begin{align}\label{eq:nc-act}
  \nabla_\theta F[v]:=v^{\!\top}\nabla_\theta F,
  \qquad
  \nabla^{2}_{\theta}F[v,w]:=v^{\!\top}\bigl(\nabla^{2}_{\theta}F\bigr)w ,
\end{align}
that is the directional derivative of $F$ along $v$ and the bilinear form of the
Hessian. For $N>1$ we abuse the notation in the following sense,
\begin{align}\label{eq:nc-actN}
  \nabla_\theta F[v]:=\bigl(\nabla_\theta F_i[v]\bigr)_{i=1}^{N}
  ,
  \qquad
  \nabla^{2}_{\theta}F[v,w]:=\bigl(\nabla^{2}_{\theta}F_i[v,w]\bigr)_{i=1}^{N},
\end{align}
both elements of $\mathbb{R}^{N}$. Essentially $\nabla_\theta F$ denotes the
$N\times\dpar$ Jacobian, whereas $\nabla^{2}_{\theta}F$ is a bilinear map into
$\mathbb{R}^{N}$. The associated operator norms are
\begin{align}\label{eq:nc-normorms}
  \opn{\nabla_\theta F}:=\sup_{|v|=1}\bigl|\nabla_\theta F[v]\bigr|,
  \qquad
  \opn{\nabla^2_\theta F}:=\sup_{|v|=|w|=1}\bigl|\nabla^2_\theta F[v,w]\bigr|,
\end{align}
so that for $N=1$ they reduce to the Euclidean norm of the gradient and the
spectral norm of the Hessian, while for $N>1$ the second is the corresponding
tensor norm. Derivatives in a variable other than $\theta$ always carry that variable
as a subscript.

\subsection{Nonlinearities}\label{sub:nc-prim}

We record exact bounds for the nonlinearities of the model,
including their first two derivatives. Within this subsection $k$ denotes a
fixed width, which appears either as $k=\dm$ or
$k=\dhead$.

\begin{definition}[Normalisation]\label{def:nc-N}
For $x\in\mathbb{R}^{k}$ and $\varepsilon>0$,
\begin{align}\label{eq:nc-norm}
  \Nrm_k(x):=\frac{x}{s_k(x)},\qquad
  s_k(x):=\Bigl(\tfrac{1}{k}|x|^{2}+\varepsilon\Bigr)^{1/2}.
\end{align}
\end{definition}

\begin{definition}[Softmax, softcap, ReLU-squared, sigmoid]\label{def:nc-prim}
For $z\in\mathbb{R}^{n}$ let $\smx(z)_i:=e^{z_i}/\sum_{j}e^{z_j}$, and for
$z\in\mathbb{R}$
\begin{align*}
  \chi(z):=\tau\tanh(z/\tau),\qquad
  \varphi(z):=\max(z,0)^{2},\qquad
  \varsigma(z):=(1+e^{-z})^{-1},
\end{align*}
each extended to vectors componentwise.
\end{definition}

\noindent Since $\chi,\varphi$ and $\varsigma$ act componentwise, their
differentials are diagonal, i.e. $\nabla\chi(z)=\mathrm{diag}(\chi'(z_j))$ and
$\nabla^2\chi(z)$ is the diagonal tensor with entries $\chi''(z_j)$. Hence
$\opn{\nabla\chi(z)}\le\sup_{t\in\mathbb{R}}|\chi'(t)|$ and
$\opn{\nabla^2\chi(z)}\le\sup_{t\in\mathbb{R}}|\chi''(t)|$, and likewise for
$\varphi$ and $\varsigma$. It therefore suffices to bound the one dimensional
analogue.

\begin{lemma}\label{lem:nc-prim}
Write $s=s_k(x)$ as in \eqref{eq:nc-norm} and $\bar\sigma=\smx(z)$.
\begin{itemize}\itemsep2pt
\item[\rm(i)] $\displaystyle |\Nrm_k(x)|\le\sqrt{k}$,\qquad
 $\displaystyle \opn{\nabla \Nrm_k(x)}\le\frac1s\le\varepsilon^{-1/2}$,\qquad
 $\displaystyle \opn{\nabla^{2}\Nrm_k(x)}\le\frac{6}{\sqrt{k}\,s^{2}}
 \le\frac{6}{\sqrt{k}\,\varepsilon}$.
\item[\rm(ii)] $\nabla\smx(z)=\mathrm{diag}(\bar\sigma)-\bar\sigma\bar\sigma^{\!\top}\succeq0$ and
 \begin{align*}
  \opn{\nabla\smx(z)}\le\tfrac12,\quad
  \opn{\nabla^{2}\smx(z)}\le1,\qquad
  \opn{\nabla\smx(z)}_{\infty\to1}\le1,\quad
  \opn{\nabla^{2}\smx(z)}_{\infty\to1}\le2 .
 \end{align*}
\item[\rm(iii)] $|\chi(z)|\le\tau$,\qquad $|\chi'(z)|\le1$,\qquad
 $\displaystyle |\chi''(z)|\le\frac{4}{3\sqrt3\,\tau}=\frac{4\sqrt3}{9\tau}$.
\item[\rm(iv)] $|\varphi(z)|\le|z|^{2}$,\qquad
 $\opn{\nabla\varphi(z)}\le2|z|$,\qquad $\opn{\nabla^{2}\varphi}\le2$ and
 for $\varphi_1(z)=\max(z,0)$ one has $|\varphi_1(z)|\le|z|$,
 $\opn{\nabla\varphi_1}\le1$ and $\nabla^{2}\varphi_1=0$ a.e.
\item[\rm(v)] $|\varsigma|\le1$,\qquad $|\varsigma'|\le\tfrac14$,\qquad
 $\displaystyle|\varsigma''|\le\frac{1}{6\sqrt3}$.
\end{itemize}
Here the mixed norm is defined as$\opn{M}_{\infty\to1}:=\sup_{|v|_\infty\le1}|Mv|_1$.
\end{lemma}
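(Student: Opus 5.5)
The plan is to treat each item separately by writing the first two derivatives explicitly and bounding them with elementary inequalities. For the componentwise maps (iii)--(v), the reduction recorded just before the lemma means only the scalar derivatives are needed.

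For (i), the first claim is immediate: $|\Nrm_k(x)|=|x|/s\le|x|/(|x|^2/k)^{1/2}=\sqrt k$. Differentiating $x/s$ with $\nabla s=x/(ks)$ gives
\[
\nabla\Nrm_k(x)=\frac1s\Bigl(\mathrm{I}-\frac{xx^{\top}}{ks^{2}}\Bigr).
\]
This symmetric matrix has eigenvalue $1/s$ on $x^\perp$ and $\varepsilon/s^{3}\in[0,1/s]$ along $x$, so $\opn{\nabla\Nrm_k}\le 1/s\le\varepsilon^{-1/2}$. Differentiating once more produces three symmetric terms of the form $-\frac{1}{ks^3}(x\otimes v\otimes w$ permuted$)$ and one term $\frac{3}{k^2s^5}\langle x,v\rangle\langle x,w\rangle x$. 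Each of the three terms is bounded by $|x|/(ks^3)$. Using $|x|/\sqrt k\le s$, this gives at most $1/(\sqrt k s^2)$ each. The last term is at most $3|x|^3/(k^2s^5)\le 3/(\sqrt k s^2)$. Summing gives the constant $6$, and $s^2\ge\varepsilon$ gives the second form.

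For (ii), I would interpret $\nabla\smx(z)=\mathrm{diag}(\bar\sigma)-\bar\sigma\bar\sigma^\top$ as the covariance matrix of $e_I$ with $I\sim\bar\sigma$. This gives positive semidefiniteness at once, and the Gershgorin row sums $2\bar\sigma_i(1-\bar\sigma_i)\le\tfrac12$ give the spectral bound. The second derivative, as a bilinear form, reads
\[
\nabla^2\smx[v,w]_i=\bar\sigma_i(v_i-\langle\bar\sigma,v\rangle)(w_i-\langle\bar\sigma,w\rangle)-\bar\sigma_i\,\mathrm{Cov}_{\bar\sigma}(v,w).
\]
For the $\infty\to1$ norms, take $|v|_\infty\le1$. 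Then $|\nabla\smx\,v|_1=\mathbb{E}_{\bar\sigma}|v_I-\mathbb{E}v_I|\le\mathrm{Var}^{1/2}\le1$, because a variable with range at most $2$ has variance at most $1$. Similarly, Cauchy--Schwarz bounds the two pieces of the second-order form in $\ell_1$ by $1$ each, giving $2$. For the Euclidean bound $\opn{\nabla^2\smx}\le1$ I would bound the $\ell_2$ norm of each piece, using $\sum_i\bar\sigma_i^2a_i^2b_i^2\le(\max_i\bar\sigma_i|a_i|)^2\sum_i b_i^2$-type estimates together with $|\mathrm{Cov}_{\bar\sigma}(v,w)|\le\frac12|v||w|$ from the first part. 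This is the step I expect to be most delicate. If a clean argument does not come out, I would fall back on estimating $\sup_{|v|=|w|=1}$ directly by centring and Cauchy--Schwarz.

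For (iii), I would use $\chi'=\mathrm{sech}^2(z/\tau)\le1$ and $\chi''=-\frac{2}{\tau}\tanh(z/\tau)\,\mathrm{sech}^2(z/\tau)$. Maximising $t(1-t^2)$ over $t\in[0,1]$ at $t=1/\sqrt3$ gives $2/(3\sqrt3)$, hence the constant $4/(3\sqrt3\,\tau)$. For (iv), I would use $\varphi'=2\max(z,0)$ and $\varphi''=2\cdot\mathbf 1_{z>0}$; for $\varphi_1$ the derivative is $\mathbf 1_{z>0}$ off $\{0\}$ and the second derivative vanishes a.e. For (v), I would use $\varsigma'=\varsigma(1-\varsigma)\le\tfrac14$ and $\varsigma''=\varsigma(1-\varsigma)(1-2\varsigma)$. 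Writing $q=1-2\varsigma$, this is $\tfrac14 q(1-q^2)$, which is maximised at $q=1/\sqrt3$ and gives $1/(6\sqrt3)$.
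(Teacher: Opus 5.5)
Your treatment of (i), (iii), (iv), (v), of $\nabla\smx(z)\succeq0$ and of both $\infty\to1$ bounds in (ii) is correct and is essentially the paper's computation. Bounding $\opn{\nabla\smx(z)}\le\tfrac12$ by Gershgorin (disc centre and radius both equal to $\bar\sigma_i(1-\bar\sigma_i)$) instead of the paper's centring at the midrange is a valid shortcut. It yields the same consequence, $\Var_{\bar\sigma}(v)=v^{\top}\nabla\smx(z)\,v\le\tfrac12|v|^2$.

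\textbf{The gap.} You leave the Euclidean bound $\opn{\nabla^{2}\smx(z)}\le1$ open, and the estimate you propose for it does not close. Write $\tilde v:=v-\langle\bar\sigma,v\rangle\mathbf{1}$. The inequality $\sum_i\bar\sigma_i^2\tilde v_i^2\tilde w_i^2\le(\max_i\bar\sigma_i|\tilde v_i|)^2\,|\tilde w|^2$ puts both factors of $\bar\sigma_i$ onto $\tilde v$. The remaining factor $|\tilde w|^2$ is not bounded uniformly in the softmax dimension $n$.

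For instance, take $\bar\sigma=(\tfrac12,\tfrac{1}{2(n-1)},\dots,\tfrac{1}{2(n-1)})$ and $v=w=e_1$. Then the right-hand side equals $(\tfrac14)^2\cdot\tfrac n4=\tfrac{n}{64}$, whereas the true value is $\tfrac{1}{64}(1+\tfrac{1}{n-1})$. Once $n>16$ this exceeds the $\tfrac14$ you need for that piece. Here $n$ is $V$ or $|\mathcal{S}_\ell(t)|$, so a dimension-dependent bound is not acceptable.

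\textbf{The fix.} The missing observation is that the Euclidean norm is dominated by the $\ell_1$ norm. The computation you already did for the $\infty\to1$ bound therefore applies verbatim:
$|\nabla^2\smx(z)[v,w]|\le|\nabla^2\smx(z)[v,w]|_1\le2\sum_i\bar\sigma_i|\tilde v_i\tilde w_i|\le2\Var_{\bar\sigma}(v)^{1/2}\Var_{\bar\sigma}(w)^{1/2}$.
Now use the spectral bound $\Var_{\bar\sigma}(v)\le\tfrac12|v|^2$ in place of the range bound $\Var_{\bar\sigma}(v)\le1$. This gives $|\nabla^2\smx(z)[v,w]|\le|v||w|$, which is exactly the paper's argument and what your fallback ``centring and Cauchy--Schwarz'' would need to become.

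If you prefer to keep your split into two $\ell_2$ pieces, distribute the weights evenly instead. For unit $v,w$, $\sum_i\bar\sigma_i^2\tilde v_i^2\tilde w_i^2\le\max_i(\bar\sigma_i\tilde v_i^2)\sum_i\bar\sigma_i\tilde w_i^2\le\Var_{\bar\sigma}(v)\Var_{\bar\sigma}(w)\le\tfrac14$. Together with $|\bar\sigma|_2\,|\mathrm{Cov}_{\bar\sigma}(v,w)|\le\tfrac12$ for the second piece, this also gives the constant $1$.
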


\begin{proof}
(i) Since $ks^{2}=|x|^{2}+k\varepsilon\ge|x|^{2}$, one obtains
\begin{align}\label{eq:nc-xs}
  |x|\le\sqrt{k}\,s ,
\end{align}
whence $|\Nrm_k(x)|=|x|/s\le\sqrt k$. Differentiating \eqref{eq:nc-norm},
\begin{align}\label{eq:nc-DN}
  \nabla \Nrm_k(x)=\frac1s\Bigl(\mathrm{I}-\frac{xx^{\!\top}}{k s^{2}}\Bigr).
\end{align}
The matrix $P:=xx^{\!\top}/(ks^{2})$ is symmetric positive semi-definite with
$\opn{P}=|x|^{2}/(ks^{2})\le1$ by \eqref{eq:nc-xs}, so $\mathrm{I}-P$ has
spectrum in $[0,1]$ and $\opn{\nabla \Nrm_k}\le1/s\le\varepsilon^{-1/2}$.
Differentiating \eqref{eq:nc-DN} once more,
\begin{align*}
  \partial_l\partial_j (\Nrm_k)_i
 =-\frac{\delta_{ij}x_l+\delta_{il}x_j+\delta_{jl}x_i}{k s^{3}}
 +\frac{3\,x_ix_jx_l}{k^{2}s^{5}} ,
\end{align*}
so that, for unit vectors $v,w$,
\begin{align*}
  \nabla^{2}\Nrm_k(x)[v,w]
 =-\frac{v\,\langle x,w\rangle+w\,\langle x,v\rangle+x\,\langle v,w\rangle}{ks^{3}}
 +\frac{3\,x\,\langle x,v\rangle\langle x,w\rangle}{k^{2}s^{5}} .
\end{align*}
In view of the Cauchy--Schwarz inequality and \eqref{eq:nc-xs}, one calculates
\begin{align*}
  \bigl|\nabla^{2}\Nrm_k(x)[v,w]\bigr|
  \le\frac{3|x|}{ks^{3}}+\frac{3|x|^{3}}{k^{2}s^{5}}
  \le\frac{3\sqrt k}{ks^{2}}+\frac{3k\sqrt k}{k^{2}s^{2}}
  =\frac{6}{\sqrt k\,s^{2}}\le\frac{6}{\sqrt{k}\,\varepsilon}.
\end{align*}

(ii) From $\partial_j\bar\sigma_i=\bar\sigma_i(\delta_{ij}-\bar\sigma_j)$ one writes
$\nabla\smx(z)=\mathrm{diag}(\bar\sigma)-\bar\sigma\bar\sigma^{\!\top}$, and therefore, for any
$v\in\mathbb{R}^{n}$,
\begin{align}\label{eq:nc-var}
  v^{\!\top}\nabla\smx(z)\,v=\sum_i\bar\sigma_iv_i^{2}-\Bigl(\sum_i\bar\sigma_iv_i\Bigr)^{2}
  =\sum_i\bar\sigma_i\tilde v_i^{2},\qquad
  \tilde v:=v-\langle\bar\sigma,v\rangle\mathbf{1}.
\end{align}
Applying the Cauchy--Schwarz inequality to $(\sqrt{\bar\sigma_i})_i$ and
$(\sqrt{\bar\sigma_i}v_i)_i$, and using $\sum_i\bar\sigma_i=1$, gives
$(\sum_i\bar\sigma_iv_i)^{2}\le\sum_i\bar\sigma_iv_i^{2}$, so $v^{\!\top}\nabla\smx(z)\,v\ge0$ and thus
$\nabla\smx(z)\succeq0$. Notice next that $\nabla\smx(z)\mathbf{1}=\bar\sigma-\bar\sigma=0$,
so that $v^{\!\top}\nabla\smx(z)\,v=(v-c\mathbf{1})^{\!\top}\nabla\smx(z)\,(v-c\mathbf{1})$ for any $c\in\mathbb{R}$. Taking
$c=\tfrac12(\max_iv_i+\min_iv_i)$ and discarding the (nonnegative) second term
in \eqref{eq:nc-var},
\begin{align}\label{eq:nc-pop}
  v^{\!\top}\nabla\smx(z)\,v\le\sum_i\bar\sigma_i(v_i-c)^{2}\le\max_i(v_i-c)^{2}
  =\Bigl(\frac{\max_iv_i-\min_iv_i}{2}\Bigr)^{2}.
\end{align}
Since $(\max_iv_i-\min_iv_i)^{2}\le2\bigl(\max_iv_i^{2}+\min_iv_i^{2}\bigr)
\le2|v|^{2}$, \eqref{eq:nc-pop} yields $\Var_{\bar\sigma}(v)\le\tfrac12|v|^{2}$, that is
$\opn{\nabla\smx(z)}\le\tfrac12$; while for $|v|_\infty\le1$ the range is at most
$2$ and \eqref{eq:nc-pop} yields $v^{\!\top}\nabla\smx(z)\,v\le1$. Furthermore, by
Cauchy--Schwarz,
\begin{align*}
  \bigl|\nabla\smx(z)v\bigr|_{1}=\sum_i\bar\sigma_i|\tilde v_i|
  \le\Bigl(\sum_i\bar\sigma_i\tilde v_i^{2}\Bigr)^{1/2}=\left(v^{\!\top}\nabla\smx(z)\,v\right)^{1/2}\le1
  \qquad\text{when }|v|_\infty\le1,
\end{align*}
which is the third bound. For the second differential, differentiating
$\partial_j\bar\sigma_i$ once more gives
\begin{align*}
  \partial_l\partial_j\bar\sigma_i
  =\bar\sigma_i(\delta_{il}-\bar\sigma_l)(\delta_{ij}-\bar\sigma_j)-\bar\sigma_i\bar\sigma_j(\delta_{jl}-\bar\sigma_l),
\end{align*}
that is, in the notation of \eqref{eq:nc-var},
\begin{align}\label{eq:nc-D2sm}
  \nabla^{2}\smx(z)[v,w]_i=\bar\sigma_i\bigl(\tilde v_i\tilde w_i-\sum_i\bar\sigma_i\tilde v_i\tilde w_i\bigr).
\end{align}
Hence one calculates,
\begin{align}\label{eq:nc-D2sm-bd}
  \bigl|\nabla^{2}\smx(z)[v,w]\bigr|
  \le\bigl|\nabla^{2}\smx(z)[v,w]\bigr|_{1}
  \le2\sum_i\bar\sigma_i|\tilde v_i\tilde w_i|
  \le2\,\left(v^{\!\top}\nabla\smx(z)\,v\right)^{1/2}\left(w^{\!\top}\nabla\smx(z)\,w\right)^{1/2}.
\end{align}
Plugging in $v^{\!\top}\nabla\smx(z)\,v\le\tfrac12|u|^{2}$ gives $\opn{\nabla^{2}\smx}\le1$, and
inserting $v^{\!\top}\nabla\smx(z)\,v\leq1$ for $|v|_\infty,|w|_\infty\le1$ gives
$\opn{\nabla^{2}\smx}_{\infty\to1}\le2$.\\

(iii) One calculates $\chi'(t)=\operatorname{sech}^{2}(t/\tau)\in(0,1]$ and
$\chi''(t)=-\tfrac2\tau\operatorname{sech}^{2}(t/\tau)\tanh(t/\tau)$. Writing
$y=\tanh(t/\tau)\in(-1,1)$ one obtains
$|\chi''(t)|=\tfrac2\tau|y(1-y^{2})|$, which attains its
maximum at $y=1/\sqrt3$ with value $2/(3\sqrt3)$.\\

(iv) $|\varphi(z)|^{2}=\sum_j\max(z_j,0)^{4}\le(\sum_jz_j^{2})^{2}=|z|^{4}$,
$\nabla\varphi(z)=\mathrm{diag}\bigl(2\max(z_j,0)\bigr)$ has norm at most
$2\max_j|z_j|\le2|z|$ and $\nabla^{2}\varphi(z)$ is diagonal with entries
$2\cdot\mathbf{1}_{\{z_j>0\}}$. For $\varphi_1$, one has
$\nabla\varphi_1(z)=\mathrm{diag}(\mathbf{1}_{\{z_j>0\}})$, whence
$|\nabla\varphi_1(z)v|^{2}=\sum_j\mathbf{1}_{\{z_j>0\}}v_j^{2}\le|v|^{2}$, and
$\varphi_1''=0$ a.e.\\

(v) $\varsigma'=\varsigma(1-\varsigma)\le\tfrac14$ and
$\varsigma''=\varsigma(1-\varsigma)(1-2\varsigma)$. Rewriting it as
$q=\varsigma-\tfrac12$ one gets $|\varsigma''|=2|q|(\tfrac14-q^{2})$, maximal at
$q=1/(2\sqrt3)$ with value $1/(6\sqrt3)$.
\end{proof}

\subsection{The forward map and the potential}\label{sub:nc-forward}

Fix a token sequence $x=(x_0,\dots,x_{T-1})\in\{1,\dots,V\}^{T}$ with targets
$y_t=x_{t+1}$, let $R_t\in\mathbb{R}^{\dhead\times\dhead}$ denote the
parameter-free orthogonal rotation applied at position $t$, and let
$\mathcal{S}_\ell(t)\subseteq\{0,\dots,t\}$ be the attention window of layer
$\ell$ at position $t$. For every $t$ define the following.

\medskip\noindent\textbf{Embedding.}
\begin{align}\label{eq:nc-emb}
  f^{0}_{t}\;=\;E^{\!\top}e_{x_t}\;\in\;\mathbb{R}^{\dm}
  \qquad(\text{the }x_t\text{-th row of }E).
\end{align}
For $\ell\in\{1,\dots,\mathsf{L}\}$.\\
\medskip\noindent\textbf{Block $\ell$: Attention.}
\begin{align}
 \bar g^{\ell}_{t}&=\Nrm_{\dm}\bigl(f^{\ell-1}_{t}\bigr)\in\mathbb{R}^{\dm}
   &&\text{(normalise)} \label{eq:nc-E1}\\[2pt]
 q^{\ell}_{t,i}&=R_t\,\Nrm_{\dhead}\bigl(W^{\ell}_{Q,i}\,\bar g^{\ell}_{t}\bigr),
 \qquad
 k^{\ell}_{s,i}=R_s\,\Nrm_{\dhead}\bigl(W^{\ell}_{K,i}\,\bar g^{\ell}_{s}\bigr)
   \in\mathbb{R}^{\dhead} &&\text{(queries and keys)}\label{eq:nc-E2}\\[2pt]
 a^{\ell}_{t,s,i}&=\frac{c_{qk}}{\sqrt{\dhead}}
   \bigl\langle q^{\ell}_{t,i},\,k^{\ell}_{s,i}\bigr\rangle,
   \qquad s\in\mathcal{S}_\ell(t) &&\text{(scores)}\label{eq:nc-E3}\\[2pt]
 p^{\ell}_{t,\cdot,i}&=\smx\bigl(a^{\ell}_{t,\cdot,i}\bigr)
   \in\mathbb{R}^{|\mathcal{S}_\ell(t)|} &&\text{(attention weights)}\label{eq:nc-E4}\\[2pt]
 v^{\ell}_{s,i}&=W^{\ell}_{V,i}\,\bar g^{\ell}_{s}\in\mathbb{R}^{\dhead}
   &&\text{(values)}\label{eq:nc-E5}\\[2pt]
 o^{\ell}_{t}&=W^{\ell}_{O}\Bigl(\bigoplus_{i=1}^{H}
   \sum_{s\in\mathcal{S}_\ell(t)}p^{\ell}_{t,s,i}\,v^{\ell}_{s,i}\Bigr)
   \in\mathbb{R}^{\dm} &&\text{(output projection)}\label{eq:nc-E6}\\[2pt]
 f^{\ell-\frac12}_{t}&=f^{\ell-1}_{t}+o^{\ell}_{t} &&\text{(update)}\label{eq:nc-E7}
\end{align}

\medskip\noindent\textbf{Block $\ell$: MLP.}
\begin{align}
 \bar b^{\ell}_{t}&=\Nrm_{\dm}\bigl(f^{\ell-\frac12}_{t}\bigr) &&\text{(normalise)}\label{eq:nc-E8}\\[2pt]
 m^{\ell}_{t}&=W^{\ell}_{2}\,\varphi\bigl(W^{\ell}_{1}\,\bar b^{\ell}_{t}\bigr)
   \in\mathbb{R}^{\dm} &&\text{(perceptron)}\label{eq:nc-E9}\\[2pt]
 f^{\ell}_{t}&=f^{\ell-\frac12}_{t}+m^{\ell}_{t} &&\text{(update)}\label{eq:nc-E10}
\end{align}

\medskip\noindent\textbf{Head.}
\begin{align}
 z_t&=\Nrm_{\dm}\bigl(f^{\mathsf{L}}_{t}\bigr)\in\mathbb{R}^{\dm}, \label{eq:nc-E11}\\
 \zeta_t&=W_{\mathrm{lm}}\,z_t\in\mathbb{R}^{V} &&\text{(unembedding)}\label{eq:nc-E12}\\
 \hat\zeta_t&=\chi(\zeta_t)\in\mathbb{R}^{V} &&\text{(softcap)}\label{eq:nc-E13}
\end{align}

\begin{definition}(Cross-Entropy)\label{def:nc-u}
Let $\mathcal{D}$ denote the distribution of token sequences. The pretraining
potential is
\begin{align}\label{eq:nc-u}
  u(\theta)\;=\;\mathbb{E}_{x\sim\mathcal{D}}
  \Bigl[\frac1T\sum_{t=0}^{T-1}\CE_t(\theta)\Bigr],
  \qquad
  \CE_t(\theta)=\log\sum_{j=1}^{V}e^{\hat\zeta_{t,j}(\theta)}
  -\hat\zeta_{t,y_t}(\theta),
\end{align}
with $\hat\zeta_t$ given by \eqref{eq:nc-emb}--\eqref{eq:nc-E13}.
\end{definition}

\noindent Writing $\CE:\mathbb{R}^{V}\to\mathbb{R}$,
$\CE(\xi)=\log\sum_je^{\xi_j}-\langle e_{y_t},\xi\rangle$, the target $y_t$ is a
fixed index and \eqref{eq:nc-u} reads $u=\mathbb{E}[\tfrac1T\sum_t
\CE\circ\hat\zeta_t]$, a composition of a fixed convex function on
$\mathbb{R}^{V}$ with the $\theta-$dependent map $\hat\zeta_t$.

\begin{remark}\label{rem:nc-data}
Since $\mathcal{D}$ is supported on the finite set $\{1,\dots,V\}^{T}$ and every
derived bound is uniform in the token sequence $x$, a lower bound of the form
$\nabla^{2}_{\theta}u_x\succeq-c\,\mathrm{I}$ would hold for any data sequence.
We therefore fix $x$ and suppress it.
\end{remark}

\subsection{Curvature}\label{sub:nc-hess}

Fix $t$ and let $\xi\in\mathbb{R}^{V}$, so that
$\CE_t(\xi)=\log\sum_{j=1}^{V}e^{\xi_j}-\xi_{y_t}$. Differentiating in $\xi_i$,
\begin{align}\label{eq:nc-ce1}
  \frac{\partial\CE_t}{\partial\xi_i}
  =\frac{e^{\xi_i}}{\sum_{j}e^{\xi_j}}-\delta_{i,y_t}
  =\smx(\xi)_i-\delta_{i,y_t},
\end{align}
and,
\begin{align}\label{eq:nc-ce2}
  \frac{\partial^{2}\CE_t}{\partial\xi_i\,\partial\xi_j}
  =\frac{\partial\,\smx(\xi)_i}{\partial\xi_j}
  =\smx(\xi)_i\bigl(\delta_{ij}-\smx(\xi)_j\bigr).
\end{align}
Evaluating \eqref{eq:nc-ce1} and \eqref{eq:nc-ce2} at $\xi=\hat\zeta_t$ and
collecting the entries into a vector and a matrix respectively, we set
\begin{align}\label{eq:nc-defs}
  \bar\sigma_t:=\smx(\hat\zeta_t),\qquad
  g_t:=\nabla_{\hat\zeta_t}\CE_t=\bar\sigma_t-e_{y_t},\qquad
  \Lambda_t:=\nabla^{2}_{\hat\zeta_t}\CE_t
  =\mathrm{diag}(\bar\sigma_t)-\bar\sigma_t\bar\sigma_t^{\!\top}.
\end{align}
By Lemma~\ref{lem:nc-prim}(ii) the matrix $\Lambda_t$ is positive
semi-definite, that is $\CE_t$ is convex in $\xi$. 

\medskip
\noindent Applying the chain rule to $u=\tfrac1T\sum_t\CE\circ\hat\zeta_t$ once
gives the gradient of the potential,
\begin{align}\label{eq:nc-grad}
  \nabla_\theta u(\theta)
  =\frac1T\sum_{t=0}^{T-1}\bigl(\nabla_\theta\hat\zeta_t\bigr)^{\!\top}
   \bigl(\smx(\hat\zeta_t)-e_{y_t}\bigr)
  =\frac1T\sum_{t=0}^{T-1}\bigl(\nabla_\theta\zeta_t\bigr)^{\!\top}
   \Bigl[\bigl(\smx(\chi(\zeta_t))-e_{y_t}\bigr)\odot\chi'(\zeta_t)\Bigr],
\end{align}
where $\odot$ denotes the Hadamard product and the second equality follows from
$\nabla_\theta\hat\zeta_t
=\mathrm{diag}\bigl(\chi'(\zeta_t)\bigr)\nabla_\theta\zeta_t$. In the notation of
\eqref{eq:nc-act}, $\nabla_\theta\hat\zeta_t$ is the $V\times\dpar$ Jacobian and
$\nabla_\theta u(\theta)$ a vector of $\mathbb{R}^{\dpar}$. Observe that
$\smx(\hat\zeta_t)-e_{y_t}$ is the predicted distribution less the observed one. Differentiating once more yields the exact identity
\begin{align}\label{eq:nc-hess}
  \nabla^{2}_{\theta}u(\theta)=\frac1T\sum_{t=0}^{T-1}\Bigl[
  \underbrace{(\nabla_\theta\hat\zeta_t)^{\!\top}\Lambda_t\,
  (\nabla_\theta\hat\zeta_t)}_{\text{Gauss--Newton term},\ \succeq\,0}
  \;+\;
  \underbrace{\sum_{j=1}^{V}(g_t)_j\;\nabla^{2}_{\theta}\hat\zeta_{t,j}}
  _{\text{residual term}}\Bigr].
\end{align}
Here $\nabla_\theta\hat\zeta_t$ is the $V\times\dpar$ Jacobian and
$\nabla^{2}_{\theta}\hat\zeta_{t,j}$ the $\dpar\times\dpar$ Hessian of the
scalar map $\theta\mapsto\hat\zeta_{t,j}(\theta)$. One differentiates in
$\theta$, while $g_t$ and $\Lambda_t$ are derivatives of $\CE$ in its own
argument, evaluated at $\hat\zeta_t(\theta)$. Since $A^{\!\top}MA$ inherits the sign of $M$, the first term of \eqref{eq:nc-hess}
is positive semi-definite and may be discarded in the analysis.

\begin{proposition}\label{prop:nc-reduce}
For every $\theta\notin\mathcal{N}$, where $\mathcal{N}$ is the Lebesgue-null
set of Remark~\ref{rem:nc-ae},
\begin{align}\label{eq:nc-reduce}
  \nabla^{2}_{\theta}u(\theta)\;\succeq\;
  -\Bigl[\frac{4}{3\sqrt3\,\tau}\max_{t}\opn{\nabla_\theta\zeta_t}^{2}
  \;+\;2\max_{t}\opn{\nabla^{2}_{\theta}\zeta_t}\Bigr]\Idm .
\end{align}
\end{proposition}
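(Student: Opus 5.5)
The plan is to start from the exact identity \eqref{eq:nc-hess}, which holds at every $\theta\notin\mathcal{N}$, where by Remark~\ref{rem:nc-ae} every map in the composition is twice differentiable. For a fixed unit direction $v\in\mathbb{R}^{\dpar}$, the Gauss--Newton term $(\nabla_\theta\hat\zeta_t)^{\!\top}\Lambda_t(\nabla_\theta\hat\zeta_t)$ contributes a nonnegative quantity. This is because $\Lambda_t\succeq0$ by Lemma~\ref{lem:nc-prim}(ii), so I drop it. It then suffices to bound the residual quadratic form $\sum_j (g_t)_j\,\nabla^2_\theta\hat\zeta_{t,j}[v,v]$ from below, uniformly in $t$, and then average over $t$. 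The average is bounded below by the minimum over $t$, which produces the $\max_t$ in \eqref{eq:nc-reduce}.

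Next I expand the softcap with the one-dimensional chain rule. Since $\hat\zeta_{t,j}=\chi(\zeta_{t,j})$ componentwise,
\begin{align*}
\nabla^2_\theta\hat\zeta_{t,j}[v,v]=\chi''(\zeta_{t,j})\bigl(\nabla_\theta\zeta_{t,j}[v]\bigr)^2+\chi'(\zeta_{t,j})\,\nabla^2_\theta\zeta_{t,j}[v,v].
\end{align*}
Two elementary facts about $g_t=\bar\sigma_t-e_{y_t}$ are needed. First, every entry lies in $[-1,1]$, so $|g_t|_\infty\le1$. Second, $|g_t|_1=2(1-\bar\sigma_{t,y_t})\le2$.

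For the first piece, Lemma~\ref{lem:nc-prim}(iii) gives $|\chi''|\le 4/(3\sqrt3\tau)$. Together with $|g_t|_\infty\le1$ this yields
\begin{align*}
\sum_j (g_t)_j\chi''(\zeta_{t,j})(\nabla_\theta\zeta_{t,j}[v])^2\ \ge\ -\tfrac{4}{3\sqrt3\,\tau}\,|\nabla_\theta\zeta_t[v]|^2\ \ge\ -\tfrac{4}{3\sqrt3\,\tau}\opn{\nabla_\theta\zeta_t}^2 .
\end{align*}
For the second piece, $|\chi'|\le1$ and each component satisfies $|\nabla^2_\theta\zeta_{t,j}[v,v]|\le|\nabla^2_\theta\zeta_t[v,v]|\le\opn{\nabla^2_\theta\zeta_t}$ by \eqref{eq:nc-normorms}. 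A H\"older pairing with $|g_t|_1\le2$ then gives the lower bound $-2\opn{\nabla^2_\theta\zeta_t}$. Adding the two pieces, taking the minimum over $t$, and using that $v$ was an arbitrary unit vector yields \eqref{eq:nc-reduce}.

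The main obstacle is not the algebra but the legitimacy of \eqref{eq:nc-hess} itself off $\mathcal{N}$. The ReLU-squared map $\varphi$ is only $C^{1}$, with $\varphi''$ jumping on $\{z_j=0\}$, so the second-order chain rule has to be justified pointwise where no preactivation $W_1^\ell\bar b^\ell_t$ vanishes. I would rely on Remark~\ref{rem:nc-ae} for the null-set statement and check that, away from it, every constituent map is $C^2$ in a neighbourhood, so that the classical chain rule applies verbatim.
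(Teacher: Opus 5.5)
Your proposal is correct and follows the paper's proof essentially step for step. You discard the Gauss--Newton term via $\Lambda_t\succeq0$, split $\nabla^2_\theta\hat\zeta_{t,j}$ by the chain rule for $\chi$, bound the $\chi''$ piece using $|g_t|_\infty\le1$ and the $\chi'$ piece using $|g_t|_1\le2$, and pass from the average over $t$ to $\max_t$; the only differences are cosmetic, namely that you bound the quadratic form $v^{\top}(\cdot)\,v$ for a unit $v$ where the paper bounds the operator norm of the residual matrix $P$, and that you spell out the local $C^2$ justification of \eqref{eq:nc-hess} on the open set $\mathbb{R}^{\dpar}\setminus\mathcal{N}$, which the paper leaves implicit.
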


\begin{proof}
Discarding the Gauss--Newton term of \eqref{eq:nc-hess} and writing
\begin{align*}
  P:=\frac1T\sum_{t=0}^{T-1}\sum_{j=1}^{V}(g_t)_j\,\nabla^{2}_{\theta}\hat\zeta_{t,j},
\end{align*}
one has $\nabla^{2}_{\theta}u\succeq P\succeq-\opn{P}\Idm$. Observe first that
\begin{align}\label{eq:nc-gt}
  |g_t|_{1}=\bigl(1-\bar\sigma_{t,y_t}\bigr)+\sum_{j\neq y_t}\bar\sigma_{t,j}
  =2\bigl(1-\bar\sigma_{t,y_t}\bigr)\le2,
  \qquad |g_t|_\infty\le1 .
\end{align}
Since $\chi$ acts componentwise, the chain rule gives
\begin{align*}
  \nabla^{2}_{\theta}\hat\zeta_{t,j}
  =\chi'(\zeta_{t,j})\,\nabla^{2}_{\theta}\zeta_{t,j}
  +\chi''(\zeta_{t,j})\,\nabla_\theta\zeta_{t,j}\otimes\nabla_\theta\zeta_{t,j},
\end{align*}
and we bound the two resulting pieces of $P$, the first gives, by $|\chi'|\le1$ and \eqref{eq:nc-gt},
\begin{align*}
  \Bigl\|\sum_{j}(g_t)_j\,\chi'(\zeta_{t,j})\,\nabla^{2}_{\theta}\zeta_{t,j}\Bigr\|
  \;\le\;|g_t|_{1}\max_j\opn{\nabla^{2}_{\theta}\zeta_{t,j}}
  \;\le\;2\opn{\nabla^{2}_{\theta}\zeta_t},
\end{align*}
while the second, being componentwise in $\chi$, reassembles into a quadratic
form,
\begin{align*}
  \sum_{j}(g_t)_j\,\chi''(\zeta_{t,j})\,
  \nabla_\theta\zeta_{t,j}\otimes\nabla_\theta\zeta_{t,j}
  =(\nabla_\theta\zeta_t)^{\!\top}
  \mathrm{diag}\bigl(g_t\odot\chi''(\zeta_t)\bigr)(\nabla_\theta\zeta_t),
\end{align*}
whose norm is at most
$|g_t|_\infty\opn{\chi''}_\infty\opn{\nabla_\theta\zeta_t}^{2}
\le\tfrac{4}{3\sqrt3\tau}\opn{\nabla_\theta\zeta_t}^{2}$ by
Lemma~\ref{lem:nc-prim}(iii) and \eqref{eq:nc-gt}. Averaging over $t$ and using
$\tfrac1T\sum_t\le\max_t$ yields \eqref{eq:nc-reduce}.
\end{proof}

\subsection{Tracking the growth}\label{sub:nc-calc}

We now introduce the device by which we perform the induction through the LLM's layers.

\begin{definition}\label{def:nc-T}
Let $F:\mathbb{R}^{\dpar}\to\mathbb{R}^{N}$ be twice differentiable a.e. We
write $F\in\Tcl(a_0,a_1,a_2;\,C_0,C_1,C_2)$ if, for all $\theta$,
\begin{align}\label{eq:nc-Tdef}
  |F(\theta)|\le C_0\bigl(1+|\theta|\bigr)^{a_0},\quad
  \opn{\nabla_\theta F(\theta)}\le C_1\bigl(1+|\theta|\bigr)^{a_1},\quad
  \opn{\nabla^{2}_{\theta} F(\theta)}\le C_2\bigl(1+|\theta|\bigr)^{a_2}.
\end{align}
When only the exponents are regarded, we abbreviate $F\in\Tcl(a_0,a_1,a_2)$.
\end{definition}

\begin{lemma}\label{lem:nc-rules}
Let $F\in\Tcl(a_0,a_1,a_2;C_0,C_1,C_2)$ and $G\in\Tcl(b_0,b_1,b_2;D_0,D_1,D_2)$.
\begin{itemize}\itemsep3pt
\item[\rm(i)] \emph{(Addition.)}
 $F+G\in\Tcl\bigl(\max\{a_0,b_0\},\max\{a_1,b_1\},\max\{a_2,b_2\};\,
 C_0+D_0,\,C_1+D_1,\,C_2+D_2\bigr)$.
\item[\rm(ii)] \emph{(Bilinear maps.)} If $\mathcal{B}$ is bilinear with
 $|\mathcal{B}(\xi,\eta)|\le\beta|\xi||\eta|$, then
 $\mathcal{B}(F,G)\in\Tcl(c_0,c_1,c_2;K_0,K_1,K_2)$ with
 \begin{align*}
 \begin{aligned}
 c_0&=a_0+b_0, & K_0&=\beta C_0D_0,\\
 c_1&=\max\{a_0+b_1,\;a_1+b_0\}, & K_1&=\beta( C_0D_1+C_1D_0),\\
 c_2&=\max\{a_0+b_2,\;a_1+b_1,\;a_2+b_0\},
   & K_2&=\beta( C_0D_2+2C_1D_1+C_2D_0).
 \end{aligned}
 \end{align*}
\item[\rm(iii)] \emph{(Composition with a bounded function.)} If
 $g:\mathbb{R}^{N}\to\mathbb{R}^{M}$ satisfies $|g|\le g_0$,
 $\opn{\nabla g}\le g_1$ and $\opn{\nabla^{2}g}\le g_2$ globally, then
 \[
 g\circ F\;\in\;\Tcl\bigl(0,\;a_1,\;\max\{2a_1,a_2\};\;
 g_0,\;g_1C_1,\;g_1C_2+g_2C_1^{2}\bigr).
 \]
\item[\rm(iv)] \emph{(Composition with the activation function.)}
 \[
 \varphi\circ F\in\Tcl\bigl(2a_0,\;a_0+a_1,\;\max\{2a_1,\,a_0+a_2\};\;
 C_0^{2},\;2C_0C_1,\;2C_1^{2}+2C_0C_2\bigr),
 \]
 whereas $\varphi_1\circ F\in\Tcl(a_0,a_1,a_2;\,C_0,C_1,C_2)$.
\item[\rm(v)] \emph{(Coordinate blocks.)} If $F(\theta)=W(\theta)$ is a
 coordinate block of $\theta$, then $F\in\Tcl(1,0,0;\,1,1,0)$.
\item[\rm(vi)] \emph{(Weighted sum by softmax.)} Let $\mathcal{S}$ be a finite
 index set and let the families $\alpha=(\alpha_s)_{s\in\mathcal{S}}$ and
 $w=(w_s)_{s\in\mathcal{S}}$ satisfy the bounds \eqref{eq:nc-Tdef} entrywise,
 that is with $\max_{s}$ in place of the norm, with exponents and constants
 $(a_{\cdot};C_{\cdot})$ and $(b_{\cdot};D_{\cdot})$ respectively. Put
 $p=\smx(\alpha)$ and $y=\sum_{s\in\mathcal{S}}p_sw_s$. Then
 $y\in\Tcl(c_0,c_1,c_2;K_0,K_1,K_2)$ with
 \begin{align*}
 \begin{aligned}
 c_0&=b_0, & K_0&=D_0,\\
 c_1&=\max\{a_1+b_0,\;b_1\}, & K_1&=C_1D_0+D_1,\\
 c_2&=\max\{a_2+b_0,\;2a_1+b_0,\;a_1+b_1,\;b_2\},
   & K_2&=(C_2+2C_1^{2})D_0+2C_1D_1+D_2.
 \end{aligned}
 \end{align*}
\end{itemize}
\end{lemma}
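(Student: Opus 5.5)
The plan is to verify each of the six rules directly from Definition~\ref{def:nc-T}. Away from a null set $F$ and $G$ are twice differentiable, so we may apply the product and chain rules for first and second derivatives. Throughout we write $w:=1+|\theta|\ge1$. The only elementary tool needed is $w^{a}\le w^{\max\{a,b\}}$, which lets us dominate a sum of powers by the largest power. Every claimed exponent is a maximum over the powers that appear, and every claimed constant is the sum of the corresponding coefficients, so each item reduces to identifying which terms appear.

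\emph{Items (i), (v) and (ii).} For (i), linearity of differentiation together with the triangle inequality and the domination rule gives the result at once. For (v), note that $|W|\le|\theta|\le w$, that $\nabla W$ is a coordinate projection of norm $1$, and that $\nabla^2W=0$. For (ii), we use bilinearity:
\[
\nabla\mathcal{B}(F,G)[v]=\mathcal{B}(\nabla F[v],G)+\mathcal{B}(F,\nabla G[v]),
\]
\[
\nabla^2\mathcal{B}(F,G)[v,v']=\mathcal{B}(\nabla^2F[v,v'],G)+\mathcal{B}(\nabla F[v],\nabla G[v'])+\mathcal{B}(\nabla F[v'],\nabla G[v])+\mathcal{B}(F,\nabla^2G[v,v']).
\]
Each term is bounded by $\beta$ times a product of two $\Tcl$ bounds. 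This yields exactly the exponents $c_j$, and the factor $2$ in front of $C_1D_1$ comes from the two mixed terms.

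\emph{Items (iii) and (iv).} For (iii) we use the chain rule:
\[
\nabla(g\circ F)=\nabla g(F)\nabla F,\qquad \nabla^2(g\circ F)[v,v']=\nabla^2g(F)[\nabla F v,\nabla F v']+\nabla g(F)\nabla^2F[v,v'].
\]
The global bounds on $g$ then give $g_0$, $g_1C_1w^{a_1}$ and $g_2C_1^2w^{2a_1}+g_1C_2w^{a_2}$. For (iv), $\varphi$ is not globally Lipschitz, so we substitute the local bounds of Lemma~\ref{lem:nc-prim}(iv): $|\varphi(F)|\le|F|^2$, $\opn{\nabla\varphi(F)}\le2|F|$ and $\opn{\nabla^2\varphi}\le2$. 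Put into the same chain rule, these produce the exponents $2a_0$, $a_0+a_1$ and $\max\{2a_1,a_0+a_2\}$ with the stated constants. For $\varphi_1$ we have $|\varphi_1(F)|\le|F|$, $\opn{\nabla\varphi_1}\le1$, and $\nabla^2\varphi_1=0$ off the null set where some coordinate $F_j$ vanishes. We restrict to the complement of that set, on which the a.e.\ differentiability in the definition is available, and read off the bounds directly.

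\emph{Item (vi), the expected main obstacle.} Here the factors must be arranged so that no $\ell^2$ norm of $\alpha$ over $\mathcal S$ appears; only $\max_s$ bounds are allowed. We write
\[
\nabla y=\sum_s p_s\nabla w_s+\sum_s(\nabla p_s)w_s,\qquad \nabla p=\nabla\smx(\alpha)\,\nabla\alpha .
\]
The first sum is a convex combination, so its norm is at most $D_1w^{b_1}$. For the second sum, for a unit $v$ the vector $\nabla\alpha[v]$ has $|\cdot|_\infty\le C_1w^{a_1}$. Lemma~\ref{lem:nc-prim}(ii) gives $|\nabla\smx\,\nabla\alpha[v]|_1\le C_1w^{a_1}$, and Hölder's inequality against $\max_s|w_s|\le D_0w^{b_0}$ then gives $C_1D_0w^{a_1+b_0}$.

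For the second derivative, differentiating once more produces four terms:
\begin{itemize}
\item $\sum_s p_s\nabla^2w_s$, bounded by $D_2w^{b_2}$;
\item two cross terms $\sum_s(\nabla p_s[v])\nabla w_s[v']$, each bounded by $C_1D_1w^{a_1+b_1}$ through the same $\ell^1$ bound;
\item $\sum_s(\nabla^2p_s[v,v'])w_s$.
\end{itemize}
For the last term we expand
\[
\nabla^2p[v,v']=\nabla^2\smx(\alpha)[\nabla\alpha v,\nabla\alpha v']+\nabla\smx(\alpha)\nabla^2\alpha[v,v'].
\]
The first piece is handled by the $\infty\to1$ bound $2$ of Lemma~\ref{lem:nc-prim}(ii), giving $2C_1^2w^{2a_1}$. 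The second piece is handled by the $\infty\to1$ bound $1$, giving $C_2w^{a_2}$. Multiplying by $D_0w^{b_0}$ and collecting all terms gives $K_2$ and $c_2$ as stated.

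The delicate point throughout is to use the $\infty\to1$ norms rather than operator norms. This keeps the constants independent of $|\mathcal S|$, and it is legitimate because $\nabla\alpha[v]$ is controlled entrywise by the hypothesis.
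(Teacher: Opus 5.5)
Your proposal is correct and follows the paper's proof essentially step for step. It uses the triangle inequality for (i), the Leibniz rule with the two mixed terms producing $2C_1D_1$ for (ii), and the second-order chain rule for (iii)--(iv). For (vi) it uses the same splitting of $\nabla y$ and $\nabla^2 y$, with the $\infty\to1$ bounds of Lemma~\ref{lem:nc-prim}(ii) paired against $\max_s|w_s|$, so that nothing depends on $|\mathcal S|$.

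One harmless imprecision, which the paper shares, is treating $\{\theta: F_j(\theta)=0\}$ as Lebesgue-null: a preimage of a null set under $F$ need not be null. It does not matter, because the bounds are only needed where $\nabla^2(\varphi_1\circ F)$ exists, and there it equals $D\,\nabla^2F$ for some diagonal $0$--$1$ matrix $D$.
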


\begin{proof}
(i) One trivially has
$(1+|\theta|)^{a_0}\le(1+|\theta|)^{\max\{a_0,b_0\}}$ and likewise for $b_0$,
whence
$|F+G|\le|F|+|G|\le(C_0+D_0)(1+|\theta|)^{\max\{a_0,b_0\}}$. The same argument
applies to $\nabla_\theta$ and $\nabla^{2}_{\theta}$.\\

(ii) By the Leibniz rule,
$\nabla_\theta\mathcal{B}(F,G)=\mathcal{B}(\nabla_\theta F,G)
+\mathcal{B}(F,\nabla_\theta G)$ and
$\nabla^{2}_{\theta}\mathcal{B}(F,G)=\mathcal{B}(\nabla^{2}_{\theta}F,G)
+2\mathcal{B}(\nabla_\theta F,\nabla_\theta G)
+\mathcal{B}(F,\nabla^{2}_{\theta}G)$. One multiplies the corresponding bounds and
takes the maximum of the exponents to obtain the claim.\\

(iii) The chain rule gives $|g\circ F|\le g_0=g_0(1+|\theta|)^{0}$. Also one writes $\nabla_\theta(g\circ F)=\nabla g(F)\,\nabla_\theta F$,
whence $\opn{\nabla_\theta(g\circ F)}\le g_1C_1(1+|\theta|)^{a_1}$ and
\begin{align}\label{eq:nc-chain2}
  \nabla^{2}_{\theta}(g\circ F)[v,w]
  =\nabla^{2}g(F)\bigl[\nabla_\theta F[v],\nabla_\theta F[w]\bigr]
  +\nabla g(F)\bigl[\nabla^{2}_{\theta}F[v,w]\bigr],
\end{align}
so that $\opn{\nabla^{2}_{\theta}(g\circ F)}
\le g_2C_1^{2}(1+|\theta|)^{2a_1}+g_1C_2(1+|\theta|)^{a_2}$.\\

(iv) One repeats the computations of
Lemma~\ref{lem:nc-prim}(iv) to get : $|\varphi(F)|\le|F|^{2}\le C_0^{2}(1+|\theta|)^{2a_0}$, and
$\opn{\nabla_\theta(\varphi\circ F)}\le2|F|\opn{\nabla_\theta F}
\le2C_0C_1(1+|\theta|)^{a_0+a_1}$. Finally, by \eqref{eq:nc-chain2},
$\opn{\nabla^{2}_{\theta}(\varphi\circ F)}\le2C_1^{2}(1+|\theta|)^{2a_1}
+2C_0C_2(1+|\theta|)^{a_0+a_2}$. For $\varphi_1$ the three bounds of
Lemma~\ref{lem:nc-prim}(iv) are $1$, $1$ and $0$, so that the first term of
\eqref{eq:nc-chain2} vanishes and the rule is transparent in all three
exponents.\\

(v) $|W(\theta)|=\Fn{W}\le|\theta|\le1+|\theta|$ by \eqref{eq:nc-opF} and $\nabla_\theta W$ is defined a.e. with uniformly bounded norm by $1$. Moreover, $\nabla^{2}_{\theta}W=0$.\\

(vi) Since $p$ lies in the simplex, $|y|\le\sum_sp_s|w_s|\le\max_s|w_s|$. Next,
$\nabla_\theta y=\sum_s(\nabla_\theta p_s)w_s+\sum_sp_s\,\nabla_\theta w_s$,
the second sum being again a convex combination and bounded by
$\max_s\opn{\nabla_\theta w_s}$. Regarding the first differential, let $v$ be unit vector such that ,
\begin{align*}
  \Bigl|\sum_s \nabla_\theta p_s[v]\,w_s\Bigr|
  \le\max_s|w_s|\;\bigl|\nabla\smx(\alpha)\,\nabla_\theta\alpha[v]\bigr|_{1}
  \le\max_s|w_s|\;\opn{\nabla\smx}_{\infty\to1}\,
  \bigl|\nabla_\theta\alpha[v]\bigr|_{\infty}
  \le\max_s|w_s|\;\max_s\opn{\nabla_\theta\alpha_s},
\end{align*}
by Lemma~\ref{lem:nc-prim}(ii). For the second differential, expand
\begin{align*}
  \nabla^{2}_{\theta}y=\sum_s(\nabla^{2}_{\theta}p_s)w_s
  +2\sum_s(\nabla_\theta p_s)\otimes(\nabla_\theta w_s)
  +\sum_sp_s\,\nabla^{2}_{\theta}w_s ,
\end{align*}
the last sum being bounded by $\max_s\opn{\nabla^{2}_{\theta}w_s}$ and the
middle one by $2\max_s\opn{\nabla_\theta w_s}\max_s\opn{\nabla_\theta\alpha_s}$
as above. For the first, \eqref{eq:nc-chain2} applied to $p=\smx\circ\alpha$
gives
\begin{align*}
  \bigl|\nabla^{2}_{\theta}p[v,v]\bigr|_{1}
  \le\opn{\nabla^{2}\smx}_{\infty\to1}
  \bigl|\nabla_\theta\alpha[v]\bigr|_{\infty}^{2}
  +\opn{\nabla\smx}_{\infty\to1}
  \bigl|\nabla^{2}_{\theta}\alpha[v,v]\bigr|_{\infty}
  \le2\max_s\opn{\nabla_\theta\alpha_s}^{2}
  +\max_s\opn{\nabla^{2}_{\theta}\alpha_s},
\end{align*}
which upon multiplication by $\max_s|w_s|$ yieldsthe exponents $2a_1+b_0$
and $a_2+b_0$ and the constants $2C_1^{2}D_0$ and $C_2D_0$.
\end{proof}

\begin{corollary}\label{cor:nc-reset}
Let $F\in\Tcl(a_0,a_1,a_2;C_0,C_1,C_2)$ take values in $\mathbb{R}^{k}$. Then
\begin{align*}
  \Nrm_k\circ F\;\in\;\Tcl\Bigl(0,\;a_1,\;\max\{2a_1,a_2\};\;
  \sqrt{k},\;\varepsilon^{-1/2}C_1,\;
  \varepsilon^{-1/2}C_2+\tfrac{6}{\sqrt k\,\varepsilon}C_1^{2}\Bigr).
\end{align*}
\end{corollary}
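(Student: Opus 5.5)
This follows from Lemma~\ref{lem:nc-rules}(iii) with $g=\Nrm_k$, which maps $\mathbb{R}^{k}$ to $\mathbb{R}^{k}$. The only work is to check that the global bounds required there are supplied by Lemma~\ref{lem:nc-prim}(i).

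\emph{Step 1: the constants $g_0,g_1,g_2$.} Lemma~\ref{lem:nc-prim}(i) gives, uniformly in $x\in\mathbb{R}^{k}$,
\[
|\Nrm_k(x)|\le\sqrt{k},\qquad \opn{\nabla\Nrm_k(x)}\le\varepsilon^{-1/2},\qquad \opn{\nabla^{2}\Nrm_k(x)}\le\tfrac{6}{\sqrt{k}\,\varepsilon}.
\]
These hold because $s_k(x)\ge\varepsilon^{1/2}$ everywhere. We therefore take $g_0=\sqrt{k}$, $g_1=\varepsilon^{-1/2}$ and $g_2=6/(\sqrt{k}\,\varepsilon)$.

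\emph{Step 2: apply the composition rule.} Since $\Nrm_k$ is $C^\infty$, the chain-rule identity \eqref{eq:nc-chain2} holds at every point where $F$ is twice differentiable, hence almost everywhere. This is all that Definition~\ref{def:nc-T} asks for. Lemma~\ref{lem:nc-rules}(iii) then gives exponents $(0,a_1,\max\{2a_1,a_2\})$.

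For the constants, the zeroth-order one is $g_0=\sqrt{k}$ and the first-order one is $g_1C_1=\varepsilon^{-1/2}C_1$. The second-order one is $g_1C_2+g_2C_1^{2}=\varepsilon^{-1/2}C_2+\tfrac{6}{\sqrt{k}\,\varepsilon}C_1^{2}$, which is exactly the stated triple.

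\emph{Main obstacle.} The only point needing care is that $a_0$ drops out, i.e.\ the normalisation "resets'' the growth of the value. This uses the bound $|\Nrm_k|\le\sqrt{k}$, which holds independently of $F$. The derivative bounds of $\Nrm_k$ must also be global, not merely local, so that no factor of $|F|$ reappears; the floor $\varepsilon>0$ guarantees this. No further estimates are needed.
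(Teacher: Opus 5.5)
Your proposal is correct and follows the paper's proof exactly: apply the composition rule Lemma~\ref{lem:nc-rules}(iii) with $g=\Nrm_k$, using $g_0=\sqrt{k}$, $g_1=\varepsilon^{-1/2}$, $g_2=6/(\sqrt{k}\,\varepsilon)$ from Lemma~\ref{lem:nc-prim}(i), which are finite and global because $\varepsilon>0$. Your added remarks on the a.e.\ validity of the chain rule and on why $a_0$ drops out are accurate but go beyond what the paper states.
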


\begin{proof}
Apply Lemma~\ref{lem:nc-rules}(iii) with $g=\Nrm_k$ and the constants
$g_0=\sqrt k$, $g_1=\varepsilon^{-1/2}$ and $g_2=6/(\sqrt k\varepsilon)$ of
Lemma~\ref{lem:nc-prim}(i), all of which are finite because $\varepsilon>0$.
\end{proof}

\subsection{The layer recursion}\label{sub:nc-rec}

We track, for the residual stream at token $t$,
\begin{align}\label{eq:nc-track}
  f^{\ell}_t\in\Tcl\bigl(A_\ell,\,B_\ell,\,C_\ell\bigr),
\end{align}
suppressing the constants, which are collected later in Section [X].

\begin{lemma}[Base]\label{lem:nc-base}
$f^{0}_t\in\Tcl(1,0,0)$, that is $A_0=1$, $B_0=0$ and $C_0=0$.
\end{lemma}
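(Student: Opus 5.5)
The claim is that the embedding output $f^0_t=E^{\!\top}e_{x_t}$, the $x_t$-th row of $E$, belongs to $\Tcl(1,0,0)$. I would prove this directly from Definition~\ref{def:nc-T}, reducing it essentially to Lemma~\ref{lem:nc-rules}(v). The key observation is that $\theta\mapsto f^0_t(\theta)$ is a linear map. It extracts a fixed set of $\dm$ coordinates of $\theta$, namely the entries $E_{x_t,1},\dots,E_{x_t,\dm}$. The token index $x_t$ is fixed, as permitted by Remark~\ref{rem:nc-data}, so the map involves no nonlinearity at all.

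The three bounds of \eqref{eq:nc-Tdef} would be checked in order.

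\emph{Zeroth order.} By \eqref{eq:nc-dim},
\[
|f^0_t(\theta)|=|E^{\!\top}e_{x_t}|\le\Fn{E}\le|\theta|\le1+|\theta|,
\]
which gives exponent $a_0=1$ with constant $1$.

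\emph{First order.} The Jacobian $\nabla_\theta f^0_t$ is the coordinate projection $\theta\mapsto E^{\!\top}e_{x_t}$. It is a $0/1$ matrix selecting distinct coordinates, so for any unit $v\in\mathbb{R}^{\dpar}$ one has $|\nabla_\theta f^0_t[v]|\le|v|=1$. Hence $\opn{\nabla_\theta f^0_t}\le1=(1+|\theta|)^0$, giving $B_0=0$.

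\emph{Second order.} Since the map is linear, $\nabla^2_\theta f^0_t\equiv0$. Any exponent works here, and we record $C_0=0$.

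Alternatively, the whole statement follows by composing the coordinate-block rule Lemma~\ref{lem:nc-rules}(v) with the fixed linear contraction $M\mapsto M^{\!\top}e_{x_t}$, which has operator norm at most $1$. Either route gives the claim with constants $(1,1,0)$. Unlike the later layers, the smoothness is global here, so no almost-everywhere caveat is needed.

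I expect no real obstacle. The only point needing care is notational: the letter $C_0$ in the lemma denotes the second-derivative exponent of the recursion \eqref{eq:nc-track}, not the constant of Definition~\ref{def:nc-T}. I would state this distinction explicitly so the constants $(1,1,0)$ can be carried into the induction over blocks.
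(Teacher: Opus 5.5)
Your proposal is correct and matches the paper, whose proof simply notes that $f^0_t$ is a coordinate slice of $E$ and invokes Lemma~\ref{lem:nc-rules}(v). Your direct check of the three bounds is the proof of that rule written out for this case, and your remark that $C_0$ here is the second-derivative exponent, not the constant of Definition~\ref{def:nc-T}, is a fair clarification of the paper's overloaded notation.
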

\begin{proof}
By \eqref{eq:nc-emb}, $f^0_t$ is a coordinate slice of $E$; apply
Lemma~\ref{lem:nc-rules}(v).
\end{proof}

\begin{lemma}[Attention sublayer]\label{lem:nc-attn}
Suppose $f^{\ell-1}_t\in\Tcl(A,B,C)$. Then
\begin{align}\label{eq:nc-o}
  o^{\ell}_t\in\Tcl\bigl(2,\;3+B,\;\max\{4+2B,\;3+C\}\bigr),
\end{align}
and consequently, by \eqref{eq:nc-E7} and Lemma~\ref{lem:nc-rules}(i),
\begin{align}\label{eq:nc-fhalf}
  f^{\ell-\frac12}_t\in\Tcl\bigl(\max\{A,2\},\;3+B,\;\max\{4+2B,\,3+C\}\bigr).
\end{align}
\end{lemma}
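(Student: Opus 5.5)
The plan is to push the exponent triple of $f^{\ell-1}_t\in\Tcl(A,B,C)$ through the attention equations \eqref{eq:nc-E1}--\eqref{eq:nc-E6} one step at a time. Each step uses only the calculus of Lemma~\ref{lem:nc-rules} and Corollary~\ref{cor:nc-reset}. Constants are produced alongside by the same rules, but they are not needed for \eqref{eq:nc-o}. Throughout I use $B,C\ge0$, which holds at the base by Lemma~\ref{lem:nc-base} and is preserved by every rule, so maxima with $0$ may be dropped.

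\emph{Normalisation, queries and keys.} Corollary~\ref{cor:nc-reset} gives $\bar g^\ell_t\in\Tcl(0,B,\max\{2B,C\})$; the normalisation resets the zeroth exponent, which is why $A$ disappears from \eqref{eq:nc-o}. The map $W^\ell_{Q,i}\bar g^\ell_t$ is bilinear in the coordinate block $W^\ell_{Q,i}\in\Tcl(1,0,0)$ (Lemma~\ref{lem:nc-rules}(v)) and $\bar g^\ell_t$, with constant $1$ by \eqref{eq:nc-opF}. Rule (ii) then yields $\Tcl(1,1+B,1+\max\{2B,C\})$. A second application of Corollary~\ref{cor:nc-reset}, at width $\dhead$, followed by the parameter-free isometry $R_t$, gives
\[
q^\ell_{t,i},\,k^\ell_{s,i}\in\Tcl\bigl(0,\,1+B,\,\max\{2+2B,1+C\}\bigr).
\]
Rule (ii) applied to the inner product, with factor $c_{qk}/\sqrt{\dhead}$, leaves these exponents unchanged, since $2(1+B)\le 2+2B$. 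Hence
\[
a^\ell_{t,s,i}\in\Tcl\bigl(0,1+B,\max\{2+2B,1+C\}\bigr)
\]
uniformly in $s$.

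\emph{Values and softmax mixing.} By the same bilinear step, $v^\ell_{s,i}\in\Tcl(1,1+B,1+\max\{2B,C\})$. These bounds do not depend on $s$ or on the token, so they hold entrywise with $\max_s$, as rule (vi) requires. Feeding $\alpha=a^\ell_{t,\cdot,i}$ and $w=v^\ell_{\cdot,i}$ into rule (vi) gives the following exponents.
\begin{itemize}
\item $c_0=1$.
\item $c_1=\max\{(1+B)+1,\,1+B\}=2+B$.
\item $c_2=\max\{\max\{3+2B,2+C\},\,3+2B,\,2+2B,\,1+\max\{2B,C\}\}=\max\{3+2B,2+C\}$.
\end{itemize}
The direct sum over the $H$ heads is handled like rule (i), since the norm of a concatenation is at most the sum of the norms; this changes only the constants, by a factor of at most $H$. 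A final bilinear step with $W^\ell_O\in\Tcl(1,0,0)$ gives
\[
o^\ell_t\in\Tcl\bigl(2,\;\max\{3+B,1\},\;\max\{4+2B,3+C,2+B\}\bigr)=\Tcl\bigl(2,\,3+B,\,\max\{4+2B,3+C\}\bigr),
\]
which is \eqref{eq:nc-o}. Then \eqref{eq:nc-fhalf} follows from \eqref{eq:nc-E7} and rule (i), using $B\le3+B$ and $C\le3+C$.

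\emph{Main obstacle.} The exponent arithmetic is mechanical. The delicate point is justifying that rule (vi) applies. Its hypotheses are stated entrywise over the attention window, whereas the earlier steps give norm bounds for each fixed $s$. These agree because every bound above is uniform in $s$ and in the position, and the rotations $R_t$ are isometries, so taking $\max_s$ costs nothing. One must also check that the mixed-norm estimates $\opn{\nabla\smx}_{\infty\to1}\le1$ and $\opn{\nabla^2\smx}_{\infty\to1}\le2$ of Lemma~\ref{lem:nc-prim}(ii) are the ones used, so that no factor $|\mathcal{S}_\ell(t)|$ enters. Finally, all these maps are smooth in $\theta$: the only non-smooth nonlinearity, $\varphi$, sits in the MLP, so there is no a.e.\ issue at this sublayer.
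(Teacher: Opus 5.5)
Your proof is correct and follows the paper's argument step for step: normalisation, then the query, key and value products, then the second normalisation, then the score bilinear form, then rule (vi), then the output projection. The exponents agree at every stage; your $\max\{2+2B,1+C\}$ and $\max\{3+2B,2+C\}$ are the paper's $\max\{2+2B,1+B^\star\}$ and $\max\{3+2B,2+B^\star\}$ with $B^\star=\max\{2B,C\}$ expanded.

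Two small points, neither affecting the exponents. First, you absorb the head concatenation with the triangle inequality, which costs a factor $H$ in the constants, whereas Remark~\ref{rem:nc-heads} instead uses that the head slices partition the rows of $W_V^\ell$. Second, for $\ell\ge2$ the input $f^{\ell-1}_t$ already carries the a.e.\ non-smoothness of the earlier perceptrons. So the right statement is that this sublayer introduces no new exceptional set, not that its maps are smooth in $\theta$.
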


\begin{proof}
Write $B^{\star}:=\max\{2B,C\}$. By Corollary~\ref{cor:nc-reset} applied to
\eqref{eq:nc-E1},
\begin{align*}
  \bar g^{\ell}_t=\Nrm_{\dm}\bigl(f^{\ell-1}_t\bigr)\in\Tcl(0,\,B,\,B^{\star}).
\end{align*}

\emph{Queries and keys.} By Lemma~\ref{lem:nc-rules}(v),
$W^{\ell}_{Q,i}\in\Tcl(1,0,0)$, so that Lemma~\ref{lem:nc-rules}(ii) applied to
the matrix--vector product, for which $\beta=1$ since
$|Wx|\le\Fn{W}|x|$, gives
\begin{align*}
  W^{\ell}_{Q,i}\bar g^{\ell}_t
  \in\Tcl\bigl(1,\;\max\{1+B,\,0\},\;\max\{1+B^{\star},\,B,\,0\}\bigr)
  =\Tcl\bigl(1,\,1+B,\,1+B^{\star}\bigr),
\end{align*}
the collapse of the two maxima following from $B^{\star}\ge2B\ge B$. Composing with $\Nrm_{\dhead}$ through
Corollary~\ref{cor:nc-reset}, and with the orthogonal $R_t$, which is invariant,
\begin{align}\label{eq:nc-q}
  q^{\ell}_{t,i}\in\Tcl\bigl(0,\;1+B,\;\max\{2+2B,\,1+B^{\star}\}\bigr),
\end{align}
and identically for $k^{\ell}_{s,i}$.

\emph{Scores and weights.} Equation \eqref{eq:nc-E3} is a bilinear form with
$\beta=c_{qk}/\sqrt{\dhead}$, so that Lemma~\ref{lem:nc-rules}(ii) applied to
two factors of the class \eqref{eq:nc-q} returns that same class,
\begin{align*}
  a^{\ell}_{t,s,i}\in\Tcl\bigl(0,\;1+B,\;\alpha_2\bigr),
  \qquad \alpha_2:=\max\{2+2B,\,1+B^{\star}\}.
\end{align*}

\emph{Values.} Alike the queries,
$v^{\ell}_{s,i}=W^{\ell}_{V,i}\bar g^{\ell}_s\in\Tcl(1,\,1+B,\,1+B^{\star})$,
entrywise in $s$.

\emph{Contraction.} The weighted sum $\sum_{s}p^{\ell}_{t,s,i}v^{\ell}_{s,i}$ is
now handled by Lemma~\ref{lem:nc-rules}(vi), with
$(a_0,a_1,a_2)=(0,1+B,\alpha_2)$ and $(b_0,b_1,b_2)=(1,1+B,1+B^{\star})$:
\begin{align*}
  c_0&=1,\qquad c_1=\max\{(1+B)+1,\;1+B\}=2+B,\\
  c_2&=\max\bigl\{\alpha_2+1,\;2(1+B)+1,\;(1+B)+(1+B),\;1+B^{\star}\bigr\}
  =\max\{3+2B,\;2+B^{\star}\},
\end{align*}
where the last equality uses $\alpha_2+1=\max\{3+2B,2+B^{\star}\}$. Thus
$\sum_{s}p^{\ell}_{t,s,i}v^{\ell}_{s,i}
\in\Tcl(1,\,2+B,\,\max\{3+2B,\,2+B^{\star}\})$ for each head $i$, and the
concatenation over $i$ leaves these exponents unchanged, see
Remark~\ref{rem:nc-heads}.

\emph{Output projection.} Applying $W^{\ell}_O\in\Tcl(1,0,0)$ through
Lemma~\ref{lem:nc-rules}(ii),
\begin{align*}
  o^{\ell}_t\in\Tcl\bigl(2,\;\max\{3+B,\,1\},\;
  \max\{1+\max\{3+2B,2+B^{\star}\},\;2+B,\;1\}\bigr)
  =\Tcl\bigl(2,\;3+B,\;\max\{4+2B,\,3+B^{\star}\}\bigr).
\end{align*}
Substituting $B^{\star}=\max\{2B,C\}$ gives
$3+B^{\star}=\max\{3+2B,3+C\}$ and hence
$\max\{4+2B,3+B^{\star}\}=\max\{4+2B,3+C\}$, which is \eqref{eq:nc-o}. Finally
\eqref{eq:nc-fhalf} follows from Lemma~\ref{lem:nc-rules}(i) applied to
\eqref{eq:nc-E7}, using $B\le3+B$ and $C\le\max\{4+2B,3+C\}$.
\end{proof}

\begin{remark}[The head concatenation]\label{rem:nc-heads}
The bounds preceding \eqref{eq:nc-o} are per head, whereas \eqref{eq:nc-E6}
concatenates. Write $y_i:=\sum_sp^{\ell}_{t,s,i}v^{\ell}_{s,i}$ and using
$|y_i|\le\max_s|W^{\ell}_{V,i}\bar g^{\ell}_s|\le\Fn{W^{\ell}_{V,i}}\sqrt{\dm}$, then one obtains
\begin{align*}
  \Bigl|\bigoplus_{i=1}^{H}y_i\Bigr|^{2}=\sum_{i=1}^{H}|y_i|^{2}
  \le\dm\sum_{i=1}^{H}\Fn{W^{\ell}_{V,i}}^{2}
  =\dm\,\Fn{W^{\ell}_{V}}^{2}\le\dm\bigl(1+|\theta|\bigr)^{2},
\end{align*}
the middle equality holding because the head slices partition the rows of
$W^{\ell}_V$. The same argument applies verbatim to the first and second
differentials.
\end{remark}

\begin{lemma}[Perceptron sublayer]\label{lem:nc-mlp}
Suppose $f^{\ell-\frac12}_t\in\Tcl(A',B',C')$. Then
\begin{align}\label{eq:nc-m}
  m^{\ell}_t\in\Tcl\bigl(3,\;3+B',\;\max\{3+2B',\;3+C'\}\bigr),
\end{align}
and consequently
$f^{\ell}_t\in\Tcl\bigl(\max\{A',3\},\,3+B',\,\max\{3+2B',3+C'\}\bigr)$.
\end{lemma}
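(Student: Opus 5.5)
The plan is to mirror the proof of Lemma~\ref{lem:nc-attn}, pushing the exponents through \eqref{eq:nc-E8}--\eqref{eq:nc-E10} with the calculus of Lemma~\ref{lem:nc-rules} and Corollary~\ref{cor:nc-reset}.

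\emph{Normalise.} Set $B'^{\star}:=\max\{2B',C'\}$. Corollary~\ref{cor:nc-reset} applied to \eqref{eq:nc-E8} gives
\[
\bar b^{\ell}_t\in\Tcl(0,\,B',\,B'^{\star}).
\]

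\emph{First linear map.} Since $W^{\ell}_1\in\Tcl(1,0,0)$ by Lemma~\ref{lem:nc-rules}(v), the bilinear rule (ii) with $\beta=1$ yields
\[
W^{\ell}_1\bar b^{\ell}_t\in\Tcl\bigl(1,\;\max\{1+B',0\},\;\max\{1+B'^{\star},B',0\}\bigr)=\Tcl(1,\,1+B',\,1+B'^{\star}),
\]
where the maxima collapse exactly as in the query computation.

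\emph{Activation.} Rule (iv) for $\varphi$ with $(a_0,a_1,a_2)=(1,1+B',1+B'^{\star})$ gives
\[
\varphi(W^{\ell}_1\bar b^{\ell}_t)\in\Tcl\bigl(2,\;2+B',\;\max\{2+2B',\,2+B'^{\star}\}\bigr)=\Tcl(2,\,2+B',\,2+B'^{\star}),
\]
using $B'^{\star}\ge 2B'$.

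\emph{Second linear map.} Rule (ii) with $W^{\ell}_2\in\Tcl(1,0,0)$ gives
\[
m^{\ell}_t\in\Tcl\bigl(3,\;\max\{3+B',2\},\;\max\{3+B'^{\star},\,2+B',\,2\}\bigr)=\Tcl(3,\,3+B',\,3+B'^{\star}).
\]
Substituting $3+B'^{\star}=\max\{3+2B',3+C'\}$ gives \eqref{eq:nc-m}.

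\emph{Residual.} Rule (i) applied to \eqref{eq:nc-E10} yields $\max\{A',3\}$ in the zeroth slot. In the first slot, $\max\{B',3+B'\}=3+B'$. In the second slot, $C'\le 3+C'$, so the exponent is unchanged.

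The main obstacle is that $\varphi$ is not $C^2$. It is $C^{1,1}$, with second derivative $2\cdot\mathbf 1_{\{z>0\}}$ defined only a.e. So the Hessian bounds hold only off a Lebesgue-null set in $\theta$: the preimage of the hyperplanes $\{(W_1\bar b)_j=0\}$. I would handle this as Definition~\ref{def:nc-T} allows, with \emph{twice differentiable a.e.} These preimages are level sets of nonconstant real-analytic maps in the entries of $W_1$, hence null, which fits the exceptional set $\mathcal N$ referred to in Proposition~\ref{prop:nc-reduce}. Since $\nabla\varphi$ is Lipschitz, the chain rule \eqref{eq:nc-chain2} is valid a.e. A further detail is that rule (ii) is stated for products of two $\Tcl$-maps, and $W^{\ell}_2$ is exactly such a coordinate block. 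The constant bookkeeping follows the listed formulas and is deferred, as in the text.
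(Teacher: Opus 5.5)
Your proposal is correct and follows the paper's proof step for step: Corollary~\ref{cor:nc-reset} for $\bar b^{\ell}_t$, rules (v) and (ii) for $W^{\ell}_1\bar b^{\ell}_t$, rule (iv) for $\varphi$, rule (ii) again with $W^{\ell}_2$, and rule (i) for the residual update \eqref{eq:nc-E10}. The only difference is cosmetic: you collapse $\max\{2+2B',2+B'^{\star}\}$ to $2+B'^{\star}$ right after the activation, whereas the paper carries the maximum through the second linear map and simplifies at the end; your remarks on the a.e.\ twice-differentiability of $\varphi$ correspond to the paper's Remark~\ref{rem:nc-ae}.
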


\begin{proof}
Write $B'^{\star}:=\max\{2B',C'\}$. By Corollary~\ref{cor:nc-reset} applied to
\eqref{eq:nc-E8}, $\bar b^{\ell}_t\in\Tcl(0,B',B'^{\star})$, and by
Lemma~\ref{lem:nc-rules}(v) and (ii),
\begin{align*}
  W^{\ell}_1\bar b^{\ell}_t\in\Tcl\bigl(1,\,1+B',\,1+B'^{\star}\bigr).
\end{align*}
Applying Lemma~\ref{lem:nc-rules}(iv),
\begin{align*}
  \varphi\bigl(W^{\ell}_1\bar b^{\ell}_t\bigr)
  \in\Tcl\bigl(2,\;1+(1+B'),\;\max\{2(1+B'),\,1+(1+B'^{\star})\}\bigr)
  =\Tcl\bigl(2,\;2+B',\;\max\{2+2B',\,2+B'^{\star}\}\bigr),
\end{align*}
and then $W^{\ell}_2\in\Tcl(1,0,0)$ through Lemma~\ref{lem:nc-rules}(ii),
\begin{align*}
  m^{\ell}_t\in\Tcl\bigl(3,\;\max\{3+B',\,2\},\;
  \max\{1+\max\{2+2B',2+B'^{\star}\},\;2+B',\;2\}\bigr)
  =\Tcl\bigl(3,\,3+B',\,\max\{3+2B',\,3+B'^{\star}\}\bigr).
\end{align*}
Substituting $B'^{\star}=\max\{2B',C'\}$ gives
$3+B'^{\star}=\max\{3+2B',3+C'\}$, whence \eqref{eq:nc-m}. The statement for
$f^{\ell}_t$ follows from Lemma~\ref{lem:nc-rules}(i) applied to
\eqref{eq:nc-E10}, using $B'\le3+B'$ and $C'\le\max\{3+2B',3+C'\}$.
\end{proof}

\begin{proposition}[Induction]\label{prop:nc-exp}
For every $\ell\in\{0,1,\dots,\mathsf{L}\}$,
\begin{align}\label{eq:nc-ABC}
  A_\ell=\begin{cases}1,&\ell=0\\ 3,&\ell\ge1,\end{cases}
  \qquad
  B_\ell=6\ell,
  \qquad
  C_\ell=\begin{cases}0,&\ell=0\\ 12\ell-3,&\ell\ge1.\end{cases}
\end{align}
\end{proposition}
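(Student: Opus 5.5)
The argument is an induction on $\ell$ that composes Lemma~\ref{lem:nc-attn} and Lemma~\ref{lem:nc-mlp} into a single map on exponent triples, then checks that the claimed formulas are a fixed point of that map. The base case $\ell=0$ is Lemma~\ref{lem:nc-base}, which gives $(A_0,B_0,C_0)=(1,0,0)$.

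\emph{The one-block recursion.} Suppose $f^{\ell-1}_t\in\Tcl(A,B,C)$. By \eqref{eq:nc-fhalf},
\[
f^{\ell-\frac12}_t\in\Tcl\bigl(A',B',C'\bigr),\qquad A'=\max\{A,2\},\quad B'=3+B,\quad C'=\max\{4+2B,3+C\}.
\]
Feeding this triple into Lemma~\ref{lem:nc-mlp} gives
\[
f^{\ell}_t\in\Tcl\bigl(\max\{A',3\},\;3+B',\;\max\{3+2B',3+C'\}\bigr).
\]
Substituting $A',B',C'$, the three exponents become:
\begin{itemize}
\item $A_\ell=\max\{A,2,3\}$;
\item $B_\ell=6+B$;
\item $C_\ell=\max\{9+2B,\;7+2B,\;6+C\}=\max\{9+2B,\;6+C\}$, since the middle term $7+2B$ is dominated by $9+2B$.
\end{itemize}
Thus one block acts on exponents as
\[
(A,B,C)\mapsto\bigl(\max\{A,3\},\;B+6,\;\max\{2B+9,\;C+6\}\bigr).
\]

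\emph{Verification of \eqref{eq:nc-ABC}.}
\begin{itemize}
\item For $A_\ell$: since $A_0=1$, one gets $A_1=\max\{1,3\}=3$. Thereafter $A_\ell=3$ is stable under $A\mapsto\max\{A,3\}$.
\item For $B_\ell$: the recursion $B_\ell=B_{\ell-1}+6$ with $B_0=0$ gives $B_\ell=6\ell$ directly.
\item For $C_\ell$ at $\ell=1$: from $(B_0,C_0)=(0,0)$ one gets $C_1=\max\{9,6\}=9=12\cdot1-3$.
\item For $C_\ell$ at $\ell\ge2$: assume $B_{\ell-1}=6(\ell-1)$ and $C_{\ell-1}=12(\ell-1)-3$. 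Then
\[
2B_{\ell-1}+9=12\ell-3,\qquad C_{\ell-1}+6=12\ell-9.
\]
The maximum is $12\ell-3$, which closes the induction.
\end{itemize}

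\emph{Main obstacle.} There is no analytic difficulty, since all the work sits in Lemmas~\ref{lem:nc-attn} and~\ref{lem:nc-mlp}. The one point to keep track of is that the first-derivative term $2B$ always dominates the inherited second-derivative exponent $C$ in the maxima. This is why $C_\ell$ grows like $2B_\ell$ rather than compounding, and why the $\ell=1$ step must be checked separately: there $C_0=0$ does not fit the general formula $12\ell-3$.
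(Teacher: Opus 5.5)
Your proof is correct and is essentially the paper's argument: you compose Lemmas~\ref{lem:nc-attn} and~\ref{lem:nc-mlp} into the one-block recursion $(A,B,C)\mapsto(\max\{A,3\},\,B+6,\,\max\{2B+9,\,C+6\})$, compute $\ell=1$ from $(1,0,0)$ directly, and close the induction using $2B_{\ell-1}+9=12\ell-3>12\ell-9=C_{\ell-1}+6$. The differences are cosmetic: you drop the dominated $7+2B$ term before substituting $B_{\ell-1}=6\ell-6$, whereas the paper drops it afterwards, and the claimed formulas are the orbit of $(1,0,0)$ under this map rather than a ``fixed point'' of it.
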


\begin{proof}
Composing Lemmas~\ref{lem:nc-attn} and \ref{lem:nc-mlp}, that is inserting
$A'=\max\{A_{\ell-1},2\}$, $B'=3+B_{\ell-1}$ and
$C'=\max\{4+2B_{\ell-1},3+C_{\ell-1}\}$ into Lemma~\ref{lem:nc-mlp}, one obtains
the recursions
\begin{align*}
  A_\ell=\max\{A_{\ell-1},3\},\qquad B_\ell=6+B_{\ell-1},\qquad
  C_\ell=\max\bigl\{9+2B_{\ell-1},\;3+\max\{4+2B_{\ell-1},\,3+C_{\ell-1}\}\bigr\},
\end{align*}
the first because $\max\{\max\{A_{\ell-1},2\},3\}=\max\{A_{\ell-1},3\}$. With
$A_0=1$ and $=B_0=C_0=0$, from Lemma~\ref{lem:nc-base}, the first two are
immediate. For the third, substituting $B_{\ell-1}=6\ell-6$,
\begin{align*}
  C_\ell=\max\bigl\{12\ell-3,\;3+\max\{12\ell-8,\;3+C_{\ell-1}\}\bigr\}
  =\max\bigl\{12\ell-3,\;12\ell-5,\;6+C_{\ell-1}\bigr\}
  =\max\bigl\{12\ell-3,\;6+C_{\ell-1}\bigr\}.
\end{align*}
Thus $C_1=\max\{9,6\}=9=12-3$, and if $C_{\ell-1}=12(\ell-1)-3$ then
$6+C_{\ell-1}=12\ell-9<12\ell-3$, so that $C_\ell=12\ell-3$.
\end{proof}

\begin{remark}\label{rem:nc-Aconst}
The exponent $A_\ell=3$ does not depend on $\mathsf{L}$ since the
normalisations \eqref{eq:nc-E1} and \eqref{eq:nc-E8} discard the size of the
incoming stream and nothing accumulates across the depth.
\end{remark}

\begin{corollary}[The logit map]\label{cor:nc-zeta}
$f^{\mathsf{L}}_t\in\Tcl(3,\,6\mathsf{L},\,12\mathsf{L}-3)$, and consequently
\begin{align}\label{eq:nc-zeta}
  z_t\in\Tcl\bigl(0,\;6\mathsf{L},\;12\mathsf{L}\bigr),
  \qquad
  \zeta_t\in\Tcl\bigl(1,\;6\mathsf{L}+1,\;12\mathsf{L}+1\bigr).
\end{align}
\end{corollary}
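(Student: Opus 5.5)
The claim about $f^{\mathsf{L}}_t$ is Proposition~\ref{prop:nc-exp} evaluated at $\ell=\mathsf{L}\ge1$: it gives $A_{\mathsf{L}}=3$, $B_{\mathsf{L}}=6\mathsf{L}$ and $C_{\mathsf{L}}=12\mathsf{L}-3$. The rest is two more applications of the calculus of Lemma~\ref{lem:nc-rules}, following the head equations \eqref{eq:nc-E11} and \eqref{eq:nc-E12}.

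\emph{Step 1 (final normalisation).} We apply Corollary~\ref{cor:nc-reset} with $k=\dm$ to $z_t=\Nrm_{\dm}(f^{\mathsf{L}}_t)$. The exponents become
\[
\bigl(0,\;B_{\mathsf{L}},\;\max\{2B_{\mathsf{L}},C_{\mathsf{L}}\}\bigr)=\bigl(0,\;6\mathsf{L},\;\max\{12\mathsf{L},12\mathsf{L}-3\}\bigr)=\bigl(0,\;6\mathsf{L},\;12\mathsf{L}\bigr).
\]
This is the claimed class for $z_t$. Note that the normalisation discards $A_{\mathsf{L}}$, and that the square of the first differential dominates the second-order term.

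\emph{Step 2 (unembedding).} By Lemma~\ref{lem:nc-rules}(v) we have $W_{\mathrm{lm}}\in\Tcl(1,0,0)$. The matrix--vector product is bilinear with constant $1$, since $|Wz|\le\Fn{W}|z|$. Lemma~\ref{lem:nc-rules}(ii) therefore gives
\[
c_0=1+0=1,\qquad c_1=\max\{1+6\mathsf{L},\,0+0\}=6\mathsf{L}+1,
\]
\[
c_2=\max\{1+12\mathsf{L},\;0+6\mathsf{L},\;0+0\}=12\mathsf{L}+1,
\]
which is \eqref{eq:nc-zeta}. The constants come out of the same rules: $\sqrt{\dm}$ together with $\varepsilon^{-1/2}$-type factors propagate from Corollary~\ref{cor:nc-reset}.

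No step here is a real obstacle; the only care needed is in the collapse of the maxima. The point to check is that $2B_{\mathsf{L}}=12\mathsf{L}>C_{\mathsf{L}}$, which is what makes $z_t$ carry exponent $12\mathsf{L}$ rather than $12\mathsf{L}-3$. One must also use $\mathsf{L}\ge1$ so that the $\ell\ge1$ branch of Proposition~\ref{prop:nc-exp} applies. Everything else is the mechanical exponent bookkeeping already carried out in Lemmas~\ref{lem:nc-attn} and~\ref{lem:nc-mlp}.
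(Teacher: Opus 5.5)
Your proposal is correct and follows the paper's own proof step for step. You read off $f^{\mathsf{L}}_t\in\Tcl(3,6\mathsf{L},12\mathsf{L}-3)$ from Proposition~\ref{prop:nc-exp}, apply Corollary~\ref{cor:nc-reset} to $z_t$ with the collapse $\max\{12\mathsf{L},12\mathsf{L}-3\}=12\mathsf{L}$, and then apply Lemma~\ref{lem:nc-rules}(v),(ii) to $\zeta_t=W_{\mathrm{lm}}z_t$, with exactly the same exponent bookkeeping as the paper.
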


\begin{proof}
By Corollary~\ref{cor:nc-reset} applied to \eqref{eq:nc-E11},
$z_t\in\Tcl(0,6\mathsf{L},\max\{12\mathsf{L},12\mathsf{L}-3\})=\Tcl(0,6\mathsf{L},12\mathsf{L})$. By
Lemma~\ref{lem:nc-rules}(v) and (ii) applied to \eqref{eq:nc-E12},
\begin{align*}
  \zeta_t\in\Tcl\bigl(1+0,\;\max\{1+6\mathsf{L},\,0\},\;\max\{1+12\mathsf{L},\;6\mathsf{L},\;0\}\bigr)
  =\Tcl(1,\,6\mathsf{L}+1,\,12\mathsf{L}+1). 
\end{align*}
\end{proof}

\subsection{The curvature bound (not sharp)}\label{sub:nc-bound}

\begin{theorem}\label{thm:nc-main}
There exists a constant $c^{\star}=c^{\star}(\dm,\dhead,H,\varepsilon,\tau,c_{qk},\mathsf{L})$,
independent of $\theta$, of $T$ and of the token sequence, such that for a.e.\
$\theta\in\mathbb{R}^{\dpar}$
\begin{align}\label{eq:nc-main}
  \boxed{\;\nabla^{2}_{\theta}u(\theta)\;\succeq\;
  -c^{\star}\bigl(1+|\theta|\bigr)^{12\mathsf{L}+2}\,\Idm\;}
\end{align}
The constant is computed in Section [X].
\end{theorem}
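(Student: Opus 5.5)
The plan is to combine the reduction of Proposition~\ref{prop:nc-reduce} with the growth classes of Corollary~\ref{cor:nc-zeta}. Proposition~\ref{prop:nc-reduce} already gives, off the null set $\mathcal{N}$,
\[
\nabla^2_\theta u(\theta)\succeq-\Bigl[\tfrac{4}{3\sqrt3\,\tau}\max_t\opn{\nabla_\theta\zeta_t}^2+2\max_t\opn{\nabla^2_\theta\zeta_t}\Bigr]\Idm .
\]
It therefore suffices to bound the two operator norms polynomially in $1+|\theta|$, uniformly in $t$, in $T$ and in the token sequence.

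By Corollary~\ref{cor:nc-zeta}, $\zeta_t\in\Tcl(1,6\mathsf{L}+1,12\mathsf{L}+1;K_0,K_1,K_2)$. Hence
\[
\opn{\nabla_\theta\zeta_t}^2\le K_1^2(1+|\theta|)^{12\mathsf{L}+2},\qquad \opn{\nabla^2_\theta\zeta_t}\le K_2(1+|\theta|)^{12\mathsf{L}+1}\le K_2(1+|\theta|)^{12\mathsf{L}+2}.
\]
The squared Jacobian is what fixes the exponent $12\mathsf{L}+2$. Taking
\[
c^\star:=\tfrac{4}{3\sqrt3\,\tau}K_1^2+2K_2
\]
then gives \eqref{eq:nc-main}.

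It remains to check that $K_1,K_2$ are independent of $\theta$, of $T$ and of $x$. The constants are produced by finitely many applications of Lemma~\ref{lem:nc-rules}: $2\mathsf{L}$ sublayers, each with a fixed number of steps. The inputs to these steps are the following.
\begin{itemize}
\item The constants of Lemma~\ref{lem:nc-prim}, which depend only on $\varepsilon,\tau,\dm,\dhead$.
\item The bilinear constants $1$ and $c_{qk}/\sqrt{\dhead}$.
\item The head-concatenation factor of Remark~\ref{rem:nc-heads}, which depends on $\dm$ and $H$.
\end{itemize}
The softmax contraction of Lemma~\ref{lem:nc-rules}(vi) uses only maxima over the window $\mathcal{S}_\ell(t)$ and the $\infty\to1$ norms from Lemma~\ref{lem:nc-prim}(ii). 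Its constants therefore do not grow with the window size, and hence not with $T$.

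The token sequence enters only through the choice of rows $e_{x_t}$ and of the rotations $R_t$. Both are norm-preserving, so the bounds are uniform in $x$, as in Remark~\ref{rem:nc-data}. Averaging over $\mathcal{D}$ and over $t$ preserves the lower bound.

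The main obstacle is justifying the Hessian identity \eqref{eq:nc-hess} almost everywhere, since $\varphi$ is only $C^{1,1}$, and showing that the resulting bound is the Alexandrov Hessian bound used in \hyperlink{A1}{A1}. My approach is as follows.

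\begin{itemize}
\item Away from $\mathcal{N}$, the set of $\theta$ for which some preactivation coordinate $(W_1^\ell\bar b^\ell_t)_j$ vanishes, the map $\theta\mapsto\hat\zeta_t$ is $C^2$. The chain-rule computations above are then classical.
\item $\mathcal{N}$ is a finite union of zero sets of nonconstant real-analytic functions on the complement of lower-order zero sets. Since it is Lebesgue-null, the bound holds a.e.
\item Because $\varphi\in C^{1,1}$, the maps $\zeta_t$ and hence $u$ are $C^{1,1}_{\mathrm{loc}}$. The distributional Hessian therefore has no singular part, and the a.e.\ bound is also a distributional bound.
\end{itemize}

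If the analyticity argument for $\mathcal{N}$ proves delicate, I would fall back on the mollification route of Lemma~\ref{lemma_mollify}. That route smooths $\varphi$ by $\varphi_\delta$ with uniformly bounded first and second derivatives, applies the same $\Tcl$ bookkeeping with $\delta$-independent constants, and passes to the limit $\delta\to0$.
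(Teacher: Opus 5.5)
Your proposal is correct and follows the paper's proof essentially verbatim. Inserting the growth bounds of Corollary~\ref{cor:nc-zeta} into Proposition~\ref{prop:nc-reduce}, and absorbing $(1+|\theta|)^{12\mathsf{L}+1}$ into $(1+|\theta|)^{12\mathsf{L}+2}$, gives exactly the paper's constant $c^{\star}=\tfrac{4}{3\sqrt3\,\tau}K_1^2+2K_2$; your extra remarks on uniformity in $T$ and in the token sequence, and on almost-everywhere regularity (including the useful observation that $u\in C^{1,1}_{\mathrm{loc}}$, so the a.e.\ bound is also a distributional one), correspond to what the paper places in Remarks~\ref{rem:nc-data} and~\ref{rem:nc-ae}.
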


\begin{proof}
By Corollary~\ref{cor:nc-zeta} there are constants $\Gamma_1,\Gamma_2$ with
\begin{align*}
  \opn{\nabla_\theta\zeta_t}\le\Gamma_1\bigl(1+|\theta|\bigr)^{6\mathsf{L}+1},
  \qquad
  \opn{\nabla^{2}_{\theta}\zeta_t}\le\Gamma_2\bigl(1+|\theta|\bigr)^{12\mathsf{L}+1},
\end{align*}
uniformly in $t$. Inserting these into Proposition~\ref{prop:nc-reduce},
\begin{align*}
  \nabla^{2}_{\theta}u(\theta)\;\succeq\;
  -\Bigl[\frac{4\,\Gamma_1^{2}}{3\sqrt3\,\tau}\bigl(1+|\theta|\bigr)^{12\mathsf{L}+2}
  +2\Gamma_2\bigl(1+|\theta|\bigr)^{12\mathsf{L}+1}\Bigr]\Idm
  \;\succeq\;-c^{\star}\bigl(1+|\theta|\bigr)^{12\mathsf{L}+2}\,\Idm,
\end{align*}
with $c^{\star}=4\Gamma_1^{2}/(3\sqrt3\tau)+2\Gamma_2$, the last step using
$1+|\theta|\ge1$ to absorb the lower exponent into the higher.
\end{proof}

\begin{remark}\label{rem:nc-ae}
Every function of Lemma~\ref{lem:nc-prim} is $C^{\infty}$, with the exception of Relu-squared, $\varphi$, whose second derivative
$\varphi''=2\cdot\mathbf{1}_{\{\,\cdot\,>0\}}$ jumps at the origin. The
exceptional set is therefore the finite union
\begin{align}\label{eq:nc-normullset}
  \mathcal{N}=\bigcup_{\ell,j,t}\Bigl\{\theta:\;
  \bigl(W_1^{\ell}\,\Nrm_{\dm}(f^{\ell-\frac12}_t)\bigr)_j=0\Bigr\},
\end{align}
and each constituent is a $C^{1}$ hypersurface, hence Lebesgue-null.
\end{remark}

\begin{remark}\label{rem:nc-notis}
The softcap gives $|\hat\zeta_{t,j}|\le\tau$ for every $t,j$ and every $\theta$,
so that by \eqref{eq:nc-u}
\begin{align}\label{eq:nc-bounded}
  0\le u(\theta)\le\log V+2\tau\qquad\text{for all }\theta\in\mathbb{R}^{\dpar}.
\end{align}
Hence, $e^{-\beta u}$ is not integrable on
$\mathbb{R}^{\dpar}$, i.e. no Gibbs measure can be associated with $u$ without proper regularization.
\end{remark}

\subsection{Regularization}\label{sub:nc-reg}

Following \cite{Lovas_TUSLA_2023} and \cite{Lim_Sabanis_2024}, we add a
higher-order term to the potential. Fix $\eta>0$ and set
\begin{align}\label{eq:nc-ureg}
  \hat u(\theta):=u(\theta)+\eta\,|\theta|^{12\mathsf{L}+5},
  \qquad\theta\in\mathbb{R}^{\dpar}.
\end{align}
One calculates
\begin{align}\label{eq:nc-hesspen}
  \nabla^{2}_{\theta}\bigl(|\theta|^{12\mathsf{L}+5}\bigr)
  =(12\mathsf{L}+5)|\theta|^{12\mathsf{L}+3}\,\Idm
  +(12\mathsf{L}+5)(12\mathsf{L}+3)\,|\theta|^{12\mathsf{L}+1}\,\theta\theta^{\!\top}
  \;\succeq\;(12\mathsf{L}+5)|\theta|^{12\mathsf{L}+3}\,\Idm ,
\end{align}
the discarded term being positive semi-definite. Throughout this subsection we
abbreviate
\begin{align}\label{eq:nc-cbar}
  \bar c^{\star}:=2^{12\mathsf{L}+1}c^{\star},\qquad\text{so that}\qquad
  c^{\star}\bigl(1+|\theta|\bigr)^{12\mathsf{L}+2}\le \bar c^{\star}\bigl(1+|\theta|^{12\mathsf{L}+2}\bigr).
\end{align}

\begin{proposition}[\hyperlink{A1}{A1} and \hyperlink{A2}{A2} hold for
$\hat u$]\label{prop:nc-A1A2}
Let $\eta>0$ and let $\hat u$ be given by \eqref{eq:nc-ureg}. Then, for a.e.\
$\theta\in\mathbb{R}^{\dpar}$,
\begin{align}\label{eq:nc-brack}
  \nabla^{2}_{\theta}\hat u(\theta)\;\succeq\;
  \Bigl[\bigl(\eta(12\mathsf{L}+5)|\theta|-\bar c^{\star}\bigr)|\theta|^{12\mathsf{L}+2}
  -\bar c^{\star}\Bigr]\Idm .
\end{align}
Consequently:
\begin{itemize}\itemsep2pt
\item[\rm(i)] $\hat u$ satisfies \hyperlink{A1}{A1} with the explicit constant
\begin{align}\label{eq:nc-Lconst}
  L=\bar c^{\star}\left[1+\frac{1}{12\mathsf{L}+3}
  \left(\frac{(12\mathsf{L}+2)\,\bar c^{\star}}
  {(12\mathsf{L}+3)\,\eta\,(12\mathsf{L}+5)}\right)^{12\mathsf{L}+2}\right];
\end{align}
\item[\rm(ii)] for every $\mu>0$, $\nabla^{2}_{\theta}\hat u(\theta)\succeq
\mu\,\Idm$ a.e.\ on $\{|\theta|\ge\varrho(\mu)\}$, where
\begin{align}\label{eq:nc-rho}
  \varrho(\mu)=\max\left\{\frac{2\bar c^{\star}}{\eta(12\mathsf{L}+5)},\;
  \Bigl(\frac{2(\bar c^{\star}+\mu)}{\eta(12\mathsf{L}+5)}\Bigr)^{\!1/(12\mathsf{L}+3)}\right\};
\end{align}
\item[\rm(iii)] $\hat u$ satisfies \hyperlink{A2}{A2} with constant $\mu/2$ and
radius $R=4\varrho(\mu)\bigl(\mu+L\bigr)/\mu$.
\end{itemize}
\end{proposition}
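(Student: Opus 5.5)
The plan is to combine the curvature bound of Theorem~\ref{thm:nc-main} with the explicit Hessian of the penalty \eqref{eq:nc-hesspen}, and then read off (i)--(iii) from a one-variable analysis of the resulting scalar lower bound. Write $n:=12\mathsf{L}+2$.

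\textbf{Step 1: the bracket \eqref{eq:nc-brack}.} For $\theta\notin\mathcal{N}$, Theorem~\ref{thm:nc-main} combined with \eqref{eq:nc-cbar} gives
\[
\nabla^2_\theta u\succeq-\bar c^{\star}\bigl(1+|\theta|^{n}\bigr)\Idm .
\]
The inequality behind \eqref{eq:nc-cbar} is the elementary $(1+r)^n\le 2^{n-1}(1+r^n)$, and $2^{n-1}=2^{12\mathsf{L}+1}$ as required. Adding $\eta$ times \eqref{eq:nc-hesspen} contributes $\eta(n+3)|\theta|^{n+1}\Idm$. Grouping terms gives \eqref{eq:nc-brack} a.e.

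\textbf{Step 2: part (i).} Set $g(r):=\eta(n+3)r^{n+1}-\bar c^{\star}r^{n}-\bar c^{\star}$ for $r\ge0$. Its derivative vanishes only at
\[
r_\ast=\frac{n\,\bar c^{\star}}{(n+1)\,\eta\,(n+3)} .
\]
Substituting $\eta(n+3)r_\ast=n\bar c^{\star}/(n+1)$ gives
\[
\min_{r\ge0} g(r)=-\bar c^{\star}-\frac{\bar c^{\star}r_\ast^{\,n}}{n+1},
\]
which is exactly $-L$ with $L$ as in \eqref{eq:nc-Lconst}. Hence $\nabla^2\hat u\succeq-L\,\Idm$ a.e.

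The step I expect to be the main obstacle is upgrading this a.e. bound to the one-sided Lipschitz inequality \eqref{eqA1}, that is, to convexity of $\hat u+\tfrac L2|\cdot|^2$. My argument would be as follows.
\begin{itemize}
\item The only non-$C^\infty$ ingredient is $\varphi$, which is $C^{1}$ with Lipschitz derivative, so $\nabla\hat u$ is locally Lipschitz.
\item By Rademacher's theorem, the distributional Hessian of $\hat u$ is then an $L^\infty_{\rm loc}$ function that coincides a.e. with the pointwise Hessian, so there is no singular part to control.
\item Mollify $\hat u$ as in Lemma~\ref{lemma_mollify}. The mollified Hessians are averages of the a.e. bound, hence are still $\succeq-L\,\Idm$ everywhere.
\item Integrate along segments for the smooth approximants and pass to the limit, using $\nabla\hat u_\epsilon\to\nabla\hat u$ locally uniformly, which holds because $\nabla\hat u$ is continuous.
\end{itemize}
This avoids having to know whether a given segment meets the null set $\mathcal{N}$.

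\textbf{Step 3: part (ii).} We need $g(|\theta|)\ge\mu$. If $|\theta|\ge 2\bar c^{\star}/(\eta(n+3))$, then $\bar c^{\star}|\theta|^{n}\le\tfrac12\eta(n+3)|\theta|^{n+1}$. It then suffices that
\[
\tfrac12\eta(n+3)|\theta|^{n+1}\ge\bar c^{\star}+\mu ,
\quad\text{i.e.}\quad
|\theta|\ge\Bigl(\frac{2(\bar c^{\star}+\mu)}{\eta(n+3)}\Bigr)^{1/(n+1)} .
\]
Since $n+1=12\mathsf{L}+3$ and $n+3=12\mathsf{L}+5$, this is exactly \eqref{eq:nc-rho}.

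\textbf{Step 4: part (iii).} Having \hyperlink{A1}{A1} from (i) and the a.e. Alexandrov Hessian bound $\succeq\mu\,\Idm$ on $\{|\theta|\ge\varrho(\mu)\}$ from (ii), apply Lemma~\ref{lem:pr-chord} with $\mu_0=\mu$ and $\rho=\varrho(\mu)$. This yields \eqref{eqA2} with constant $\mu/2$ and radius $4\varrho(\mu)(\mu+L)/\mu$.
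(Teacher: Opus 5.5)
Your proposal is correct and follows the paper's proof essentially step for step. Both combine Theorem~\ref{thm:nc-main}, \eqref{eq:nc-cbar} and the penalty Hessian \eqref{eq:nc-hesspen} into \eqref{eq:nc-brack}, minimise the scalar bracket at the same critical point for (i), use the same two-threshold argument for $\varrho(\mu)$ in (ii), and apply Lemma~\ref{lem:pr-chord} for (iii). The one addition is your argument that the a.e.\ bound $\nabla^{2}_{\theta}\hat u\succeq-L\,\Idm$ actually yields $L$-semi-convexity: $\nabla\hat u$ is locally Lipschitz, so the distributional Hessian has no singular part, and mollification then gives the result. The paper simply asserts this step, and your justification of it is sound.
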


\begin{proof}
Combining Theorem~\ref{thm:nc-main}, \eqref{eq:nc-hesspen} and
\eqref{eq:nc-cbar},
\begin{align*}
  \nabla^{2}_{\theta}\hat u(\theta)
  &\succeq\Bigl[\eta(12\mathsf{L}+5)|\theta|^{12\mathsf{L}+3}
  -c^{\star}\bigl(1+|\theta|\bigr)^{12\mathsf{L}+2}\Bigr]\Idm\\
  &\succeq\Bigl[\eta(12\mathsf{L}+5)|\theta|^{12\mathsf{L}+3}-\bar c^{\star}|\theta|^{12\mathsf{L}+2}
  -\bar c^{\star}\Bigr]\Idm,
\end{align*}
which is \eqref{eq:nc-brack} upon factoring $|\theta|^{12\mathsf{L}+2}$.\\

(i) Write $q:=12\mathsf{L}+2$, $\kappa:=\eta(12\mathsf{L}+5)$, and
\begin{align*}
  \psi(s):=\kappa s^{q+1}-\bar c^{\star}s^{q}-\bar c^{\star},\qquad s\ge0,
\end{align*}
for the scalar appearing in \eqref{eq:nc-brack}, so that
$\nabla^{2}_{\theta}\hat u\succeq-L\Idm$ a.e.\ with
$L=\sup_{s\ge0}\bigl(-\psi(s)\bigr)$. One calculates
\begin{align*}
  \psi'(s)=-s^{q-1}\bigl(\bar c^{\star}q-\kappa(q+1)s\bigr),
\end{align*}
which is positive on $(0,s_\star)$ and negative on $(s_\star,\infty)$, with
$s_\star:=\bar c^{\star}q/\bigl(\kappa(q+1)\bigr)$. Hence the supremum is
attained at $s_\star$, and since
$\bar c^{\star}-\kappa s_\star=\bar c^{\star}/(q+1)$,
\begin{align*}
  L=-\psi(s_\star)
  =\bar c^{\star}+s_\star^{q}\bigl(\bar c^{\star}-\kappa s_\star\bigr)
  =\bar c^{\star}\Bigl(1+\frac{s_\star^{q}}{q+1}\Bigr),
\end{align*}
which is \eqref{eq:nc-Lconst}. Moreover, since $-\psi(0)=\bar c^{\star}$,
we have that $L\ge\bar c^{\star}>0$.
\eqref{eq:nc-Lconst}\\

(ii) Suppose first 
$\eta(12\mathsf{L}+5)|\theta|\ge2\bar c^{\star}$, which implies
$\eta(12\mathsf{L}+5)|\theta|^{12\mathsf{L}+3}-\bar c^{\star}|\theta|^{12\mathsf{L}+2}
\ge\tfrac12\eta(12\mathsf{L}+5)|\theta|^{12\mathsf{L}+3}$ and hence
\begin{align*}
  \psi(|\theta|)\ \ge\ \tfrac12\eta(12\mathsf{L}+5)|\theta|^{12\mathsf{L}+3}-\bar c^{\star} .
\end{align*}
Suppose in addition $|\theta|\ge(2(\bar c^{\star}+\mu)/(\eta(12\mathsf{L}+5)))^{1/(12\mathsf{L}+3)}$,
then $\tfrac12\eta(12\mathsf{L}+5)|\theta|^{12\mathsf{L}+3}\ge\bar c^{\star}+\mu$ and therefore
$\psi(|\theta|)\ge\mu$.\\

(iii) By (i) the potential $\hat u$ is $L$-semi-convex, and by (ii) its
Alexandrov Hessian satisfies $\nabla^{2}\hat u\succeq\mu\,\mathrm{I}$ a.e.\ on
$\{|\theta|\ge\varrho(\mu)\}$. Lemma~\ref{lem:pr-chord} applies with
$\mu_0=\mu$ and $\rho=\varrho(\mu)$ and returns \hyperlink{A2}{A2} with
constant $\mu/2$ and radius $4\varrho(\mu)(\mu+L)/\mu$.
\end{proof}\pagebreak
\section{Table of Constants}
\begin{table*}[ht]
\caption{Analytic expressions of constants. The last column reports the order in
powers of $(d/\beta)$, suppressing logarithmic factors and treating
$m,K,L,\mu,R,\lambda_0$ and $\mathbb{E}|\theta_0|^{2p}/(d/\beta)^{p}$ as
$\mathcal{O}(1)$. The constants $\rho(\beta),C_r,C_w,T_0$ are dimension free and
are treated as $\mathcal{O}(1)$ here, but depend exponentially on $\beta$ in
general, see Remark~\ref{remark_beta_poly}. For the coordinate-wise variants
$C_{B_1,c},C_{B_2,c},C_{B_4,c}$ the dependence on $d$ is decoupled from $\beta$.
We write $D_p:=(2p-1)!!$.}\hypertarget{table:constants}{}
\centering
\[
\begin{array}{|c|c|l|}
\hline
\textbf{No.} & \textbf{Constants} & \textbf{order}\\
\hline
1 & \displaystyle b=\max\left\{m^2/(2\mu),mR+(L+\mu/2)R^2\right\} & \mathcal{O}(1) \\
\hline
2 & \displaystyle C_1=4\max\left\{2m^2,\mu^2,2K^2\right\} & \mathcal{O}(1) \\
\hline
3 & \displaystyle C_2=p(b+\beta^{-1}(d+p-2)) & \mathcal{O}(d/\beta)\\
\hline
4 & \displaystyle A(p)=1+2(b+\beta^{-1}(d+(p-2)^{+}))/\mu& \mathcal{O}(d/\beta) \\
\hline
5 & \displaystyle \dfrac{C_4(p)}{C_3(p)}\leq A(p)^{\max\{1,\,p/2\}}& \mathcal{O}((d/\beta)^{\max\{1,\,p/2\}})\\
\hline
6 & \displaystyle C_{B_1}=\mathbb{E}|\theta_0|^2+4(1/\mu)(1+b+d\beta^{-1})& \mathcal{O}(d/\beta) \\
\hline
7 & \displaystyle \Lambda_p=p(p+1)2^{3p-3}\sqrt{D_p} & \mathcal{O}(1)\\
\hline
8 & \displaystyle M(p)=2^{3p+1}\Lambda_p^{\,p}  & \mathcal{O}(1)\\
\hline
9 & \displaystyle C_{B_2}=\mathbb{E}|\theta_0|^{2p}+M(p)(1/\mu)^p(1+b+d\beta^{-1})^p & \mathcal{O}((d/\beta)^{p})\\
\hline
10 & \displaystyle C_{B_3}=2^{p-1}(C_{B_2}+1)+A(2p)^{p} & \mathcal{O}((d/\beta)^{p}) \\
\hline
11 & \displaystyle C_{B_4}=\mu C_{B_1}+4+4d\beta^{-1} & \mathcal{O}(d/\beta) \\
\hline
12 & \displaystyle \rho(\beta)\leq \frac{2}{\beta\mu}\exp\!\Big(\beta\big[\,4R(m+KR^{p})+\tfrac{\mu}{2}R^{2}\,\big]\Big) & \mathcal{O}(1)\\
\hline
13 & \displaystyle C_r=\dfrac{2}{\beta\rho(\beta)} & \mathcal{O}(1) \\
\hline
14 & \displaystyle C_{w}=\Big(1+\tfrac{\beta L\rho(\beta)}{2}\Big)^{\frac12+\frac{1}{\beta L\rho(\beta)}}\ \leq\ \sqrt{e\Big(1+\tfrac{\beta L\rho(\beta)}{2}\Big)} & \mathcal{O}(1)\\
\hline
15 & \displaystyle T_0=\dfrac{\ln C_w+1}{C_r}+\lambda_0 & \mathcal{O}(1)\\
\hline
16 & \displaystyle C^2_{E}=e^{5LT_0}\,T_0\Big(\tfrac{2LC_{B_4}}{\sqrt{\mu}}+\tfrac{C_1(1+C_{B_2})}{L}+4\big(\max{\{2m^2,K^2\}}(1+C_{B_2}+C_{B_3})C_{B_4}\big)^{1/2}\Big) & \mathcal{O}((d/\beta)^{p})\\
\hline
17 & \displaystyle C_{D}=\left(2C_{B_1}+2A(2)\right)^{1/2} & \mathcal{O}((d/\beta)^{1/2})\\
\hline
18 & \displaystyle C_{\mathcal{T}_1}=m+\dfrac{2^{p-1}K}{p+1}\left(\sqrt{C_{B_2}}+A(2p)^{p/2}\right) & \mathcal{O}((d/\beta)^{p/2}) \\
\hline
19 & \displaystyle  J=m+K2^{2p-2}(1+(2b/\mu)^{p/2})+K2^{p-1}/(p+1) & \mathcal{O}(1)\\
\hline
20 & \displaystyle   C_{\mathcal{T}_2}=\dfrac{d}{2\beta}\log\left(\dfrac{2(b+d/\beta)\beta^2J^2}{\mu}\right)+\dfrac{1}{2\beta}\log(\pi d)+\dfrac{13}{6\beta} & \mathcal{O}(d/\beta\log(d\beta))\\
\hline
21 & \displaystyle C_{B_1,c}=\mathbb{E}|\theta_0|^2+4(1/\mu)d(1+b+\beta^{-1})& \mathcal{O}(d) \\
\hline
22 & \displaystyle C_{B_2,c}=\mathbb{E}|\theta_0|^{2p}+M(p)(1/\mu)^pd^p(1+b+\beta^{-1})^p & \mathcal{O}(d^{p})\\
\hline
23 & \displaystyle C_{B_4,c}=\mu C_{B_1,c}+4d+4d\beta^{-1} & \mathcal{O}(d)\\
\hline
\end{array}
\]
\end{table*}
\end{document}